
\documentclass[10pt,journal,compsoc]{IEEEtran}
%

\usepackage{array}
\usepackage{amsmath}
\usepackage[amsthm]{ntheorem}
\usepackage{amsfonts}
\usepackage{amssymb}
\usepackage{times}
\usepackage{epsfig}
\usepackage{graphicx}
\usepackage{subfigure}
\usepackage{url}
\usepackage{xcolor}
\usepackage{enumerate}
\usepackage{bm}
\usepackage{multirow}
\usepackage{diagbox}

\usepackage{colortbl}

\usepackage[colorlinks,linkcolor=red]{hyperref}
\newtheorem{theorem}{Theorem}
\newtheorem{prop}{Proposition}

\newcommand\QEDclosed{\ensuremath{\square}}

\usepackage{dsfont}

\newcommand{\vect}[1]{\boldsymbol{\mathbf{#1}}}
\newcommand{\mat}{\boldsymbol}
\def\x{\vect{x}}
\def\M{\mat{M}}

\def\u{\vect{\mu}}
\def\f{\vect{f}}

\renewcommand{\mathbf}{\boldsymbol}
\renewcommand{\vect}{\boldsymbol}

\newtheorem{definition}{Definition}

%
%
%
%
%


%

%
\ifCLASSOPTIONcompsoc
  \usepackage[nocompress]{cite}
\else
  \usepackage{cite}
\fi
%

%
\ifCLASSINFOpdf
\else
\fi
\hyphenation{op-tical net-works semi-conduc-tor}

\begin{document}
\newcolumntype{L}[1]{>{\raggedright\arraybackslash}p{#1}}
\newcolumntype{C}[1]{>{\centering\arraybackslash}p{#1}}
\newcolumntype{R}[1]{>{\raggedleft\arraybackslash}p{#1}}
%
\title{Unsupervised Multi-Class Domain Adaptation: Theory, Algorithms, and Practice}
%
%
%
%

\author{Yabin~Zhang$^*$,
	Bin~Deng$^*$,
	Hui~Tang,
	Lei~Zhang,
	and~Kui~Jia
	\IEEEcompsocitemizethanks{\IEEEcompsocthanksitem Y. Zhang, B. Deng, H. Tang, and K. Jia are with the School of Electronic and Information Engineering, South China University of Technology, Guangzhou, China, and also with Pazhou Lab, Guangzhou, China. E-mails: zhang.yabin@mail.scut.edu.cn, bindeng.scut@gmail.com, eehuitang@mail.scut.edu.cn, kuijia@scut.edu.cn. $^*$These two authors contribute equally. Correspondence to: K. Jia \IEEEcompsocthanksitem L. Zhang is with the Department of Computing, The Hong Kong Polytechnic University, HongKong, and also with DAMO Academy, Alibaba Group. E-mail: cslzhang@comp.polyu.edu.hk \protect
}}

%
%

\markboth{Journal of \LaTeX\ Class Files,~Vol.~14, No.~8, February~2020}%
{Shell \MakeLowercase{\textit{et al.}}: Bare Demo of IEEEtran.cls for Computer Society Journals}
%



\IEEEtitleabstractindextext{%
\begin{abstract}
	In this paper, we study the formalism of unsupervised multi-class domain adaptation (multi-class UDA), which underlies a few recent algorithms whose learning objectives are only motivated empirically. Multi-Class Scoring Disagreement (MCSD) divergence is presented by aggregating the absolute margin violations in multi-class classification, and this proposed MCSD is able to fully characterize the relations between any pair of multi-class scoring hypotheses. By using MCSD as a measure of domain distance, we develop a new domain adaptation bound for multi-class UDA; its data-dependent, probably approximately correct bound is also developed that naturally suggests adversarial learning objectives to align conditional feature distributions across source and target domains. Consequently, an algorithmic framework of Multi-class Domain-adversarial learning Networks (McDalNets) is developed, and its different instantiations via surrogate learning objectives either coincide with or resemble a few recently popular methods, thus (partially) underscoring their practical effectiveness. Based on our identical theory for multi-class UDA, we also introduce a new algorithm of Domain-Symmetric Networks (SymmNets), which is featured by a novel adversarial strategy of domain confusion and discrimination. SymmNets affords simple extensions that work equally well under the problem settings of either closed set, partial, or open set UDA. We conduct careful empirical studies to compare different algorithms of McDalNets and our newly introduced SymmNets. Experiments verify our theoretical analysis and show the efficacy of our proposed SymmNets. In addition, we have made our implementation code publicly available.

\end{abstract}

\begin{IEEEkeywords}
Domain adaptation, multi-class classification, adversarial training, partial or open set domain adaptation
\end{IEEEkeywords}}

\maketitle

\IEEEdisplaynontitleabstractindextext

%
\IEEEpeerreviewmaketitle

\IEEEraisesectionheading{\section{Introduction}\label{Sec:intro}}

\IEEEPARstart{S}{tandard} machine learning assumes that training and test data are drawn from the same underlying distribution. As such, uniform convergence bounds guarantee the generalization of models learned on training data for the use of testing \cite{MLFoudatation2014}.  Although standard machine learning has achieved great success in various tasks \cite{alexnet, girshick2014rich,jia2019orthogonal}, even with few training data \cite{vinyals2016matching,zhang2019part} or training data of multiple modalities \cite{jia2019deep}, in many practical scenarios, one may encounter situations where annotated training data can only be collected easily from one or several distributions that are related to the testing distribution. In other words, the target data of interest follow a distribution differing from the training source data.
A typical example in deep learning-based image analysis is that one may annotate as many synthetic images as possible, but often fails to annotate even a single real image. Thus it is expected to adapt the models learned from synthetic images for testing on real images. This problem setting falls in the realm of transfer learning or domain adaptation \cite{transfer_survey}. In this work, we focus particularly on unsupervised domain adaptation (UDA), in which target data are completely unlabeled.

In the literature, theoretical studies on domain adaptation characterize the conditions under which classifiers trained on labeled source data can be adapted for use on the target domain \cite{ben2007analysis,ben2010theory,mansour2009domain,courty2016optimal}. For example, Ben-David \emph{et al.} \cite{ben2010theory} propose the notion of distribution divergence induced by the hypothesis space of binary classifiers, based on which a bound of the expected error on the target domain is thus developed. Mansour \emph{et al.} \cite{mansour2009domain} extend the zero-one loss used in \cite{ben2010theory} to arbitrary loss functions of binary classification. These theoretical results motivate many of existing UDA algorithms, including the recently popular ones based on the domain-adversarial training of deep networks \cite{dann,adda,cada,mcd,symnets}. A common motivation of these algorithms is to design adversarial objectives concerned with minimax optimization, in order to reduce the hypothesis-induced domain divergence via the learning of domain-invariant feature representations.
While theoretical adaptation conditions are strictly derived under the setting of binary classification with analysis-amenable loss functions, practical algorithms easier to be optimized are often expected to be applied to the cases of multiple classes. In other words, the learning objectives in many of the recent algorithms are only inspired by, rather than strictly derived from the domain adaptation bounds in \cite{ben2010theory,mansour2009domain}. This gap between theories and algorithms is recently studied in \cite{mdd}, where the notion of margin disparity discrepancy (MDD) induced by pairs of multi-class scoring hypotheses is introduced to measure the divergence between domain distributions. This thus extends theories in \cite{ben2010theory,mansour2009domain} and connects with the multi-class setting of practical algorithms.

The MDD introduced in \cite{mdd} is constructed using a scalar-valued function of \emph{relative margin}. It characterizes a disagreement between any pair of multi-class scoring hypotheses. This disagreement, however, does not take relationships among all of the multiple classes into account. As a result, the theory developed in \cite{mdd} cannot properly explain the effectiveness of a series of recent UDA algorithms \cite{mcd,symnets,adr,swd,Cicek_2019_ICCV}. In this work, we are motivated to follow \cite{mdd} and develop a theory for unsupervised multi-class domain adaptation (multi-class UDA) that connects more closely with recent algorithms. Inspired by the MDD of \cite{mdd} and the multi-class classification framework of Dogan \emph{et al.} \cite{dogan2016unified}, which aggregates violations of class-wise \emph{absolute margins} as a single loss, we technically propose a notion of matrix-formed, \emph{Multi-Class Scoring Disagreement (MCSD)}, which takes a full account of the element-wise disagreements between any pair of multi-class scoring hypotheses.
MCSDs defined over domain distributions induce a novel \emph{MCSD divergence}, measuring distribution distance between the source and target domains. Based on MCSD divergence, we develop a new adaptation bound for multi-class UDA. A data-dependent, probably approximately correct (PAC) bound is also developed using the notion of Rademacher complexity. We connect our results with existing theories of either binary \cite{ben2010theory} or multi-class UDA \cite{mdd} by introducing their absolute margin-based equivalent or variant of domain divergence, as well as the corresponding domain adaptation bounds. We show the advantages of MCSD divergence over these absolute margin-based equivalent/variant (and also their corresponding ones in \cite{ben2010theory} and \cite{mdd}).

The bounds derived in our theory of multi-class UDA based on either MCSD divergence or
the absolute margin-based versions of \cite{ben2010theory} and \cite{mdd}
naturally suggest adversarial objectives of minimax optimization, which promote the learning of feature distributions invariant across source and target domains. We term such an algorithmic framework as \emph{Multi-class Domain-adversarial learning Networks (McDalNets)}, as illustrated in Figure \ref{FigMcDalNets}. While it is difficult to optimize the objectives of McDalNets directly, we show that a few optimization-friendly surrogate objectives instantiate the recently popular methods \cite{mcd,mdd}, thus (partially) explaining the underlying mechanisms of their effectiveness. In addition to McDalNets, we introduce a new algorithm of \emph{Domain-Symmetric Networks (SymmNets)}, which is motivated from our same theory of multi-class UDA. Figure \ref{FigSymmNet} is an illustration of this. The proposed SymmNets is featured by a domain confusion and discrimination strategy that ideally achieves the same theoretically derived learning objective.

While most of the theories and algorithms presented in the paper are concerned with \emph{closed set UDA}, where the two domains share the same label space, one might also be interested in other variant settings, such as \emph{partial} \cite{san,mada,pada,iwanpda,transfer_example_partial} or \emph{open set} \cite{open_set_math,open_set_bp} UDA. In this work, we present simple extensions of SymmNets that are able to achieve partial or open set UDA as well. We conduct careful ablation studies to compare different algorithms of McDalNets,
including those based on the absolute margin-based versions of \cite{ben2010theory} and \cite{mdd}, as well as our newly introduced SymmNets. As shown in Table \ref{Tab:different_implementation}, experiments on six commonly used benchmarks show that algorithms of McDalNets based on MCSD divergence consistently improve over those based on the absolute margin-based versions of \cite{ben2010theory} and \cite{mdd},
certifying the usefulness of fully characterizing disagreements between pairs of scoring hypotheses in multi-class UDA. Experiments under the settings of the closed set, partial, and open set UDA also empirically verify the effectiveness of our proposed SymmNets.

\subsection{Relations with Existing Works}
	
\subsubsection{Domain Adaptation Theories}

In the literature, these exist theoretical domain adaptation results concerning mostly with the classification problem and also with regression \cite{cortes2014domain,cortes2015adaptation,mansour2009domain}. For classification, these results consider either a setting where target data are partially labeled \cite{mohri2012new,zhang2012generalization}, or the standard unsupervised setting from the perspectives of optimal transportation \cite{courty2016optimal,courty2017joint} or hypothesis-induced domain divergence \cite{ben2007analysis,ben2010theory,mansour2009domain,kuroki2019unsupervised,mdd}. We focus on the latter line of theories, which are closely related to the one we contribute.
	
The seminal domain adaptation theories \cite{ben2007analysis,ben2010theory,mansour2009domain} bound the expected target error for binary classification with terms characterizing the expected source error, the domain distance under certain metrics of distribution divergence, and constant ones that depend on the capacity of the hypothesis space; the term of domain distance differentiates these theoretical bounds. For example, Ben-David \emph{et al.} \cite{ben2007analysis,ben2010theory} propose for binary classification the zero-one loss-based ${\mathcal{H}\Delta\mathcal{H}}$-divergence by characterizing the disagreement between any pair of labeling hypotheses; Mansour \emph{et al.} \cite{mansour2009domain} introduce a notion of discrepancy distance by extending the zero-one loss used in \cite{ben2007analysis} to general loss functions of binary classification; by fixing one hypothesis of \cite{mansour2009domain} to the ideal source minimizer, Kuroki \emph{et al.} \cite{kuroki2019unsupervised} propose a more tractable source-guided discrepancy. Although many of the recent algorithms \cite{dann, adda, mcd, symnets} are motivated from seminal theories \cite{ben2007analysis,ben2010theory}, the gap between theories of binary classification and practical algorithms of multi-class classification remains.
To reduce this gap, Zhang \emph{et al.} \cite{mdd} make a first attempt to extend the theories of \cite{mansour2009domain,ben2010theory} to the case of multiple classes by introducing a novel notion of margin disparity discrepancy (MDD); MDD is a measure of domain distance built upon a scalar-valued function of margin disparity (MD), which can to some extent characterize the difference of multi-class scoring hypotheses.
	
While both our MCSD and those of \cite{ben2010theory,mansour2009domain,mdd} are based on the characterization of disagreements between any pair of labeling/scoring hypotheses, our MCSD is capable of characterizing them at a finer level, especially in the multi-class setting (cf. Figure \ref{Fig:f-f-disp}). Technically, our MCSD characterizes element-wise disagreements of multi-class scoring hypotheses by aggregating violations of class-wise absolute margins. By contrast, the zero-one loss-based counterpart of \cite{ben2010theory} only characterizes the labeling disagreement, and the margin disparity (MD) of \cite{mdd} improves over \cite{ben2010theory} with a scoring disagreement that is based on a scalar-valued, relative margin.
Consequently, the domain divergence induced by our MCSD can better explain the effectiveness of a series of recent UDA algorithms \cite{mcd,symnets,adr,swd,Cicek_2019_ICCV}, whose designs take the relations of scores of all the multiple classes into account.
	
\subsubsection{Algorithms of Multi-Class Domain Adaptation}

Existing algorithms of multi-class UDA are mainly motivated by learning domain-invariant feature representations \cite{dan,jan,dann,adda,mcd,symnets,adr,swd,Cicek_2019_ICCV,zhu2017unpaired,rozantsev2018beyond,dwt,dada}, or by minimizing the domain discrepancy in the image space via image generation \cite{pixel_level,isola2017image}. We briefly review the former line of algorithms, focusing on those based on the strategy of adversarial training.
	
Motivated to minimize the domain divergence measured by ${\mathcal{H}\Delta\mathcal{H}}$-divergence of \cite{ben2010theory}, Ganin \emph{et al.} \cite{dann} introduce the first strategy of the domain-adversarial training of neural networks (DANN), where a binary classifier is adopted as the domain discriminator, and the domain distance is minimized by learning features of the two domains in a manner adversarial to the domain discriminator. Tzeng \emph{et al.} \cite{adda} summarize three implementation manners of adversarial objective, including minimax \cite{dann}, confusion \cite{domain_confusion}, and GAN \cite{gan}. The domain discriminator of the binary classifier enables the learning of the alignment of marginal feature distributions across domains, but it is ineffective for the alignment of conditional feature distributions, which is necessary for practical UDA problems in a multi-class setting. Recent methods \cite{mcd,symnets,mdd,adr,swd,Cicek_2019_ICCV} strive to overcome this limitation by playing adversarial games between two classifiers.
More specifically, Saito \emph{et al.} \cite{mcd} adopt the maximum $L_1$ distance of output probabilities of two symmetric classifiers as a surrogate domain discrepancy; Lee \emph{et al.} \cite{swd} replace the $L_1$ distance in \cite{mcd} with the Wasserstein distance \cite{villani2008optimal}, taking advantage of its geometrical characterization; in \cite{mdd}, two classifiers are used asymmetrically to estimate conditional feature distributions with margin loss; in \cite{adr}, two task classifiers are introduced implicitly by applying two random dropouts to the same task classifier; a classifier concatenated by two task classifiers is adopted to implement the adversarial training objective in \cite{symnets,Cicek_2019_ICCV}.
	
Motivated by the domain adaptation bounds to be presented in Section \ref{Sec:theory_motivation}, we propose an algorithmic framework of McDalNets, whose optimization-friendly surrogate objectives instantiate these recently popular methods \cite{dann,mdd,mcd} (cf. Section \ref{SecMcDalNetAlgms}), thus (partially) explaining the underlying mechanisms of their effectiveness. We also introduce a new algorithm of SymmNets, whose learning objective aligns with our developed theoretical bound as well (cf. Section \ref{SecSymmNets}).

\subsubsection{Variants of Problem Settings}
The theories and algorithms discussed so far apply to the problem setting of closed set UDA, where a shared label space across domains is assumed. There exist other variant settings, e.g., partial \cite{san} or open set \cite{open_set_math} UDA. We discuss these settings and the corresponding methods as follows.
	
The setting of partial UDA assumes that classes of the target domain constitute an unknown subset of those of the source domain. To address the challenge brought by partial class coverage, a typical strategy is to weight source instances using the collective prediction evidence of target instances \cite{san,pada,iwanpda,transfer_example_partial}. Simply extending our SymmNets with a weighting scheme gives excellent results.
	
The setting of open set UDA assumes that both the source and target domains contain certain classes that are exclusive to each other, where for simplicity all the unshared classes in each domain are aggregated as a single (super-) unknown class. A key issue to extend methods of closed set UDA for the use in the open set setting is to design appropriate criteria that reject the target instances of unshared classes. To this end, Busto \emph{et al.} \cite{open_set_math} adopt a predefined distance threshold, and Saito \emph{et al.} \cite{open_set_bp} learn rejection automatically via the adversarial training. Our algorithm of SymmNets is flexible enough to be applied to open set UDA simply by adding an additional output neuron to the task classifier that is responsible for the aggregated super-class, while keeping other algorithmic ingredients fixed.
	
\subsection{Contributions}

Many recent algorithms for multi-class UDA \cite{mcd,adr,swd,Cicek_2019_ICCV}, including our preliminary work of SymmNets \cite{symnets}, rely on an adversarial strategy that learns to align conditional feature distributions across domains via a full account of the relationships among the hypotheses of classifiers. While these algorithms are inspired by classical domain adaptation theories \cite{ben2010theory,mansour2009domain,ben2007analysis}, their learning objectives are largely designed empirically; as such, the connections between theories and algorithms remain loose. The present paper aims to improve over the recent theory of multi-class UDA \cite{mdd}, and to connect with these algorithms more closely by formalizing a new theory of multi-class UDA, which underlies these algorithms with a framework that also inspires new algorithms. We summarize our technical contributions as follows.
\begin{itemize}
	\item We propose to aggregate violations of absolute margin functions to define a notion of matrix-formed, \emph{Multi-Class Scoring Disagreement (MCSD)}, which enables a full characterization of the relations between any pair of scoring hypotheses. Based on the induced \emph{MCSD divergence} as a measure of domain distance, we develop a new adaptation bound for multi-class UDA; a data-dependent PAC bound is also developed using the notion of Rademacher complexity. We connect our results with existing theories of either binary or multi-class UDA, by introducing
	their absolute margin-based equivalent or variant of domain divergence, and the corresponding adaptation bounds. 
	\item Our developed theories naturally suggest adversarial objectives to learn aligned conditional feature distributions across domains; we term such an algorithmic framework based on deep networks as \emph{Multi-class Domain-adversarial learning Networks (McDalNets)}. We show that different instantiations of McDalNets via surrogate learning objectives either coincide with or resemble a few recently popular methods, thus (partially) underscoring their practical efficacy. We also introduce a new algorithm of \emph{Domain-Symmetric Networks (SymmNets-V2)} based on our same theory of multi-class UDA, which improves over \emph{SymmNets-V1} proposed in our preliminary work.
	\item While theories and algorithms presented in the paper are mostly concerned with the problem setting of closed set UDA, we also present simple extensions of SymmNets that work equally well under the settings of partial or open set UDA. We conduct careful ablation studies to compare different algorithms of McDalNets,
	including those based on the absolute margin-based versions of \cite{ben2010theory} and \cite{mdd},
	as well as our newly introduced SymmNets. Experiments on commonly used benchmarks show the advantages of McDalNets and SymmNets, certifying the effectiveness of fully characterizing disagreements between pairs of scoring hypotheses in multi-class UDA. We have made our code available at \url{https://github.com/YBZh/MultiClassDA}.
\end{itemize}

\color {black}
\section{A Theory of Unsupervised Multi-Class Domain Adaptation}
\label{Sec:theory_motivation}

We present in this section a theory of unsupervised multi-class domain adaptation (multi-class UDA). Our theoretical derivations follow \cite{ben2010theory,mansour2009domain,mdd}, but with a key novelty of measuring the distance between domain distributions using a divergence that fully characterizes the relations between different hypotheses of multi-class classification. We also present variants of the proposed divergence to connect with theoretical results developed in the literature. We start with a learning setup of multi-class UDA. Table \ref{tab-notations} gives a summary of our used math notations. All proofs are given in the appendices.

\begin{table}[t]
  \centering
  \caption{A summary of the used math notations.}\label{tab-notations}
  \begin{tabular}{lp{5cm}}
  \hline
   Notations & Meaning \\ \hline
   $K$ & Number of classes \\ 
   $\mathbb{R}$ & The space of real numbers \\ 
   $\mathbb{R}^K$ & The space of $K$-dimensional real vectors \\ 
   $\mathcal{X}$ & Instance space \\ 
   $\mathcal{Y}$ & Label space \\ 
   $\mathcal{H}, \mathcal{F}$ & Hypothesis spaces of labeling or scoring functions \\ 
   \textcolor{black}{$\x,\x^s_i, \x^t_i$} & \textcolor{black}{A general instance, an $i^{th}$ source instance, or an $i^{th}$ target instance in the space $\mathcal{X}$}  \\ 
   \textcolor{black}{$D$} & \textcolor{black}{A distribution over a domain (e.g., $\mathcal{X}\times \mathcal{Y}$)} \\ 
   \textcolor{black}{$D_x$} & \textcolor{black}{A marginal distribution over $\mathcal{X}$ when $D$ is over $\mathcal{X}\times \mathcal{Y}$} \\
   \textcolor{black}{$P, Q$} & \textcolor{black}{Source or target distributions over $\mathcal{X}\times \mathcal{Y}$} \\ 
   \textcolor{black}{$P_x, Q_x$} & \textcolor{black}{Source or target marginal distributions over $\mathcal{X}$} \\ 
   \textcolor{black}{$h: \mathcal{X}\rightarrow \mathcal{Y}$} & \textcolor{black}{A function in a hypothesis space $\mathcal{H}$} \\ 
   \textcolor{black}{$\f, \f', \f'': \mathcal{X}\rightarrow \mathbb{R}^K$} & \textcolor{black}{Functions in a scoring space $\mathcal{F}$} \\ 
   \textcolor{black}{$f_k, f'_k, f''_k: \mathcal{X}\rightarrow \mathbb{R}$} & \textcolor{black}{The $k^{th}$ components of $\f, \f'$, or $\f''$} \\ 
   \textcolor{black}{$\u: \mathbb{R}^K \times \mathcal{Y} \rightarrow \mathbb{R}^K$} & \textcolor{black}{Absolute margin function of Definition 1} \\ 
   \textcolor{black}{$\mu_k: \mathbb{R}^K \times \mathcal{Y} \rightarrow \mathbb{R}$} & \textcolor{black}{The $k^{th}$ component of $\u$} \\ 
   \textcolor{black}{$\Phi_\rho: \mathbb{R}\rightarrow [0,1]$} & \textcolor{black}{Ramp loss (5) with margin $\rho$} \\
   \textcolor{black}{$\mathbb{E}$} & \textcolor{black}{Expectation of a random variable} \\ 
   \textcolor{black}{$\mathbb{I}$ [Boolean expression]} & \textcolor{black}{Indicator function, which returns 1 when the expression is true, and 0 otherwise } \\ 
   \hline
  \end{tabular}
\end{table}

\subsection{Learning Setup}

Multi-class UDA assumes two different but related distributions over $\mathcal{X\times Y}$, namely the source one $P$ and target one $Q$. Learners receive $n_s$ labeled examples $\{(\x_i^s,y_i^s)\}_{i=1}^{n_s}$ drawn i.i.d. from $P$ and $n_t$ unlabeled examples $\{\x_j^t\}_{j=1}^{n_t}$ drawn i.i.d. from $Q_x$. The goal of multi-class UDA is to identify a labeling hypothesis $h: \mathcal{X}\rightarrow \mathcal{Y}$ from a space $\mathcal{H}$ such that the following \emph{expected error} over the target distribution is minimized
\begin{equation}\label{EqnExpectTargetErr}
\mathcal{E}_Q(h) := \mathbb{E}_{(\x,y)\sim Q} L(h(\x), y) ,
\end{equation}
where $L$ is a properly defined loss function. For ease of theoretical analysis, Ben-David \emph{et al.} \cite{ben2007analysis,ben2010theory} assume $L$ as a zero-one loss of the form $\mathbb{I}[h(\x)\neq y]$, where $\mathbb{I}$ is the indicator function, which is extended in \cite{mansour2009domain} as general loss functions of binary classification. Domain adaptation theories \cite{ben2010theory,mansour2009domain,mdd} typically bound the expected target error (\ref{EqnExpectTargetErr}) using derived meaningful terms.

Consider a space $\mathcal{F}$ that contains the scoring function $\f: \mathcal{X} \rightarrow \mathbb{R}^{|\mathcal{Y}|} = \mathbb{R}^K$, which induces a labeling function $h_{\f}(\x) = \arg \max_{k\in \mathcal{Y}} f_k(\x)$, where $f_k$ denotes the $k^{th}$ component of the vector-valued function $\f$. Adding the same function $g: \mathcal{X}\rightarrow \mathbb{R}$ to all components $f_k$ of $\f$ does not change the classification decision, since $\arg\max_{k\in \mathcal{Y}}f_k(\x) = \arg\max_{k\in \mathcal{Y}}(f_k(\x) + g(\x))$; this could be problematic for obtaining unique solutions of scoring functions. Similar to \cite{dogan2016unified}, we fix this issue by enforcing the sum-to-zero constraint $\sum_{k=1}^K f_k(\x) = 0$ to the scoring functions.

\subsection{Domain Distribution Divergence and Adaptation Bounds based on Multi-Class Scoring Disagreement}\label{subsec:multi-class-loss}

Unsupervised domain adaptation is made possible by assuming the closeness between the distributions $P$ and $Q$; otherwise classifiers learned from the labeled source data would be less relevant for the classification of target data. The measures of distribution distances thus become crucial factors in developing either UDA theories or the corresponding algorithms.

{\color{black}
\subsubsection{Existing Measures of Domain Divergence}

In the seminal work \cite{ben2010theory}, a key innovation is the introduction of a distribution distance induced by a hypothesis space $\mathcal{H}^{\{0, 1\}}$ of binary classification
\begin{equation}\label{equ:HDeltaHDivergence}
d_{0\text{\rm -}1} (P_x, Q_x) := \sup\limits_{h,h' \in \mathcal{H}^{\{0, 1\}}} \left| \mathbb{E}_{Q_x} \mathbb{I}[h\neq h'] - \mathbb{E}_{P_x} \mathbb{I}[h\neq h'] \right| ,
\end{equation}
\textcolor{black}{where $\mathbb{E}_{Q_x} \mathbb{I}[h\neq h'] = \mathbb{E}_{\x\sim Q_x} \mathbb{I}[h(\x)\neq h'(\x)]$, i.e., the zero-one loss-based expectation of the hypothesis disagreement, which we term as \emph{hypothesis disagreement (HD)} to facilitate the subsequent discussion, and similarly for $\mathbb{E}_{P_x} \mathbb{I}[h\neq h']$; the disagreement between $h$ and $h'$ in fact specifies a measurable subset $\{\x \in \mathcal{X} | h(\x)\neq h'(\x) \}$, and the distribution distance (termed $\mathcal{H}\Delta\mathcal{H}$-divergence in \cite{ben2010theory}) between $P_x$ and $Q_x$ is measured on the subsets by taking the supremum over all pairs of $h, h' \in \mathcal{H}^{\{0, 1\}}$.}
Compared with the simple $\ell_1$ distribution divergence, the distance (\ref{equ:HDeltaHDivergence}) is more relevant to the problem of domain adaptation and can be estimated from finite samples for an $\mathcal{H}^{\{0, 1\}}$ of fixed VC dimension \cite{ben2010theory}. Based on the same idea of characterizing the hypothesis disagreement, Mansour \emph{et al.} \cite{mansour2009domain} extend the zero-one loss-based distance (\ref{equ:HDeltaHDivergence}) to general loss functions $L$, giving rise to the distance (termed \emph{discrepancy distance} in \cite{mansour2009domain})
\begin{equation}\label{equ:DiscDivergence}
d_L (P_x, Q_x) := \sup\limits_{h,h' \in \mathcal{H}} \left| \mathbb{E}_{Q_x} L(h, h') - \mathbb{E}_{P_x} L(h, h') \right| .
\end{equation}
Note that (\ref{equ:DiscDivergence}) is symmetric and satisfies triangle inequality, but it does not strictly define a distance since it is possible that $d_L (P_x, Q_x) = 0$ for $P_x \neq Q_x$.

{\color{black}
In spite of being more general, the distance (\ref{equ:DiscDivergence}) applies only to UDA problems of binary classification. To develop multi-class UDA, disagreement of multi-class hypotheses should be taken into account. The key issue here is to extend binary loss functions $L$, especially \emph{margin-based ones}, to the case of multiple classes \cite{VapnikBook}. In literature, there exists no a canonical formulation of multi-class classification; various formulation variants have been proposed depending on different notions of multi-class margins and margin-based losses \cite{koltchinskii2002empirical,lee2004multicategory,liu2011reinforced,szedmak2005learning}, where margins are usually defined either by comparing components $\{f_k\}_{k=1}^K$ of a $K$-class scoring function $\f$ (i.e., \emph{relative margins}), or directly on the components $\{f_k\}_{k=1}^K$ themselves (i.e., \emph{absolute margins}). Based on this idea, Zhang \emph{et al.} \cite{mdd} first investigate multi-class UDA by measuring the disagreement of multi-class hypotheses with a relative margin function \cite{koltchinskii2002empirical}. Given a fixed $\f$, a \emph{margin disparity discrepancy (MDD)} is proposed in \cite{mdd} that defines the distribution divergence as
\begin{equation}\label{equ:MarginDisparityDiscrepancy}
d_{MD}^{(\rho)} (P_x, Q_x) := \sup\limits_{\f' \in \mathcal{F}} [\mathbb{E}_{Q_x} \Phi_\rho(\rho_{\f'}(\cdot, h_{\f})) - \mathbb{E}_{P_x} \Phi_\rho(\rho_{\f'}(\cdot, h_{\f}))],
\end{equation}
where $\mathbb{E}_{Q_x}\Phi_\rho(\rho_{\f'}(\cdot, h_{\f})) = \mathbb{E}_{\x\sim Q_x}\Phi_\rho(\rho_{\f'}(\x, h_{\f}{(\x)}))$ is termed as the \emph{margin disparity (MD)} in \cite{mdd}, which is induced by $\f$ and $\f'$ w.r.t. the distribution $Q_x$, the MD $\mathbb{E}_{P_x}\Phi_\rho(\rho_{\f'}(\cdot, h_{\f}))$ is similarly defined, $\Phi_\rho$ is a ramp loss defined as
	\begin{equation} \label{equ-ramp_loss}
	\Phi_\rho(x) :=
	\begin{cases}
	0, & \rho \leq x \\
	1 - x/{\rho}, & 0 < x < \rho \\
	1, & x \leq 0
	\end{cases},
	\end{equation}
and $\rho_{\f}(\x, y)  = \frac{1}{2} \left(f_y(\x) - \max_{y' \neq y} f_{y'}(\x) \right)$ is the relative margin function. The MDD (\ref{equ:MarginDisparityDiscrepancy}) improves over (\ref{equ:HDeltaHDivergence}) and (\ref{equ:DiscDivergence}) by using the hypothesis $h_{\f}$ of a given $\f$ to induce a relative margin of the scoring function $\f'$, thus successfully measuring a disagreement between the multi-class $\f$ and $\f'$. We note that the induced relative margin depends only on the component of $\f$ that has the maximum value (i.e., the hypothesis $h_{\f}$); it does not fully characterize the disagreements between $\f$ and $\f'$. It is in fact this deficiency of MDD that motivates the present paper. By proposing a new divergence that can fully characterize the disagreements between pairs of multi-class scoring functions, we expect to develop the corresponding theory of multi-class UDA that helps underscore the effectiveness of a series of recent UDA algorithms \cite{mcd,symnets,adr,swd,Cicek_2019_ICCV}.
}

\subsubsection{The Proposed Domain Divergence and Adaptation Bound based on Multi-Class Scoring Disagreement}

The multi-class classification framework of Dogan \emph{et al.} \cite{dogan2016unified} decomposes a multi-class loss function into class-wise margins and margin violations (i.e., large-margin losses), and then aggregates these violations as a single loss value. Inspired by this framework, we propose in this paper a matrix-formed, \emph{multi-class scoring disagreement (MCSD)} to fully characterize the difference between any pair of scoring functions $\f', \f'' \in \mathcal{F}$, which is later used to define a distribution distance tailored to multi-class UDA. We first present the necessary definition of the absolute margin function.

\begin{definition}[Absolute margin function]\label{AbsMarginFuncDefinition}
The absolute margin function $\u: \mathbb{R}^K \times \mathcal{Y} \rightarrow \mathbb{R}^K$, with $\u = [\mu_1, \dots, \mu_K]^{\top}$, is defined on a multi-class scoring function $\f(\x) \in  \mathbb{R}^K$ and a label $y \in \mathcal{Y}$ as
	\begin{equation}
	\mu_k(\f(\x),y) =
	\begin{cases}
	+f_k(\x), & k = y \\
	-f_k(\x), & k \in \mathcal{Y} \setminus \{y\}
	\end{cases}.
	\end{equation}
\end{definition}
Given the sum-to-zero constraint $\sum_{k=1}^K f_k(\x) = 0$, the defined margin function enjoys the following properties \cite{dogan2016unified}.
\begin{itemize}
	\item $\mu_y(\f(\x),y)$ is non-decreasing w.r.t. $f_y(\x)$ ,
	\item $\mu_k(\f(\x),y)$ is non-increasing w.r.t. $f_k(\x)$ $\forall \ k \in \mathcal{Y} \setminus \{y\}$
	\item When $\mu_k(\f(\x),y) \geq 0 \ \forall \ k\in \mathcal{Y}$ and $\exists k\in\mathcal{Y}$ such that $\mu_k(\f(\x),y) > 0$, we have $\arg\max_{k\in\mathcal{Y}}f_k(\x) = y$ .
\end{itemize}
The third property characterizes the correct classification by checking non-negativeness/positiveness of absolute margins. To develop MCSD, we consider the ramp loss (\ref{equ-ramp_loss}) to penalize margin violations.
For $\rho > 0$ and a distribution $D$ over $\mathbb{R}$, ramp loss has the nice property of $\mathbb{E}_{x\sim D}\Phi_\rho(x) \geq \mathbb{E}_{x\sim D}\mathbb{I}[x\leq 0]$,  which is important to bound the target error $\mathcal{E}_Q(h_{\f})$ using margin-based loss functions defined over the scoring function $\f$.
}
\begin{definition}[Multi-class scoring disagreement]
	For a pair of scoring functions $\f', \f'' \in \mathcal{F}$, the multi-class scoring disagreement (MCSD) is defined with respect to a distribution $D$ over the domain $\mathcal{X}$ as
	\begin{equation}\label{EqnMCSD}
	\text{\rm MCSD}_D^{(\rho)}(\f', \f'') := \frac{1}{K}\mathbb{E}_{\x\sim D} \|\M^{(\rho)}(\f'(\x)) - \M^{(\rho)}(\f''(\x))\|_1,
	\end{equation}
	where $\|\cdot\|_1$ is the $L_1$ norm and $\M^{(\rho)}(\f(\x))\in [0,1]^{K\times K}$ is the matrix of absolute margin violations defined as
	\begin{equation} \label{equ:Martix-M}
	\M^{(\rho)}_{i,j}(\f(\x)) = \Phi_\rho(\mu_i(\f(\x),j)).
	\end{equation}
\end{definition}
Each column $\M_{:, k}^{(\rho)}$ of the matrix $\M^{(\rho)}$ computes violations of the absolute margin function $\u(\f(\cdot), k)$ w.r.t. a class $k\in \mathcal{Y}$, and the corresponding $\|\M_{:, k}^{(\rho)}(\f') - \M_{:, k}^{(\rho)}(\f'')\|_1$ measures the difference of margin violations between the scoring functions $\f'$ and $\f''$. The proposed MCSD (\ref{EqnMCSD}) is based on the absolute value aggregation of these disagreements.
To have an intuitive understanding of the behaviors of $\f'$, $\f''$, and the $\text{\rm MCSD}_D^{(\rho)}(\f', \f'')$, we plot in Figure \ref{Fig:f-f-disp-MCSD} the value of $\text{\rm MCSD}_D^{(\rho)}(\f', \f'')$ (firing on a single instance $\x$) in the case of $K = 3$ and $\rho = 5$, by fixing either $\f'(\x)$ or $\f''(\x)$ and using the other as the argument.


\begin{figure*}[t]
	\subfigure[]{
		\hspace{0.8cm}
		\begin{minipage}[t]{0.3\linewidth}
			\centering
			\includegraphics[width=0.95\linewidth] {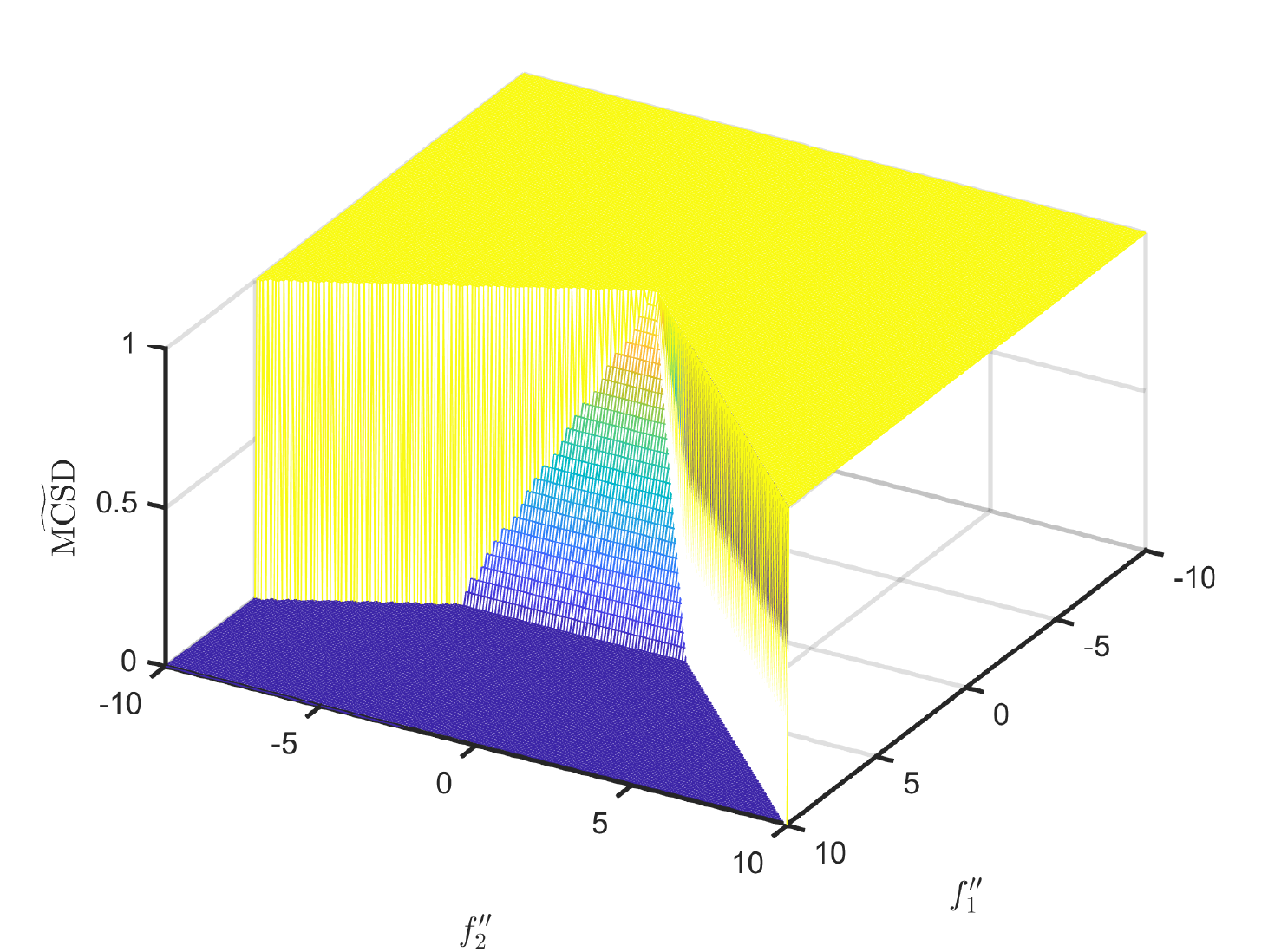}\hfill
			\includegraphics[width=0.95\linewidth] {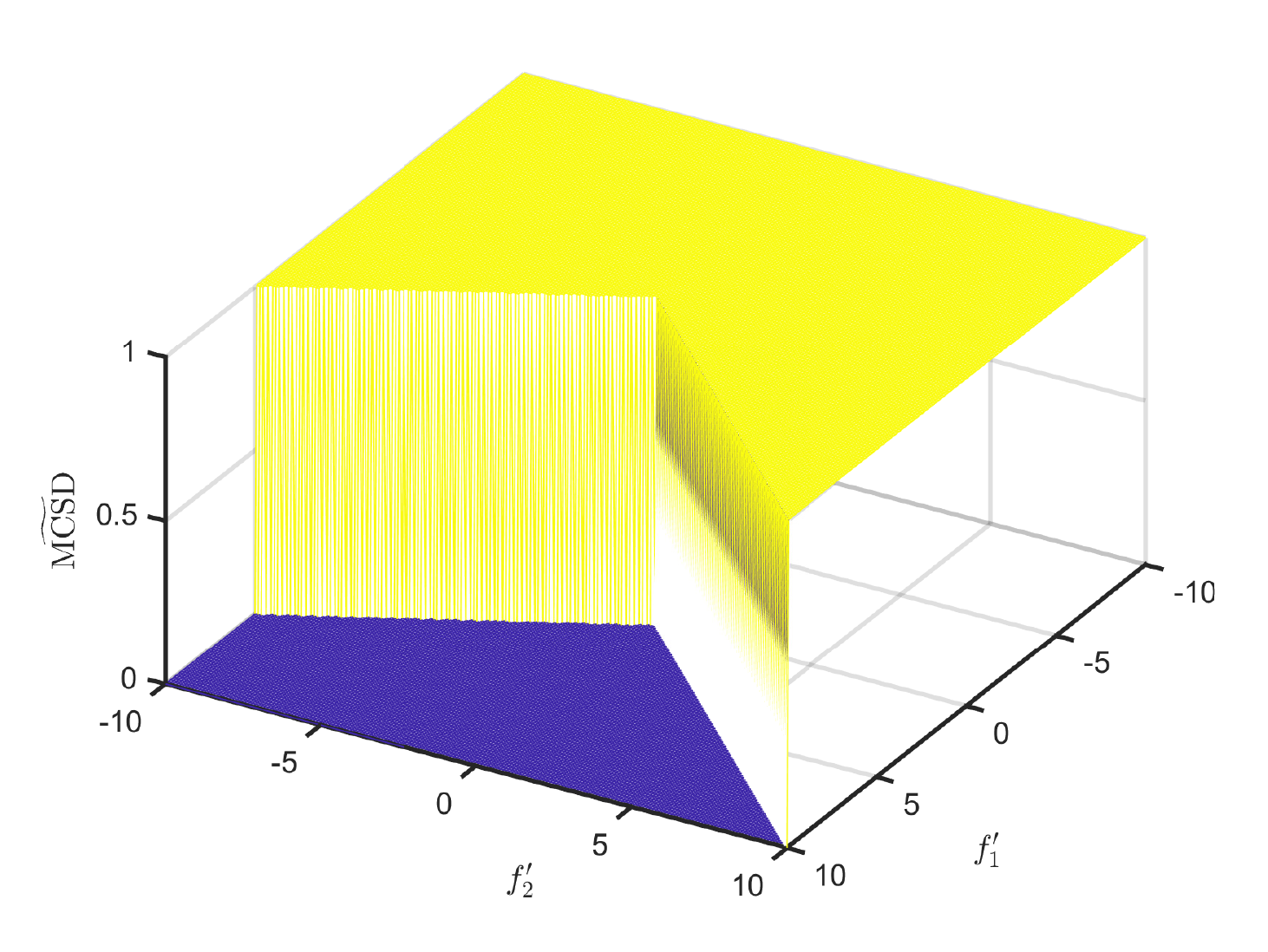}
	\end{minipage}}
	\hfill
	\subfigure[]{
		\begin{minipage}[t]{0.3\linewidth}
			\centering
			\includegraphics[width=0.95\linewidth] {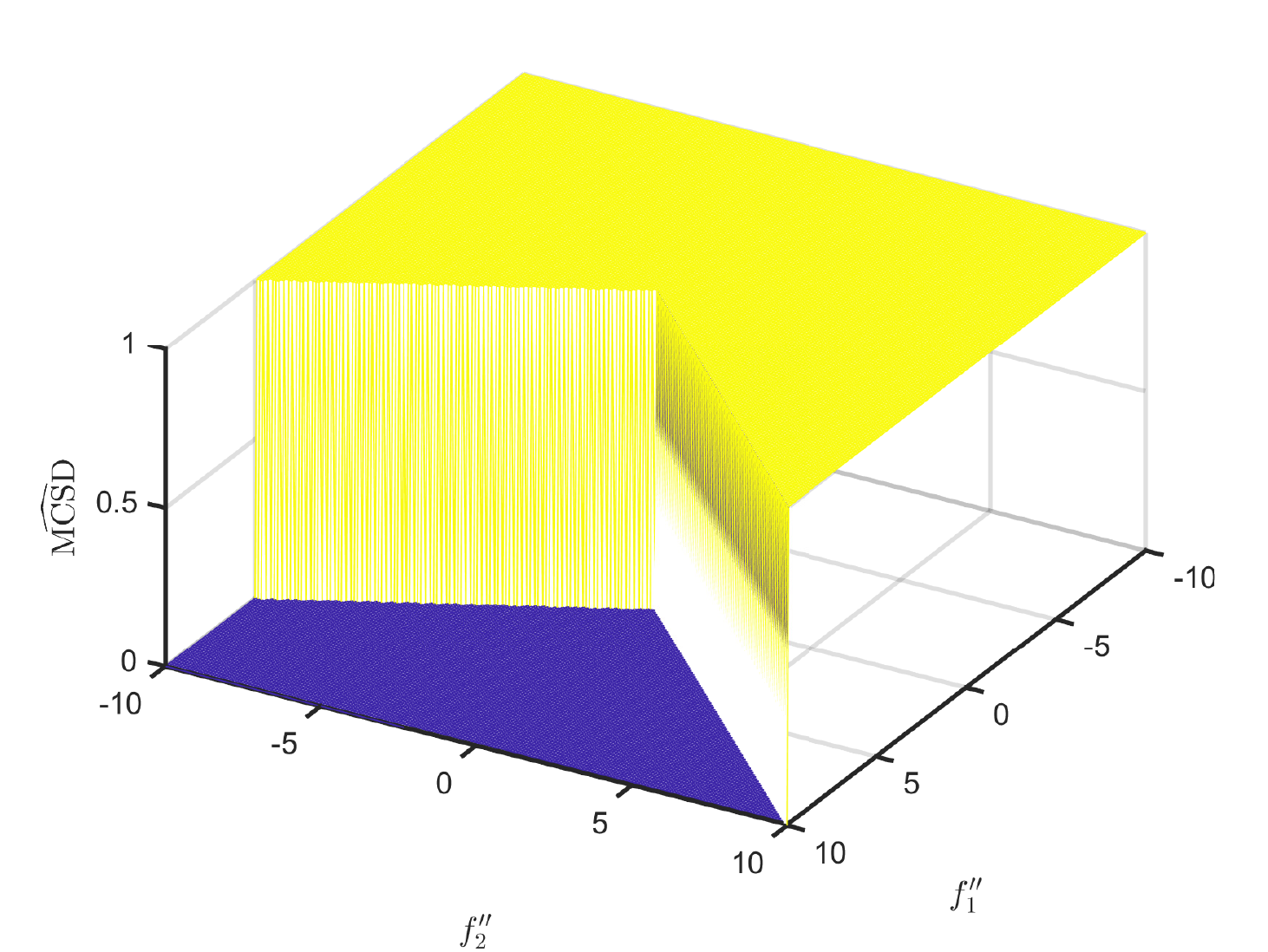}\hfill
			\includegraphics[width=0.95\linewidth] {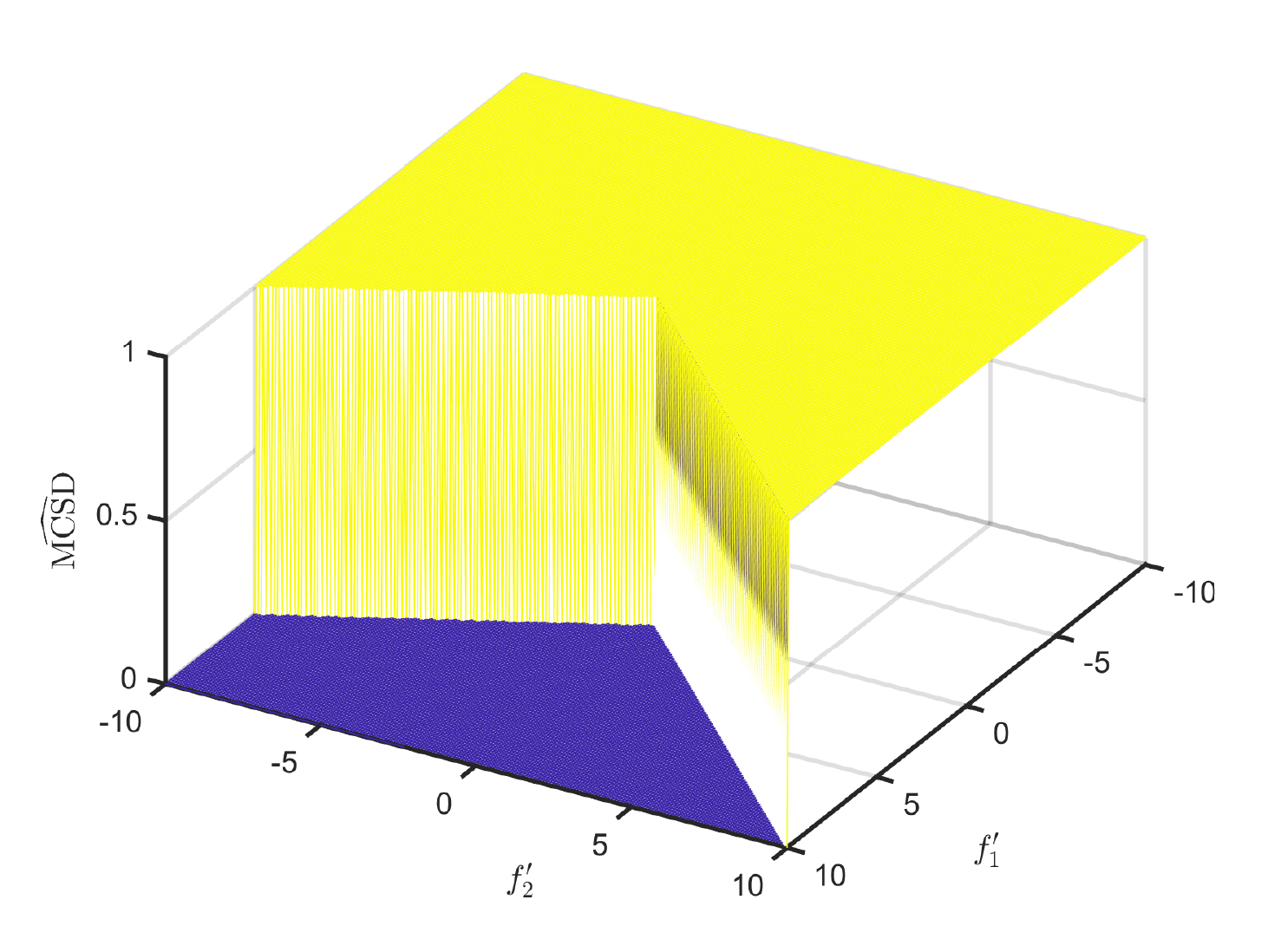}
	\end{minipage}}
	\hfill
	\subfigure[]{\label{Fig:f-f-disp-MCSD}
		\hspace{0.3cm}
		\begin{minipage}[t]{0.3\linewidth}
			\centering
			\includegraphics[width=0.95\linewidth] {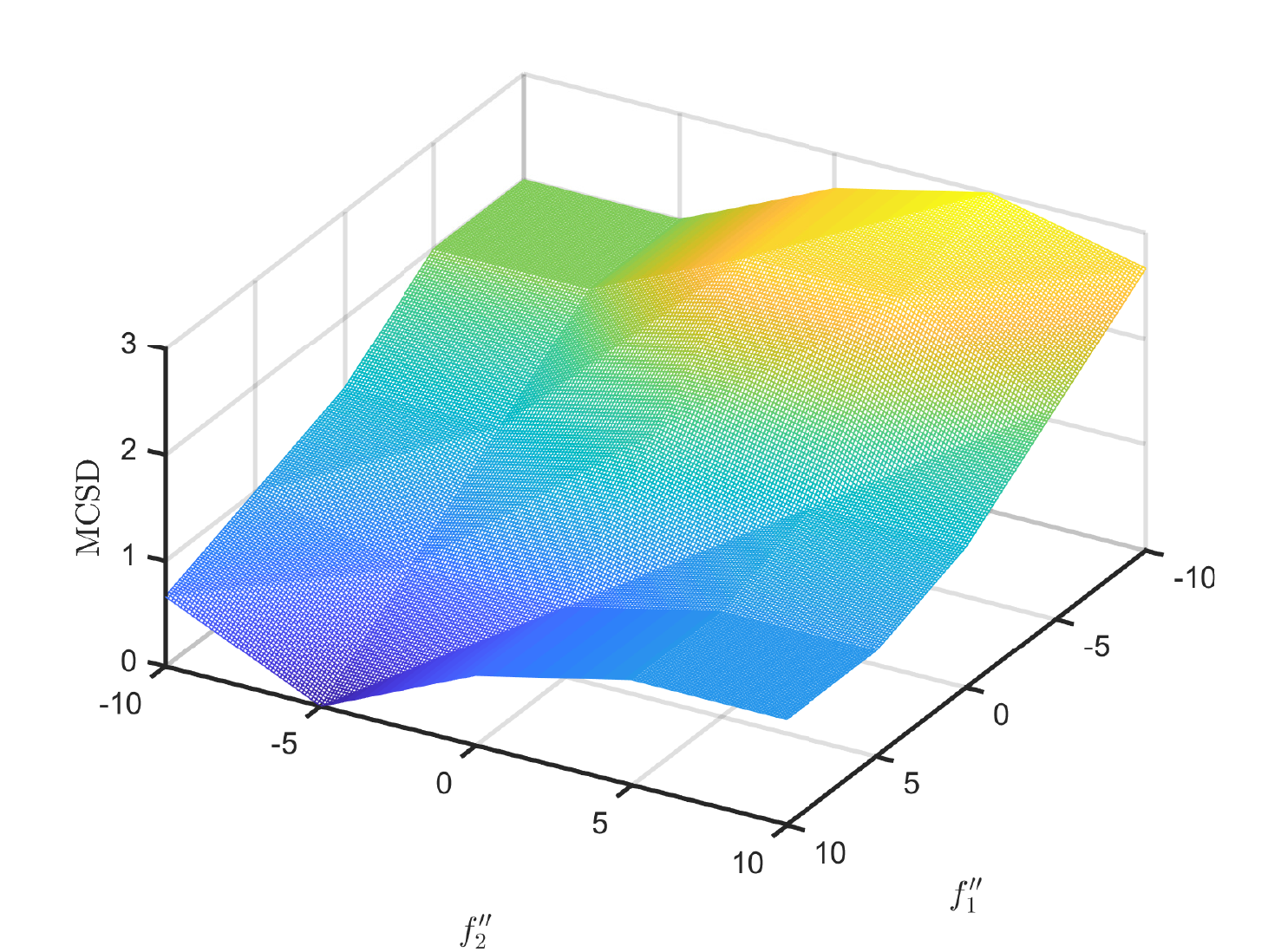}\hfill
			\includegraphics[width=0.95\linewidth] {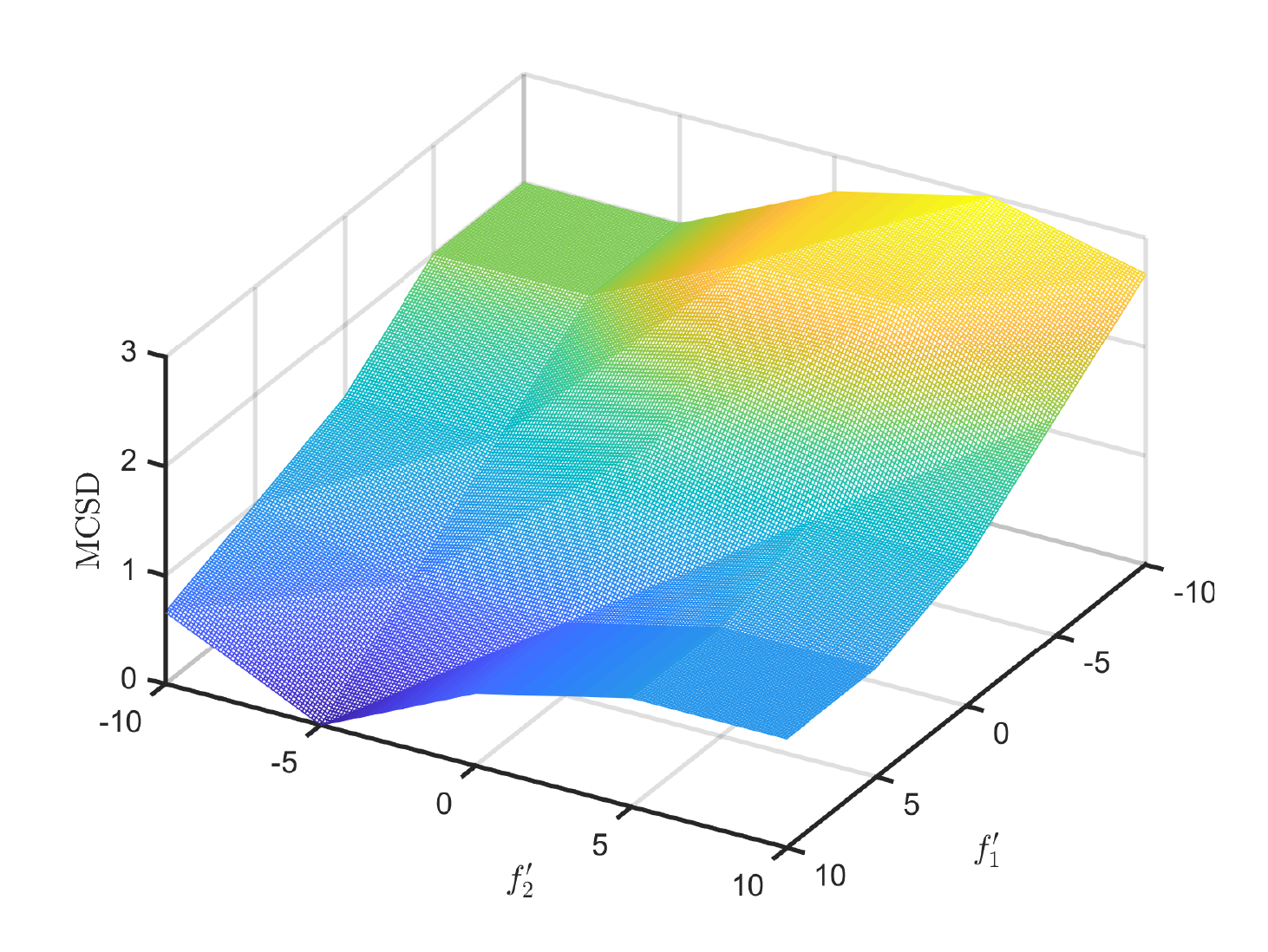}
	\end{minipage}}
	
	\caption{Plots of various disagreements between two scoring functions $\f'$ and $\f''$ firing on a single instance $\x$ in a case of $K=3$ and $\rho=5$, where the scoring functions satisfy the sum-to-zero constraint. Top row: fix $\f'(\x)$ to be $[10;-5;-5]$ and use $\f''(\x) = [f''_1;f''_2;-(f''_1 + f''_2)]$ as the argument; Bottom row: fix $\f''(\x)$ to be $[10;-5;-5]$ and use $\f'(\x) = [f'_1;f'_2;-(f'_1 + f'_2)]$ as the argument. \textcolor{black}{(a) The $\widetilde{\text{\rm MCSD}}$ (\ref{EqnMCSDDegen2MDD}), which can be considered as an absolute margin-based variant of the margin disparity (MD) \cite{mdd} (cf. terms in (\ref{equ:MarginDisparityDiscrepancy})); (b) the $\widehat{\text{\rm MCSD}}$ (\ref{EqnMCSDDegen2HDH}), which is an absolute margin-based equivalent of the hypothesis disagreement (HD) \cite{ben2010theory} (cf. terms in (\ref{equ:HDeltaHDivergence})); (c) our proposed $\text{\rm MCSD}$ (\ref{EqnMCSD}). } }\label{Fig:f-f-disp}
\end{figure*}

We have the following definition of distribution distance based on the proposed MCSD.
\begin{definition}[MCSD divergence] \label{DefMCSDDist}
	Given the definition of MCSD, we define the divergence between distributions $P_x$ and $Q_x$ over the domain $\mathcal{X}$ with respect to the space $\mathcal{F}$ as
	\begin{multline}\label{EqnMCSDDist}
	d^{(\rho)}_{MCSD}(P_x, Q_x) := \\ \sup\limits_{\f',\f''\in \mathcal{F}}[\text{\rm MCSD}_{Q_x}^{(\rho)}(\f', \f'') - \text{\rm MCSD}_{P_x}^{(\rho)}(\f', \f'')] .
	\end{multline}
\end{definition}
The proposed MCSD divergence (\ref{EqnMCSDDist}) satisfies the properties of non-negativity and triangle inequality, but it is not symmetric w.r.t. $P_x$ and $Q_x$. Nevertheless, we show its usefulness for multi-class UDA by developing the following bound.
\begin{theorem}\label{TheoremMCSDUDAGenBound}
	Fix $\rho>0$. For any scoring function $\f \in \mathcal{F}$, the following holds over the source and target distributions $P$ and $Q$,
	\begin{equation}\label{EqnMCSDUDAGenBound}
	\mathcal{E}_Q(h_{\f}) \leq \mathcal{E}^{(\rho)}_P(\f) + d^{(\rho)}_{MCSD}(P_x, Q_x) + \lambda,
	\end{equation}
	where the constant $\lambda = \mathcal{E}^{(\rho)}_P(\f^*) + \mathcal{E}^{(\rho)}_Q(\f^*)$ with $\f^* = \arg\min\limits_{\f\in \mathcal{F}}\mathcal{E}^{(\rho)}_P(\f) + \mathcal{E}^{(\rho)}_Q(\f)$, and
	\begin{equation}
	\mathcal{E}_Q(h_{\f}) := \mathbb{E}_{(\x,y)\sim Q} \mathbb{I}[h_{\f}(\x)\neq y] ,
	\end{equation}
	\begin{equation}
	\mathcal{E}^{(\rho)}_P(\f) := \mathbb{E}_{(\x,y)\sim P} \sum_{k=1}^K \Phi_{\rho}( \mu_k(\f(\x),y) )  .
	\end{equation}
\end{theorem}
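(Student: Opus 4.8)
The plan is to follow the classical Ben‑David \emph{et al.}~\cite{ben2010theory} template, in the multi‑class refinement of Zhang \emph{et al.}~\cite{mdd}, pivoting on the ideal joint minimizer $\f^*$ and substituting the matrix‑valued $\text{\rm MCSD}$ for the hypothesis disagreement/margin disparity. Write $\mathcal{E}^{(\rho)}_Q(\f):=\mathbb{E}_{(\x,y)\sim Q}\sum_{k=1}^{K}\Phi_\rho(\mu_k(\f(\x),y))$ for the target counterpart of $\mathcal{E}^{(\rho)}_P$, and observe that $\sum_{k}\Phi_\rho(\mu_k(\f(\x),y))$ equals the $L_1$ norm of the $y$-th column of $\M^{(\rho)}(\f(\x))$, so that $\mathcal{E}^{(\rho)}_P,\mathcal{E}^{(\rho)}_Q$ and $\text{\rm MCSD}$ are all assembled from column $L_1$ norms of the margin‑violation matrices. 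Step one is the reduction $\mathcal{E}_Q(h_{\f})\le\mathcal{E}^{(\rho)}_Q(\f)$, proved pointwise: if $h_{\f}(\x)=y$ it is trivial since $\Phi_\rho\ge 0$; if $h_{\f}(\x)=y'\neq y$ then $f_{y'}(\x)\ge f_y(\x)$, so under $\sum_k f_k(\x)=0$ at least one of $f_y(\x)\le 0$ or $f_{y'}(\x)\ge 0$ holds, i.e.\ $\mu_y(\f(\x),y)\le 0$ or $\mu_{y'}(\f(\x),y)\le 0$, and $\Phi_\rho(t)\ge\mathbb{I}[t\le 0]$ then forces $\sum_k\Phi_\rho(\mu_k(\f(\x),y))\ge 1=\mathbb{I}[h_{\f}(\x)\neq y]$ --- essentially the contrapositive of the third listed property of the absolute margin function. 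The same reasoning with $h_{\f^*}(\x)$ in place of $y$ gives $\mathbb{I}[h_{\f}(\x)\neq h_{\f^*}(\x)]\le\|\M^{(\rho)}_{:,h_{\f^*}(\x)}(\f(\x))\|_1$.

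Next, fix $\f^*=\argmin_{\f\in\mathcal{F}}[\mathcal{E}^{(\rho)}_P(\f)+\mathcal{E}^{(\rho)}_Q(\f)]$ and carry out the usual three triangle‑inequality moves, but with $\text{\rm MCSD}$ as the disagreement. First insert $\f^*$: $\mathcal{E}^{(\rho)}_Q(\f)\le\mathcal{E}^{(\rho)}_Q(\f^*)+\mathcal{D}^{(\rho)}_{Q_x}(\f,\f^*)$ for a per‑instance disagreement $\mathcal{D}$ that, after using the $L_1$ triangle inequality column‑by‑column on $\M^{(\rho)}(\f)-\M^{(\rho)}(\f^*)$, equals (or is dominated by) $\text{\rm MCSD}^{(\rho)}_{Q_x}(\f,\f^*)$; the key structural input is that, since $\Phi_\rho$ is non‑increasing, $h_{\g}(\x)$ always indexes the column of $\M^{(\rho)}(\g(\x))$ of smallest $L_1$ norm, which both realizes the pseudo‑labeling by $h_{\f^*}$ and lets one reassemble the single pseudo‑labeled column into the $K$-column‑averaged $\text{\rm MCSD}$ using the rigid entrywise geometry of $\M^{(\rho)}$ under the sum‑to‑zero constraint (each row is constant off the diagonal, so the $K$ column‑differences are pairwise close). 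Second, transfer to the source: since $\f,\f^*\in\mathcal{F}$, the definition of $d^{(\rho)}_{MCSD}$ as a supremum over scoring‑function pairs gives $\text{\rm MCSD}^{(\rho)}_{Q_x}(\f,\f^*)\le\text{\rm MCSD}^{(\rho)}_{P_x}(\f,\f^*)+d^{(\rho)}_{MCSD}(P_x,Q_x)$. Third, bound the source disagreement by source risks: $\text{\rm MCSD}^{(\rho)}_{P_x}(\f,\f^*)\le\mathcal{E}^{(\rho)}_P(\f)+\mathcal{E}^{(\rho)}_P(\f^*)$, again by the column‑wise $L_1$ triangle inequality and the identification of column norms with absolute‑margin losses (and that $h_{\f},h_{\f^*}$ index the cheapest columns). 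Chaining these three with Step one yields $\mathcal{E}_Q(h_{\f})\le\mathcal{E}^{(\rho)}_P(\f)+d^{(\rho)}_{MCSD}(P_x,Q_x)+\big(\mathcal{E}^{(\rho)}_P(\f^*)+\mathcal{E}^{(\rho)}_Q(\f^*)\big)$, which is (\ref{EqnMCSDUDAGenBound}) with the stated $\lambda$.

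The main obstacle is the interface between the \emph{labeled} risks $\mathcal{E}^{(\rho)}$ and the \emph{label‑free}, $K$-normalized $\text{\rm MCSD}$: a risk only ever sees the true‑label column of $\M^{(\rho)}$, whereas $\text{\rm MCSD}$ averages all $K$ columns, so a naive column‑to‑matrix bound bleeds a factor of $K$. Making the two interface losslessly is precisely where the detailed structure of $\M^{(\rho)}$ under $\sum_k f_k(\x)=0$ has to be used --- rows constant off the diagonal, the argmax column being cheapest, the $K$ column‑differences of $\M^{(\rho)}(\f)-\M^{(\rho)}(\f^*)$ being pairwise close entrywise --- together with a careful choice of pseudo‑labeling by $h_{\f^*}$ so that the pivot reproduces exactly the normalization of $d^{(\rho)}_{MCSD}$ and the asymmetric constant $\lambda=\mathcal{E}^{(\rho)}_P(\f^*)+\mathcal{E}^{(\rho)}_Q(\f^*)$ rather than a symmetrized or inflated one. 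The remaining ingredients --- the $0$-$1$-to-margin reduction, the transfer through the supremum, and the final bookkeeping --- are routine.
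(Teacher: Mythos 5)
Your high-level template is the paper's: pivot on the joint minimizer $\f^*$, transfer the disagreement from $Q_x$ to $P_x$ through the supremum defining $d^{(\rho)}_{MCSD}$, and bound the source disagreement by the two source risks (this last step is the paper's Lemma~A.2; your sketch of it is essentially right, except that every column difference must be bounded by the margin losses at the \emph{true} label, which needs the sign-flip inequalities such as $|\Phi_\rho(-f_y)-\Phi_\rho(-f^*_y)|\le\Phi_\rho(f_y)+\Phi_\rho(f^*_y)$ rather than the bare triangle inequality). The genuine gap is in your first move. You factor the target bound as $\mathcal{E}_Q(h_{\f})\le\mathcal{E}^{(\rho)}_Q(\f)\le\mathcal{E}^{(\rho)}_Q(\f^*)+\text{\rm MCSD}^{(\rho)}_{Q_x}(\f,\f^*)$, and the second inequality is false. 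Take $K=3$, $\rho=1$, an instance with true label $y=1$, $\f^*(\x)=(2,-1,-1)$ and $\f(\x)=(0,0,0)$. Then $\sum_k\Phi_\rho(\mu_k(\f^*(\x),1))=0$ and $\sum_k\Phi_\rho(\mu_k(\f(\x),1))=3$, while $\frac{1}{K}\|\M^{(\rho)}(\f(\x))-\M^{(\rho)}(\f^*(\x))\|_1=5/3$, so $3\not\le 0+5/3$. The reverse triangle inequality controls only the true-label column of the difference matrix, which is one column out of $K$ and can carry essentially all of its mass, so the $1/K$ normalization in MCSD destroys the bound. You correctly flag this ``interface'' as the main obstacle, but the structural facts you invoke (off-diagonal rows constant, argmax column cheapest, columns pairwise close) cannot repair it, because the inequality you are routing through is simply not true.

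The paper avoids the problem by never passing through $\mathcal{E}^{(\rho)}_Q(\f)$. Its Lemma~A.1 proves directly and pointwise that $\mathbb{I}[h_{\f}(\x)\neq y]\le\sum_k\Phi_\rho(\mu_k(\f^*(\x),y))+\frac{1}{K}\|\M^{(\rho)}(\f(\x))-\M^{(\rho)}(\f^*(\x))\|_1$, which only works because the left-hand side is capped at $1$ (the margin risk of $\f$ is capped at $K$, which is exactly the factor you lose). The proof is a case analysis on the relative positions of $h_{\f}(\x)$, $h_{\f^*}(\x)$ and $y$, with further subcases on where $f_{h_{\f}(\x)}$ and $f^*_{h_{\f}(\x)}$ sit relative to $0$ and $\pm\rho$, showing in each case that when the indicator equals $1$ either the $\f^*$-risk term already reaches $1$ or enough of the $K$ columns of the difference matrix are forced (via the sum-to-zero constraint and the monotonicity of $\Phi_\rho$) to contribute a total of at least $K$. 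That case analysis is the technical heart of the theorem, and it is precisely what your proposal leaves unproved; in the counterexample above the lemma still holds only because $\mathbb{I}[\cdot]\le 1\le 5/3$.
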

\textcolor{black}{Theorem \ref{TheoremMCSDUDAGenBound} has a form similar to the domain adaptation bounds proposed by Ben-David \emph{et al.} \cite{ben2010theory} and Zhang \emph{et al.} \cite{mdd}; differently, it relies on the absolute margin-based loss function and MCSD divergence to achieve a full characterization of the difference between scoring functions of multi-class UDA.} As the bound (\ref{EqnMCSDUDAGenBound}) suggests, given the fixed $\lambda$, the expected target error $\mathcal{E}_Q(h_{\f})$ is determined by the distance $d^{(\rho)}_{MCSD}(P_x, Q_x)$ (and the expected loss $\mathcal{E}^{(\rho)}_P(\f)$ over the source domain); smaller $d^{(\rho)}_{MCSD}(P_x, Q_x)$ indicates better adaptation of multi-class UDA.
\textcolor{black}{To connect with domain adaptation bounds developed in the literature, notably those proposed in \cite{mdd,ben2010theory}, we first present the following absolute margin-based variant of MD \cite{mdd} (cf. terms in (\ref{equ:MarginDisparityDiscrepancy})) and the absolute margin-based equivalent of HD \cite{ben2010theory} (cf. terms in (\ref{equ:HDeltaHDivergence})), for a pair of scoring functions $\f', \f''\in \mathcal{F}$ w.r.t. a distribution $D$}
\begin{equation} \label{EqnMCSDDegen2MDD}
\widetilde{\text{\rm MCSD}}_D^{(\rho)}(\f', \f'') := \mathbb{E}_{\x\sim D}\Phi_{\rho/2}[\mu_{h_{\f''}(\x)}(\f''(\x),h_{\f'}(\x))] ,
\end{equation}
\begin{equation}\label{EqnMCSDDegen2HDH}
\widehat{\text{\rm MCSD}}_D^{(\rho)}(\f', \f'') := \mathbb{E}_{\x\sim D}\mathbb{I}[\Phi_{\rho}[\mu_{h_{\f''}(\x)}(\f''(\x),h_{\f'}(\x))] = 1] .
\end{equation}
\textcolor{black}{The terms (\ref{EqnMCSDDegen2MDD}) and (\ref{EqnMCSDDegen2HDH}) also measure the multi-class scoring disagreements to some extent, and give the corresponding distribution divergence $d^{(\rho)}_{\widetilde{MCSD}}$ as the absolute margin-based variant of MDD \cite{mdd}, and $d^{(\rho)}_{\widehat{MCSD}}$ as the absolute margin-based equivalent of $\frac{1}{2}\mathcal{H}\Delta\mathcal{H}$-divergence \cite{ben2010theory}, respectively. We have the following propositions for $\widetilde{\text{\rm MCSD}}$ and $\widehat{\text{\rm MCSD}}$. }

\begin{prop}
	\label{TheoremMCSDDegen2MDDUDAGenBound}
	Fix $\rho>0$. For any scoring function $\f \in \mathcal{F}$,
	\begin{equation}\label{EqnMCSDDegen2MDDUDAGenBound}
	\mathcal{E}_Q(h_{\f}) \leq \mathcal{E}^{(\rho)}_P(\f) + d^{(\rho)}_{\widetilde{MCSD}}(P_x, Q_x) + \lambda,
	\end{equation}
\end{prop}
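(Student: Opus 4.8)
The plan is to reuse, almost verbatim, the three–step ``triangle'' argument behind Theorem~\ref{TheoremMCSDUDAGenBound}, but with the scalar surrogate $\widetilde{\text{\rm MCSD}}$ replacing the matrix-valued $\text{\rm MCSD}$. Let $\f^*$ denote the joint minimizer appearing in $\lambda$. I would establish the chain
\begin{align*}
\mathcal{E}_Q(h_{\f})
&\le \widetilde{\text{\rm MCSD}}_{Q_x}^{(\rho)}(\f,\f^*) + \mathcal{E}^{(\rho)}_Q(\f^*) \\
&\le \widetilde{\text{\rm MCSD}}_{P_x}^{(\rho)}(\f,\f^*) + d^{(\rho)}_{\widetilde{MCSD}}(P_x,Q_x) + \mathcal{E}^{(\rho)}_Q(\f^*) \\
&\le \mathcal{E}^{(\rho)}_P(\f) + d^{(\rho)}_{\widetilde{MCSD}}(P_x,Q_x) + \mathcal{E}^{(\rho)}_P(\f^*) + \mathcal{E}^{(\rho)}_Q(\f^*),
\end{align*}
where the middle step is immediate from the definition of $d^{(\rho)}_{\widetilde{MCSD}}$ (a supremum over scoring functions that in particular includes the pair $(\f,\f^*)$), and the last line equals $\mathcal{E}^{(\rho)}_P(\f) + d^{(\rho)}_{\widetilde{MCSD}}(P_x,Q_x) + \lambda$. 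So the work is in the first (target-side) and third (source-side) inequalities, both of which I would prove by a pointwise argument followed by taking expectations.

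\emph{Target-side inequality.} Fix $(\x,y)\sim Q$. If $h_{\f}(\x)=y$ there is nothing to show. Otherwise split on whether $h_{\f^*}(\x)=h_{\f}(\x)$. If $h_{\f^*}(\x)\ne h_{\f}(\x)$, then by Definition~\ref{AbsMarginFuncDefinition} and the sum-to-zero constraint $\mu_{h_{\f^*}(\x)}(\f^*(\x),h_{\f}(\x))=-f^*_{h_{\f^*}(\x)}(\x)=-\max_l f^*_l(\x)\le 0$, so $\Phi_{\rho/2}$ of it equals $1$ and already dominates $\mathbb{I}[h_{\f}(\x)\neq y]$. If instead $h_{\f^*}(\x)=h_{\f}(\x)\ne y$, then $\f^*$ also misclassifies $\x$, and taking $k=h_{\f^*}(\x)$ gives $\Phi_\rho(\mu_k(\f^*(\x),y))=\Phi_\rho(-f^*_k(\x))=1$, so $\sum_{k}\Phi_\rho(\mu_k(\f^*(\x),y))\ge 1$ dominates the indicator. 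Taking $\mathbb{E}_{(\x,y)\sim Q}$ yields $\mathcal{E}_Q(h_{\f})\le \widetilde{\text{\rm MCSD}}_{Q_x}^{(\rho)}(\f,\f^*) + \mathcal{E}^{(\rho)}_Q(\f^*)$.

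\emph{Source-side inequality.} Here I would bound the ($y$-independent) integrand $\Phi_{\rho/2}\big[\mu_{h_{\f^*}(\x)}(\f^*(\x),h_{\f}(\x))\big]$ pointwise by $\sum_k\Phi_\rho(\mu_k(\f(\x),y))+\sum_k\Phi_\rho(\mu_k(\f^*(\x),y))$ for every $(\x,y)\sim P$. If $h_{\f}(\x)\ne h_{\f^*}(\x)$ the integrand is $1$, but then $y$ differs from at least one of $h_{\f}(\x),h_{\f^*}(\x)$, so the corresponding absolute-margin sum is $\ge 1$ by the same column argument. If $h_{\f}(\x)=h_{\f^*}(\x)$, the integrand equals $\Phi_{\rho/2}\big(f^*_{h_{\f^*}(\x)}(\x)\big)=\Phi_{\rho/2}\big(\max_l f^*_l(\x)\big)$ with argument $\ge 0$; when moreover $h_{\f^*}(\x)=y$ this is $\Phi_{\rho/2}(\mu_y(\f^*(\x),y))\le\Phi_\rho(\mu_y(\f^*(\x),y))\le\sum_k\Phi_\rho(\mu_k(\f^*(\x),y))$, using the elementary monotonicity $\Phi_{\rho/2}(t)\le\Phi_\rho(t)$ for $t\ge 0$ (which is exactly what the $\rho/2$ in the definition of $\widetilde{\text{\rm MCSD}}$ buys us), and when $h_{\f^*}(\x)\ne y$ the sum $\sum_k\Phi_\rho(\mu_k(\f^*(\x),y))\ge 1$ again. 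Taking $\mathbb{E}_{(\x,y)\sim P}$ gives $\widetilde{\text{\rm MCSD}}_{P_x}^{(\rho)}(\f,\f^*)\le \mathcal{E}^{(\rho)}_P(\f) + \mathcal{E}^{(\rho)}_P(\f^*)$, and chaining the three bounds completes the proof.

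I expect the source-side step to be the main obstacle: because $\widetilde{\text{\rm MCSD}}$ only inspects the predicted labels $h_{\f},h_{\f^*}$ and not the true label $y$, the bound has to be arranged so that any disagreement of the two hypotheses (which pins the integrand at $1$) forces $y$ to disagree with one of them, and the case analysis — agreement versus disagreement of $h_{\f},h_{\f^*}$, crossed with whether $h_{\f^*}(\x)=y$ — must be exhaustive. Apart from that, the only quantitative ingredients are the three structural properties of the absolute margin function stated after Definition~\ref{AbsMarginFuncDefinition} and the inequality $\Phi_{\rho/2}\le\Phi_\rho$ on $[0,\infty)$.
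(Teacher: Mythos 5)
Your proposal is correct and follows essentially the same route as the paper: the same three-step chain through the pair $(\f,\f^*)$, reduced to the two pointwise lemmas $\mathbb{I}[h_{\f}(\x)\neq y]\le L^{(\rho)}(\f^*(\x),y)+\Phi_{\rho/2}[\mu_{h_{\f^*}(\x)}(\f^*(\x),h_{\f}(\x))]$ and $\Phi_{\rho/2}[\mu_{h_{\f^*}(\x)}(\f^*(\x),h_{\f}(\x))]\le L^{(\rho)}(\f(\x),y)+L^{(\rho)}(\f^*(\x),y)$, each handled by the identical case split on whether $h_{\f},h_{\f^*}$ agree with each other and with $y$. The paper merely phrases the cases as a single disjunction (``if $h_{\f^*}(\x)\neq h_{\f}(\x)$ or $h_{\f^*}(\x)\neq y$ the right-hand side already reaches $1$''), and your remaining case and use of $\Phi_{\rho/2}\le\Phi_{\rho}$ match its argument.
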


\begin{prop}
	\label{TheoremMCSDDegen2HDHUDAGenBound}
	Fix $\rho>0$. For any scoring function $\f \in \mathcal{F}$,
	\begin{equation}\label{EqnMCSDDegen2HDHUDAGenBound}
	\mathcal{E}_Q(h_{\f}) \leq \mathcal{E}^{(\rho)}_P(\f) + d^{(\rho)}_{\widehat{MCSD}}(P_x, Q_x) + \lambda.
	\end{equation}
\end{prop}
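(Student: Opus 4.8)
\emph{Proof plan.} The plan is to run exactly the triangle-inequality argument behind Theorem~\ref{TheoremMCSDUDAGenBound}, with $d^{(\rho)}_{MCSD}$ replaced by $d^{(\rho)}_{\widehat{MCSD}}$; the only genuinely new input is a pair of pointwise estimates that sandwich the integrand of $\widehat{\text{\rm MCSD}}$ between a labeling-disagreement quantity (from below) and a sum of two $\rho$-margin error terms (from above). Throughout I would fix the same minimizer $\f^* = \arg\min_{\f\in\mathcal{F}}\mathcal{E}^{(\rho)}_P(\f)+\mathcal{E}^{(\rho)}_Q(\f)$ appearing in Theorem~\ref{TheoremMCSDUDAGenBound}, so that $\lambda = \mathcal{E}^{(\rho)}_P(\f^*)+\mathcal{E}^{(\rho)}_Q(\f^*)$, and abbreviate by $\mu^{\star}(\x;\f',\f'') := \mu_{h_{\f''}(\x)}(\f''(\x),h_{\f'}(\x))$ the scalar inside $\widehat{\text{\rm MCSD}}_{D}^{(\rho)}(\f',\f'')$; its integrand is then $\mathbb{I}[\Phi_\rho(\mu^{\star}(\x;\f',\f''))=1]=\mathbb{I}[\mu^{\star}(\x;\f',\f'')\le 0]$, the last equality because $\Phi_\rho(t)=1\iff t\le 0$.

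\emph{Step A: two pointwise comparisons.} First, $\mathbb{I}[h_{\f'}(\x)\neq h_{\f''}(\x)]\le\mathbb{I}[\mu^{\star}(\x;\f',\f'')\le 0]$: when the two predictions differ, the component index $h_{\f''}(\x)$ differs from the label argument $h_{\f'}(\x)$, so by Definition~\ref{AbsMarginFuncDefinition}, $\mu^{\star}(\x;\f',\f'') = -f''_{h_{\f''}(\x)}(\x) = -\max_k f''_k(\x)\le 0$, the last step by the sum-to-zero constraint. Second, for every label $y$, $\mathbb{I}[\mu^{\star}(\x;\f',\f'')\le 0]\le\sum_{k=1}^{K}\Phi_\rho(\mu_k(\f'(\x),y)) + \sum_{k=1}^{K}\Phi_\rho(\mu_k(\f''(\x),y))$: if $h_{\f'}(\x)\neq h_{\f''}(\x)$ then at least one of them, say $\f'$, has $h_{\f'}(\x)\neq y$, and then the summand $\Phi_\rho(\mu_{h_{\f'}(\x)}(\f'(\x),y)) = \Phi_\rho(-\max_k f'_k(\x)) = 1$ already dominates the left side; if $h_{\f'}(\x)= h_{\f''}(\x)$, then $\mu^{\star}(\x;\f',\f'') = \max_k f''_k(\x)\ge 0$, so the left side equals $1$ only on the degenerate event $\{\f''(\x)=\mathbf{0}\}$, where $\sum_k\Phi_\rho(\mu_k(\f''(\x),y)) = K\,\Phi_\rho(0) = K\ge 1$ absorbs it. Finally I would use the ramp-domination fact already noted after Definition~\ref{AbsMarginFuncDefinition}, namely $\mathbb{I}[h_{\f}(\x)\neq y]\le\sum_k\Phi_\rho(\mu_k(\f(\x),y))$, hence $\mathcal{E}_D(h_{\f})\le\mathcal{E}^{(\rho)}_D(\f)$ for $D\in\{P,Q\}$.

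\emph{Step B: assembly.} (i) From the zero-one triangle inequality $\mathbb{I}[h_{\f}(\x)\neq y]\le\mathbb{I}[h_{\f}(\x)\neq h_{\f^*}(\x)] + \mathbb{I}[h_{\f^*}(\x)\neq y]$, the first comparison of Step~A (with $\f'=\f$, $\f''=\f^*$), and $\mathbb{E}_{(\x,y)\sim Q}$, one gets $\mathcal{E}_Q(h_{\f})\le\widehat{\text{\rm MCSD}}_{Q_x}^{(\rho)}(\f,\f^*) + \mathcal{E}_Q(h_{\f^*})\le\widehat{\text{\rm MCSD}}_{Q_x}^{(\rho)}(\f,\f^*) + \mathcal{E}^{(\rho)}_Q(\f^*)$. (ii) Since $d^{(\rho)}_{\widehat{MCSD}}(P_x,Q_x)$ is, by the definition analogous to Definition~\ref{DefMCSDDist}, the supremum of $\widehat{\text{\rm MCSD}}_{Q_x}^{(\rho)}-\widehat{\text{\rm MCSD}}_{P_x}^{(\rho)}$ over pairs in $\mathcal{F}$, we have $\widehat{\text{\rm MCSD}}_{Q_x}^{(\rho)}(\f,\f^*)\le\widehat{\text{\rm MCSD}}_{P_x}^{(\rho)}(\f,\f^*) + d^{(\rho)}_{\widehat{MCSD}}(P_x,Q_x)$. (iii) The second comparison of Step~A, followed by $\mathbb{E}_{(\x,y)\sim P}$ (legitimate since $\widehat{\text{\rm MCSD}}_{P_x}^{(\rho)}$ depends only on the marginal $P_x$), gives $\widehat{\text{\rm MCSD}}_{P_x}^{(\rho)}(\f,\f^*)\le\mathcal{E}^{(\rho)}_P(\f) + \mathcal{E}^{(\rho)}_P(\f^*)$. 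Chaining (i)--(iii) and recalling $\lambda = \mathcal{E}^{(\rho)}_P(\f^*)+\mathcal{E}^{(\rho)}_Q(\f^*)$ yields $\mathcal{E}_Q(h_{\f})\le\mathcal{E}^{(\rho)}_P(\f) + d^{(\rho)}_{\widehat{MCSD}}(P_x,Q_x) + \lambda$, which is (\ref{EqnMCSDDegen2HDHUDAGenBound}).

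\emph{Main obstacle.} The part that needs care is the second comparison of Step~A: one must notice that $\mathbb{I}[\mu^{\star}(\x;\f',\f'')\le 0]$ can fire even when the two hypotheses agree, on the degenerate set $\{\f''(\x)=\mathbf{0}\}$, and then check that the margin-error term $\sum_k\Phi_\rho(\mu_k(\f''(\x),y))$ equals $K$ there and hence still dominates it. Apart from that, the proof is the routine ``triangle inequality $+$ supremum $+$ ramp domination'' template already executed for Theorem~\ref{TheoremMCSDUDAGenBound} and Proposition~\ref{TheoremMCSDDegen2MDDUDAGenBound}; the only bookkeeping is to keep each pointwise inequality oriented the way it is consumed in Step~B, namely an upper bound on the $Q_x$ side and an upper bound by margin errors on the $P_x$ side.
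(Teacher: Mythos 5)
Your proof is correct and follows essentially the same route as the paper's: the paper likewise reduces the bound to the two pointwise inequalities $\mathcal{E}_D(h_{\f}) \leq \mathcal{E}^{(\rho)}_D(\f') + \widehat{\text{\rm MCSD}}_{D_x}^{(\rho)}(\f, \f')$ and $\widehat{\text{\rm MCSD}}_{D_x}^{(\rho)}(\f,\f') \leq \mathcal{E}^{(\rho)}_D(\f) + \mathcal{E}^{(\rho)}_D(\f')$, proved by the same case analysis on whether the two hypotheses agree with each other and with $y$, and then reruns the triangle-plus-supremum chain of Theorem~\ref{TheoremMCSDUDAGenBound} with $\f^*$. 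Your handling of the degenerate event $\{\f''(\x)=\mathbf{0}\}$ in the second comparison is in fact slightly more careful than the paper's, which asserts the indicator vanishes whenever $h_{\f'}(\x)=h_{\f''}(\x)=y$ and thereby overlooks that $\Phi_\rho(0)=1$; the inequality still holds there for exactly the reason you give.
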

\textcolor{black}{Note that $\mathcal{E}_Q(h_{\f})$, $\mathcal{E}^{(\rho)}_P(\f)$, and $\lambda$ are defined as the same as these in Theorem \ref{TheoremMCSDUDAGenBound}, and $d^{(\rho)}_{\widetilde{MCSD}}$ and $d^{(\rho)}_{\widehat{MCSD}}$ are defined following the definition of $d^{(\rho)}_{MCSD}$ (cf. Definition \ref{DefMCSDDist}) by replacing the term $\text{\rm MCSD}$ (\ref{EqnMCSD}) with $\widetilde{\text{\rm MCSD}}$ (\ref{EqnMCSDDegen2MDD}) and $\widehat{\text{\rm MCSD}}$ (\ref{EqnMCSDDegen2HDH}) respectively.}
Compared with the scalar-valued, absolute margin-based versions (\ref{EqnMCSDDegen2MDD}) and (\ref{EqnMCSDDegen2HDH}) (and also their corresponding ones in \cite{mdd} and \cite{ben2010theory}), our matrix-formed MCSD (\ref{EqnMCSD}) is able to characterize finer details of the scoring disagreements, as illustrated in Figure \ref{Fig:f-f-disp}. Consequently, the domain adaptation bound developed on the induced MCSD divergence would be beneficial to characterizing multi-class UDA in a finer manner, which possibly inspires better UDA algorithms.

\subsection{A Data-Dependent Multi-class Domain Adaptation Bound}

In this section, we extend the multi-class UDA bound in Theorem \ref{TheoremMCSDUDAGenBound} to a PAC bound, by showing that both terms of $\mathcal{E}^{(\rho)}_P(\f)$ and $d^{(\rho)}_{MCSD}(P_x, Q_x)$ can be estimated from finite samples. Our extension is based on the following notion of Rademacher complexity.
\begin{definition}[Rademacher complexity]
	Let $\mathcal{G}$ be a space of functions mapping from $\mathcal{Z}$ to $[a, b]$ and $\mathcal{S} = \left\{z_1,...,z_m\right\}$ be a fixed sample of size $m$ draw from the distribution $D$ over $\mathcal{Z}$. Then, the empirical Rademacher complexity of $\mathcal{G}$ with respect to the sample $\mathcal{S}$ is defined as
	\begin{equation}
	\widehat{\mathfrak{R}}_{\mathcal{S}}(\mathcal{G}) := \frac{1}{m} \mathbb{E}_\sigma \sup\limits_{g\in \mathcal{G}}\sum_{i=1}^{m}\sigma_i g(z_i),
	\end{equation}
	where $\{\sigma_i\}_{i=1}^m$ are independent uniform random variables taking values in $\left\{-1, +1\right\}$. The Rademacher complexity of $\mathcal{G}$ is defined as the expectation of $\widehat{\mathfrak{R}}_{\mathcal{S}}(\mathcal{G})$ over all samples of size $m$
	\begin{equation}
	\mathfrak{R}_{m,D}(\mathcal{G}) := \mathbb{E}_{\mathcal{S}\sim D^m} \widehat{\mathfrak{R}}_{\mathcal{S}}(\mathcal{G}).
	\end{equation}
\end{definition}
The Rademacher complexity captures the richness of a function space by measuring the degree to which it can fit random noise. The empirical version has the additional advantage that it is data-dependent and can be estimated from finite samples. We have the following definition from \cite{mohri2012foundations}, before introducing our Rademacher complexity-based adaptation bound.
\begin{definition}
	For a space $\mathcal{F}$ of scoring functions mapping from $\mathcal{X}$ to $\mathbb{R}^{|\mathcal{Y}|}$, we define
	\begin{equation}
	\Pi_1(\mathcal{F}) := \{ \x \rightarrow f_k(\x) | k \in \mathcal{Y}, \f\in \mathcal{F}\}.
	\end{equation}
\end{definition}
The defined space $\Pi_1(\mathcal{F})$ can be seen as the union of projections of $\mathcal{F}$ onto each output dimension.

\begin{theorem}\label{theorem-overall}
	Let $\mathcal{F}$ be the space of scoring functions mapping from $\mathcal{X}$ to $\mathbb{R}^K$. Let $P$ and $Q$ be the source and target distributions over $\mathcal{X\times Y}$, and $P_x$ and $Q_x$ be the corresponding marginal distributions over $\mathcal{X}$. Let $\widehat{P}$ and $\widehat{Q}_x$ denote the corresponding empirical distributions for a sample $\mathcal{S} = \{(\x_i^s,y_i^s)\}_{i=1}^{n_s}$ and a sample $\mathcal{T} = \{\x_j^t\}_{j=1}^{n_t}$. Fix $\rho>0$. Then, for any $\delta > 0$, with probability at least $1 - 3\delta$, the following holds for all $\f\in \mathcal{F}$
	\begin{equation}\label{equ-theorem1}
	\begin{aligned}
	\mathcal{E}_Q(h_{\f}) \leq & \mathcal{E}^{(\rho)}_{\widehat{P}}(\f) +  d^{(\rho)}_{MCSD}(\widehat{P}_x, \widehat{Q}_x) \\
	& + (\frac{2K^2}{\rho} + \frac{4K}{\rho})\widehat{\mathfrak{R}}_\mathcal{S}(\Pi_1(\mathcal{F}))
	+ \frac{4K}{\rho}\widehat{\mathfrak{R}}_\mathcal{T}(\Pi_1(\mathcal{F})) \\
	& + 6K\sqrt{\frac{\log\frac{4}{\delta}}{2n_s}} + 3K\sqrt{\frac{\log\frac{4}{\delta}}{2n_t}} + \lambda,
	\end{aligned}
	\end{equation}
	where the constant $\lambda = \min\limits_{\f\in \mathcal{F}}\mathcal{E}^{(\rho)}_P(\f) + \mathcal{E}^{(\rho)}_Q(\f)$, and
	\begin{equation}
	\mathcal{E}^{(\rho)}_{\widehat{P}}(\f) := \frac{1}{n_s}\sum_{i=1}^{n_s} \sum_{k=1}^K \Phi_{\rho}( \mu_k(\f(\x^s_i),y^s_i) )  .
	\end{equation}
\end{theorem}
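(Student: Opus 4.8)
The plan is to convert the population bound of Theorem \ref{TheoremMCSDUDAGenBound} into its empirical counterpart by controlling, with high probability, the deviation of each of the two data-dependent quantities $\mathcal{E}^{(\rho)}_P(\f)$ and $d^{(\rho)}_{MCSD}(P_x,Q_x)$ from their empirical versions $\mathcal{E}^{(\rho)}_{\widehat{P}}(\f)$ and $d^{(\rho)}_{MCSD}(\widehat{P}_x,\widehat{Q}_x)$, uniformly over $\f\in\mathcal{F}$. First I would start from (\ref{EqnMCSDUDAGenBound}), which gives $\mathcal{E}_Q(h_{\f})\le\mathcal{E}^{(\rho)}_P(\f)+d^{(\rho)}_{MCSD}(P_x,Q_x)+\lambda$ for every $\f$, and then bound the two population terms by empirical terms plus Rademacher-complexity and confidence corrections.

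For the source-error term, the object is the expectation over $P$ of $G_{\f}(\x,y):=\sum_{k=1}^K\Phi_\rho(\mu_k(\f(\x),y))$, a function taking values in $[0,K]$. A standard symmetrization/McDiarmid argument (e.g.\ Theorem 3.3 in \cite{mohri2012foundations}) gives, with probability at least $1-\delta$, uniformly over $\f$,
\begin{equation*}
\mathcal{E}^{(\rho)}_P(\f)\le\mathcal{E}^{(\rho)}_{\widehat{P}}(\f)+2\widehat{\mathfrak{R}}_{\mathcal{S}}(\mathcal{G})+3K\sqrt{\tfrac{\log(2/\delta)}{2n_s}},
\end{equation*}
where $\mathcal{G}=\{(\x,y)\mapsto G_{\f}(\x,y):\f\in\mathcal{F}\}$. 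The remaining work is to relate $\widehat{\mathfrak{R}}_{\mathcal{S}}(\mathcal{G})$ to $\widehat{\mathfrak{R}}_{\mathcal{S}}(\Pi_1(\mathcal{F}))$: since $\Phi_\rho$ is $\tfrac1\rho$-Lipschitz, Talagrand's contraction lemma peels off each $\Phi_\rho$, and then the sum over the $K$ components together with the structure of $\mu_k$ (which is $\pm f_k$, hence the relevant functions lie in $\Pi_1(\mathcal{F})$) yields a bound of the form $\tfrac{K^2}{\rho}\widehat{\mathfrak{R}}_{\mathcal{S}}(\Pi_1(\mathcal{F}))$ up to the constants appearing in the statement.

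For the divergence term I would bound $d^{(\rho)}_{MCSD}(P_x,Q_x)-d^{(\rho)}_{MCSD}(\widehat{P}_x,\widehat{Q}_x)$ by splitting into the source and target halves: writing $d^{(\rho)}_{MCSD}=\sup_{\f',\f''}\big[\mathrm{MCSD}^{(\rho)}_{Q_x}-\mathrm{MCSD}^{(\rho)}_{P_x}\big]$, the deviation is at most $\sup_{\f',\f''}|\mathbb{E}_{Q_x}\phi-\mathbb{E}_{\widehat{Q}_x}\phi|+\sup_{\f',\f''}|\mathbb{E}_{P_x}\phi-\mathbb{E}_{\widehat{P}_x}\phi|$, where $\phi(\x)=\tfrac1K\|\M^{(\rho)}(\f'(\x))-\M^{(\rho)}(\f''(\x))\|_1\in[0,K]$. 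Each of these two uniform-deviation terms is again handled by McDiarmid plus symmetrization; the contraction step here must be applied to the map $(\f',\f'')\mapsto\phi$, which is a sum of $K^2$ terms each of the form $|\Phi_\rho(\mu_i(\f'(\x),j))-\Phi_\rho(\mu_i(\f''(\x),j))|$, i.e.\ $\tfrac1\rho$-Lipschitz in the paired scores, giving the $\tfrac{2K^2}{\rho}$ and $\tfrac{4K}{\rho}$ coefficients on $\widehat{\mathfrak{R}}_{\mathcal{S}}(\Pi_1(\mathcal{F}))$ and $\widehat{\mathfrak{R}}_{\mathcal{T}}(\Pi_1(\mathcal{F}))$ after accounting for the factor-of-two from the absolute value and the doubling of the hypothesis pair. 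Finally I would combine the three high-probability events (source error, source-side divergence, target-side divergence), apply a union bound to get probability at least $1-3\delta$, and collect the $\sqrt{\log(4/\delta)/n}$ terms with the stated constants $6K$ and $3K$ (the target side contributing both to the divergence deviation and, because the target-marginal sample enters only through $\widehat{Q}_x$, only the $3K\sqrt{\cdot/n_t}$ term).

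The main obstacle I expect is the bookkeeping in the contraction/symmetrization steps: getting the exact constants $\tfrac{2K^2}{\rho}+\tfrac{4K}{\rho}$ and $\tfrac{4K}{\rho}$ right requires carefully tracking (i) the factor $K$ from summing $\Phi_\rho$ over components versus the factor $K^2$ from summing matrix entries, (ii) the extra factor $2$ from taking a supremum over a \emph{pair} $(\f',\f'')$ rather than a single hypothesis, (iii) the $\tfrac1K$ normalization inside $\mathrm{MCSD}$, and (iv) the fact that both $\pm f_k$ appear, so that one must argue $\widehat{\mathfrak{R}}_{\mathcal{S}}(\{\pm f_k\})=\widehat{\mathfrak{R}}_{\mathcal{S}}(\Pi_1(\mathcal{F}))$ using the symmetry of the Rademacher variables. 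The analytic content — symmetrization, McDiarmid, Talagrand contraction — is entirely standard once the Lipschitz constants and ranges are identified; everything else is routine.
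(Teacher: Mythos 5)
Your proposal is correct and follows essentially the same route as the paper's proof: start from the population bound of Theorem \ref{TheoremMCSDUDAGenBound}, control the deviation of $\mathcal{E}^{(\rho)}_P(\f)$ and of each half of $d^{(\rho)}_{MCSD}$ from its empirical version via a two-sided Rademacher generalization bound combined with Talagrand's contraction lemma (peeling off $\Phi_\rho$ with Lipschitz constant $1/\rho$, handling $\pm f_k$ by Rademacher symmetry, and doubling for the pair $(\f',\f'')$), and finish with a union bound over the three events. The constant bookkeeping you flag as the main obstacle is exactly what the paper's Lemmas on the loss-error and MCSD deviations carry out, yielding the stated $\frac{2K^2}{\rho}$, $\frac{4K}{\rho}$, $6K$, and $3K$ coefficients.
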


\section{Connecting Theory with Algorithms}
\label{Sec:SymNet}

In the derived bound (\ref{equ-theorem1}) of multi-class UDA, the constant $\lambda$ and complexity terms are assumed to be fixed given the hypothesis space $\mathcal{F}$. To minimize the expected target error $\mathcal{E}_Q(h_{\f})$, one is tempted to minimize the first two terms of $\mathcal{E}^{(\rho)}_{\widehat{P}}(\f)$ and $d^{(\rho)}_{MCSD}(\widehat{P}_x, \widehat{Q}_x)$. In practice, a function $\psi$ of the feature extractor is typically used to lift the input space $\mathcal{X}$ to a feature space $\mathcal{X}^{\psi} = \{ \psi(\x) | x \in \mathcal{X} \}$, where with a slight abuse of notation, the space $\mathcal{F}$ of the scoring function $\f: \mathcal{X}^{\psi}\rightarrow \mathbb{R}^{|\mathcal{Y}|} = \mathbb{R}^K$ and the induced labeling function $h_{\f}: \psi(\x)\rightarrow \arg\max_{k\in \mathcal{Y}}f_k(\psi(\x))$ are again well defined. We correspondingly write as $P^{\psi}$ and $Q^{\psi}$ for the source and target distributions over the lifted domain $\mathcal{X}^{\psi}\times \mathcal{Y}$, and their empirical (or marginal) versions as $\widehat{P}^{\psi}$ and $\widehat{Q}^{\psi}$ (or $P_x^{\psi}$ and $Q_x^{\psi}$). The function $\psi$ is typically implemented as a learnable deep network.

Given the learnable $\psi$, minimizing the right hand side of the bound (\ref{equ-theorem1}) can be achieved by identifying $\psi^{*}$ that minimizes $d^{(\rho)}_{MCSD}(\widehat{P}_x^{\psi}, \widehat{Q}_x^{\psi})$, and additionally identifying $\f^{*}$ with $\psi^{*}$ that minimizes $\mathcal{E}^{(\rho)}_{ \widehat{P}^{\psi} }(\f)$. Recall that the MCSD divergence (\ref{EqnMCSDDist}) is defined by taking the supremum over all pairs of $\f', \f'' \in \mathcal{F}$. Spelling $d^{(\rho)}_{MCSD}(\widehat{P}_x^{\psi}, \widehat{Q}_x^{\psi})$ out gives the following general objective of minimax optimization for multi-class UDA
\begin{equation}
\begin{gathered} \label{equ-opt1}
\min\limits_{\f,\psi}\  \mathcal{E}^{(\rho)}_{\widehat{P}^{\psi}}(\f) +  [\text{\rm MCSD}_{\widehat{Q}_x^{\psi}}^{(\rho)}(\f', \f'') - \text{\rm MCSD}_{\widehat{P}_x^{\psi}}^{(\rho)}(\f', \f'') ], \\
\max\limits_{\f',\f''}\  [\text{\rm MCSD}_{\widehat{Q}_x^{\psi}}^{(\rho)}(\f', \f'') - \text{\rm MCSD}_{\widehat{P}_x^{\psi}}^{(\rho)}(\f', \f'')] ,
\end{gathered}
\end{equation}
which suggests an adversarial learning strategy to promote domain-invariant conditional feature distributions via the learned $\psi$, thus extending \cite{dann} to account for multi-class UDA. We term the general algorithm (\ref{equ-opt1}) via the adversarial learning strategy as \emph{Multi-class Domain-adversarial learning Networks (McDalNets)}. Figure \ref{FigMcDalNets} gives an architectural illustration, where the scoring function $\f$ is for the multi-class classification task of interest, and $\f'$ and $\f''$ are auxiliary functions for the learning of $\psi$. Since $\f$, $\f'$, and $\f''$ contain all the parameters of classifiers, we also use them to respectively refer to the task and auxiliary classifiers.

\begin{figure}[htb]
	\includegraphics[width=1.0\linewidth] {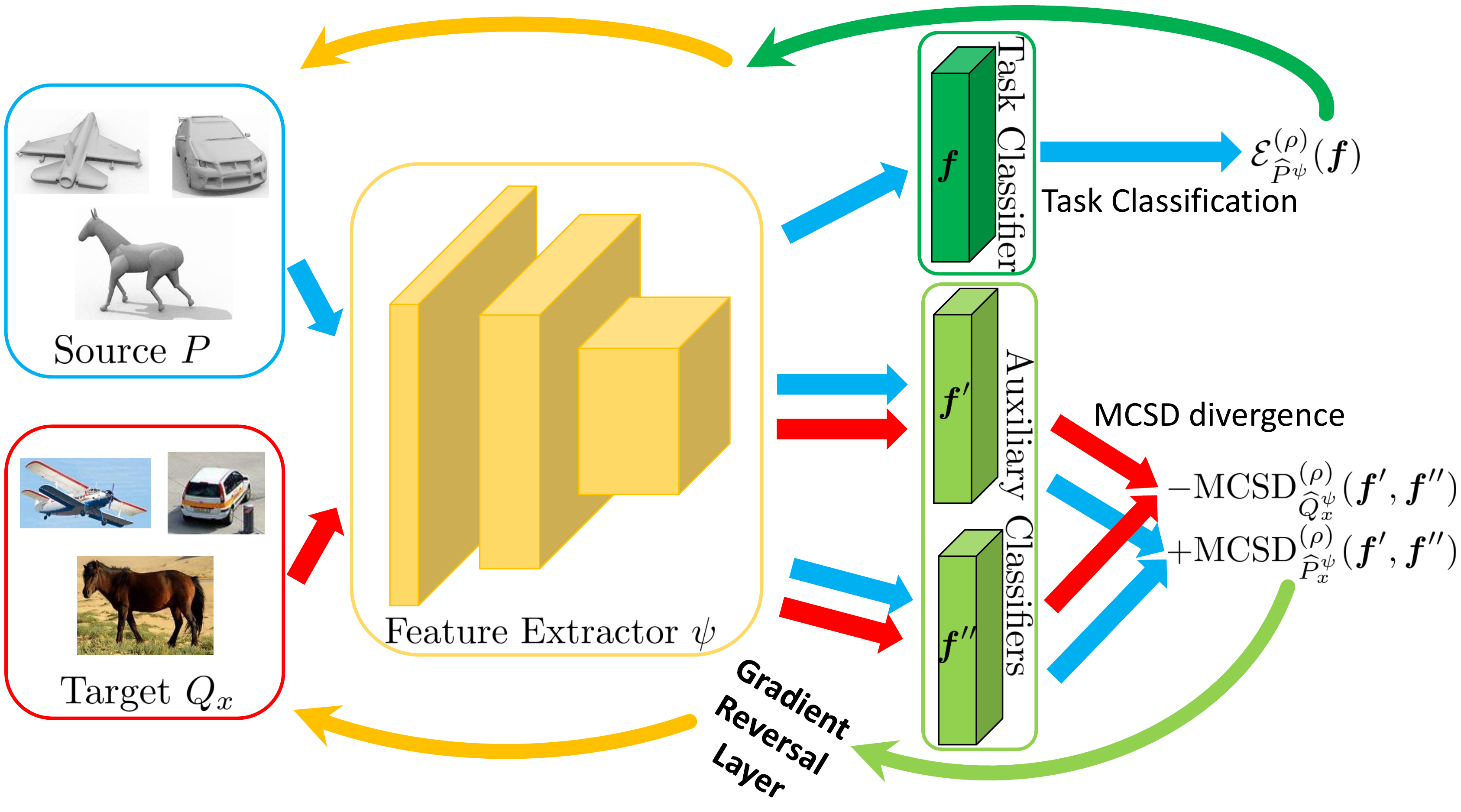}
	\caption{An architectural illustration of Multi-class Domain-adversarial learning Networks (McDalNets), which is motivated from the theoretically derived objective (\ref{equ-opt1}). The gradient reversal layer is adopted here to implement the adversarial objective; we note that other implementations (e.g., those discussed in \cite{adda}) would apply as well.}
	\label{FigMcDalNets}
\end{figure}

\subsection{Different Algorithms of Multi-class Domain-adversarial learning Networks}
\label{SecMcDalNetAlgms}

The proposed MCSD divergence is amenable to the theoretical analysis of multi-class UDA. However, it is difficult to directly optimize the MCSD based problem (\ref{equ-opt1}) via stochastic gradient descent (SGD), due to the use of ramp loss $\Phi_\rho$ in MCSD (\ref{EqnMCSD}) that causes an issue of vanishing gradients. \footnote{We have tried to train the McDalNets (illustrated in Figure \ref{FigMcDalNets}) with the exact objective (\ref{equ-opt1}). However, it turns out that the optimization stagnates after a few iterations, since absolute values of the outputs of scoring functions increase over the predefined $\rho$, and the gradients thus vanish.} To develop specific algorithms of McDalNets that are optimization-friendly, we consider surrogate functions of MCSD (\ref{EqnMCSD}), which are easier to be trained by SGD and also able to characterize the disagreements of all $K$ pairs of the corresponding elements in scoring functions $\f', \f'' \in \mathcal{F}$. These surrogates give the following objectives of specific algorithms
\begin{equation}
\begin{gathered} \label{equ-opt2}
\min\limits_{\f,\psi}\  \mathcal{L}_{\widehat{P}^{\psi}}(\f) +  [\text{\rm SurMCSD}_{\widehat{Q}_x^{\psi}}(\f', \f'') - \text{\rm SurMCSD}_{\widehat{P}_x^{\psi}}(\f', \f'') ], \\
\max\limits_{\f',\f''}\  [\text{\rm SurMCSD}_{\widehat{Q}_x^{\psi}}(\f', \f'') - \text{\rm SurMCSD}_{\widehat{P}_x^{\psi}}(\f', \f'')] ,
\end{gathered}
\end{equation}
\noindent \emph{respectively} with $\text{\rm SurMCSD}_{D^{\psi}}(\f', \f'')$ over a distribution $D^{\psi}$ as
\begin{equation}\label{equ-mcd}
\begin{aligned}
& (L_1/\text{\rm MCD \cite{mcd}} ): \mathbb{E}_{\x\sim D}\frac{1}{K}\|\phi(\f'(\psi(\x)))-\phi(\f''(\psi(\x)))\|_1,
\end{aligned}
\end{equation}
\begin{equation}\label{equ-kl}
\begin{aligned}
& (\text{\rm KL}): \mathbb{E}_{\x\sim D} \frac{1}{2}[\text{\rm KL}(\phi(\f'(\psi(\x))), \phi(\f''(\psi(\x)))) \\
& \qquad\qquad + \text{\rm KL}(\phi(\f''(\psi(\x))), \phi(\f'(\psi(\x))))],
\end{aligned}
\end{equation}
\begin{equation}\label{equ-cross-entropy}
\begin{aligned}
& (\text{\rm CE}): \mathbb{E}_{\x\sim D} \frac{1}{2}[\text{\rm CE}(\phi(\f'(\psi(\x))), \phi(\f''(\psi(\x)))) \\
& \qquad\qquad + \text{\rm CE}(\phi(\f''(\psi(\x))), \phi(\f'(\psi(\x))))],
\end{aligned}
\end{equation}
where $\phi(\cdot)$ is the softmax operator, $\text{\rm KL}(\cdot,\cdot)$ is the Kullback-Leibler divergence, and $\text{\rm CE}(\cdot,\cdot)$ is the cross-entropy function, and due to the same issue from the ramp loss, we have used a standard log loss
\begin{equation}\label{logloss}
\mathcal{L}_{\widehat{P}^{\psi}}(\f) = \mathbb{E}_{(\x,y)\sim \widehat{P}} -\log[\phi_{y}(\f(\psi(x)))],
\end{equation}
to replace the term $\mathcal{E}^{(\rho)}_{\widehat{P}^{\psi}}(\f)$ of empirical source error in (\ref{equ-opt1}). While MCSD (\ref{EqnMCSD}) takes a matrix-formed difference, the optimization-friendly surrogates (\ref{equ-mcd}), (\ref{equ-kl}), and (\ref{equ-cross-entropy}) generally take vector forms that characterize scoring disagreements between $K$ entry pairs of $\f'$ and $\f''$. In fact, we have the following proposition to show the equivalance of the matrix-formed MCSD to an aggregation of $K$ disagreements between any entry pair of $\f'$ and $\f''$.

\begin{prop}\label{PropMCDRelateMCSD}
Given the ramp loss $\Phi_\rho$ defined as (\ref{equ-ramp_loss}), there exists a distance measure $\varphi: \mathbb{R}\times \mathbb{R} \rightarrow \mathbb{R}_+$ defined as
\begin{equation}
\varphi(a,b) =  (K-1)|\Phi_\rho(-a)-\Phi_\rho(-b)|+|\Phi_\rho(a)-\Phi_\rho(b)|, \nonumber	
\end{equation}
such that the matrix-formed $\| \M^{(\rho)}(\f'(\x)) - \M^{(\rho)}(\f''(\x)) \|_1$ in MCSD (\ref{EqnMCSD}) can be calculated as the sum of $\varphi$-distance values of $K$ entry pairs between $f'_k(\x)$ and $f''_k(\x)$, i.e.,
\begin{equation}
\| \M^{(\rho)}(\f'(\x)) - \M^{(\rho)}(\f''(\x)) \|_1 = \sum_{k=1}^K \varphi(f'_k(\x), f''_k(\x)). \nonumber
\end{equation}
\end{prop}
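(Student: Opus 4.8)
The plan is to expand the matrix $L_1$ norm entrywise and then regroup the terms according to whether they lie on the diagonal of $\M^{(\rho)}$. First I would write, using the entrywise definition $\M^{(\rho)}_{i,j}(\f(\x)) = \Phi_\rho(\mu_i(\f(\x),j))$ together with the fact that $\|\cdot\|_1$ here sums the absolute values of all $K^2$ entries,
\[
\| \M^{(\rho)}(\f'(\x)) - \M^{(\rho)}(\f''(\x)) \|_1 = \sum_{i=1}^K \sum_{j=1}^K \bigl| \Phi_\rho(\mu_i(\f'(\x),j)) - \Phi_\rho(\mu_i(\f''(\x),j)) \bigr| .
\]
Then I would split the double sum into the diagonal part ($i = j$) and the off-diagonal part ($i \neq j$), substituting Definition~\ref{AbsMarginFuncDefinition}: on the diagonal $\mu_i(\f(\x),i) = f_i(\x)$, whereas off the diagonal $\mu_i(\f(\x),j) = -f_i(\x)$, which depends only on the row index $i$ and not on the column index $j$.

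The crux is the multiplicity bookkeeping. The diagonal contributes exactly one term $\bigl| \Phi_\rho(f'_k(\x)) - \Phi_\rho(f''_k(\x)) \bigr|$ for each $k \in \{1,\dots,K\}$. In the off-diagonal part, for each fixed row $i$ the summand $\bigl| \Phi_\rho(-f'_i(\x)) - \Phi_\rho(-f''_i(\x)) \bigr|$ is independent of $j$ and is repeated across the $K-1$ columns $j \neq i$; hence the off-diagonal part equals $\sum_{i=1}^K (K-1)\, \bigl| \Phi_\rho(-f'_i(\x)) - \Phi_\rho(-f''_i(\x)) \bigr|$. Recombining the two contributions that share the same index $k$ gives exactly $\sum_{k=1}^K \varphi(f'_k(\x), f''_k(\x))$ with $\varphi$ as defined in the statement.

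It then remains to check that $\varphi$ deserves the name ``distance measure''. It is non-negative and symmetric by inspection and vanishes when $a = b$; and it satisfies the triangle inequality because, for real values, $|\Phi_\rho(a) - \Phi_\rho(c)| \le |\Phi_\rho(a) - \Phi_\rho(b)| + |\Phi_\rho(b) - \Phi_\rho(c)|$ and likewise with all arguments negated, while $\varphi$ is a non-negative linear combination of two such expressions. Strictly it is only a pseudometric, since $\Phi_\rho$ is constant outside $(0,\rho)$ and hence not injective, but this is all the statement requires. I do not expect a genuine obstacle; the one point to handle with care is the index convention for $\M^{(\rho)}$ --- the row indexes the component of the absolute-margin vector $\u$ and the column indexes the label --- so that the ``$-f_i$'' entries are grouped by row, which is precisely what produces the weight $K-1$ rather than a different combinatorial factor.
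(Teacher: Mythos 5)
Your proof is correct and follows essentially the same route as the paper, whose own proof simply asserts that the identity ``follows directly from the definitions of $\varphi$ and $\M$'' and that $\varphi$ is a distance measure; your entrywise expansion of the $L_1$ norm, the diagonal/off-diagonal split, and the multiplicity count of $K-1$ for the off-diagonal row entries are exactly the bookkeeping the paper leaves implicit. Your remark that $\varphi$ is only a pseudometric (since $\Phi_\rho$ is not injective) is a fair and harmless refinement of the paper's claim.
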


We also consider an algorithm that replaces the MCSD terms of (\ref{equ-opt1}) with
\textcolor{black}{a surrogate function of the scalar-valued, absolute margin-based version (\ref{EqnMCSDDegen2MDD}), giving rise to}
\begin{equation}
\begin{gathered} \label{equ-opt2-degen}
\min\limits_{\f,\psi}\  \mathcal{L}_{\widehat{P}^{\psi}}(\f) +  [\widetilde{\text{\rm SurMCSD}}_{\widehat{Q}_x^{\psi}}(\f', \f'') - \widetilde{\text{\rm SurMCSD}}_{\widehat{P}_x^{\psi}}(\f', \f'') ], \\
\max\limits_{\f',\f''}\  [\widetilde{\text{\rm SurMCSD}}_{\widehat{Q}_x^{\psi}}(\f', \f'') - \widetilde{\text{\rm SurMCSD}}_{\widehat{P}_x^{\psi}}(\f', \f'')] ,
\end{gathered}
\end{equation}
\noindent with $\widetilde{\text{\rm SurMCSD}}_{D^{\psi}}(\f', \f'')$ over a distribution $D^{\psi}$ as \footnote{For better optimization, we follow \cite{gan,mdd} and practically implement the surrogate disagreement terms in (\ref{equ-opt2-degen}) as
	\begin{equation} \label{equ-mdd-form-long}
	\begin{gathered}
	\widetilde{\text{\rm SurMCSD}}_{\widehat{Q}_x^{\psi}}(\f', \f'') = \mathbb{E}_{\x\sim \widehat{Q}}\log[1-\phi_{h_{\f'}(\psi(\x))}(\f''(\psi(\x)))], \\
	\widetilde{\text{\rm SurMCSD}}_{\widehat{P}_x^{\psi}}(\f', \f'') = \mathbb{E}_{\x\sim \widehat{P}}-\log[\phi_{h_{\f'}(\psi(\x))}(\f''(\psi(\x)))].
	\end{gathered}
	\end{equation}
}
\begin{equation}\label{equ-mdd-form-ori}
\begin{aligned}
& (\text{\rm MDD \cite{mdd} variant}): \mathbb{E}_{\x\sim D}-\log[\phi_{h_{\f'}(\psi(\x))}(\f''(\psi(\x)))] .
\end{aligned}
\end{equation}
\textcolor{black}{Similarly, an algorithm based on the scalar-valued, absolute margin-based version (\ref{EqnMCSDDegen2HDH}) can be considered as an equivalent of the DANN algorithm \cite{dann} }
\noindent with $\widehat{\text{\rm SurMCSD}}_{D^{\psi}}(\f', \f'')$ over a distribution $D^{\psi}$ as \footnote{For better optimization, we follow \cite{dann} and practically implement the surrogate disagreement terms in (\ref{equ-dann-form-ori}) as
	\begin{equation} \label{equ-dann-form-dann}
	\begin{gathered}
	\widehat{\text{\rm SurMCSD}}_{\widehat{Q}_x^{\psi}}(\f', \f'') = \mathbb{E}_{\x\sim \widehat{Q}}\log[1- {\rm sigmoid}(d(\psi(\x)))], \\
	\widehat{\text{\rm SurMCSD}}_{\widehat{P}_x^{\psi}}(\f', \f'') = \mathbb{E}_{\x\sim \widehat{P}}-\log[{\rm sigmoid}(d(\psi(\x)))].
	\end{gathered}
	\end{equation}
}
\begin{equation}\label{equ-dann-form-ori}
\begin{aligned}
& (\text{\rm DANN \cite{dann}}): \mathbb{E}_{\x\sim D}-\log[{\rm sigmoid}(d(\psi(\x)))],
\end{aligned}
\end{equation}
where $d: D^{\psi} \to \mathbb{R}$ is a mapping function, and $\mathbb{I}[d(\psi(\x)) > 0] = \mathbb{I}[h_{\f'}(\psi(\x)) = h_{\f''}(\psi(\x))]$.


We note that algorithms discussed above resemble some recently proposed ones in the literature of UDA. For example, the objective (\ref{equ-opt2}) with the surrogate (\ref{equ-mcd}) is equivalent to the MCD algorithm \cite{mcd};
\textcolor{black}{the objective (\ref{equ-opt2-degen}) with the surrogate (\ref{equ-mdd-form-ori}) can be considered as a variant of MDD \cite{mdd}. }
In Section \ref{SecExps}, we conduct ablation studies to investigate the efficacy of these algorithms, and compare them with a new one to be presented shortly, which is motivated from the same theoretically derived objective (\ref{equ-opt1}).

\subsection{A New Algorithm of Domain-Symmetric Networks}\label{SecSymmNets}

\begin{figure*}
	\begin{center}
		\includegraphics[width=0.8\linewidth] {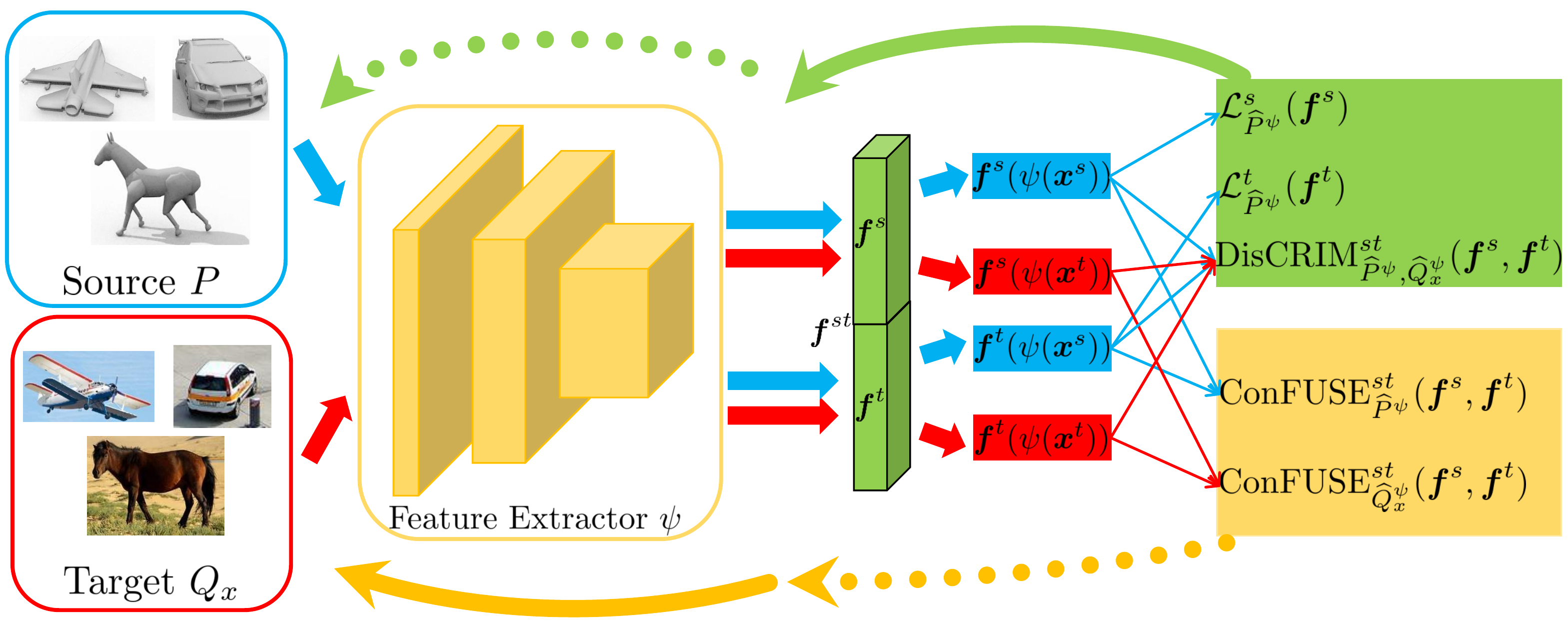}
	\end{center}
	\caption{The architecture of our proposed SymmNets, which includes a feature extractor $\psi$ and three classifiers of $\mathbf{f}^s, \mathbf{f}^t$, and $\mathbf{f}^{st}$. Note that the classifier $\mathbf{f}^{st}$ shares its layer neurons with those of $\mathbf{f}^s$ and $\mathbf{f}^t$. Parameters of the classifiers (i.e., $\mathbf{f}^s, \mathbf{f}^t$, and $\mathbf{f}^{st}$) and those of the feature extractor $\psi$ are respectively updated by gradients from loss terms in green and yellow boxes. Please refer to the main text for how the objectives are defined.}
	\label{FigSymmNet}
\end{figure*}

Apart from the task classifier $\f$, algorithms of McDalNets presented above use two auxiliary classifiers $\f'$ and $\f''$ only for learning $\psi$, which is less efficient in the use of parameters. To improve the efficiency, we propose an integrated scheme that concatenates $\f'$ and $\f''$ as $[\f'; \f''] \in \mathbb{R}^{2K}$, and lets them be respectively responsible for the classification of the source and target instances, as shown in Figure \ref{FigSymmNet}. We correspondingly use the notations of $\f^s$ and $\f^t$ to replace $\f'$ and $\f''$, and denote the concatenated classifier as $\f^{st}$, which \emph{shares parameters with} $\f^s$ and $\f^t$. We term such a network as Domain-Symmetric Networks (SymmNets) due to the symmetry of class-wise neuron distributions in $\f^s$ and $\f^t$.

To achieve the theoretically motivated learning objective (\ref{equ-opt1}), we have the following two designs to train SymmNets.
\begin{itemize}
	\item Since target data $\{ \x_j^t \}_{j=1}^{n_t}$ are unlabeled, to enforce symmetric predictions between the respective $K$ neurons of $\f^s$ and $\f^t$, we use a \emph{cross-domain training scheme} that trains the target classifier $\f^t$ using labeled source data $\{ (\x_i^s, y_i^s) \}_{i=1}^{n_s}$.
	\item While different algorithms presented in Section \ref{SecMcDalNetAlgms} take the adversarial training strategy (e.g., a manner of reverse gradients \cite{dann}) to learn domain-invariant conditional feature distributions, for SymmNets, we instead use a \emph{domain confusion (and discrimination) training scheme} on the concatenated classifier $\f^{st}$ to achieve the same goal.
\end{itemize}
We introduce the following notations before presenting the algorithm of SymmNets. For an input $\x$, $\f^s(\psi(\x)) \in \mathbb{R}^K$ and $\f^t(\psi(\x)) \in \mathbb{R}^K$ are the output vectors before the softmax operator $\phi$, and we denote $\vect{p}^s(\x) = \phi(\f^s(\psi(\x))) \in [0, 1]^K$ and $\vect{p}^t(\x) = \phi(\f^t(\psi(\x))) \in [0, 1]^K$. We also apply softmax to the output of the concatenated classifier $\f^{st}$, resulting in $\vect{p}^{st}(\x) = \phi([\f^s(\psi(\x));\f^t(\psi(\x))]) \in [0, 1]^{2K}$. For ease of subsequent notations, we also write $p^s_k(\x)$ (\emph{resp.} $p^t_k(\x)$ or $p^{st}_k(\x)$), $k \in \{1, \dots, K\}$, for the $k^{th}$ element of the probability vector $\vect{p}^s(\x)$ (\emph{resp.} $\vect{p}^t(\x)$ or $\vect{p}^{st}(\x)$) predicted by $\f^s$ (\emph{resp.} $\f^t$ or $\f^{st}$).

\vspace{0.1cm}
\noindent\textbf{Learning of the Source and Target Task Classifiers} We train the task classifier $\f^s$ using a standard log loss over the labeled source data as follows
\begin{equation} \label{Loss_task_sc}
\min_{\f^s} {\cal{L}}_{\widehat{P}^{\psi}}^s (\f^s) = - \frac{1}{n_s} \sum_{i=1}^{n_s}  \omega_{y_i^s} \mathrm{log}(p^s_{y_i^s}(\x_i^s)),
\end{equation}
where $\omega_{y_i^s} \in [0,1]$ is fixed as the value of $1$ for closed set and open set UDA, and will be turned active in Section \ref{SecExt2PartialOpenSet} for the extension of SymmNets to the setting of partial UDA.

To account for element-wise disagreements between predictions of $\f^s$ and $\f^t$, it is necessary to establish neuron-wise correspondence between them. To this end, we propose a cross-domain training scheme that trains the target classifier $\f^t$ again using the labeled source data
\begin{equation} \label{Loss_task_tc}
\min_{\f^t} {\cal{L}}_{\widehat{P}^{\psi}}^t (\f^t) = - \frac{1}{n_s} \sum_{i=1}^{n_s} \omega_{y_i^s} \mathrm{log}(p^t_{y_i^s}(\x_i^s)) .
\end{equation}
At a first glance, it seems that training $\f^t$ on $\{ (\x_i^s, y_i^s) \}_{i=1}^{n_s}$ only makes it a duplicate classifier of $\f^s$. However, its effect on establishing neuron-wise correspondence between $\f^s$ and $\f^t$  is very essential to achieve learning of domain-invariant features via the objectives of domain confusion and discrimination, as presented shortly. We also present ablation studies in Section \ref{SecExps} that verify the efficacy of the scheme (\ref{Loss_task_tc}).

\vspace{0.1cm}
\noindent\textbf{Adversarial Feature Learning via Domain Confusion and Discrimination} Algorithms in Section \ref{SecMcDalNetAlgms} use surrogate MCSD functions and minimize the induced MCSD divergence to learn $\psi$, in order to align conditional feature distributions across source and target domains. Instead of using surrogate MCSD functions in SymmNets, we propose domain confusion objectives to directly reduce the domain divergence, by learning $\psi$ such that it produces features whose scoring disagreements between $\f^s$ and $\f^t$ (\emph{via their parameter-sharing $\f^{st}$}) on both the source and target domains are \emph{equally small (and ideally null)}. Our confusion objectives are as follows
\begin{align}
\notag\min_{\psi} \text{\rm ConFUSE}_{\widehat{P}^{\psi}}^{st} (\f^{s},\f^t) = & - \frac{1}{2n_s} \sum_{i=1}^{n_s} \omega_{y_i^s} \mathrm{log}(p^{st}_{y_i^s}(\x_i^s)) \\ & - \frac{1}{2n_s} \sum_{i=1}^{n_s} \omega_{y_i^s} \mathrm{log}(p^{st}_{y_i^s+K}(\x_i^s)) , \label{Loss:cate_conf_s} \\
\notag\min_{\psi} \text{\rm ConFUSE}_{\widehat{Q}_x^{\psi}}^{st} (\f^{s},\f^t) = & - \frac{1}{2n_t} \sum_{j=1}^{n_t} \sum_{k=1}^{K} p^{st}_{k+K}(\x_j^t) \mathrm{log}(p^{st}_k(\x_j^t)) \\ & - \frac{1}{2n_t} \sum_{j=1}^{n_t} \sum_{k=1}^{K} p^{st}_k(\x_j^t) \mathrm{log}( p^{st}_{k+K}(\x_j^t)) , \label{Loss:do_conf_t}
\end{align}
where for a source example $(\x^s, y^s)$ with the label $y^s = k$, we identify its corresponding pair of the $k^{th}$ and $(k+K)^{th}$ neurons in $\f^{st}$, and use a cross-entropy between the (two-way) uniform distribution and probabilities on this neuron pair; for a target example $\x^t$, we simply use a cross-entropy between probabilities respectively on the first and second half sets of neurons in $\f^{st}$. We again fix $\omega_{y_i^s} = 1$ for closed set and open set UDA.

To provide an adversarial objective to the confusion ones (\ref{Loss:cate_conf_s}) and (\ref{Loss:do_conf_t}), we use the following domain discrimination loss
\begin{align} \label{Loss_domain_stc}
\notag \min_{\f^{s},\f^t} \text{\rm DisCRIM}_{\widehat{P}^{\psi}, \widehat{Q}_x^{\psi}}^{st} (\f^{s},\f^t) = & - \frac{1}{n_s} \sum_{i=1}^{n_s} \omega_{y_i^s} \mathrm{log}(p^{st}_{y_i^s}(\x_i^s)) \\ & - \frac{1}{n_t} \sum_{j=1}^{n_t}  \mathrm{log}(\sum_{k=1}^{K} p^{st}_{k+K}(\x_j^t)) ,
\end{align}
where $\omega_{y_i^s} = 1$ for closed set and open set UDA, and $p^{st}_k(\x)$ and $p^{st}_{k+K}(\x)$ can be viewed as the probabilities of classifying an example $\x$ of class $k$ as the source and target domains respectively.

\vspace{0.1cm}
\noindent\textbf{Overall Learning Objective} Combining (\ref{Loss:cate_conf_s}) and (\ref{Loss:do_conf_t}), and (\ref{Loss_task_sc}), (\ref{Loss_task_tc}), and (\ref{Loss_domain_stc}) gives the following objective to train SymmNets
\begin{align} \label{Loss:overall}
\notag & \min_{\psi} \text{\rm ConFUSE}_{\widehat{P}^{\psi}}^{st} (\f^{s},\f^t) + \lambda \text{\rm ConFUSE}_{\widehat{Q}_x^{\psi}}^{st} (\f^{s},\f^t), \\
& \min_{\f^s, \f^t} {\cal{L}}_{\widehat{P}^{\psi}}^s (\f^s) + {\cal{L}}_{\widehat{P}^{\psi}}^t (\f^t) + \text{\rm DisCRIM}_{\widehat{P}^{\psi}, \widehat{Q}_x^{\psi}}^{st} (\f^{s},\f^t) ,
\end{align}
where $\lambda \in [0,1]$ is a trade-off parameter to suppress less stable signals of $\text{\rm ConFUSE}_{\widehat{Q}_x^{\psi}}^{st} (\f^{s},\f^t)$ at early stages of training, since signals of $\text{\rm ConFUSE}_{\widehat{P}^{\psi}}^{st} (\f^{s},\f^t)$ from labeled source data are authentic and thus more stable. We note that the objective (\ref{Loss:overall}) of SymmNets is different from that in our preliminary work \cite{symnets}: in (\ref{Loss:overall}), the scoring disagreements between $\f^{s}$ and $\f^{t}$ are minimized \textit{explicitly} on target data, the entropy objective is achieved \textit{implicitly} in the target confusion objective (\ref{Loss:do_conf_t}), and both the class and domain supervision of source data is adopted in the domain discrimination objective (\ref{Loss_domain_stc}); in our preliminary work \cite{symnets}, the scoring disagreements between $\f^{s}$ and $\f^{t}$ are minimized \textit{implicitly} on target data, the entropy objective is adopted \textit{explicitly}, and only the domain supervision of source data is adopted in the domain discrimination objective.
we use \textbf{SymmNets-V1} and \textbf{SymmNets-V2} to report the results respectively from these two versions of our algorithms.


\vspace{0.1cm}
\noindent\textbf{Theoretical Connection} We discuss the conditions on which the objective (\ref{Loss:overall}) of SymmNets connects with the theoretically derived objective (\ref{equ-opt1}). We first show with the following proposition that the objective (\ref{Loss:overall}) minimizes the term in (\ref{equ-opt1}) of empirical source error defined on both the $\f^s$ and $\f^t$.
\begin{prop}
	\label{PropSymmNetConnectMcDalNet-SrcErr}
	Let $\mathcal{F}$ be a rich enough space of continuous and bounded scoring functions, with the sum-to-zero constraint $\sum_{k=1}^K f_k =0$. For $\f^s, \f^t \in \mathcal{F}$ and a fixed function $\psi$ that satisfies $\psi(\x_1)\neq \psi(\x_2)$ when $y_1\neq y_2$, $\exists \ \rho > 0$ such that, the minimizer $\f^{s*}$ of ${\cal{L}}_{\widehat{P}^{\psi}}^s (\f^s)$ in (\ref{Loss:overall}) also minimizes the term $\mathcal{E}^{(\rho)}_{\widehat{P}^{\psi}}(\f^s)$ in (\ref{equ-opt1}) of empirical source error defined on $\f^s$, and the minimizer $\f^{t*}$ of ${\cal{L}}_{\widehat{P}^{\psi}}^t (\f^t)$ in (\ref{Loss:overall}) also minimizes the term $\mathcal{E}^{(\rho)}_{\widehat{P}^{\psi}}(\f^t)$ in (\ref{equ-opt1}) of empirical source error defined on $\f^t$.
\end{prop}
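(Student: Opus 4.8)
The plan is to reduce the proposition to a sign condition on the scores of the log-loss minimizer at the training feature vectors. Recall that, by the definition of the ramp loss (\ref{equ-ramp_loss}) and of the absolute margin function (Definition~\ref{AbsMarginFuncDefinition}), the empirical source error $\mathcal{E}^{(\rho)}_{\widehat{P}^{\psi}}(\f)=\frac{1}{n_s}\sum_{i=1}^{n_s}\sum_{k=1}^{K}\Phi_\rho(\mu_k(\f(\psi(\x_i^s)),y_i^s))$ is non-negative, and it equals $0$ exactly when, for every $i$, $f_{y_i^s}(\psi(\x_i^s))\ge\rho$ and $f_k(\psi(\x_i^s))\le-\rho$ for all $k\ne y_i^s$. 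So it suffices to exhibit one $\rho>0$ for which the minimizer $\f^{s*}$ of $\mathcal{L}^s_{\widehat{P}^{\psi}}$ in (\ref{Loss_task_sc}) (with $\omega\equiv 1$, the closed/open set case) satisfies these margin conditions at all training points; then $\f^{s*}$ attains the value $0$ of the non-negative functional $\mathcal{E}^{(\rho)}_{\widehat{P}^{\psi}}$ and hence minimizes it, i.e.\ the term of empirical source error in (\ref{equ-opt1}). The identical reduction applies to $\f^{t*}$, since $\mathcal{L}^t_{\widehat{P}^{\psi}}$ in (\ref{Loss_task_tc}) is again a plain log loss over the \emph{labeled source} sample --- which is exactly the point of the cross-domain training scheme.

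Next I would set up existence and a pointwise decomposition. Because $\psi$ separates labels on the sample ($\psi(\x_1)\ne\psi(\x_2)$ whenever $y_1\ne y_2$), each distinct feature vector among $\{\psi(\x_i^s)\}_{i=1}^{n_s}$ carries a single well-defined label, so $\mathcal{L}^s_{\widehat{P}^{\psi}}(\f)=-\frac{1}{n_s}\sum_i\log\phi_{y_i^s}(\f(\psi(\x_i^s)))$ is a sum of terms, each depending on the value of $\f$ at one feature vector. Interpreting ``bounded'' as a uniform bound $\|\f\|_\infty\le B$ and using that $\mathcal{F}$ is rich enough, $\f$ can take --- independently across these finitely many feature vectors --- any value in the compact set $C=\{\vect{v}\in\mathbb{R}^K:\sum_k v_k=0,\ \|\vect{v}\|_\infty\le B\}$. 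Hence a minimizer $\f^{s*}$ exists in $\mathcal{F}$, and at a training feature vector with label $y$ its value $\vect{v}^*$ maximizes the softmax probability $\phi_y(\vect{v})$ over $\vect{v}\in C$.

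The core step is a short exchange (variational) argument characterizing $\vect{v}^*$. If $v_y^*<B$, moving a small mass from some coordinate $k\ne y$ with $v_k^*>-B$ onto coordinate $y$ keeps $\sum_k v_k=0$, respects the bound, raises $v_y$ and lowers every competing $v_k$, hence strictly increases $\phi_y$ --- a contradiction; and not all wrong-class coordinates can equal $-B$, since that would force $v_y=(K-1)B\ge B$. So $v_y^*=B$, and then the wrong-class coordinates sum to $-B$; by strict convexity of $\exp$, $\sum_{k\ne y}e^{v_k}$ (equivalently $1/\phi_y$) is minimized uniquely when they are all equal, i.e.\ $v_k^*=-B/(K-1)<0$. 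Thus $\mu_y(\vect{v}^*,y)=B$ and $\mu_k(\vect{v}^*,y)=B/(K-1)$ at every training point, so $\rho=B/(K-1)$ makes all ramp-loss terms vanish, giving $\mathcal{E}^{(\rho)}_{\widehat{P}^{\psi}}(\f^{s*})=0$. The same computation applies to $\f^{t*}$, proving the proposition. (If one does not wish to solve the maximization exactly, the exchange argument already gives $v_y^*>0$ and $v_k^*<0$ for $k\ne y$ at all training points, and taking $\rho$ to be the minimum of these finitely many positive margins suffices.)

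I expect the main obstacle to be making precise what ``rich enough and bounded'' must guarantee: that the pointwise-optimal score assignment actually lies \emph{inside} $\mathcal{F}$ (so that it is the global log-loss minimizer), that the infimum is attained (compactness from the uniform bound), and that the values of $\f$ at distinct training feature vectors are not coupled by the structure of $\mathcal{F}$ --- the property that legitimizes minimizing term-by-term. A secondary technical point is the boundary case of the exchange argument where $v_y$ already sits at $B$, which is exactly why one must also be allowed to swap mass between two wrong-class coordinates.
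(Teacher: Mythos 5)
Your proposal is correct and follows essentially the same route as the paper's proof: both argue that the log-loss minimizer, at each training feature vector, pushes the correct-class score to the upper bound $B$ (the paper's $M$) and the wrong-class scores to $-B/(K-1)$ under the sum-to-zero constraint, so that for $\rho \leq B/(K-1)$ every ramp-loss term vanishes and the non-negative functional $\mathcal{E}^{(\rho)}_{\widehat{P}^{\psi}}$ attains its minimum value of zero. Your exchange/Jensen argument and the existence/decoupling remarks merely supply the details behind the paper's ``it is not hard to verify'' step, so no substantive difference remains.
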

We note that the assumption of continuous and bounded scoring functions in Proposition \ref{PropSymmNetConnectMcDalNet-SrcErr} could be \emph{practically} met with the function implementation of the fully-connected network layer; the assumption of $\psi(\x_1)\neq \psi(\x_2)$ when $y_1\neq y_2$ is also reasonable with properly initialized and learned $\psi$. The objective (\ref{equ-opt1}) promotes the alignment of conditional feature distributions across the two domains, by learning $\psi$ that reduces MCSD divergence. We show with the following proposition that the objective (\ref{Loss:overall}) has the same effect.
\begin{prop}
	\label{PropSymmNetConnectMcDalNet-FeaAligh}
	For $\psi$ of a function space of enough capacity and fixed functions $\f^s$ and $\f^t$ with the same range, the minimizer $\psi^{*}$ of $\text{\rm ConFUSE}_{\widehat{P}^{\psi}}^{st} (\f^{s},\f^t) + \lambda \text{\rm ConFUSE}_{\widehat{Q}_x^{\psi}}^{st} (\f^{s},\f^t)$ with the parameter $\lambda > 0$ in (\ref{Loss:overall}) zeroizes $\text{\rm MCSD}_{\widehat{Q}_x^{\psi}}^{(\rho)}(\f^s, \f^t) - \text{\rm MCSD}_{\widehat{P}_x^{\psi}}^{(\rho)}(\f^s, \f^t)$ in (\ref{equ-opt1}) of empirical \text{\rm MCSD} divergence defined on $\f^s$ and $\f^t$.
\end{prop}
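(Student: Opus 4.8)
The plan is to prove the \emph{stronger} statement that the minimizer $\psi^{*}$ of the $\psi$-subproblem of (\ref{Loss:overall}) makes \emph{both} empirical quantities $\text{\rm MCSD}_{\widehat{P}_x^{\psi}}^{(\rho)}(\f^s,\f^t)$ and $\text{\rm MCSD}_{\widehat{Q}_x^{\psi}}^{(\rho)}(\f^s,\f^t)$ vanish, whence the difference in (\ref{equ-opt1}) vanishes trivially. Since $\lambda>0$, the source sample $\{\x_i^s\}$ and target sample $\{\x_j^t\}$ are disjoint as point sets, and $\psi$ ranges over a space of enough capacity (so that it can interpolate arbitrary feature values at these finitely many distinct inputs, subject only to the coupling imposed by the fixed maps $\f^s,\f^t$), minimizing $\text{\rm ConFUSE}_{\widehat{P}^{\psi}}^{st}(\f^s,\f^t)+\lambda\,\text{\rm ConFUSE}_{\widehat{Q}_x^{\psi}}^{st}(\f^s,\f^t)$ amounts to minimizing the two confusion terms \emph{simultaneously}, so I would analyze them one at a time.

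First I would handle the target term (\ref{Loss:do_conf_t}). Rewriting it as a per-example symmetrized cross-entropy between the first $K$ and the last $K$ entries of the $2K$-way softmax $\vect{p}^{st}(\x_j^t)=\phi([\f^s(\psi(\x_j^t));\f^t(\psi(\x_j^t))])$, and using non-negativity of the Kullback--Leibler divergence between the normalized half-blocks together with strict convexity of $-\log$, I would argue that the term is minimized precisely when the two half-blocks coincide, i.e. $p^{st}_k(\x_j^t)=p^{st}_{k+K}(\x_j^t)$ for all $k$. Because both half-blocks are read off the \emph{same} softmax, this forces $f^s_k(\psi(\x_j^t))=f^t_k(\psi(\x_j^t))$ for every $k$, hence $\f^s(\psi(\x_j^t))=\f^t(\psi(\x_j^t))$, hence $\M^{(\rho)}(\f^s(\psi(\x_j^t)))=\M^{(\rho)}(\f^t(\psi(\x_j^t)))$; summing over $j$ in the definition (\ref{EqnMCSD}) yields $\text{\rm MCSD}_{\widehat{Q}_x^{\psi}}^{(\rho)}(\f^s,\f^t)=0$. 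The capacity of the $\psi$-space together with the assumption $\mathrm{range}(\f^s)=\mathrm{range}(\f^t)$ is what guarantees that such a $\psi$ exists (it maps every target point into the set on which $\f^s$ and $\f^t$ agree).

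Next I would handle the source term (\ref{Loss:cate_conf_s}). Its per-example contribution is $-\tfrac12\log\!\big(p^{st}_{y_i^s}(\x_i^s)\,p^{st}_{y_i^s+K}(\x_i^s)\big)$ with $p^{st}_{y_i^s}(\x_i^s)+p^{st}_{y_i^s+K}(\x_i^s)\le 1$, so it is minimized only when $p^{st}_{y_i^s}(\x_i^s)=p^{st}_{y_i^s+K}(\x_i^s)=\tfrac12$ and the remaining $2K-2$ entries vanish; equivalently the two winning logits coincide, $f^s_{y_i^s}(\psi(\x_i^s))=f^t_{y_i^s}(\psi(\x_i^s))$, and are pushed to the top of the common range while all others are pushed to its bottom. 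This is the configuration already exploited in the proof of Proposition \ref{PropSymmNetConnectMcDalNet-SrcErr}: provided $\rho$ is small enough relative to $\mathrm{range}(\f^s)=\mathrm{range}(\f^t)$, every absolute margin $\mu_k(\f^s(\psi(\x_i^s)),y_i^s)$ and $\mu_k(\f^t(\psi(\x_i^s)),y_i^s)$ then reaches $\rho$, so that $\Phi_\rho$ annihilates all entries of both $\M^{(\rho)}(\f^s(\psi(\x_i^s)))$ and $\M^{(\rho)}(\f^t(\psi(\x_i^s)))$; consequently $\|\M^{(\rho)}(\f^s(\psi(\x_i^s)))-\M^{(\rho)}(\f^t(\psi(\x_i^s)))\|_1=0$ for every $i$ (trivially, or via Proposition \ref{PropMCDRelateMCSD} applied to each coordinate), and hence $\text{\rm MCSD}_{\widehat{P}_x^{\psi}}^{(\rho)}(\f^s,\f^t)=0$. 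Combining the two parts, $\text{\rm MCSD}_{\widehat{Q}_x^{\psi}}^{(\rho)}(\f^s,\f^t)-\text{\rm MCSD}_{\widehat{P}_x^{\psi}}^{(\rho)}(\f^s,\f^t)=0$ at $\psi^{*}$.

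The hard part will not be the per-example softmax optimality computations, which are routine, but making the hypotheses ``$\psi$ of enough capacity'' and ``$\f^s,\f^t$ with the same range'' precise enough to support the argument: one must ensure the confusion minima are genuinely \emph{attained} (not merely approached, which is delicate since $\f^s,\f^t$ are fixed and bounded), that a single $\psi^{*}$ can realize the saturated configuration at every source point \emph{and} the agreement configuration at every target point at once (needing $\psi$ to separate source points of different labels, and the common-range assumption to keep the source and target demands compatible), and that the prescribed $\rho$ is indeed small enough relative to the range for the saturation step. I expect the cleanest route is to impose these as explicit quantitative conditions, or else to phrase the conclusion in terms of infima along minimizing sequences of $\psi$; once that is in place, the rest is the elementary bookkeeping above.
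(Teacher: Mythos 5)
Your proposal takes essentially the same route as the paper: the paper's (two-sentence) proof likewise observes that the minimizer of each confusion term forces $\f^s(\psi^*(\x)) = \f^t(\psi^*(\x))$ on the corresponding empirical sample, so that $\text{\rm MCSD}_{\widehat{P}_x^{\psi^*}}^{(\rho)}(\f^s,\f^t)$ and $\text{\rm MCSD}_{\widehat{Q}_x^{\psi^*}}^{(\rho)}(\f^s,\f^t)$ each vanish and the difference is trivially zero. The only divergence is your source-term detour through $\Phi_\rho$-saturation of the absolute margins, which introduces an unnecessary smallness condition on $\rho$: in the saturated configuration, with $\f^s$ and $\f^t$ sharing the same range, you already have $\f^s(\psi^*(\x_i^s))=\f^t(\psi^*(\x_i^s))$, so the margin matrices coincide for every $\rho$.
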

We finally note that given the fixed $\psi$, minimizing the domain discrimination term $\text{\rm DisCRIM}_{\widehat{P}^{\psi}, \widehat{Q}_x^{\psi}}^{st} (\f^{s},\f^t)$ in (\ref{Loss:overall}) over $\f^s$ and $\f^t$ (together with the minimization of ${\cal{L}}_{\widehat{P}^{\psi}}^s (\f^s)$ and ${\cal{L}}_{\widehat{P}^{\psi}}^t (\f^t)$) will increase the measured divergence between $\widehat{P}_x^{\psi}$ and $\widehat{Q}_x^{\psi}$, thus providing an adversarial feature learning signal similar to the one provided by maximizing the MCSD divergence in (\ref{equ-opt1}).
Specifically, $\text{\rm MCSD}_{\widehat{P}_x^{\psi}}^{(\rho)}(\f^s, \f^t)$ is minimized by minimizing ${\cal{L}}_{\widehat{P}^{\psi}}^s (\f^s) + {\cal{L}}_{\widehat{P}^{\psi}}^t (\f^t)$ based on the Lemma \textcolor{red}{A.2} in the appendices (i.e., $\text{\rm MCSD}_{\widehat{P}_x^{\psi}}^{(\rho)}(\f^s, \f^t) \leq \mathcal{E}^{(\rho)}_{{\widehat{P}}^{\psi}}(\f^s) + \mathcal{E}^{(\rho)}_{\widehat{P}^{\psi}}(\f^t)$) and the Proposition \ref{PropSymmNetConnectMcDalNet-SrcErr}. On the other hand, minimizing $\text{\rm DisCRIM}_{\hat{P}^{\psi}, \hat{Q}_x^{\psi}}^{st} (\f^{s},\f^t)$ maximizes the output diversity of $\f^s$ and $\f^t$, thus resulting in the maximization of $\text{\rm MCSD}_{\hat{Q}_x^{\psi}}^{(\rho)}(\f^s, \f^t)$.

\section{Extensions for Partial and Open Set Domain Adaptation}
\label{SecExt2PartialOpenSet}

The theories and algorithms discussed so far apply to the \emph{closed set} setting of multi-class UDA, where a shared label space between the source and target domains is assumed. In this section, we show that simple extensions of our proposed algorithm of SymmNets can be used for either the \emph{partial} \cite{san,pada,iwanpda,transfer_example_partial} or the \emph{open set} \cite{open_set_math,open_set_bp} multi-class UDA.

\vspace{0.1cm}
\noindent\textbf{Partial Domain Adaptation} The partial setting of multi-class UDA assumes that classes of the target domain constitutes an \emph{unknown subset} of that of the source domain. As the setting suggests, a key challenge here is to identify the source instances that share the same classes with the target domain. To this end, we leverage the class-wise symmetry of neuron predictions between $\f^s$ and $\f^t$ in SymmNets, and propose a soft class weighting scheme that simply weights source instances using collective prediction evidence of target instances from $\f^t$. Specifically, we compute the following class-wise averages of prediction probabilities for target instances, and use these averaged probabilities $\{ \omega_{y_i^s}\}_{i=1}^{n_s}$ as weights for terms in the objectives (\ref{Loss_task_sc}), (\ref{Loss_task_tc}), (\ref{Loss:cate_conf_s}), and (\ref{Loss_domain_stc}) that involve labeled source data $\{(\x_i, y_i)\}_{i=1}^{n_s}$
\begin{align}
\omega_k = \frac{1}{n_t}\sum_{j=1}^{n_t} p_k^t(\vect{x}_j^t) , \ k \in \{1, \dots, K\} .
\end{align}
Such a scheme has the effect that source instances that are potentially of the classes exclusive to the target domain would be weighted down in the instance-reweighting version of the learning objective (\ref{Loss:overall}), thus promoting partial adaptation. In practice, we use more balanced class-wise weights in the early stages of training via
\begin{align}
\omega_k \leftarrow \xi \frac{\omega_k}{\max_{k\in \mathcal{Y}}\omega_k}  + (1-\xi) , \ k \in \{1, \dots, K\} .
\end{align}
where $\xi$ is a parameter set to be smaller in the early stages of training. We note that similar soft weighting schemes are also used in \cite{san,pada}.

\vspace{0.1cm}
\noindent\textbf{Open Set Domain Adaptation} The open set setting of multi-class UDA takes a step further to assume that the target domain contains certain classes exclusive to the source domain as well. Let $K^s$ and $K^t$ respectively denote the numbers of classes in the source and target domains, and $\widetilde{K}$ be the number of classes common to them, which is assumed known in \cite{open_set_math,open_set_bp}. We have $\widetilde{K} \leq K^s$ and $\widetilde{K} \leq K^t$. Extending SymmNets for the open set setting can be simply achieved by adapting its $\f^s$ and $\f^t$ to respectively have $\widetilde{K} + 1$ output neurons, where the final neuron of $\f^s$ is responsible for an aggregated prediction of the domain-specific $K^s - \widetilde{K}$ classes, and the same applies to the adapted $\f^t$. Although domain-specific classes in the source domain are treated as a single, super class, to achieve effective training of the adapted SymmNets via SGD, we still respect their overall population by sampling a $\nu \geq 1$ factor of more source examples from the super class than those from each of the $\widetilde{K}$ shared classes, when constituting training source batches. We investigate different values of $\nu$ in Section \ref{SecExps}; setting $\nu = 6$ consistently gives good results. Since target instances are unlabeled, we simply sample them randomly to constitute training target batches.

\color {black}
\section{Experiments}
\label{SecExps}

In this section, we conduct experiments to investigate the practice of our introduced theory and algorithms. We compare different algorithms or implementations of McDalNets, including these based on the absolute margin-based equivalent of \cite{ben2010theory} and variant of \cite{mdd},
and our proposed SymmNets-V1 \cite{symnets} and SymmNets-V2 under the closed set setting of multi-class UDA. We also evaluate the efficacy of our SymmNets for partial and open set settings. These experiments are conducted on seven benchmark datasets by implementing algorithms on three backbone networks, which are specified shortly. Additional experiments, results, and analyses are provided in the appendices. 

\begin{table}[htb]
	\begin{center}
		\caption{Summary of datasets. ``C'', ``P'', and ``O'' indicate the respective settings of the closed set, partial, and open set domain adaptation.}
		\label{Tab:datasets}
		\begin{tabular}{l|cccc}
			\hline
			\multirow{2}{*}{Dataset}        & Involved     &      No. of       & No. of   & No. of  \\
			& Tasks        &  Domains       & Classes     &    Samples       \\
			\hline
			ImageCLEF-DA \cite{ImageCLEFDA}  &      C       & 3             & 12          &     1,800          \\
			Office-31 \cite{office_31}       &   C+P+O      & 3             & 31          &     4,110       \\
			Office-Home \cite{office_home}   &    C+P       & 4             & 65          &     15,500        \\
			Digits \cite{mnist,usps,svhn}   &     C         & 3             & 10          &     172.5K    \\
			Syn2Real\cite{syn2real}          &    O         & 2              & 13            &     248K         \\
			VisDA-2017 \cite{visda,syn2real} &    C         & 2             & 12          &     280K \\ 
			DomainNet \cite{peng2018moment} & C  & 6              & 345           &    586.6K \\
			\hline
		\end{tabular}
	\end{center}
\end{table}

\begin{table*}[htb]
	\centering
	\caption{\textcolor{black}{Accuracy (\%) of different instantiations of McDalNets on the datasets of Office-31 \cite{office_31}, ImageCLEF-DA \cite{ImageCLEFDA}, Office-Home \cite{office_home}, Digits \cite{mnist,svhn,usps}, VisDA-2017 \cite{visda}, and DomainNet \cite{peng2018moment} under the setting of closed set UDA. Each accuracy reported here is a \textit{result averaged over individual tasks of a specific dataset}. All the results of individual tasks for the respective datasets are given in the appendices.}}
	\label{Tab:different_implementation}
	\begin{tabular}{l|c|c|c|c|c|c}
		\hline
		Methods                                       & {\scriptsize Office-31}  & {\scriptsize  ImageCLEF}  & {\scriptsize  Office-Home}   & {\scriptsize Digits} & {\scriptsize VisDA-2017}  & {\scriptsize \textcolor{black}{DomainNet}} \\
		\hline
		Source Only                                   & 81.8       & 82.7 & 58.9  & 70.5  &  41.8 & \textcolor{black}{24.4} \\
		\hline
		{\scriptsize McDalNets based on the following surrogates of $\widehat{\text{\rm MCSD}}$  (\ref{EqnMCSDDegen2HDH}) and  $\widetilde{\text{\rm MCSD}}$ (\ref{EqnMCSDDegen2MDD})}             &                     &               &                 &                & & \\
		\quad DANN \cite{reverse_grad,dann} (\ref{equ-dann-form-dann})  & 82.8  & 84.2& 60.0 &  72.5   & 58.4  & \textcolor{black}{27.1}  \\
		\quad MDD \cite{mdd} variant (\ref{equ-mdd-form-long}) & 84.5  &86.7 & 61.1 &    not converge         &  not converge & \textcolor{black}{26.5}  \\ 
		\hline
		{\scriptsize McDalNets based on the following surrogates of MCSD (\ref{EqnMCSD})} &                     &               &                 &                &  \\
		\quad $L_1/\text{\rm MCD}$\cite{mcd} (\ref{equ-mcd})  &84.7 &87.0& 62.0 &  90.6  & 70.4 & \textcolor{black}{27.7} \\
		\quad KL (\ref{equ-kl})                                &84.6 &87.6& 63.3 &  82.9         & 69.0 & \textcolor{black}{27.6}  \\
		\quad CE (\ref{equ-cross-entropy})                     &85.3 & 87.8&64.0 &  94.9  & 70.5 & \textbf{\textcolor{black}{27.9}}  \\
		\hline
		SymmNets-V2 (\ref{Loss:overall})                        & \textbf{89.1} & \textbf{89.7}& \textbf{68.1} &   \textbf{96.0}  & \textbf{71.3} & \textbf{\textcolor{black}{27.9}}  \\
		\hline
	\end{tabular}
\end{table*}

\vspace{0.1cm}
\noindent\textbf{Datasets} We use the benchmark datasets summarized in Table \ref{Tab:datasets} for our evaluation. In the closed set UDA, we follow standard protocols \cite{reverse_grad, dan} for the datasets of Office-31 \cite{office_31}, Office-Home \cite{office_home}, ImageCLEF-DA \cite{ImageCLEFDA}, and VisDA-2017 \cite{visda}: all labeled source and target samples are used for training; for the Digits datasets of \cite{mnist,svhn,usps}, we follow the protocols in \cite{adr}; we follow the standard split for the DomainNet dataset \cite{peng2018moment}. In partial UDA, all labeled source samples construct the source domain, and the target domain is constructed following the protocols of \cite{san,pada}: for Office-31 \cite{office_31}, the samples of ten classes shared by Office-31 \cite{office_31} and Caltech-256 \cite{caltech} are selected as the target domain; for Office-Home \cite{office_home}, we choose (in alphabetic order) the first 25 classes as target classes and select all samples of these 25 classes as the target domain. In open set UDA, the samples of ten classes shared by Office-31 \cite{office_31} and Caltech-256 \cite{caltech} are selected as shared classes across domains. In alphabetical order, samples of Class 21$\sim$Class 31 and Class 11$\sim$Class 20 are used as unknown samples in the target and source domains, respectively; we follow the standard split for the benchmark dataset of Syn2Real \cite{syn2real}.

\noindent\textbf{Implementations Details} All our methods are implemented using the PyTorch library. For the close set and partial settings of UDA, we adopt a ResNet pre-trained on ImageNet \cite{imagenet}, after removing the last fully connected (FC) layer, as the feature extractor $\psi$. We fine-tune the feature extractor $\psi$ and train a classifier $\mathbf{f}^{st}$ from scratch with the backpropagation algorithm. The learning rate for the newly added layers is set as $10$ times of that of the pre-trained layers. All parameters are updated by SGD with a momentum of $0.9$. We follow \cite{reverse_grad} to employ the annealing strategy of learning rate and the progressive strategy of $\lambda$: the learning rate is adjusted by $\eta_p = \frac{\eta_0}{(1+\alpha p)^{\beta}}$, where $p$ is the progress of training epochs linearly changing from $0$ to $1$, $\eta_0 = 0.01$, $\alpha = 10$, and $\beta = 0.75$, which are optimized to promote convergence and low errors on source samples; $\lambda$ is gradually changed from $0$ to $1$ by $\lambda_p = \frac{2}{1+\mathrm{exp}(-\gamma \cdot p)} - 1$, where $\gamma$ is set to $10$ in all experiments. We empirically set $\xi = \lambda$ in all experiments. Our classification results are obtained from the target task classifier $\mathbf{f}^{t}$ unless otherwise specified, and the comparison between the performance of the source and target task classifiers is illustrated in Figure \ref{fig:convergence}.
For the open set UDA, we follow \cite{syn2real} to replace the very top FC layer of an ImageNet pre-trained ResNet with three FC layers  powered by the batch normalization \cite{bn} and Leaky ReLU activation; the feature extractor $\psi$ is defined by pre-trained layers together with first two of the three added FC layers, and the last FC layer is the classifier $\mathbf{f}^{st}$. We freeze parameters of pre-trained layers and update those of the added FC layers with a learning rate of $0.001$, following \cite{open_set_bp}. We also follow \cite{open_set_math, open_set_bp} to report OS as the accuracy averaged over all classes and OS$^*$ as that averaged over the domain shared classes only. We additionally implement our methods based on the AlexNet \cite{alexnet} and modified LeNet \cite{lecun1998gradient,adda} to testify its generalization to different architectures. Please refer to the appendices for more details.
For a fair comparison, results of other methods are either directly reported from their original papers if available or quoted from \cite{cada}, \cite{pada} and \cite{open_set_bp,syn2real} for the closed set, partial and open set settings of UDA, respectively.

\subsection{Analysis on Different Instantiations of McDalNets}

In this section, we investigate different instantiations of McDalNets that are achieved by using surrogate functions (\ref{equ-mcd}), (\ref{equ-kl}), or (\ref{equ-cross-entropy}) to replace the MCSD terms in the general objective (\ref{equ-opt1}), by comparing with the counterparts based on surrogate functions (\ref{equ-dann-form-dann}) or (\ref{equ-mdd-form-long}) of scalar-valued versions of (\ref{EqnMCSDDegen2HDH}) or (\ref{EqnMCSDDegen2MDD}).
These experiments are conducted on the datasets of Office-31 \cite{office_31}, ImageCLEF-DA \cite{ImageCLEFDA}, Office-Home \cite{office_home}, Digits \cite{mnist,svhn,usps}, VisDA-2017 \cite{visda}, and DomainNet \cite{peng2018moment} under the setting of closed set UDA. In practice, we downweight the MCSD divergence in (\ref{equ-opt1}) with respect to the feature extractor $\psi$ at early stages of training, resulting in the following objective
\begin{equation}
\begin{gathered} \label{equ-opt1-parctical}
\min\limits_{\f,\psi}\  \mathcal{E}^{(\rho)}_{\widehat{P}^{\psi}}(\f) + \zeta [\text{\rm MCSD}_{\widehat{Q}_x^{\psi}}^{(\rho)}(\f', \f'') - \text{\rm MCSD}_{\widehat{P}_x^{\psi}}^{(\rho)}(\f', \f'') ], \\
\max\limits_{\f',\f''}\  [\text{\rm MCSD}_{\widehat{Q}_x^{\psi}}^{(\rho)}(\f', \f'') - \text{\rm MCSD}_{\widehat{P}_x^{\psi}}^{(\rho)}(\f', \f'')] ,
\end{gathered}
\end{equation}
where we empirically set $\zeta = \lambda$, which is described in the beginning of Section \ref{SecExps}. The weight $\zeta$ is similarly applied to objectives based on surrogate MCSD functions. We adopt the gradient reversal layer to implement the adversarial objective.
Therefore, the instantiation of McDalNets with the surrogate function (\ref{equ-dann-form-dann}) of the scalar-valued $\widehat{\text{\rm MCSD}}$ (\ref{EqnMCSDDegen2HDH}) coincides with that of DANN \cite{reverse_grad,dann}.
The implementation details of other settings are the same as these described in the beginning of Section \ref{SecExps}, except that we train three classifiers $\mathbf{f}$, $\mathbf{f}'$, and $\mathbf{f}''$ from scratch and the classification results are obtained from the task classifier $\mathbf{f}$. For ease of optimization, we also train auxiliary classifiers $\mathbf{f}'$ and $\mathbf{f}''$ using a standard log loss over labeled source data. The ``Source Only'' indeed gives a lower bound, where we fine-tune a model on the source data only.

Results in Table \ref{Tab:different_implementation} show that all instantiations of McDalNets improve over the baseline of ``Source Only'', certifying the efficacy of the MCSD divergence in the domain discrepancy minimization. The McDalNets based on MCSD surrogates (\ref{equ-mcd}), (\ref{equ-kl}), and (\ref{equ-cross-entropy}) generally achieve better results than those based on surrogates (\ref{equ-dann-form-dann}) and (\ref{equ-mdd-form-long}) of
the scalar-valued $\widehat{\text{\rm MCSD}}$ (\ref{EqnMCSDDegen2HDH}) and $\widetilde{\text{\rm MCSD}}$ (\ref{EqnMCSDDegen2MDD}),
testifying the advantages of characterizing finer details of the scoring disagreement in multi-class UDA. McDalNets based on the MCSD surrogate of CE (\ref{equ-cross-entropy}) generally achieves better results than those based on the MCSD surrogates of $L_1/\text{\rm MCD \cite{mcd}}$ (\ref{equ-mcd}) and KL (\ref{equ-kl}), probably due to the mechanism where the CE-based surrogate (\ref{equ-cross-entropy}) also makes predictions of lower entropy; further explanation via illustration is given in the appendices. Among all algorithms, SymmNets-V2 proposed in the present paper achieves the best results across all tasks, confirming its efficacy in multi-class UDA.

\begin{table*}[htb]
	\centering
	\caption{Ablation experiments on components of SymmNets-V2 using the datasets of Office-31 \cite{office_31} and VisDA-2017 \cite{visda} under the setting of closed set UDA. All methods are based on models adapted from a 50-layer ResNet. Please refer to the main text for specifics of these methods.
	}
	\label{Tab:ablation_study}
	\begin{tabular}{lcccc|c}
		\hline
		Methods                                       & A $\to$ W            & A $\to$ D    & D $\to$ A      & W $\to$ A     & Synthetic $\to$ Real  \\
		\hline
		SymmNets-V2 (w/o ${\cal{L}}_{\widehat{P}^{\psi}}^t$) & 71.0$\pm$0.8       & 74.5$\pm$0.9 & 63.3$\pm$0.2    & 62.8$\pm$0.1 & 41.9 \\
		SymmNets-V2 (w/o adversarial training)           & 78.3$\pm$0.3       & 83.3$\pm$0.2 & 64.6$\pm$0.5    & 66.6$\pm$0.1   & 41.6 \\
		SymmNets-V2                                      & \textbf{94.2}$\pm$0.1 & \textbf{93.5}$\pm$0.3  & \textbf{74.4}$\pm$0.1  & \textbf{73.4}$\pm$0.2   & \textbf{71.3}  \\
		\hline
	\end{tabular}
\end{table*}

We also plot convergence curves for different instantiations of McDalNets in Figure \ref{fig:convergence}, where we observe that those based on MCSD surrogates of $L_1/\text{\rm MCD \cite{mcd}}$ (\ref{equ-mcd}), KL (\ref{equ-kl}), and CE (\ref{equ-cross-entropy}) converge generally smoother than those based on surrogates (\ref{equ-dann-form-dann}) and (\ref{equ-mdd-form-long}) of the scalar-valued  $\widehat{\text{\rm MCSD}}$ and $\widetilde{\text{\rm MCSD}}$.
It could be attributed to the in-built function property of MCSD (\ref{EqnMCSD}), as illustrated in Figure \ref{Fig:f-f-disp}. We particularly note that McDalNets based on the scalar-valued $\widetilde{\text{\rm MCSD}}$ surrogate (\ref{equ-mdd-form-long})
does not converge on the datasets of Digits and VisDA-2017. In comparison, SymmNets-V2 achieves the lowest classification error and the smoothest convergence.

\begin{figure}[h!]
	\begin{center}
		\includegraphics[width=1.0\linewidth] {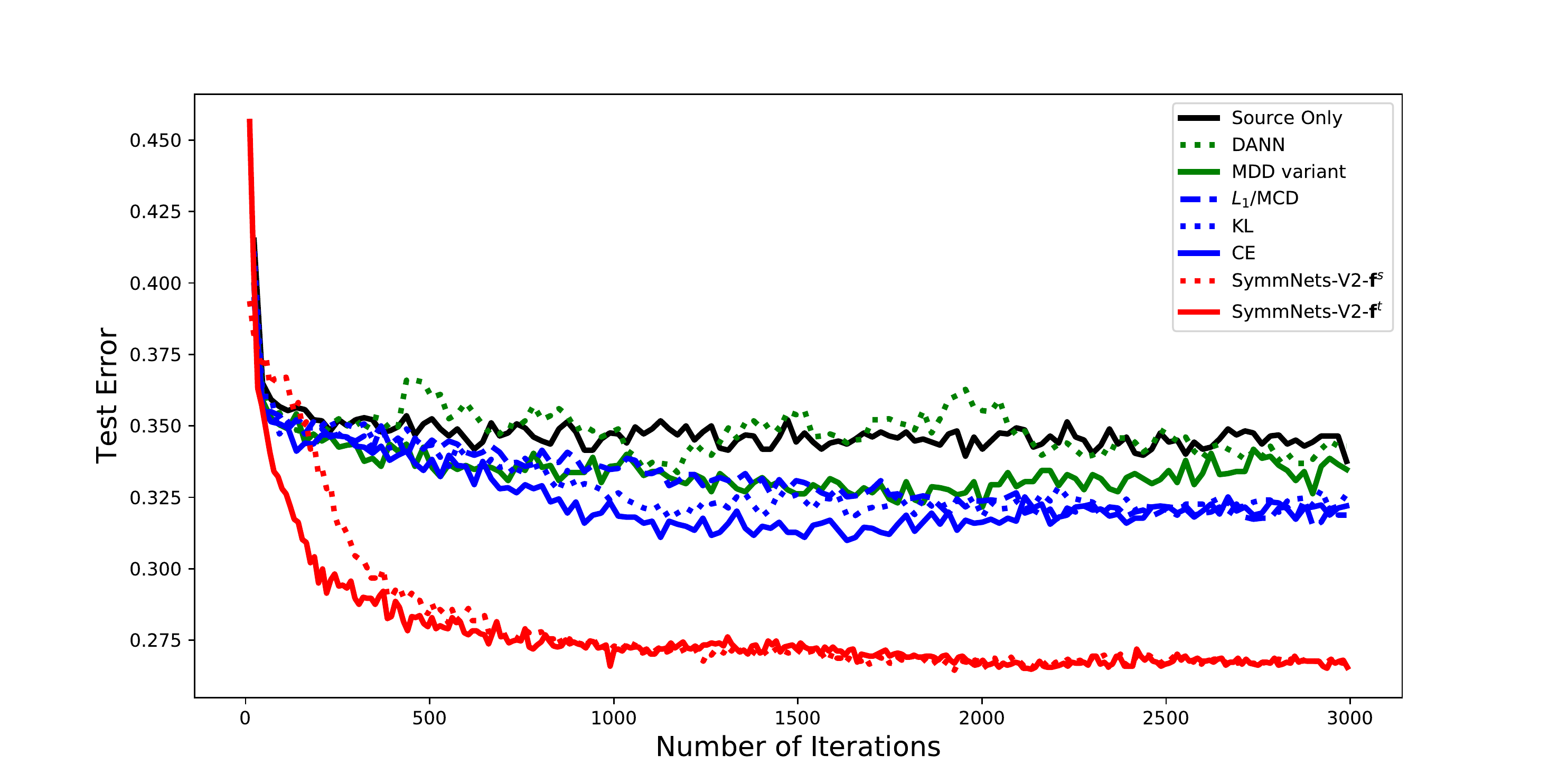}
	\end{center}
	\caption{ Convergence plottings on the adaptation task \textbf{W} $\to$ \textbf{A} of the Office-31 \cite{office_31} by the Source Only, McDalNets based on the $\widehat{\text{\rm MCSD}}$ surrogate (\ref{equ-mdd-form-long}) (variant of MDD \cite{mdd}) and $\widetilde{\text{\rm MCSD}}$ surrogate (\ref{equ-dann-form-dann}) (DANN \cite{reverse_grad,dann}), McDalNets based on the MCSD surrogates $L_1$ (\ref{equ-mcd}) (MCD \cite{mcd}), KL (\ref{equ-kl}), and CE (\ref{equ-cross-entropy}), and SymmNets-V2. SymmNets-V2-$\mathbf{f}^s$ and SymmNets-V2-$\mathbf{f}^t$ represent the results obtained from the source classifier $\mathbf{f}^s$ and target classifier $\mathbf{f}^t$, respectively.}
	\label{fig:convergence}
\end{figure}

In this section, we investigate the effects of different components in our proposed SymmNets-V2 by conducting ablation experiments on the datasets of Office-31 \cite{office_31} and VisDA-2017 \cite{visda} under the setting of closed set UDA, where networks are adapted from a 50-layer ResNet. To investigate how the cross-domain training term ${\cal{L}}_{\widehat{P}^{\psi}}^t$ (\ref{Loss_task_tc}) contributes to a better adaptation in our overall adversarial learning objective (\ref{Loss:overall}), we remove it from  (\ref{Loss:overall}) and denote the method as ``SymmNets-V2 (w/o ${\cal{L}}_{\widehat{P}^{\psi}}^t$)''. To evaluate the efficacy of our adversarial training, we remove the domain discrimination loss $\text{\rm DisCRIM}_{\widehat{P}^{\psi}, \widehat{Q}_x^{\psi}}^{st}$ (\ref{Loss_domain_stc}) and the domain confusion loss of target data $\text{\rm ConFUSE}_{\widehat{Q}_x^{\psi}}^{st}$ (\ref{Loss:do_conf_t}) from the overall objective (\ref{Loss:overall}), and use the following degenerate form to replace the domain confusion loss of source data $\text{\rm ConFUSE}_{\widehat{P}^{\psi}}^{st}$ (\ref{Loss:cate_conf_s})
\begin{align}
\min_{\mathnormal{\psi}} - \frac{1}{2n_s} \sum_{i=1}^{n_s} \omega_{y_i^s} \mathrm{log}(p^{st}_{y_i^s}(\x_i^s)) - \frac{1}{2n_s} \sum_{i=1}^{n_s} \omega_{y_i^s} \mathrm{log}(p^{st}_{y_i^s+K}(\x_i^s)) ;
\end{align}
we denote this method as ``SymmNets-V2 (w/o adversarial training) ''.
Note that classification results for SymmNets (w/o ${\cal{L}}_{\widehat{P}^{\psi}}^t$) are obtained from the source task classifier $\mathbf{f}^{s}$ due to the inexistence of the direct supervision signals for the target task classifier $\mathbf{f}^{t}$. Results in Table \ref{Tab:ablation_study} show that SymmNets-V2 outperforms ``SymmNets-V2 (w/o adversarial training)'' by a large margin, verifying the efficacy of the discrepancy minimization via our proposed adversarial training. The performance slump of ``SymmNets-V2 (w/o ${\cal{L}}_{\widehat{P}^{\psi}}^t$)'' manifests the importance of the cross-domain training term ${\cal{L}}_{\widehat{P}^{\psi}}^t$ (\ref{Loss_task_tc}) for learning a well-performed target task classifier in adversarial training.

\begin{figure}[h!]
	\begin{center}
		\includegraphics[width=0.9\linewidth] {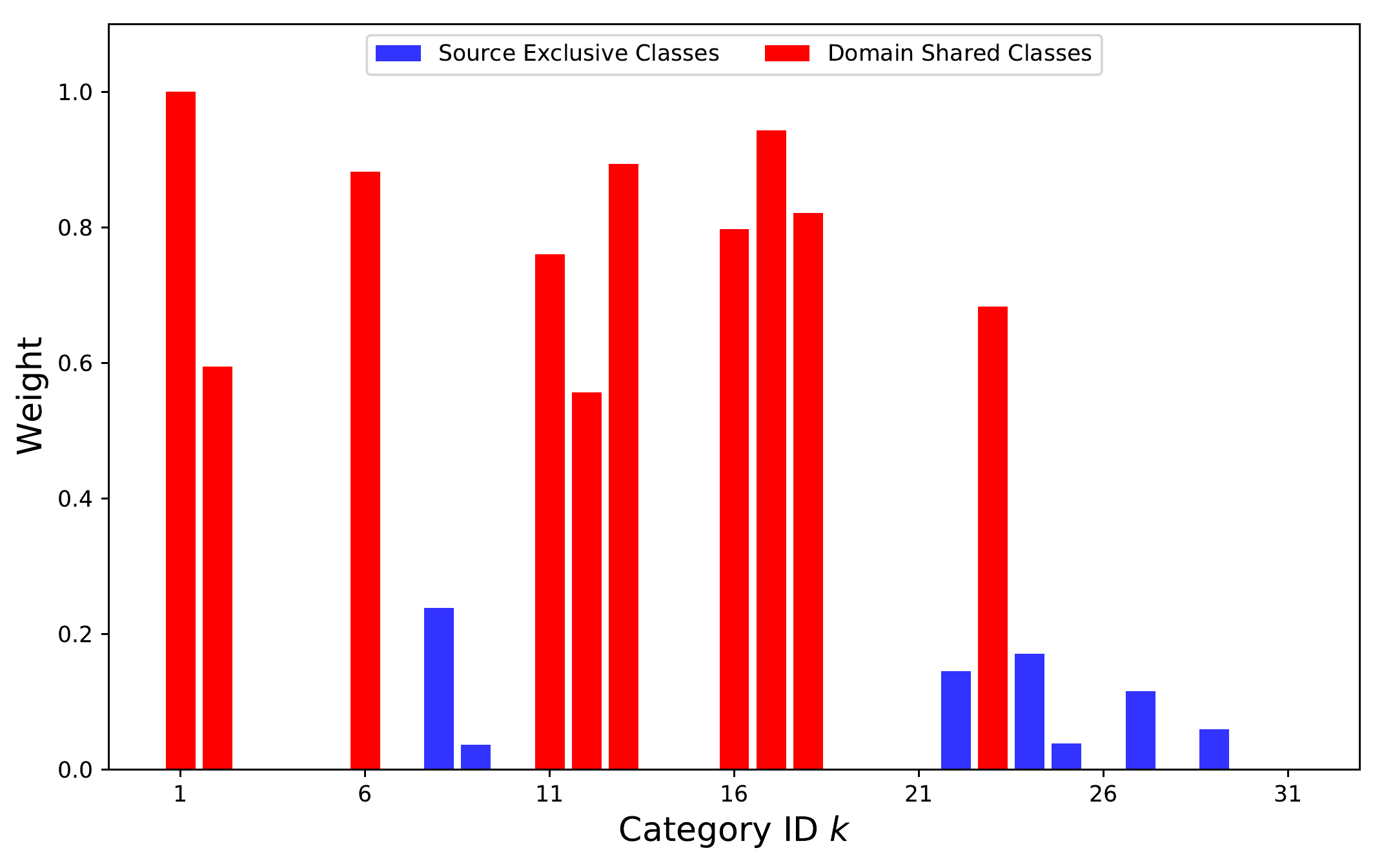}
	\end{center}
	\caption{Histograms of class weight $\omega_k$ learned by SymmNets-V2 (with active $\omega_k$) on the task of \textbf{A} $\to$ \textbf{W} under the setting of partial UDA. Model is adapted from a 50-layer ResNet.}
	\label{fig:partial_weight_distribution}
\end{figure}

\begin{figure}[h!]
	\begin{center}
		\includegraphics[width=0.9\linewidth] {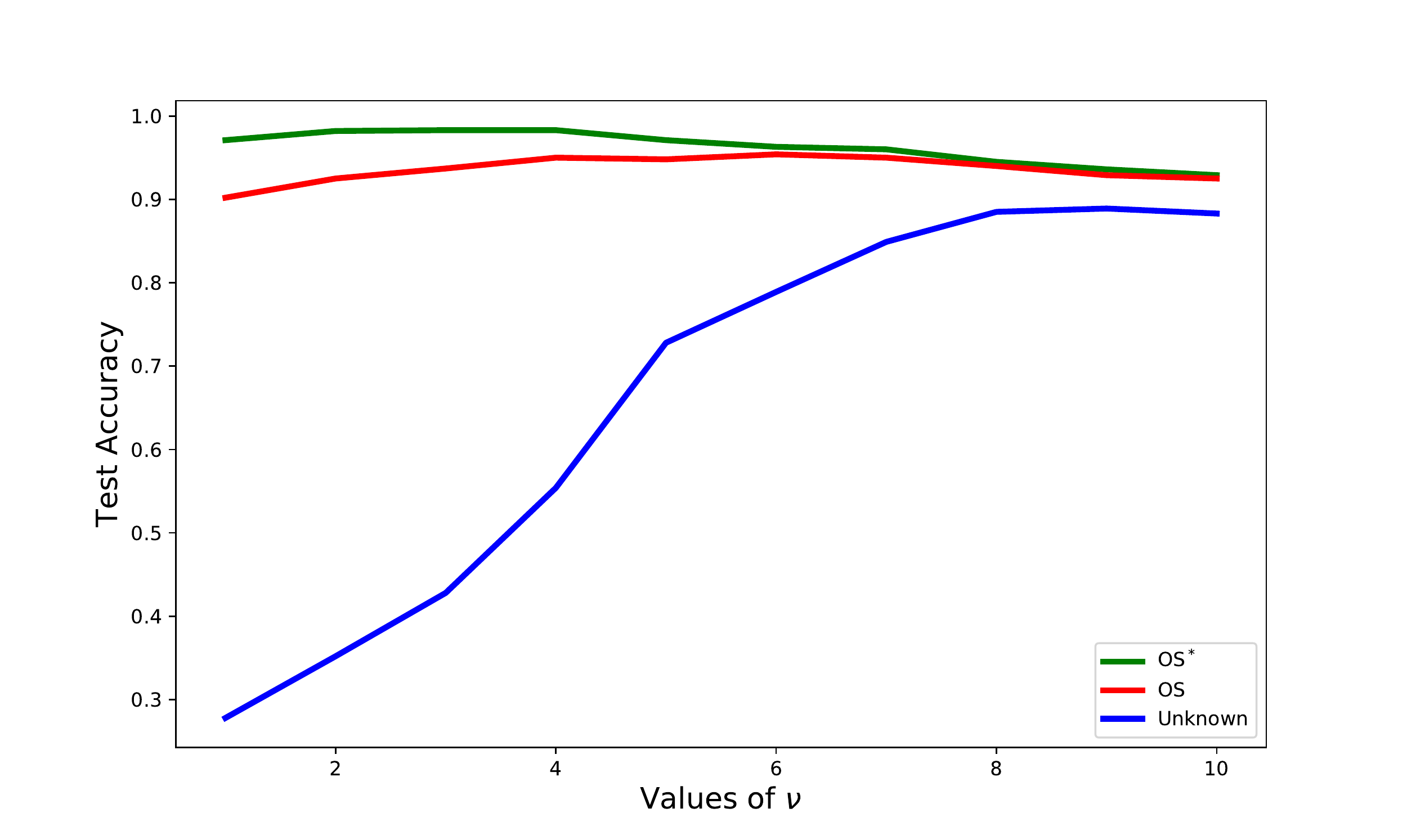}
	\end{center}
	\caption{Curve plottings for test accuracy of the unknown class (Unknown) and the mean accuracy over all classes (OS) and domain-shared classes (OS$^*$), when setting different values of $\nu$ in open set UDA. The results are reported on the \textbf{A} $\to$ \textbf{W} task of Office-31 dataset \cite{office_31} based on the SymmNets-V2 adapted from a 50-layer ResNet.}
	\label{fig:open_set_eta_change}
\end{figure}

\subsection{Ablation Studies of SymmNets}

\begin{figure*}[h!]
	\begin{minipage}[t]{0.13\linewidth}
		\vspace*{-2cm}
		DANN \cite{reverse_grad,dann}	
	\end{minipage}
	\hfill
	\subfigure[Close Set UDA]{
		\begin{minipage}[t]{0.28\linewidth}
			\centering
			\includegraphics[width=0.77\linewidth] {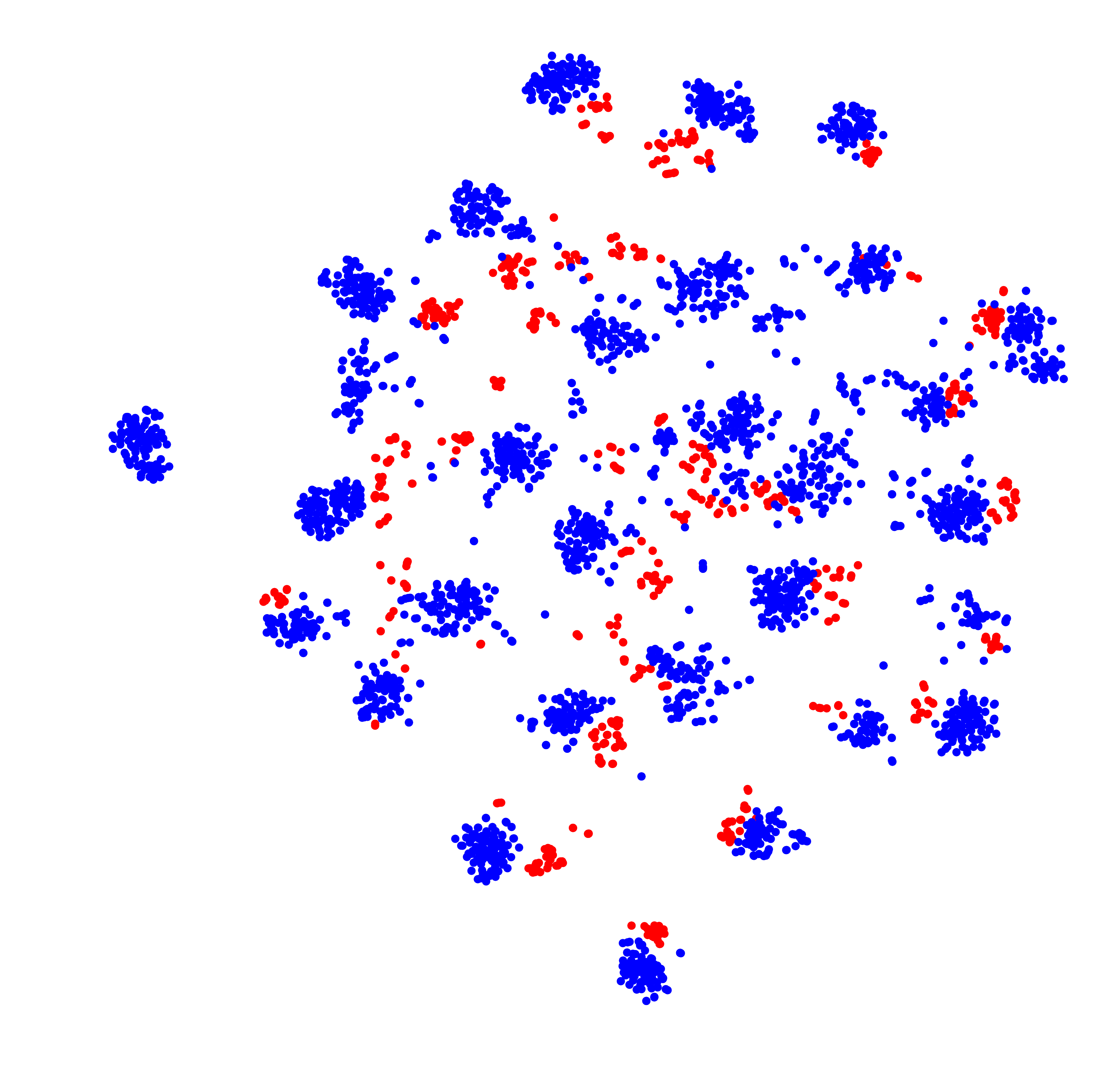}
	\end{minipage}}
	\hfill
	\subfigure[Partial UDA]{
		\begin{minipage}[t]{0.28\linewidth}
			\centering
			\includegraphics[width=0.77\linewidth] {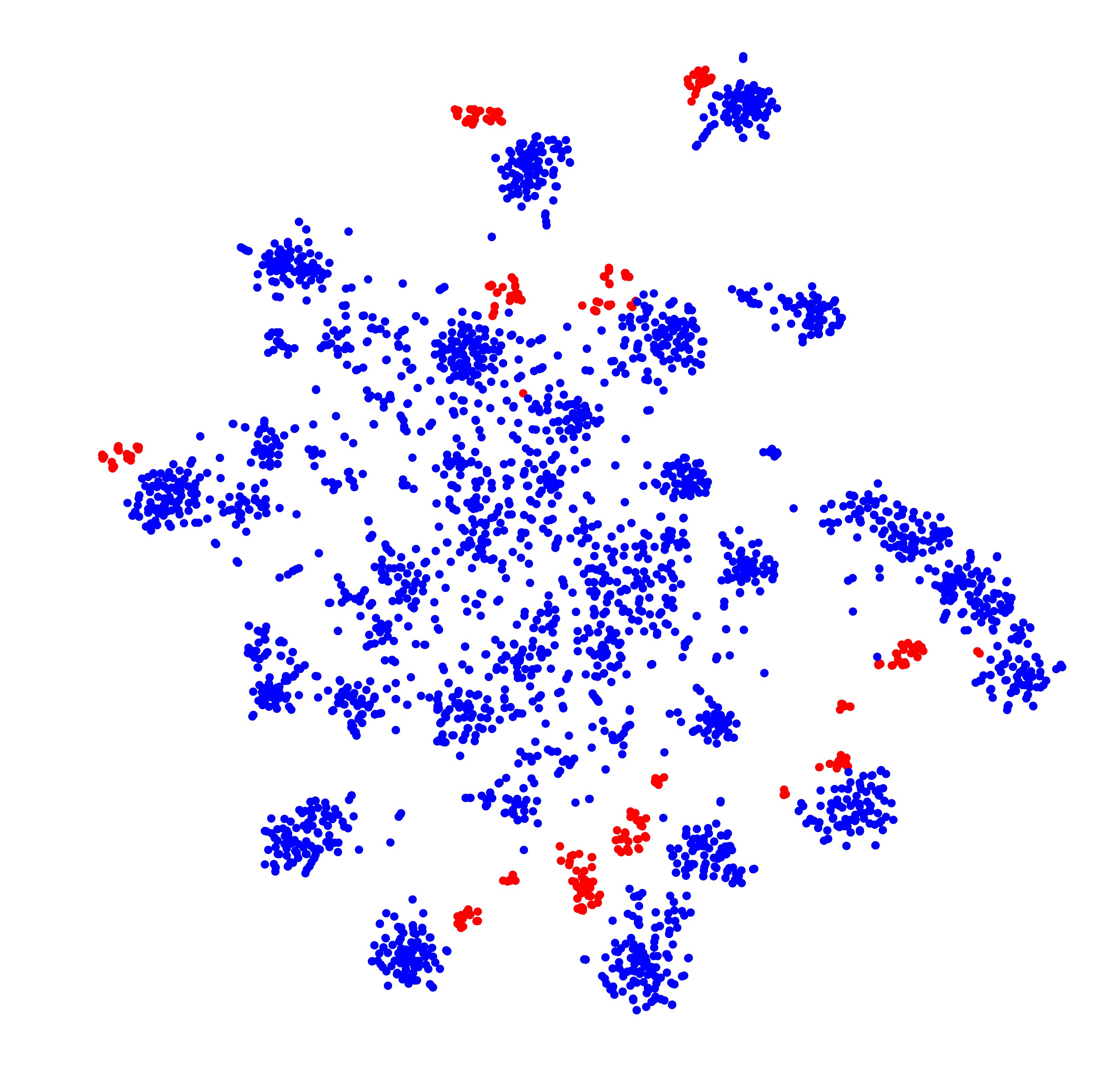}
	\end{minipage}}
	\hfill
	\subfigure[Open Set UDA]{
		\begin{minipage}[t]{0.28\linewidth}
			\centering
			\includegraphics[width=0.77\linewidth] {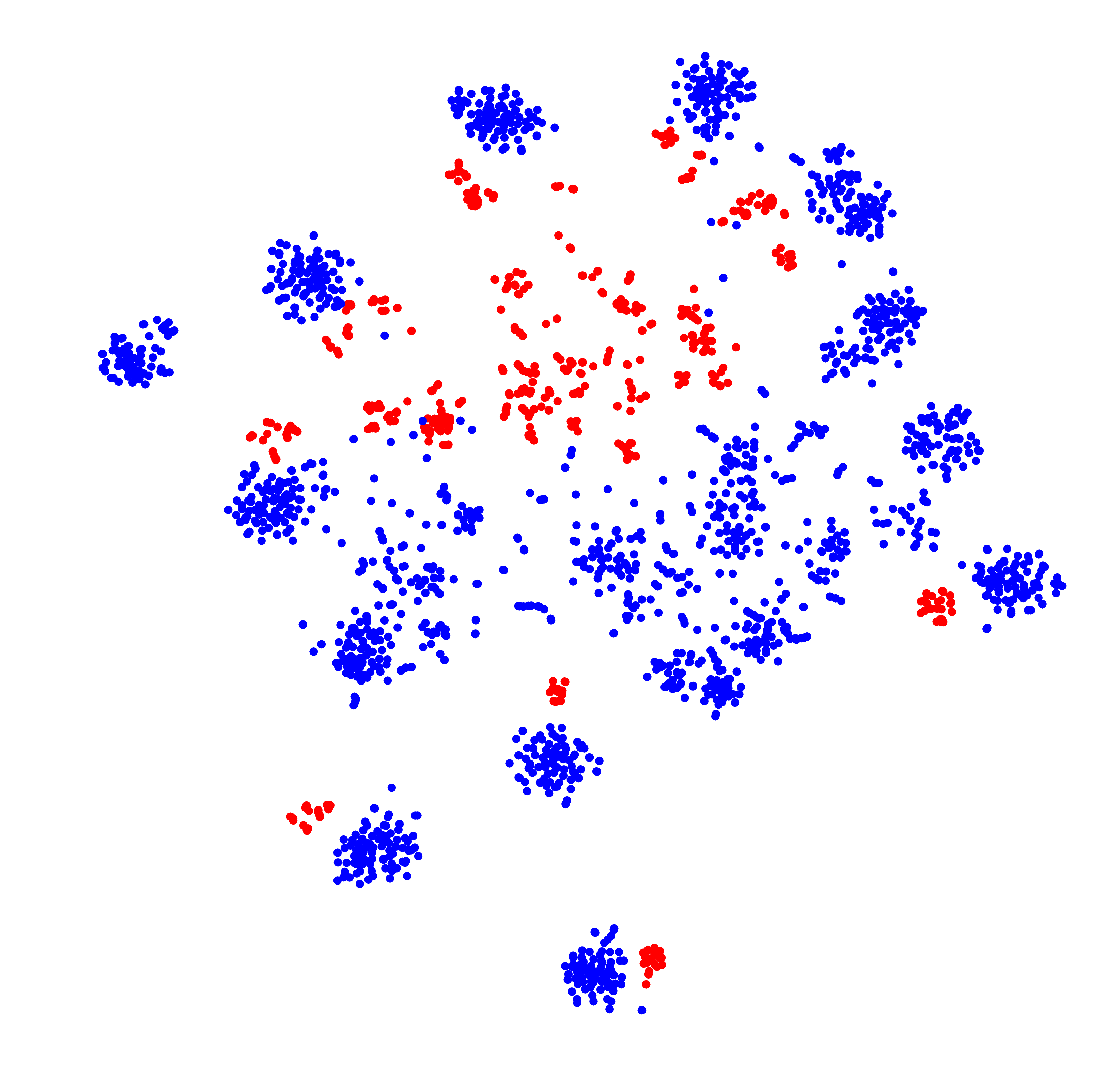}
	\end{minipage}}
	\\
	\begin{minipage}[t]{0.13\linewidth}
		\vspace*{-2cm}
		SymmNets-V2	
	\end{minipage}
	\hfill
	\subfigure[Close Set UDA]{
		\begin{minipage}[t]{0.28\linewidth}
			\centering
			\includegraphics[width=0.77\linewidth] {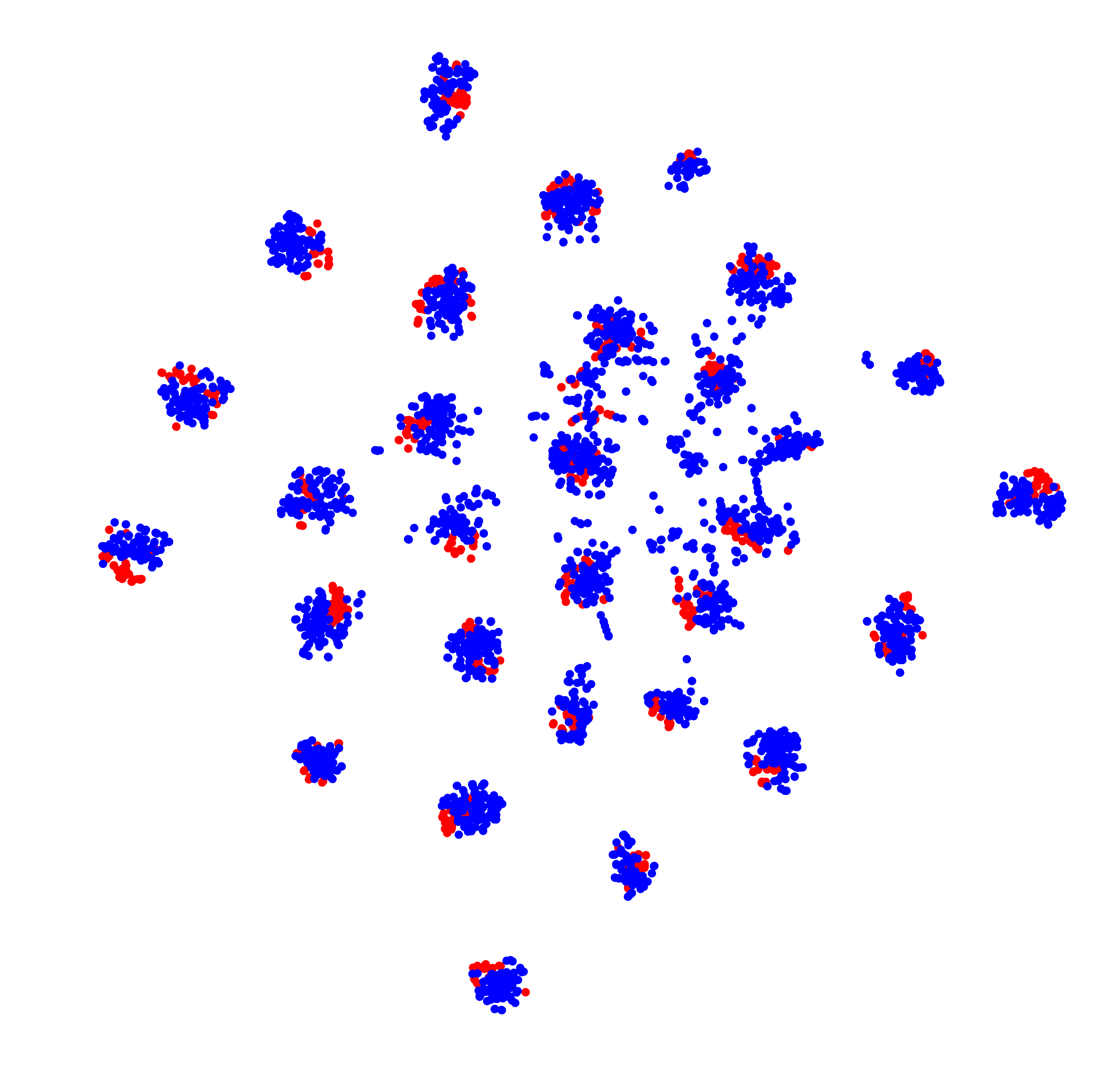}
	\end{minipage}}
	\hfill
	\subfigure[Partial UDA]{
		\begin{minipage}[t]{0.28\linewidth}
			\centering
			\includegraphics[width=0.77\linewidth] {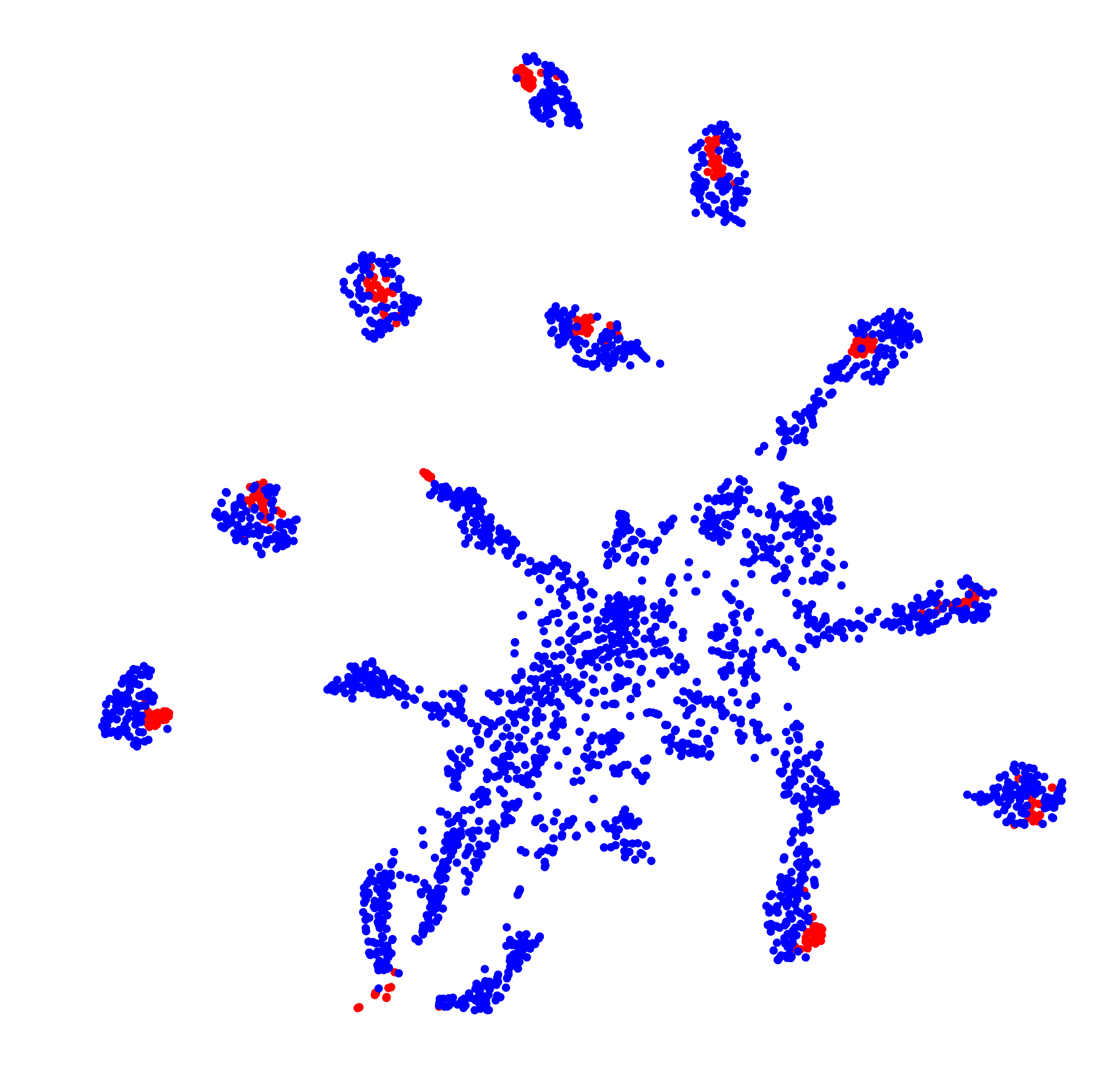}
	\end{minipage}}
	\hfill
	\subfigure[Open Set UDA]{
		\begin{minipage}[t]{0.28\linewidth}
			\centering
			\includegraphics[width=0.77\linewidth] {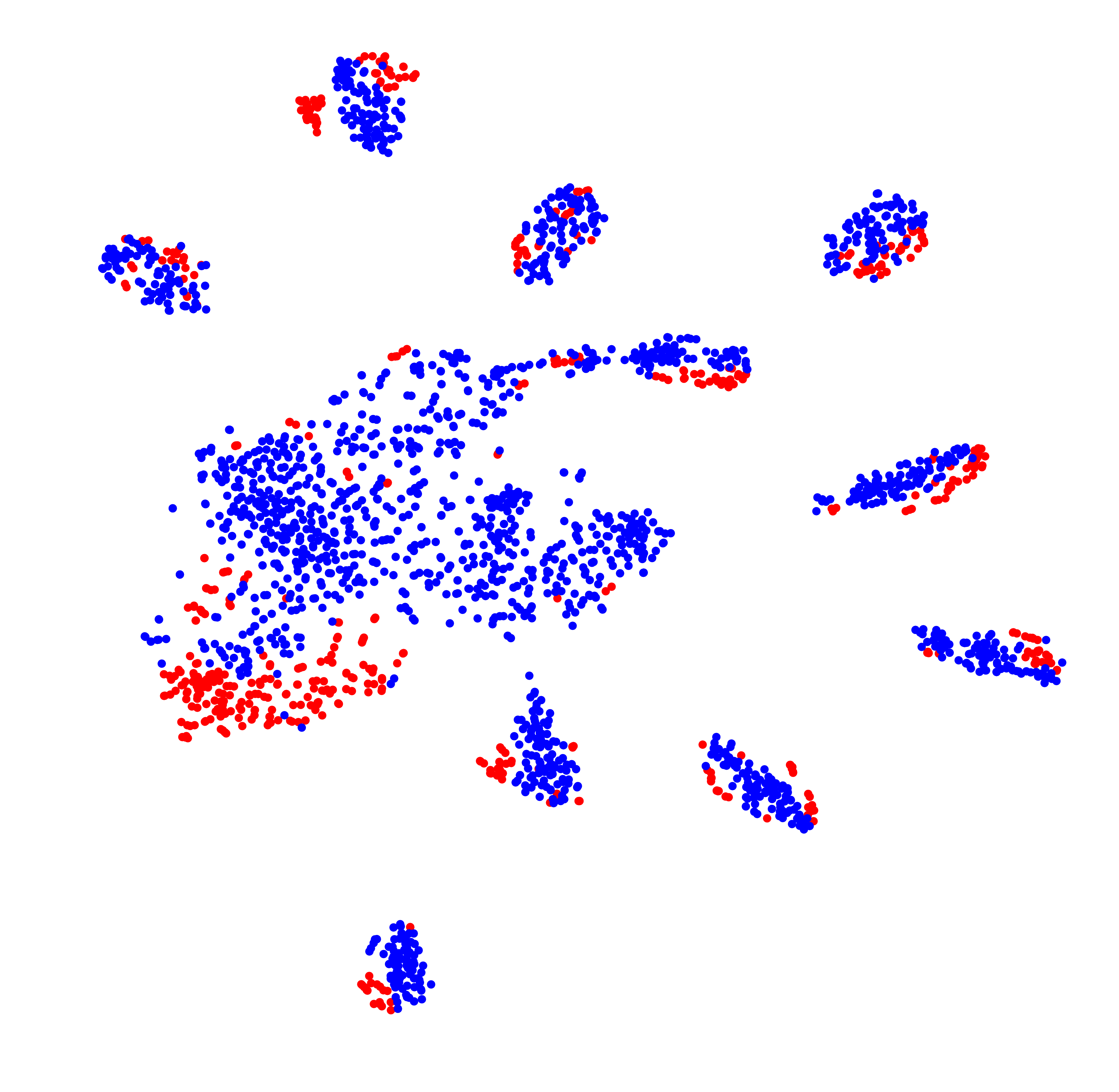}
	\end{minipage}}
	
	\caption{The t-SNE visualization of feature representations learned by DANN (top row) and SymmNets-V2 (bottom row) under settings of the closed set, partial, and open set UDA. Blue and red points are the respective samples from the source domain \textbf{A} and target domain \textbf{W}. For partial UDA, we illustrate the feature representations learned by SymmNets-V2 (With active $\omega_k$), where we focus on domain-shared classes, and leave the source classes exclusive to the target domain as an indistinguishable cluster via the soft class weighting scheme, as discussed in Section \ref{SecExt2PartialOpenSet}. A visualization with class label information is given in the appendices.}
	\label{Fig:sne}
\end{figure*}

\vspace{0.1cm}
\noindent \textbf{Soft Class Weighting Scheme in Partial UDA}
To investigate the efficacy of the soft class weighting scheme, we activate it with the strategy described in Section \ref{SecExt2PartialOpenSet}, giving rise to the method of ``SymmNets-V2 (with active $\omega_k$)''. Tables \ref{Tab:partial_office31} and \ref{Tab:partial_Office-Home} show that results of SymmNets-V2 (with active $\omega_k$) improve over those of SymmNets-V2, empirically verifying its effectiveness.
To have an intuitive understanding of what has happened, we illustrate in Figure \ref{fig:partial_weight_distribution} the learned weight of each source class on the adaptation task of $\textbf{A}$ $\to \textbf{W}$. SymmNets-V2 (with active $\omega_k$) assigns much larger weights to domain-shared classes than the classes exclusive to the source domain, thus suppressing misalignment across the two domains.

\begin{table}[htp]
	\centering
	\caption{Accuracy (\%) on the VisDA-2017 dataset \cite{visda} for closed set UDA. All comparative methods are based on a 101-layer ResNet except the MDD and CDAN+E, which are based on a 50-layer ResNet  }
	\label{Tab:visda}
	\begin{tabular}{lc}
		\hline
		Methods         &                Synthetic $\to$ Real  \\
		\hline
		Source Only \cite{resnet}       &              52.4              \\
		DANN \cite{dann}  &              57.4              \\
		CDAN+E \cite{cada}&              70.0              \\
		MCD \cite{mcd}  &                71.9              \\
		ADR \cite{adr}  &                73.5             \\
		MDD \cite{mdd}  &  74.6 \\
		SWD \cite{swd}  &                76.4 \\
		\hline
		\textbf{SymmNets-V1}  \cite{symnets}        &               72.1             \\
		\textbf{SymmNets-V2}                        &               \textbf{76.8}             \\
		\hline
		\hline
		TPN \cite{tpn}  &                80.4 \\
		CAN \cite{can}  &                \textbf{87.2}     \\
		\hline
		\textbf{SymmNets-V2-SC}                &               86.0             \\
		\hline
		
		\hline
	\end{tabular}
\end{table}

\begin{table*}[htb]
	\centering
	\caption{Accuracy (\%) on the Office-31 dataset \cite{office_31} for closed set UDA. Results are based on models adapted from a 50-layer ResNet.}
	\label{Tab:office31}
	\begin{tabular}{lccccccc}
		\hline
		Methods                          & A $\to$ W     & D $\to$ W     & W $\to$ D         & A $\to$ D     & D $\to$ A      & W $\to$ A     & Avg  \\
		\hline
		Source Only\cite{resnet}           & 68.4$\pm$0.2  & 96.7$\pm$0.1  & 99.3$\pm$0.1     & 68.9$\pm$0.2  & 62.5$\pm$0.3   & 60.7$\pm$0.3   & 76.1 \\
		DAN \cite{dan}                  & 80.5$\pm$0.4  & 97.1$\pm$0.2  & 99.6$\pm$0.1     & 78.6$\pm$0.2  & 63.6$\pm$0.3   & 62.8$\pm$0.2   & 80.4 \\
		RTN \cite{rtn}                  & 84.5$\pm$0.2  & 96.8$\pm$0.1  & 99.4$\pm$0.1     & 77.5$\pm$0.3  & 66.2$\pm$0.2   & 64.8$\pm$0.3   & 81.6 \\
		DANN \cite{reverse_grad,dann}     & 82.0$\pm$0.4  & 96.9$\pm$0.2  & 99.1$\pm$0.1     & 79.7$\pm$0.4  & 68.2$\pm$0.4   & 67.4$\pm$0.5   & 82.2 \\
		ADDA \cite{adda}                & 86.2$\pm$0.5  & 96.2$\pm$0.3  & 98.4$\pm$0.3     & 77.8$\pm$0.3  & 69.5$\pm$0.4   & 68.9$\pm$0.5   & 82.9 \\
		JAN-A\cite{jan}                  & 86.0$\pm$0.4  & 96.7$\pm$0.3  & 99.7$\pm$0.1     & 85.1$\pm$0.4  & 69.2$\pm$0.3   & 70.7$\pm$0.5   & 84.6 \\
		MADA \cite{mada}                & 90.0$\pm$0.1  & 97.4$\pm$0.1  & 99.6$\pm$0.1     & 87.8$\pm$0.2  & 70.3$\pm$0.3   & 66.4$\pm$0.3   & 85.2 \\
		SimNet \cite{SimNet}            & 88.6$\pm$0.5  & 98.2$\pm$0.2  & 99.7$\pm$0.2     & 85.3$\pm$0.3  & 73.4$\pm$0.8   & 71.8$\pm$0.6   & 86.2 \\
		MCD \cite{mcd}                  & 89.6$\pm$0.2  & 98.5$\pm$0.1  & 100.0$\pm$.0     & 91.3$\pm$0.2  & 69.6$\pm$0.1   & 70.8$\pm$0.3   & 86.6 \\
		CDAN+E \cite{cada}               & 94.1$\pm$0.1  & 98.6$\pm$0.1  & \textbf{100.0}$\pm$.0     & 92.9$\pm$0.2  & 71.0$\pm$0.3   & 69.3$\pm$0.3   & 87.7 \\
		MDD \cite{mdd}                   & \textbf{94.5}$\pm$0.3  & 98.4$\pm$0.1  & \textbf{100.0}$\pm$.0   & 93.5$\pm$0.2     & \textbf{74.6}$\pm$0.3  & 72.2$\pm$0.1   & 88.9 \\
		\hline
		\textbf{SymmNets-V1 \cite{symnets}} & 90.8$\pm$0.1  & \textbf{98.8}$\pm$0.3           & \textbf{100.0}$\pm$.0 & \textbf{93.9}$\pm$0.5 & \textbf{74.6}$\pm$0.6                   & 72.5$\pm$0.5   & 88.4  \\
		\textbf{SymmNets-V2}                & 94.2$\pm$0.1  & \textbf{98.8}$\pm$0.0 & \textbf{100.0}$\pm$.0     & 93.5$\pm$0.3  & 74.4$\pm$0.1   & \textbf{73.4}$\pm$0.2                                  & \textbf{89.1}  \\
		\hline
		\hline
		Kang \emph{et al.} \cite{attention_alignment}   & 86.8$\pm$0.2  & \textbf{99.3}$\pm$0.1  & \textbf{100.0}$\pm$.0     & 88.8$\pm$0.4  & 74.3$\pm$0.2   & 73.9$\pm$0.2   & 87.2 \\
		TADA \cite{tada}                 & 94.3$\pm$0.3  & 98.7$\pm$0.1  & 99.8$\pm$0.2   & 91.6$\pm$0.3      & 72.9$\pm$0.2  & 73.0$\pm$0.3  & 88.4 \\
		CADA-P \cite{Kurmi_2019_CVPR}    & \textbf{97.0}$\pm$0.2  & 99.3$\pm$0.1  & \textbf{100.0}$\pm$.0     & \textbf{95.6}$\pm$0.1  & 71.5$\pm$0.2            & 73.1$\pm$0.3            & 89.5 \\
		CAN \cite{can}                   & 94.5$\pm$0.3  & 99.1$\pm$0.2  & 99.8$\pm$0.2     &95.0$\pm$0.3   &\textbf{78.0}$\pm$0.3    & \textbf{77.0}$\pm$0.3   & 90.6 \\
		\hline
		\textbf{SymmNets-V2-SC}                 & 94.9$\pm$0.3  & 99.1$\pm$0.1  & \textbf{100.0}$\pm$.0 & \textbf{95.6}$\pm$0.3 & 77.6$\pm$0.4 & \textbf{77.0}$\pm$0.3 & \textbf{90.7}  \\
		
		\hline
	\end{tabular}
\end{table*}

\begin{table*}[htb]
	\centering
	\caption{Accuracy (\%) on the ImageCLEF-DA dataset \cite{ImageCLEFDA} for closed set UDA. Results are based on models adapted from a 50-layer ResNet.}
	\label{Tab:ImageCLEF}
	\begin{tabular}{lccccccc}
		\hline
		Methods                         & I $\to$ P     & P $\to$ I     & I $\to$ C         & C $\to$ I    & C $\to$ P      & P $\to$ C      & Avg  \\		
		\hline
		Source Only\cite{resnet}           & 74.8$\pm$0.3  & 83.9$\pm$0.1  & 91.5$\pm$0.3     & 78.0$\pm$0.2  & 65.5$\pm$0.3   & 91.2$\pm$0.3   & 80.7 \\
		
		DAN \cite{dan}                  & 74.5$\pm$0.4  & 82.2$\pm$0.2  & 92.8$\pm$0.2     & 86.3$\pm$0.4  & 69.2$\pm$0.4   & 89.8$\pm$0.4   & 82.5 \\
		
		DANN \cite{reverse_grad,dann}     & 75.0$\pm$0.6  & 86.0$\pm$0.3  & 96.2$\pm$0.4     & 87.0$\pm$0.5  & 74.3$\pm$0.5   & 91.5$\pm$0.6   & 85.0 \\
		
		JAN \cite{jan}                  & 76.8$\pm$0.4  & 88.0$\pm$0.2  & 94.7$\pm$0.2     & 89.5$\pm$0.3  & 74.2$\pm$0.3   & 91.7$\pm$0.3   & 85.8 \\
		
		MADA \cite{mada}                & 75.0$\pm$0.3  & 87.9$\pm$0.2  & 96.0$\pm$0.3     & 88.8$\pm$0.3  & 75.2$\pm$0.2   & 92.2$\pm$0.3   & 85.8 \\
		
		CDAN+E \cite{cada}              & 77.7$\pm$0.3  & 90.7$\pm$0.2  & \textbf{97.7}$\pm$0.3     & 91.3$\pm$0.3  & 74.2$\pm$0.2   & 94.3$\pm$0.3   & 87.7 \\
		\hline
		\textbf{SymmNets-V1 \cite{symnets}} & \textbf{80.2}$\pm$0.3  & \textbf{93.6}$\pm$0.2 & 97.0$\pm$0.3 & \textbf{93.4}$\pm$0.3  & 78.7$\pm$0.3          & \textbf{96.4}$\pm$0.1  &\textbf{89.9}  \\
		\textbf{SymmNets-V2}                & 79.0$\pm$0.3           &        93.5$\pm$0.2   & 96.9$\pm$0.2          & \textbf{93.4}$\pm$0.3  & \textbf{79.2}$\pm$0.3 & 96.2$\pm$0.1           & 89.7  \\
		\hline
		\hline
		CADA-P \cite{Kurmi_2019_CVPR}   & 78.0          & 90.5          & 96.7             & 92.0          & 77.2           & 95.5           & 88.3  \\
		\hline
		\textbf{SymmNets-V2-SC}     & \textbf{79.2}$\pm$0.2           &  \textbf{96.2}$\pm$0.3  & \textbf{96.8}$\pm$0.1           & \textbf{93.8}$\pm$0.2 & \textbf{77.8}$\pm$0.4  & \textbf{96.2}$\pm$0.  & \textbf{90.0}  \\
		\hline
	\end{tabular}
\end{table*}

\begin{table*}[h]
	\begin{center}
		
		\caption{Accuracy (\%) on the Office-Home dataset \cite{office_home} for closed set UDA. Results are based on models adapted from a 50-layer ResNet.}
		\label{Tab:Office-Home}
		\begin{tabular}{L{26.2mm}C{7.7mm}C{7.7mm}C{7.9mm}C{7.7mm}C{7.7mm}C{7.9mm}C{7.7mm}C{7.7mm}C{7.7mm}C{7.9mm}C{7.9mm}C{7.9mm}C{7.9mm}C{9mm}}
			\hline
			Methods                  &A$\to$C &A$\to$P &A$\to$R &C$\to$A &C$\to$P &C$\to$R &P$\to$A &P$\to$C &P$\to$R &R$\to$A &R$\to$C &R$\to$P    & Avg  \\
			\hline
			Source Only\cite{resnet}   &34.9      & 50.0     & 58.0      & 37.4      & 41.9      & 46.2     & 38.5     & 31.2     & 60.4     & 53.9     & 41.2     & 59.9 &46.1 \\			
			DAN \cite{dan}           & 43.6     & 57.0     & 67.9      & 45.8      & 56.5      & 60.4     & 44.0     & 43.6     & 67.7     & 63.1     & 51.5     &74.3  &56.3 \\			
			DANN \cite{reverse_grad,dann}& 45.6     & 59.3     & 70.1      & 47.0      & 58.5      & 60.9     & 46.1     & 43.7     & 68.5     & 63.2     &51.8      & 76.8 &57.6 \\		
			JAN \cite{jan}            & 45.9     & 61.2     & 68.9      & 50.4      & 59.7      & 61.0     & 45.8     & 43.4     & 70.3     & 63.9     &52.4      & 76.8 &58.3 \\
			CDAN+E \cite{cada}     & 50.7    & 70.6     & 76.0     & 57.6       & 70.0      & 70.0     & 57.4    & 50.9     & 77.3     &70.9      & 56.7     &81.6   &65.8 \\		
			MDD     \cite{mdd}     & \textbf{54.9}    & 73.7 &77.8 & 60.0 & 71.4 & 71.8 &61.2 & \textbf{53.6} & 78.1 & 72.5 & \textbf{60.2} & 82.3 & \textbf{68.1} \\		
			\hline
			\textbf{SymmNets-V1 \cite{symnets}} & 47.7 & 72.9          & 78.5            & 64.2          &71.3          &\textbf{74.2} &64.2          &48.8 &79.5          &\textbf{74.5} &52.6 &82.7          & 67.6  \\
			\textbf{SymmNets-V2}                & 48.1 & \textbf{74.3}          & \textbf{78.7}            & \textbf{64.6}          &\textbf{71.8}          & 74.1         & \textbf{64.4}         & 50.0& \textbf{80.2}         & 74.3         & 53.1& \textbf{83.2}         & \textbf{68.1}   \\
			\hline
			\hline
			DWT-MEC \cite{dwt}   & 50.3  & 72.1     & 77.0      & 59.6      & 69.3      & 70.2     & 58.3     & 48.1     & 77.3     & 69.3     &53.6     &82.0   & 65.6 \\
			TADA   \cite{tada}     & 53.1 & 72.3 & 77.2 & 59.1 & 71.2 &72.1 & 59.7 & 53.1 & 78.4 &72.4 &60.0 & 82.9 & 67.6 \\
			CADA-P  \cite{Kurmi_2019_CVPR}       & \textbf{56.9}    &76.4  &\textbf{80.7} & 61.3 & \textbf{75.2} & 75.2 &63.2 & \textbf{54.5} & 80.7 & \textbf{73.9} & \textbf{61.5} & \textbf{84.1} & \textbf{70.2} \\
			\hline
			\textbf{SymmNets-V2-SC}             & 51.6 & \textbf{76.9} & 80.3   & \textbf{68.6} &71.8 &\textbf{78.3}          &\textbf{65.8} &50.5 &\textbf{81.2} &73.1          &54.2 &82.4 & 69.6  \\
			\hline
		\end{tabular}
	\end{center}
\end{table*}

\vspace{0.1cm}
\noindent \textbf{Investigation of the Values of $\nu$ in Open Set UDA}
We conduct experiments on the Office-31 dataset to investigate the effects of different values of $\nu$ for open set UDA. We plot in Figure \ref{fig:open_set_eta_change} the accuracy of the unknown class, and mean accuracy over domain-shared classes (OS$^*$) and all classes (OS) with different values of $\nu$. As $\nu$ increases, the accuracy of the unknown class improves significantly whereas the mean accuracy of domain-shared classes drops slightly. We empirically set $\nu=6$ in all experiments, which consistently gives good results.

\vspace{0.1cm}
\noindent\textbf{Feature Visualization} To have an intuitive understanding of what features comparative methods have learned, we visualize via t-SNE \cite{sne} in Figure \ref{Fig:sne} the network activations respectively from the feature extractors of DANN \cite{reverse_grad,dann} and SymmNets-V2 on the adaptation task of \textbf{A} $\to$ \textbf{W}. Compared with features learned by DANN, those by SymmNets-V2 are better aligned across the two domains for shared classes under all the settings of the closed set, partial, and open set UDA, and they are well distinguished for domain-specific classes under the settings of partial and open set UDA; the visualization confirms the fineness of SymmNets-V2 in characterizing multi-class UDA.

\begin{table*}[htb]
	\centering
	\caption{Accuracy (\%) on the Office-31 dataset \cite{office_31} for partial UDA. Results are based on models adapted from a 50-layer ResNet.}
	\label{Tab:partial_office31}
	\begin{tabular}{lccccccc}
		\hline
		Methods                          & A $\to$ W     & D $\to$ W     & W $\to$ D         & A $\to$ D     & D $\to$ A      & W $\to$ A     & Avg  \\
		\hline
		Source Only\cite{resnet}           & 54.52         & 94.57         & 94.27             & 65.61         & 73.17          & 71.71         & 75.64 \\
		DAN \cite{dan}                   & 46.44         & 53.56         & 58.60             & 42.68         & 65.66          & 65.34         & 55.38 \\
		DANN \cite{reverse_grad,dann}      & 41.35         & 46.78         & 38.85             & 41.36         & 41.34          & 44.68         & 42.39 \\
		ADDA \cite{adda}                 & 43.65         & 46.48         & 40.12             & 43.66         & 42.76          & 45.95         & 43.77 \\
		RTN \cite{rtn}                   & 75.25         & 97.12         & 98.32             & 66.88         & 85.59          & 85.70         & 84.81 \\
		JAN\cite{jan}                    & 43.39         & 53.56         & 41.40             & 35.67         & 51.04          & 51.57         & 46.11 \\
		PADA \cite{pada}                 & 86.54         & 99.32         & \textbf{100.00}     & 82.17         & 92.69          & 95.41         & 92.69 \\
		ETN \cite{transfer_example_partial}& 94.52       & \textbf{100.00}        & \textbf{100.00}            & 95.03         & \textbf{96.21}          & 94.64         & 96.73  \\
		\hline
		\textbf{SymmNets-V2}   & 83.10         & 92.91         & 94.27             & 77.71         & 74.42          & 73.49         & 82.61 \\
		\textbf{SymmNets-V2} (With active $\omega_k$) & \textbf{99.83} & 98.64        & \textbf{100.00}            & \textbf{97.85} & 93.25 & \textbf{96.00}         & \textbf{97.60} \\
		\hline
	\end{tabular}
\end{table*}

\begin{table*}[htb]
	\begin{center}
		\caption{Accuracy (\%) on the Office-Home dataset \cite{office_home} for partial UDA. Results are based on models adapted from a 50-layer ResNet.}
		\label{Tab:partial_Office-Home}
		\begin{tabular}{L{26.9mm}C{7.7mm}C{7.7mm}C{7.9mm}C{7.7mm}C{7.7mm}C{7.9mm}C{7.7mm}C{7.7mm}C{7.7mm}C{7.9mm}C{7.9mm}C{7.9mm}C{7.9mm}C{9mm}}
			\hline
			Methods                    &A$\to$C &A$\to$P &A$\to$R &C$\to$A &C$\to$P &C$\to$R &P$\to$A &P$\to$C &P$\to$R &R$\to$A &R$\to$C &R$\to$P & Avg  \\
			\hline
			Source Only\cite{resnet}     &38.57     & 60.78    & 75.21     & 39.94     &48.12      &52.90     &49.68     &30.91     &70.79     &65.38     &41.79     &70.42   &53.71 \\
			DAN \cite{dan}             &44.36     &61.79     &74.49      &41.78      &45.21      &54.11     &46.92     &38.14     &68.42     &64.37     &45.37     &68.85   &54.48 \\
			DANN \cite{reverse_grad,dann} &44.89    &54.06     &68.97      &36.27      &34.34      &45.22     &44.08     &38.03     &68.69     &52.98     &34.68     &46.50   &47.39 \\
			PADA   \cite{pada}         &51.95     &67.00     &78.74      &52.16      &53.78      &59.03     &52.61     &43.22     &78.79     &73.73     &56.60     &77.09   &62.06 \\
			ETN \cite{transfer_example_partial} & \textbf{59.24} &77.03 & 79.54   &62.92      & 65.73     &75.01     &68.29     &55.37     &84.37     &75.72     &\textbf{57.66}     &\textbf{84.54}   &70.45 \\
			\hline
			\textbf{SymmNets-V2}  & 53.12 & 67.87 &73.57      &62.43      & 56.73      &64.08     &\textbf{56.26}     &59.61     &69.36     &66.64     &52.30     &69.56   &62.63 \\
			\textbf{SymmNets-V2}  &  \multirow{2}{*}{55.46}   &\multirow{2}{*}{\textbf{78.71}} &\multirow{2}{*}{\textbf{84.59}} &\multirow{2}{*}{\textbf{70.98}}      & \multirow{2}{*}{\textbf{67.39}}      &\multirow{2}{*}{\textbf{77.91}}    &\multirow{2}{*}{\textbf{76.22}}     &\multirow{2}{*}{54.45}     &\multirow{2}{*}{\textbf{88.46}}     &\multirow{2}{*}{\textbf{77.23}}     &\multirow{2}{*}{57.07}     &\multirow{2}{*}{83.75}   &\multirow{2}{*}{\textbf{72.69}}  \\
			(With active $\omega_k$) &&&&&&&&&&&&& \\
			\hline
		\end{tabular}
	\end{center}
\end{table*}

\begin{table*}[h!]
	\begin{center}
		\caption{Accuracy (\%) on the Office-31 dataset \cite{office_31} for open set UDA. Results of all methods are based on models adapted from a 50-layer ResNet.}
		\label{Tab:open_Office31}
		
		\begin{tabular}{lllllllllllllll}
			\cline{1-15}
			\multirow{2}{*}{Methods}         & \multicolumn{2}{|c|}{A$\to$D}                                    & \multicolumn{2}{c|}{A$\to$W}                                      & \multicolumn{2}{c|}{D$\to$A}                                    & \multicolumn{2}{c|}{D$\to$W}                                    & \multicolumn{2}{c|}{W$\to$A}                                    & \multicolumn{2}{c|}{W$\to$D}                                    & \multicolumn{2}{c}{AVG}                          \\ \cline{2-15}
			& \multicolumn{1}{|c}{OS} & \multicolumn{1}{c|}{OS$^*$}           & \multicolumn{1}{c}{OS}   & \multicolumn{1}{c|}{OS$^*$}           & \multicolumn{1}{c}{OS} & \multicolumn{1}{c|}{OS$^*$}           & \multicolumn{1}{c}{OS} & \multicolumn{1}{c|}{OS$^*$}           & \multicolumn{1}{c}{OS} & \multicolumn{1}{c|}{OS$^*$}           & \multicolumn{1}{c}{OS} & \multicolumn{1}{c|}{OS$^*$}           & \multicolumn{1}{c}{OS} & \multicolumn{1}{c}{OS$^*$} \\ \hline

			\hline
			\multicolumn{1}{l|}{Source Only \cite{resnet}}      & 85.2                   & \multicolumn{1}{l|}{85.5}          & 82.5                     & \multicolumn{1}{l|}{82.7}          & 71.6                   & \multicolumn{1}{l|}{71.5}          & 94.1                   & \multicolumn{1}{l|}{94.3}          & 75.5                   & \multicolumn{1}{l|}{75.2}          & 96.6                   & \multicolumn{1}{l|}{97.0}          & 84.2                   & 84.4                     \\
			\multicolumn{1}{l|}{DANN \cite{dann}}      & 86.5                   & \multicolumn{1}{l|}{87.7}          & 85.3                     & \multicolumn{1}{l|}{87.7}          & 75.7                   & \multicolumn{1}{l|}{76.2}          & 97.5                   & \multicolumn{1}{l|}{\textbf{98.3}}          & 74.9                   & \multicolumn{1}{l|}{75.6}          & \textbf{99.5}                   & \multicolumn{1}{l|}{\textbf{100.0}}          & 86.6                   & 87.6                     \\
			\multicolumn{1}{l|}{ATI-$\lambda$\cite{open_set_math}}      & 84.3                   & \multicolumn{1}{l|}{86.6}          & 87.4                     & \multicolumn{1}{l|}{88.9}          & 78.0                   & \multicolumn{1}{l|}{79.6}          & 93.6                   & \multicolumn{1}{l|}{95.3}          & 80.4                   & \multicolumn{1}{l|}{81.4}          & 96.5                   & \multicolumn{1}{l|}{98.7}          & 86.7                   & 88.4                     \\
			\multicolumn{1}{l|}{AODA\cite{open_set_bp}}    & 88.6                   & \multicolumn{1}{l|}{89.2}          & 86.5                     & \multicolumn{1}{l|}{87.6}          & 88.9                   & \multicolumn{1}{l|}{90.6}          & 97.0          & \multicolumn{1}{l|}{96.5}          & 85.8                   & \multicolumn{1}{l|}{84.9}          & 97.9          & \multicolumn{1}{l|}{98.7}          & 90.8                   & 91.3                     \\
			\multicolumn{1}{l|}{STA \cite{liu2019separate}}      & 93.7                   & \multicolumn{1}{l|}{96.1}          &    89.5                  & \multicolumn{1}{l|}{92.1}          & 89.1                   & \multicolumn{1}{l|}{\textbf{93.5}}          & 97.5                   & \multicolumn{1}{l|}{96.5}          & 87.9                   & \multicolumn{1}{l|}{87.4}          & \textbf{99.5}                   & \multicolumn{1}{l|}{99.6}          & 92.9                   & 94.1                     \\
			\hline
			\multicolumn{1}{l|}{\textbf{SymmNets-V2} ($\nu=6$)}  & \textbf{96.3}    & \multicolumn{1}{l|}{\textbf{97.5}} & \textbf{95.7}            & \multicolumn{1}{l|}{\textbf{96.1}} & \textbf{91.6}          & \multicolumn{1}{l|}{91.7} & \textbf{97.8}                   & \multicolumn{1}{l|}{\textbf{98.3}} & \textbf{92.3}          & \multicolumn{1}{l|}{\textbf{92.9}} & 99.2                   & \multicolumn{1}{l|}{\textbf{\textbf{100.0}}} & \textbf{95.5}          & \textbf{96.1}            \\
			\hline
			
			\hline
		\end{tabular}
	\end{center}
\end{table*}

\begin{table*}[h!]
	\begin{center}
		\caption{Accuracy (\%) on Syn2Real dataset \cite{syn2real} for open set UDA. Results of all methods are based on models adapted from a 152-layer ResNet.}
		\label{Tab:open_visda}
		\begin{tabular}{l|C{3.7mm}C{3.7mm}C{3.7mm}C{3.7mm}C{3.7mm}C{3.7mm}C{3.7mm}C{3.7mm}C{3.7mm}C{3.7mm}C{3.7mm}C{3.7mm}C{3.7mm}|C{3.7mm}C{3.7mm}}
			\hline
			Methods & \rotatebox{45}{plane} & \rotatebox{45}{bcycle} &\rotatebox{45}{bus} & \rotatebox{45}{car} & \rotatebox{45}{horse} & \rotatebox{45}{knife} & \rotatebox{45}{mcycl} & \rotatebox{45}{person} & \rotatebox{45}{plant} & \rotatebox{45}{sktbrd} & \rotatebox{45}{train} & \rotatebox{45}{trunk} & \rotatebox{45}{unk} & \rotatebox{45}{OS$^*$} & \rotatebox{45}{OS}\\
			\hline
			\multicolumn{14}{l}{\textbf{Known-to-Unknown Ratio = 1:1}} & &  \\
			\hline
			Source Only \cite{resnet}   & 36   & 27  &  21 & 49 & 66 & 0  & 69 & 1  & 42 & 8  & 59 & 0  & 81 & 31 & 35  \\
			DANN \cite{dann}           & 53   & 5   &  31 & 61 & 75 & 3  & 81 & 11 & 63 & 29 & 68 & 5  & 76 & 43 & 40 \\
			AODA \cite{open_set_bp}    & 85   & 71  &  65 & 53 & 83 & \textbf{10} & 79 & 36 & 73 & 56 & 79 & \textbf{32} & \textbf{87} & 60 & 62 \\											
			\textbf{SymmNets-V2} ($\nu=6$)  & \textbf{93}   & \textbf{79}  &  \textbf{85} & \textbf{75} & \textbf{92} & 3  & \textbf{91} & \textbf{80} & \textbf{84} & \textbf{69} & \textbf{75} & 2 & 57 & \textbf{69} & \textbf{68} \\
			\hline
			
			\hline
			\multicolumn{14}{l}{\textbf{Known-to-Unknown Ratio = 1:10}} & &  \\
			\hline
			Source Only \cite{resnet}   & 23   & 24  &  43 & 40 & 44 & 0  & 56 & 2  & 24 & 8  & 47 & 1  & \textbf{93} & 26 & 31 \\
			AODA \cite{open_set_bp}    & 80   & 63  &  59 & 63 & 83 & 12 & 89 & 5  & \textbf{61} & 14 & 79 & 0  & 69 & 51 & 52  \\
			\textbf{SymmNets-V2} ($\nu=6$) &\textbf{90}& \textbf{72}  &  \textbf{76} & \textbf{68} & \textbf{90} & \textbf{14} & \textbf{94} & \textbf{18} & 59 & \textbf{20} & \textbf{83} & \textbf{5}  & 70 & \textbf{59} & \textbf{59}  \\
			\hline

		\end{tabular}
	\end{center}
\end{table*}

\subsection{Comparisons with the State of the Art} \label{SecResults}

\noindent\textbf{Closed Set UDA} We report in Table \ref{Tab:visda}, Table \ref{Tab:office31}, Table \ref{Tab:ImageCLEF}, and Table \ref{Tab:Office-Home} the classification results respectively on the popular closed set UDA datasets of VisDA-2017 \cite{visda}, Office-31 \cite{office_31}, ImageCLEF-DA \cite{ImageCLEFDA}, and Office-Home \cite{office_home}. Compared with existing adversarial learning-based methods, including the seminal one of DANN \cite{dann} and the recent ones of MCD \cite{mcd}, CDAN \cite{cada}, MDD \cite{mdd}, and SWD \cite{swd}, our SymmNets-V2 achieves better performance on most of these benchmarks, demonstrating the efficacy and fineness of SymmNets-V2 in characterizing multi-class UDA. We note that there exist a few recent methods that focus on other strategies, such as the feature attention strategy \cite{tada,cada}, prototypical network \cite{tpn}, prediction consistency w.r.t input perturbation \cite{dwt}, and intra- and inter-class discrepancies \cite{can}, all of which are orthogonal to the strategy of adversarial training studied in the present work. To compare with these methods more fairly, we consider a few strategies of these methods amenable to adversarial training, including the class-aware sampling \cite{can} empowered by alternative optimization \cite{can}, use of pseudo labels of target data as in the prototypical network \cite{tpn}, and the min-entropy consensus \cite{dwt}, resulting in a variant of our method termed as ``SymmNets-V2 Strengthened for Closed Set UDA (SymmNets-V2-SC)''. SymmNets-V2-SC boosts the performance of SymmNets-V2 on the closed set UDA, especially on the VisDA-2017 dataset \cite{visda}, indicating a promising direction of combining multiple strategies for the setting of closed set UDA.


\noindent\textbf{Partial UDA} We report in Table \ref{Tab:partial_office31} and Table \ref{Tab:partial_Office-Home} the classification results respectively on the popular partial UDA datasets of Office-31 \cite{office_31} and Office-Home \cite{office_home}. The seminal methods \cite{dan, dann} achieve worse results than the Source Only baseline; in contrast, our SymmNets-V2 improves over the Source Only baseline by a large margin, confirming the effectiveness of our method in characterizing the domain distance at a finer level. Our SymmNets-V2 (with active $\omega_k$) outperforms all state-of-the-art methods on the two benchmark datasets, again confirming the effectiveness of our method.


\noindent \textbf{Open Set UDA} We report in Table \ref{Tab:open_Office31} and Table \ref{Tab:open_visda} the classification results respectively on the popular open set UDA datasets of Office-31 \cite{office_31} and Syn2Real \cite{syn2real}.  Our SymmNets-V2 ($\nu=6$) outperforms all state-of-the-art methods on the two benchmarks, confirming the effectiveness of our method in aligning both the domain-shared classes and the unknown class across source and target domains.

\section{Conclusion}

In this paper, we study the formalism of unsupervised multi-class domain adaptation. We contribute a new bound for multi-class UDA based on a novel notion of Multi-Class Scoring Disagreement (MCSD); a corresponding data-dependent PAC bound is also developed based on the notion of Rademacher complexity. The proposed MCSD is able to fully characterize the relations between any pair of multi-class scoring hypotheses, which is finer compared with those in existing domain adaptation bounds. Our derived bounds naturally suggest the Multi-class Domain-adversarial learning Networks (McDalNets), which promotes the alignment of conditional feature distributions across source and target domains. We show that different instantiations of McDalNets via surrogate learning objectives either coincide with or resemble a few recently popular methods, thus (partially) underscoring their practical effectiveness. Based on our same theory of multi-class UDA, we also introduce a new algorithm of Domain-Symmetric Networks (SymmNets), which is featured by a novel adversarial strategy of domain confusion and discrimination. SymmNets affords simple extensions that work equally well under the problem settings of either closed set, partial, or open set UDA. Careful empirical studies show that algorithms of McDalNets based on the MCSD surrogates consistently improve over these based on the scalar-valued versions.
Experiments under the settings of closed set, partial, and open set UDA also confirm the effectiveness of our proposed SymmNets empirically. The contributed theory and algorithms connect better with the practice in multi-class UDA. We expect they could provide useful principles for algorithmic design in future research.

\ifCLASSOPTIONcompsoc
  \section*{Acknowledgments}
\else
  \section*{Acknowledgment}
\fi

This work is supported in part by the National Natural Science Foundation of China (Grant No.: 61771201), the Program for Guangdong Introducing Innovative and Enterpreneurial Teams (Grant No.: 2017ZT07X183), and the Guangdong R\&D key project of China (Grant No.: 2019B010155001).



\ifCLASSOPTIONcaptionsoff
  \newpage
\fi





\bibliographystyle{IEEEtran_bst}
\bibliography{IEEEabrv,egbib}

%
%

%

\begin{IEEEbiography}[{\includegraphics[width=1in,height=1.25in,clip,keepaspectratio]{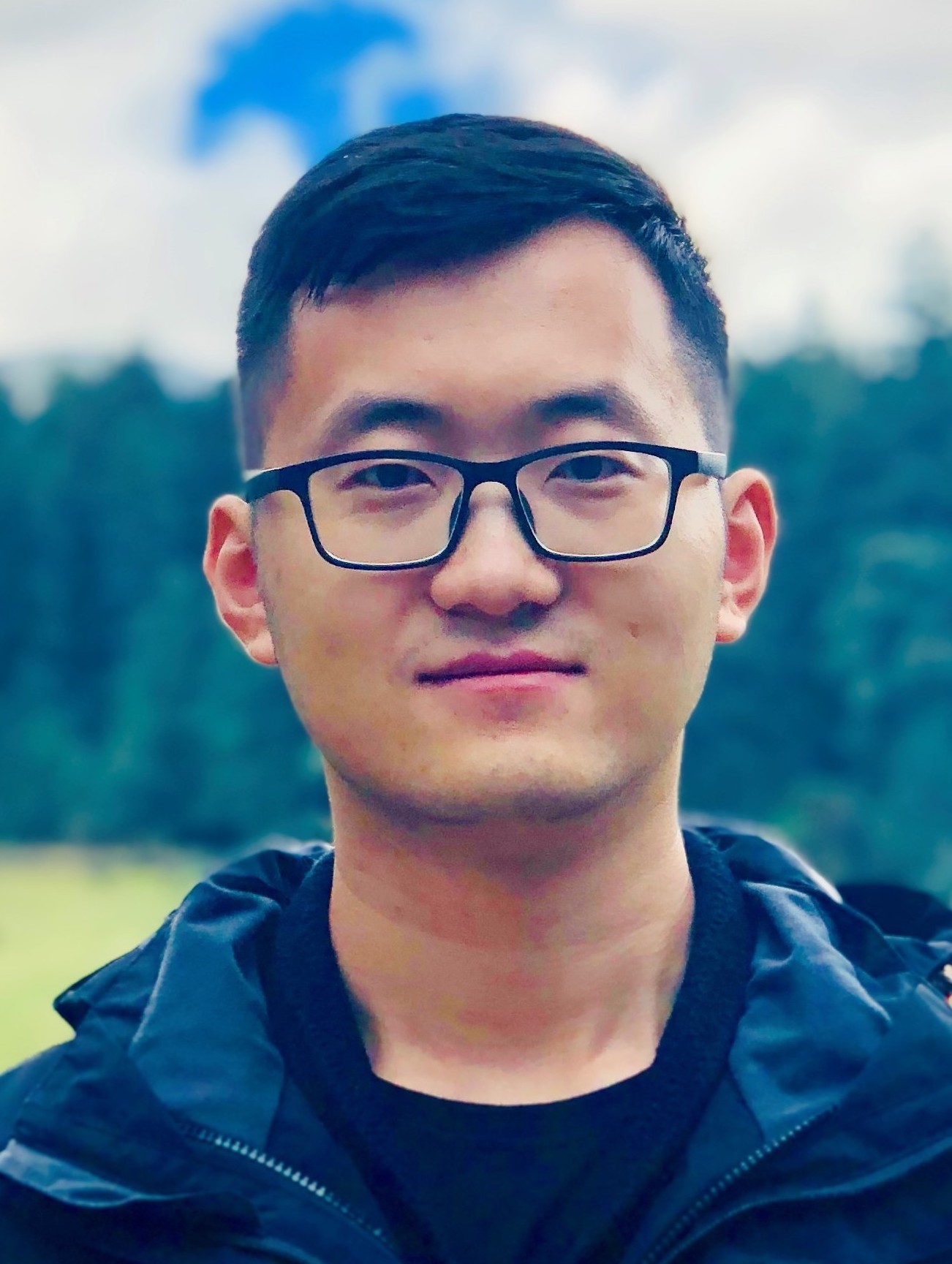}}]{Yabin Zhang}
	received the B.E. degree in School of Electronic and Information Engineering from South China University of Technology, Guangzhou, China, in 2017, where he is currently pursuing the master's degree. His current research interests include computer vision and deep learning, especially the deep transfer learning.
\end{IEEEbiography}

\begin{IEEEbiography}[{\includegraphics[width=1in,height=1.25in,clip,keepaspectratio]{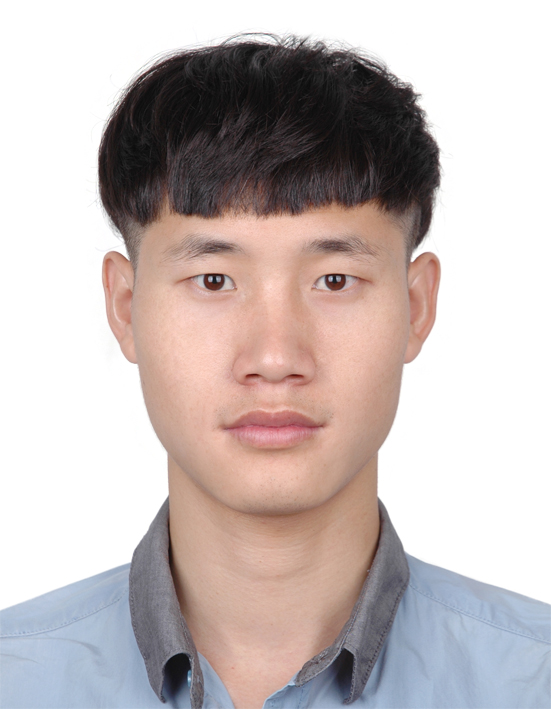}}]{Bin Deng}
	received the B.S. degree in information and computing science from South China Agricultural University, Guangzhou, China, in 2015, and the M.S. degree in pattern recognition and intelligent system from Shenzhen University, Shenzhen, China, in 2018.
	
	He is currently pursuing the Ph.D. degree in the School of Electronic and Information Engineering, South China University of Technology. His research interests include machine learning, pattern recognition, and hyperspectral image processing.
\end{IEEEbiography}

\begin{IEEEbiography}[{\includegraphics[width=1in,height=1.25in,clip,keepaspectratio]{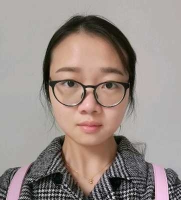}}]{Hui Tang}
	received the B.E. degree in the School of Electronic and Information Engineering, South China University of Technology, in 2018. She is currently pursuing the Ph.D. degree in the School of Electronic and Information Engineering, South China University of Technology. Her research interests are in computer vision and machine learning.
\end{IEEEbiography}

\begin{IEEEbiography}[{\includegraphics[width=1in,height=1.25in,clip,keepaspectratio]{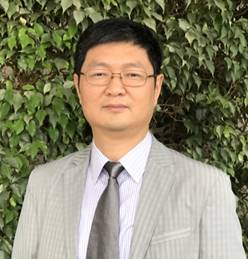}}]{Lei Zhang}
	(M’04, SM’14, F’18) received his B.Sc. degree in 1995 from Shenyang Institute of Aeronautical Engineering, Shenyang, P.R. China, and M.Sc. and Ph.D degrees in Control Theory and Engineering from Northwestern Polytechnical University, Xi’an, P.R. China, in 1998 and 2001, respectively. From 2001 to 2002, he was a research associate in the Department of Computing, The Hong Kong Polytechnic University. From January 2003 to January 2006 he worked as a Postdoctoral Fellow in the Department of Electrical and Computer Engineering, McMaster University, Canada. In 2006, he joined the Department of Computing, The Hong Kong Polytechnic University, as an Assistant Professor. Since July 2017, he has been a Chair Professor in the same department. His research interests include Computer Vision, Image and Video Analysis, Pattern Recognition, and Biometrics, etc. Prof. Zhang has published more than 200 papers in those areas. As of 2020, his publications have been cited more than 52,000 times in literature. Prof. Zhang is a Senior Associate Editor of IEEE Trans. on Image Processing, and is/was an Associate Editor of IEEE Trans. on Pattern Analysis and Machine Intelligence, SIAM Journal of Imaging Sciences, IEEE Trans. on CSVT, and Image and Vision Computing, etc. He is a “Clarivate Analytics Highly Cited Researcher” from 2015 to 2019. More information can be found in his homepage \url{http://www4.comp.polyu.edu.hk/~cslzhang/}. 
\end{IEEEbiography}

\begin{IEEEbiography}[{\includegraphics[width=1in,height=1.25in,clip,keepaspectratio]{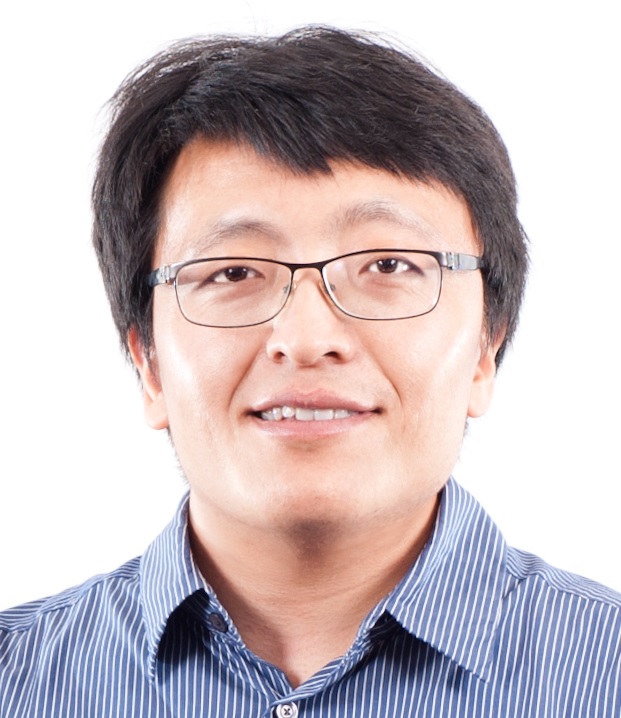}}]{Kui Jia}
	received the B.E. degree from Northwestern Polytechnic University, Xi’an, China, in 2001, the M.E. degree from the National University of Singapore, Singapore, in 2004, and the Ph.D. degree in computer science from the Queen Mary University of London, London, U.K., in 2007.
	
	He was with the Shenzhen Institute of Advanced Technology of the Chinese Academy of Sciences, Shenzhen, China, Chinese University of Hong Kong, Hong Kong, the Institute of Advanced Studies, University of Illinois at Urbana-Champaign, Champaign, IL, USA, and the University of Macau, Macau, China. He is currently a Professor with the School of Electronic and Information Engineering, South China University of Technology, Guangzhou, China. His recent research focuses on theoretical deep learning and its applications in vision and robotic problems, including deep learning of 3D data and deep transfer learning.
\end{IEEEbiography}






\clearpage

\appendices

\newtheorem{proposition}{Proposition}[section]
\newtheorem{lemma}{Lemma}[section]
\newtheorem{theo}{Theorem}[section]
\section{Proof of Theorem \textcolor{red}{\textbf{1}}}\label{appendix-theorem1}

We begin with the following lemmas to prove the Theorem \textcolor{red}{1}.

\begin{lemma}\label{lemma-1}
	Fix $\rho>0$. For any scoring functions $\f, \f'\in \mathcal{F}$, the following holds for any distribution $D$ over $\mathcal{X\times Y}$,
	\begin{equation}
	\mathcal{E}_D(h_{\f}) \leq \mathcal{E}^{(\rho)}_D(\f') + \text{\rm MCSD}_{D_x}^{(\rho)}(\f, \f')
	\end{equation}
	where
	\begin{equation}
	\mathcal{E}_D(h_{\f}) := \mathbb{E}_{(\x,y)\sim D} \mathbb{I}[h_{\f}(\x)\neq y] ,
	\end{equation}
	\begin{equation}
	\mathcal{E}^{(\rho)}_D(\f') := \mathbb{E}_{(\x,y)\sim D} \sum_{k=1}^K \Phi_{\rho}( \mu_k(\f'(\x),y) )  .
	\end{equation}
\end{lemma}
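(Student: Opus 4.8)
The plan is to prove the inequality pointwise in $(\x,y)$ and then integrate. Concretely, I would show that for every $\x$, every $y\in\mathcal{Y}$, and all $\f,\f'\in\mathcal{F}$ (with the sum-to-zero constraint),
\[
\mathbb{I}[h_{\f}(\x)\neq y]\;\leq\;\sum_{k=1}^{K}\Phi_\rho\!\big(\mu_k(\f'(\x),y)\big)\;+\;\frac{1}{K}\big\|\M^{(\rho)}(\f(\x))-\M^{(\rho)}(\f'(\x))\big\|_1 .
\]
Taking $\mathbb{E}_{(\x,y)\sim D}$ of both sides then finishes the proof: the left side becomes $\mathcal{E}_D(h_{\f})$, the first right-hand term becomes $\mathcal{E}^{(\rho)}_D(\f')$, and the second right-hand term, whose integrand does not depend on $y$, becomes $\mathbb{E}_{\x\sim D_x}\big[\frac1K\|\M^{(\rho)}(\f(\x))-\M^{(\rho)}(\f'(\x))\|_1\big]=\text{\rm MCSD}^{(\rho)}_{D_x}(\f,\f')$.

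For the pointwise bound, if $h_{\f}(\x)=y$ it is trivial, so assume $k^{*}:=h_{\f}(\x)\neq y$, i.e.\ the left side equals $1$. From the sum-to-zero constraint and $k^{*}=\arg\max_j f_j(\x)$ one gets $f_{k^{*}}(\x)=\max_j f_j(\x)\geq 0$ and $f_{k^{*}}(\x)\geq f_y(\x)$; the former yields $\Phi_\rho(-f_{k^{*}}(\x))=1$, and, with $\Phi_\rho$ non-increasing, the latter yields $\Phi_\rho(f_y(\x))\geq\Phi_\rho(f_{k^{*}}(\x))$. Reading off the entrywise form of $\M^{(\rho)}$ from (\ref{equ:Martix-M}) (the $(i,i)$ entry is $\Phi_\rho(f_i(\x))$ and every off-diagonal entry of row $i$ equals $\Phi_\rho(-f_i(\x))$), and keeping only the $(k^{*},k^{*})$ and $(y,y)$ diagonal entries and the $K-1$ off-diagonal entries of row $k^{*}$ (distinct positions since $y\neq k^{*}$; all dropped entries are nonnegative), I obtain
\[
\big\|\M^{(\rho)}(\f(\x))-\M^{(\rho)}(\f'(\x))\big\|_1\;\geq\;\big|\Phi_\rho(f_{k^{*}}(\x))-\Phi_\rho(f'_{k^{*}}(\x))\big|+(K-1)\big(1-\Phi_\rho(-f'_{k^{*}}(\x))\big)+\big|\Phi_\rho(f_y(\x))-\Phi_\rho(f'_y(\x))\big|,
\]
while $\sum_{k}\Phi_\rho(\mu_k(\f'(\x),y))\geq\Phi_\rho(f'_y(\x))+\Phi_\rho(-f'_{k^{*}}(\x))$ (keeping the $k=k^{*}$ term, legitimate since $k^{*}\neq y$).

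What remains is a scalar inequality. Setting $p=\Phi_\rho(f'_y(\x))$, $q=\Phi_\rho(-f'_{k^{*}}(\x))$, $r=\Phi_\rho(f_{k^{*}}(\x))$, $r'=\Phi_\rho(f'_{k^{*}}(\x))$, $s=\Phi_\rho(f_y(\x))$, these quantities lie in $[0,1]$ and satisfy $s\geq r$ (shown above) and $r'\geq 1-q$ (since $\Phi_\rho(t)+\Phi_\rho(-t)\geq 1$ for every real $t$, applied to $t=f'_{k^{*}}(\x)$). Using $|r-r'|\geq r'-r\geq(1-q)-r$ and $|s-p|\geq s-p\geq r-p$, the computation gives
\[
p+q+\frac1K\big[\,|r-r'|+(K-1)(1-q)+|s-p|\,\big]\;\geq\;(p+q)+\frac1K\big[K(1-q)-p\big]\;=\;1+p\cdot\frac{K-1}{K}\;\geq\;1,
\]
which is exactly the pointwise bound. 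The step I expect to be the main obstacle is precisely this last combination: the $1/K$ normalization in $\text{\rm MCSD}$ makes a naive triangle inequality against the single column $y$ of $\M^{(\rho)}(\f(\x))$ too weak, so one must exploit that the $K-1$ off-diagonal entries of row $k^{*}$ are identical (contributing a factor $K-1$ that cancels the $1/K$) together with the fact that $h_{\f}(\x)=k^{*}$ forces $\Phi_\rho(-f_{k^{*}}(\x))=1$, which is maximally far from $\Phi_\rho(-f'_{k^{*}}(\x))$ exactly in the regime where $\f'$ would place $k^{*}$ well below its top class.
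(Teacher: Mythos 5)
Your proof is correct, and it reaches the same pointwise inequality that the paper's Lemma A.1 reduces to, but by a genuinely different and considerably shorter route. The paper splits into three cases according to whether $h_{\f}(\x)=y$ and whether $h_{\f'}(\x)=y$, and in the hard case ($h_{\f}(\x)\neq y$, $h_{\f'}(\x)=y$) it further branches on sign and magnitude conditions of $f'_{h_{\f}(\x)}$, $f_{h_{\f}(\x)}$, and $f_y$, verifying the bound separately in each regime; the entries of $\M^{(\rho)}$ it retains in each sub-case are essentially the same $K+1$ entries you select (the off-diagonal row of $k^*=h_{\f}(\x)$ plus the two diagonal entries $(k^*,k^*)$ and $(y,y)$). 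You instead dispense with all branching by combining three elementary observations — $\Phi_\rho(-f_{k^*}(\x))=1$ because the sum-to-zero constraint forces $\max_j f_j(\x)\ge 0$, the identity $\Phi_\rho(t)+\Phi_\rho(-t)\ge 1$ applied to $t=f'_{k^*}(\x)$, and monotonicity of $\Phi_\rho$ giving $s\ge r$ — into a single algebraic estimate that yields $1+p\,\tfrac{K-1}{K}\ge 1$ uniformly; in particular your argument does not need to distinguish whether $h_{\f'}(\x)$ equals $y$ (the paper's Case 2 is absorbed automatically, since there $q=1$ already makes $p+q\ge 1$). Your version also makes explicit why the $\tfrac1K$ normalization in $\text{\rm MCSD}$ is exactly cancelled by the $K-1$ identical off-diagonal entries of row $k^*$, which is the structural point buried inside the paper's case analysis. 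I checked the entry bookkeeping (the selected positions are pairwise distinct since $y\neq k^*$) and the two auxiliary inequalities $|r-r'|\ge(1-q)-r$ and $|s-p|\ge r-p$, which hold regardless of sign; the argument is complete.
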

\begin{proof}
	To prove the above inequality, we only need to prove that for any $(\x,y)\sim D$ and $\f, \f'\in \mathcal{F}$, the inequality
	\begin{equation}\label{equ-bound1}
	\begin{aligned}
	\mathbb{I}[h_{\f}(\x)\neq y] \leq  & L^{(\rho)}(\f'(\x),y) + \\
	& \frac{1}{K}\|\M^{(\rho)}(\f(\x)) - \M^{(\rho)}(\f'(\x))\|_1
	\end{aligned}
	\end{equation}
	holds, where $L^{(\rho)}(\f'(\x),y) = \sum_{k=1}^K\Phi_{\rho}(\mu_k(\f'(\x),y))$.
	
	We prove in the following that the inequality (\ref{equ-bound1}) holds in three separate cases, depending on the relationship between $h_{\f}(\x)$, $h_{\f'}(\x)$, and the class label $y$. For convenience, we also denote $h_{\f}(\x) = y_h$ and $h_{\f'}(\x) = y'_h$.
	
	\noindent\emph{Case 1:} When $h_{\f}(\x) = y_h = y$, no matter whether $h_{\f'}(\x) = y'_h = y$ or not, we have $\mathbb{I}(h_{\f}(\x)\neq y) = 0$,  and the inequality (\ref{equ-bound1}) holds obviously.
	
	\noindent\emph{Case 2:} When $h_{\f}(\x) = y_h \neq y$ and $h_{\f'}(\x) = y'_h \neq y$, due to the sum-to-zero constraint of $\sum_{k\in \mathcal{Y}}f'_k(\x)=0$, we have $f'_{y'_h}(\x)>=0$, and therefore
	\begin{multline*}
	L^{(\rho)}(\f'(\x),y)\geq \\ \Phi_\rho(\mu_{y'_h}(\f'(\x),y)) =  \Phi_\rho(-f'_{y'_h}(\x)) = 1 = \mathbb{I}[h_{\f}(\x)\neq y] .
	\end{multline*}
	Then the inequality (\ref{equ-bound1}) holds.
	
	\noindent\emph{Case 3:} When $h_{\f}(\x) = y_h \neq y$ and $h_{\f'}(\x) = y'_h = y$, we first show that when there exists $k\neq y$ such that $f'_k(\x) \geq 0$, we have $L^{(\rho)}(\f'(\x),y)\geq 1$, resulting in (\ref{equ-bound1}) directly; meanwhile, we show that when $f'_k(\x) < 0$ for all $k\neq y$ and $f'_y(\x) \leq \rho$, we have
	\begin{multline*}
	L^{(\rho)}(\f'(\x),y) = \sum_{k\neq y}[1 + \frac{f'_k(\x)}{\rho}] + 1 -  \frac{f'_y(\x)}{\rho} = \\ K-1-\frac{f'_y(\x)}{\rho} + 1 -  \frac{f'_y(\x)}{\rho} = K-2\frac{f'_y(\x)}{\rho} \geq 1,
	\end{multline*}
	and the inequality (\ref{equ-bound1}) holds; we proceed to discuss under the conditions of $f'_k(\x) < 0$ for all $k\neq y$ and $f'_y(\x) > \rho$.
	\begin{enumerate}
		\item Consider that $f'_{y_h}(\x) \leq -\rho$, we discuss under the separate conditions of either $f_{y_h}(\x) \geq \rho$ or $0\leq f_{y_h}(\x) < \rho$. When $f_{y_h}(\x) \geq \rho$, we have
		\begin{equation*}
		\begin{aligned}
		&\frac{1}{K}\|\M^{(\rho)}(\f(\x)) - \M^{(\rho)}(\f'(\x))\|_1 \\
		&\geq \frac{1}{K}[\sum_{k\neq y_h}|\Phi_\rho(-f_{y_h}(\x)) - \Phi_\rho(-f'_{y_h}(\x))| \\
		&+ \ |\Phi_\rho(f_{y_h}(\x)) - \Phi_\rho(f'_{y_h}(\x))|] = \frac{1}{K}[K - 1 + 1] = 1;
		\end{aligned}
		\end{equation*}
		when $0\leq f_{y_h}(\x) < \rho$, we further discuss under the separate conditions of either $f_y(\x) \leq 0$ or $f_y(\x) > 0$. When $f_y(\x)\leq 0$, we have
		\begin{equation*}
		\begin{aligned}
		& \frac{1}{K}\|\M^{(\rho)}(\f(\x)) - \M^{(\rho)}(\f'(\x))\|_1 \\
		& \geq \frac{1}{K}[\sum_{k\neq y_h}|\Phi_\rho(-f_{y_h}(\x)) - \Phi_\rho(-f'_{y_h}(\x))| + \\
		& |\Phi_\rho(f'_y(\x)) - \Phi_\rho(f_y(\x))|] = \frac{1}{K}[K-1 + 1] = 1;
		\end{aligned}
		\end{equation*}
		when $f_y(\x) > 0$, we have
		\begin{equation*}
		\begin{aligned}
		& \frac{1}{K}\|\M^{(\rho)}(\f(\x)) - \M^{(\rho)}(\f'(\x))\|_1 \\
		& \geq \frac{1}{K}[\sum_{k\neq y_h}|\Phi_\rho(-f_{y_h}(\x)) - \Phi_\rho(-f'_{y_h}(\x))| \\
		& \ \ + |\Phi_\rho(f_{y_h}(\x)) - \Phi_\rho(f'_{y_h}(\x))| + |\Phi_\rho(f'_y(\x)) - \Phi_\rho(f_y(\x))|] \\
		& = \frac{1}{K}[K-1 + 1 + \frac{f_{y_h}(\x)}{\rho} - \frac{f_y(\x)}{\rho}]\geq 1.
		\end{aligned}
		\end{equation*}
		Therefore, the inequality (\ref{equ-bound1}) holds.
		\item Consider that $-\rho < f'_{y_h}(\x) < 0$, we discuss in the conditions where both $0 \leq f_{y_h}(\x) < \rho$ and $f_{y_h}(\x) \geq f_y(\x) > 0$ are met, or either of them is not met. When $f_{y_h}(\x) \geq \rho$ or $f_y(\x) \leq 0$, we have
		\begin{equation*}
		\begin{aligned}
		& \frac{1}{K}\|\M^{(\rho)}(\f(\x)) - \M^{(\rho)}(\f'(\x))\|_1 \\
		& \geq \frac{1}{K}[\sum_{k\neq y_h}|\Phi_\rho(-f_{y_h}(\x)) - \Phi_\rho(-f'_{y_h}(\x))| \\
		& \ \ + |\Phi_\rho(f_{y_h}(\x)) - \Phi_\rho(f'_{y_h}(\x))| + |\Phi_\rho(f_{y}(\x)) - \Phi_\rho(f'_{y}(\x))|] \\
		& = \frac{1}{K}[-(K-1)\frac{f'_{y_h}(\x)}{\rho} + |\Phi_\rho(f_{y_h}(\x)) - \Phi_\rho(f'_{y_h}(\x))| \\
		& \ \ + |\Phi_\rho(f_{y}(\x)) - \Phi_\rho(f'_{y}(\x))|] \\
		& \geq  \frac{1}{K}[-(K-1)\frac{f'_{y_h}(\x)}{\rho} + 1] \\
		& \geq \frac{1}{K}[-K\frac{f'_{y_h}(\x)}{\rho}] = -\frac{f'_{y_h}(\x)}{\rho};
		\end{aligned}
		\end{equation*}
		when $0 \leq f_{y_h}(\x) < \rho$ and $f_y(\x) > 0$, we have
		\begin{equation*}
		\begin{aligned}
		& \frac{1}{K}\|\M^{(\rho)}(\f(\x)) - \M^{(\rho)}(\f'(\x))\|_1 \\
		& \geq \frac{1}{K}[\sum_{k\neq y_h}|\Phi_\rho(-f_{y_h}(\x)) - \Phi_\rho(-f'_{y_h}(\x))| \\
		& \ + |\Phi_\rho(f_{y_h}(\x)) - \Phi_\rho(f'_{y_h}(\x))| + |\Phi_\rho(f_{y}(\x)) - \Phi_\rho(f'_{y}(\x))|] \\
		& \geq \frac{1}{K}[-(K-1)\frac{f'_{y_h}(\x)}{\rho} + |\Phi_\rho(f_{y_h}(\x)) - \Phi_\rho(f'_{y_h}(\x))| \\ & \ + |\Phi_\rho(f_{y}(\x)) - \Phi_\rho(f'_{y}(\x))|] \\
		& \geq \frac{1}{K}[-(K-1)\frac{f'_{y_h}(\x)}{\rho} + 1 + \frac{f_{y_h}(\x)}{\rho} - \frac{f_y(\x)}{\rho}] \\ & \geq \frac{1}{K}[-K\frac{f'_{y_h}(\x)}{\rho}]\geq -\frac{f'_{y_h}(\x)}{\rho}.
		\end{aligned}
		\end{equation*}
		Therefore, $\frac{1}{K}\|\M^{(\rho)}(\f(\x)) - \M^{(\rho)}(\f'(\x))\|_1 \geq -\frac{f'_{y_h}(\x)}{\rho}$ holds. At the same time, we also have $L^{(\rho)}(\f'(\x),y) \geq 1 + \frac{f'_{y_h}(\x)}{\rho}$, thus the inequality (\ref{equ-bound1}) holds.
	\end{enumerate}
	The proof is finished. \hfill{\QEDclosed}
\end{proof} 

\begin{lemma}\label{lemma-2}
	Fix $\rho>0$. For any scoring functions $\f, \f'\in \mathcal{F}$, the following holds for any distribution $D$ over $\mathcal{X\times Y}$,
	\begin{equation}
	\text{\rm MCSD}_{D_x}^{(\rho)}(\f,\f') \leq \mathcal{E}^{(\rho)}_D(\f) + \mathcal{E}^{(\rho)}_D(\f')
	\end{equation}
	where
	\begin{equation}
	\mathcal{E}^{(\rho)}_D(\f) := \mathbb{E}_{(\x,y)\sim D} \sum_{k=1}^K \Phi_{\rho}( \mu_k(\f(\x),y) ).
	\end{equation}
\end{lemma}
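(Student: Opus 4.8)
The plan is to reduce the inequality between expectations to a pointwise statement and then to prove it by an elementary property of the ramp loss. Since $\text{\rm MCSD}_{D_x}^{(\rho)}(\f,\f')$ depends on $D$ only through its $\mathcal{X}$-marginal $D_x$, whereas $\mathcal{E}^{(\rho)}_D$ depends on the joint law, it suffices to establish that for \emph{every} $\x\in\mathcal{X}$ and \emph{every} $y\in\mathcal{Y}$,
\begin{equation*}
\tfrac{1}{K}\big\|\M^{(\rho)}(\f(\x))-\M^{(\rho)}(\f'(\x))\big\|_1 \;\le\; L^{(\rho)}(\f(\x),y)+L^{(\rho)}(\f'(\x),y),
\end{equation*}
where $L^{(\rho)}(\f(\x),y)=\sum_{k=1}^{K}\Phi_\rho(\mu_k(\f(\x),y))$ as in the proof of Lemma \ref{lemma-1}; taking $\mathbb{E}_{(\x,y)\sim D}$ of both sides then yields exactly $\text{\rm MCSD}_{D_x}^{(\rho)}(\f,\f')\le \mathcal{E}^{(\rho)}_D(\f)+\mathcal{E}^{(\rho)}_D(\f')$.

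For the pointwise bound I would fix $\x$ and abbreviate $p_k=\Phi_\rho(f_k(\x))$, $q_k=\Phi_\rho(-f_k(\x))$, and likewise $p'_k,q'_k$ for $\f'$; all of these lie in $[0,1]$. Reading off the definition of $\M^{(\rho)}$ (equivalently, invoking the identity of Proposition \ref{PropMCDRelateMCSD}), the diagonal entry of index $k$ contributes $|p_k-p'_k|$ and each off-diagonal row entry $|q_k-q'_k|$ occurs in $K-1$ columns, so
\begin{equation*}
\big\|\M^{(\rho)}(\f(\x))-\M^{(\rho)}(\f'(\x))\big\|_1 \;=\; \sum_{k=1}^{K}\Big(|p_k-p'_k|+(K-1)\,|q_k-q'_k|\Big),
\end{equation*}
while the right-hand side reads $(p_y+p'_y)+\sum_{k\neq y}(q_k+q'_k)$ for $\f$ and the analogous expression for $\f'$. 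The single nontrivial ingredient is the elementary inequality $|\Phi_\rho(a)-\Phi_\rho(b)|\le \Phi_\rho(-a)+\Phi_\rho(-b)$ for all $a,b\in\mathbb{R}$: we may assume $a\ge b$ without loss of generality, and then if $a<0$ we have $\Phi_\rho(a)=\Phi_\rho(b)=1$ so the left side is $0$, while if $a\ge 0$ we have $\Phi_\rho(-a)=1$ so the right side is $\ge 1\ge$ the left side. Substituting $a\mapsto -f_k(\x)$, $b\mapsto -f'_k(\x)$ (resp.\ $a\mapsto f_k(\x)$, $b\mapsto f'_k(\x)$) gives $|q_k-q'_k|\le p_k+p'_k$ and $|p_k-p'_k|\le q_k+q'_k$; together with the trivial bounds $|p_k-p'_k|\le p_k+p'_k$ and $|q_k-q'_k|\le q_k+q'_k$ (valid since the quantities are nonnegative) these are all the estimates needed.

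It remains to assemble the pieces by splitting the index sum at $k=y$. For the $k=y$ term I would use the trivial bound on $|p_y-p'_y|$ and the sub-lemma on $|q_y-q'_y|$, getting $|p_y-p'_y|+(K-1)|q_y-q'_y|\le K(p_y+p'_y)$; for each $k\neq y$ I would instead use the sub-lemma on $|p_k-p'_k|$ and the trivial bound on $|q_k-q'_k|$, getting $|p_k-p'_k|+(K-1)|q_k-q'_k|\le K(q_k+q'_k)$. Summing over $k$ and dividing by $K$ leaves precisely $L^{(\rho)}(\f(\x),y)+L^{(\rho)}(\f'(\x),y)$ on the right, which completes the argument. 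The hard part here is not any serious estimate but the bookkeeping: one must apply the "$|p-p'|\le q+q'$" direction of the sub-lemma on the off-diagonal ($k\neq y$) coordinates and the "$|q-q'|\le p+p'$" direction on the diagonal ($k=y$) coordinate, so that each of the $K$ coordinate groups collapses cleanly into $K$ times a single term that already appears in $L^{(\rho)}(\cdot,y)$; choosing the other directions would leave spurious $p_k$'s with $k\ne y$ that are not controlled by $L^{(\rho)}(\cdot,y)$.
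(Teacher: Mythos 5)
Your proof is correct, and it follows the same overall strategy as the paper's: reduce to the pointwise inequality $\frac{1}{K}\|\M^{(\rho)}(\f(\x))-\M^{(\rho)}(\f'(\x))\|_1 \le L^{(\rho)}(\f(\x),y)+L^{(\rho)}(\f'(\x),y)$ and prove it using the ramp-loss estimate $|\Phi_\rho(a)-\Phi_\rho(b)|\le \Phi_\rho(-a)+\Phi_\rho(-b)$ together with the trivial bound $|\Phi_\rho(a)-\Phi_\rho(b)|\le \Phi_\rho(a)+\Phi_\rho(b)$. The one genuine difference is the bookkeeping: the paper groups the double sum \emph{column-wise} (its inequality (B.1)-style bound shows that each column $y'$ of the difference matrix has $L_1$ norm at most the full quantity $L^{(\rho)}(\f(\x),y)+L^{(\rho)}(\f'(\x),y)$, then sums over the $K$ columns), whereas you group it \emph{row-wise} via the identity of Proposition~3, bounding each row's contribution $|p_k-p'_k|+(K-1)|q_k-q'_k|$ by $K$ times the single corresponding term of the loss. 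Your organization is slightly tidier — each of the $K$ coordinate groups collapses onto exactly one loss term, and your unified WLOG proof of the key sub-lemma replaces the paper's two separate case analyses — but the mathematical content is the same.
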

\begin{proof}
	To prove the above inequality, we only need to prove that for any $(\x,y)\sim D$ and $\f, \f'\in \mathcal{F}$, the inequality
	\begin{multline}\label{equ-bound2}
	\frac{1}{K}\|\M^{(\rho)}(\f(\x)) - \M^{(\rho)}(\f'(\x))\|_1 \\
	\qquad \leq L^{(\rho)}(\f(\x),y) + L^{(\rho)}(\f'(\x),y)
	\end{multline}
	holds, where $L^{(\rho)}(\f(\x),y) = \sum_{k=1}^K\Phi_{\rho}(\mu_k(\f(\x),y))$.
	
	Before proving (\ref{equ-bound2}), we first show that the following inequality
	\begin{equation}\label{equ-help1}
	\begin{aligned}
	& |\Phi_{\rho}(f_{y'}(\x)) - \Phi_{\rho}(f'_{y'}(\x))| + \sum_{k\neq y'}|\Phi_{\rho}(-f_k(\x)) - \Phi_{\rho}(-f'_k(\x))| \\
	& \leq \Phi_{\rho}(f_y(\x)) + \Phi_{\rho}(f'_y(\x)) + \sum_{k\neq y}[\Phi_{\rho}(-f_k(\x)) + \Phi_{\rho}(-f'_k(\x))]
	\end{aligned}
	\end{equation}
	holds for any $y'\in \mathcal{Y}$. If $y' = y$, the inequality (\ref{equ-help1}) holds obviously. We then discuss in the following under the condition of $y' \neq y$. In this case, the left hand side of the inequality (\ref{equ-help1}) is equal to
	\begin{multline}\label{equ-help2}
	|\Phi_{\rho}(f_{y'}(\x)) - \Phi_{\rho}(f'_{y'}(\x))| + |\Phi_{\rho}(-f_{y}(\x)) - \Phi_{\rho}(-f'_{y}(\x))|  \\
	\qquad + \sum_{k\neq y, y'}|\Phi_{\rho}(-f_k(\x)) - \Phi_{\rho}(-f'_k(\x))|,
	\end{multline}
	and the right hand side of the inequality (\ref{equ-help1}) is equal to
	\begin{multline}\label{equ-help3}
	\Phi_{\rho}(-f_{y'}(\x)) + \Phi_{\rho}(-f'_{y'}(\x)) + \Phi_{\rho}(f_{y}(\x)) + \Phi_{\rho}(f'_{y}(\x))  \\
	\qquad + \sum_{k\neq y, y'}[\Phi_{\rho}(-f_k(\x)) + \Phi_{\rho}(-f'_k(\x))].
	\end{multline}
	By observing equations (\ref{equ-help2}) and (\ref{equ-help3}), it is obvious that the inequality
	\begin{multline}\label{equ-help4}
	\sum_{k\neq y, y'}|\Phi_{\rho}(-f_k(\x)) - \Phi_{\rho}(-f'_k(\x))| \\ \leq \sum_{k\neq y, y'}[\Phi_{\rho}(-f_k(\x)) + \Phi_{\rho}(-f'_k(\x))]
	\end{multline}
	holds. We then discuss in conditions where both $f_y(\x) > 0$ and $f'_y(\x) > 0$ are met, or either of them is not met. When $f_y(\x) \leq 0$ or $f'_y(\x) \leq 0$, we have
	\begin{multline*}
	\Phi_{\rho}(f_{y}(\x)) + \Phi_{\rho}(f'_{y}(\x))\\ \geq 1 \geq |\Phi_{\rho}(-f_{y}(\x)) - \Phi_{\rho}(-f'_{y}(\x))|;
	\end{multline*}
	when $f_y(\x) > 0$ and $f'_y(\x) > 0$, we have
	\begin{multline*}
	|\Phi_{\rho}(-f_{y}(\x)) - \Phi_{\rho}(-f'_{y}(\x))| \\ = |1 - 1|  = 0 \leq \Phi_{\rho}(f_{y}(\x)) + \Phi_{\rho}(f'_{y}(\x)).
	\end{multline*}
	Therefore, we have
	\begin{align}\label{equ-help5}
	|\Phi_{\rho}(-f_{y}(\x)) - \Phi_{\rho}(-f'_{y}(\x))| \leq \Phi_{\rho}(f_{y}(\x)) + \Phi_{\rho}(f'_{y}(\x)).
	\end{align}
	Similarly, we also have
	\begin{align}\label{equ-help6}
	|\Phi_{\rho}(f_{y'}(\x)) - \Phi_{\rho}(f'_{y'}(\x))| \leq \Phi_{\rho}(-f_{y'}(\x)) + \Phi_{\rho}(-f'_{y'}(\x)).
	\end{align}
	By combining the inequalities (\ref{equ-help4}), (\ref{equ-help5}), and (\ref{equ-help6}), we can get the result of the inequality (\ref{equ-help1}). Therefore, the inequality (\ref{equ-help1}) holds for any $y'\in \mathcal{Y}$.
	
	We now turn to prove that the inequality (\ref{equ-bound2}) holds. Based on the inequality of (\ref{equ-help1}), we therefore have
	\begin{equation*}
	\begin{aligned}
	& \|\M^{(\rho)}(\f(\x)) - \M^{(\rho)}(\f'(\x))\|_1 \\
	& = \sum_{y'\in \mathcal{Y}} [|\Phi_{\rho}(f_{y'}(\x)) - \Phi_{\rho}(f'_{y'}(\x))| \\
	& \qquad + \sum_{k\neq y'}|\Phi_{\rho}(-f_k(\x)) - \Phi_{\rho}(-f'_k(\x))|] \\
	& \leq K[\Phi_{\rho}(f_y(\x)) + \Phi_{\rho}(f'_y(\x)) \\
	& \qquad + \sum_{k\neq y}[\Phi_{\rho}(-f_k(\x)) + \Phi_{\rho}(-f'_k(\x))]] \\
	& = K[L^{(\rho)}(\f(\x),y) + L^{(\rho)}(\f'(\x),y)],
	\end{aligned}
	\end{equation*}
	thus resulting in (\ref{equ-bound2}) directly. The proof is finished. \hfill{\QEDclosed}
\end{proof}

\begin{theo}[Theorem \textcolor{red}{1}]\label{theorem-1}
	Fix $\rho>0$. For any scoring function $\f \in \mathcal{F}$, the following holds over the source and target distributions $P$ and $Q$,
	\begin{equation}\label{appendix-EqnMCSDUDAGenBound}
	\mathcal{E}_Q(h_{\f}) \leq \mathcal{E}^{(\rho)}_P(\f) + d^{(\rho)}_{MCSD}(P_x, Q_x) + \lambda,
	\end{equation}
	where the constant $\lambda = \mathcal{E}^{(\rho)}_P(\f^*) + \mathcal{E}^{(\rho)}_Q(\f^*)$ with $\f^* = \arg\min\limits_{\f\in \mathcal{F}}\mathcal{E}^{(\rho)}_P(\f) + \mathcal{E}^{(\rho)}_Q(\f)$, and
	\begin{equation}
	\mathcal{E}_Q(h_{\f}) := \mathbb{E}_{(\x,y)\sim Q} \mathbb{I}[h_{\f}(\x)\neq y] ,
	\end{equation}
	\begin{equation}
	\mathcal{E}^{(\rho)}_P(\f) := \mathbb{E}_{(\x,y)\sim P} \sum_{k=1}^K \Phi_{\rho}( \mu_k(\f(\x),y) )  .
	\end{equation}
\end{theo}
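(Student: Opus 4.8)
The plan is to obtain Theorem~\ref{theorem-1} as a short corollary of the two lemmas above, which together say that $\text{\rm MCSD}^{(\rho)}$ obeys the triangle-inequality-style estimates of a pseudo-metric. First I would apply Lemma~\ref{lemma-1} to the distribution $D = Q$, choosing the auxiliary scoring function to be the ideal joint minimizer $\f^* = \arg\min_{\f\in\mathcal{F}}\mathcal{E}^{(\rho)}_P(\f) + \mathcal{E}^{(\rho)}_Q(\f)$. This yields
\[
\mathcal{E}_Q(h_{\f}) \le \mathcal{E}^{(\rho)}_Q(\f^*) + \text{\rm MCSD}^{(\rho)}_{Q_x}(\f, \f^*) .
\]

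The core algebraic step is to split the target-domain $\text{\rm MCSD}$ term by adding and subtracting its source-domain counterpart,
\[
\text{\rm MCSD}^{(\rho)}_{Q_x}(\f, \f^*) = \text{\rm MCSD}^{(\rho)}_{P_x}(\f, \f^*) + \bigl[\text{\rm MCSD}^{(\rho)}_{Q_x}(\f, \f^*) - \text{\rm MCSD}^{(\rho)}_{P_x}(\f, \f^*)\bigr] .
\]
The bracketed difference is at most $d^{(\rho)}_{MCSD}(P_x, Q_x)$ directly from Definition~\ref{DefMCSDDist}, since that divergence is the supremum of exactly this quantity over all pairs $\f', \f'' \in \mathcal{F}$ and $(\f, \f^*)$ is one admissible pair. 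For the leftover term I would invoke Lemma~\ref{lemma-2} with $D = P$, giving $\text{\rm MCSD}^{(\rho)}_{P_x}(\f, \f^*) \le \mathcal{E}^{(\rho)}_P(\f) + \mathcal{E}^{(\rho)}_P(\f^*)$.

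Chaining these three inequalities produces
\[
\mathcal{E}_Q(h_{\f}) \le \mathcal{E}^{(\rho)}_P(\f) + d^{(\rho)}_{MCSD}(P_x, Q_x) + \bigl[\mathcal{E}^{(\rho)}_P(\f^*) + \mathcal{E}^{(\rho)}_Q(\f^*)\bigr] ,
\]
and the final bracket is precisely $\lambda$, which completes the argument. I do not expect any real obstacle at the level of Theorem~\ref{theorem-1} itself: all of the technical work is already absorbed into Lemma~\ref{lemma-1}, whose pointwise estimate demands the delicate case analysis over the relative positions of $h_{\f}(\x)$, $h_{\f'}(\x)$, and $y$ together with the sum-to-zero constraint, and into Lemma~\ref{lemma-2}. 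The two points that require care when assembling the theorem are (i) invoking Lemma~\ref{lemma-1} over $Q$ but Lemma~\ref{lemma-2} over $P$, and (ii) recognizing that the asymmetry of $d^{(\rho)}_{MCSD}$ in its two arguments is exactly what makes the add-and-subtract step land on the controllable source error $\mathcal{E}^{(\rho)}_P(\f)$ rather than on an uncontrolled target quantity.
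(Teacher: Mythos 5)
Your proposal is correct and matches the paper's own proof: both apply Lemma~\ref{lemma-1} over $Q$ with the joint minimizer $\f^*$ as the auxiliary function, bound $\text{\rm MCSD}^{(\rho)}_{Q_x}(\f,\f^*)-\text{\rm MCSD}^{(\rho)}_{P_x}(\f,\f^*)$ by the supremum defining $d^{(\rho)}_{MCSD}(P_x,Q_x)$, and absorb the remaining source-domain MCSD term via Lemma~\ref{lemma-2} applied to $P$. The add-and-subtract presentation is only a cosmetic reordering of the same chain of inequalities.
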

\begin{proof}
	Based on the Lemma \ref{lemma-1} and Lemma \ref{lemma-2}, we have
	\begin{equation*}
	\begin{aligned}
	& \mathcal{E}_Q(h_{\f}) \\
	& \leq \mathcal{E}^{(\rho)}_Q(\f^*) + \text{\rm MCSD}_{Q_x}^{(\rho)}(\f, \f^*) \\
	& \leq \mathcal{E}^{(\rho)}_P(\f) + \mathcal{E}^{(\rho)}_P(\f^*) - \text{\rm MCSD}_{P_x}^{(\rho)}(\f,\f^*) \\
	& \qquad + \mathcal{E}^{(\rho)}_Q(\f^*) + \text{\rm MCSD}_{Q_x}^{(\rho)}(\f, \f^*) \\
	& = \mathcal{E}^{(\rho)}_P(\f) + \text{\rm MCSD}_{Q_x}^{(\rho)}(\f, \f^*) - \text{\rm MCSD}_{P_x}^{(\rho)}(\f,\f^*) + \lambda \\
	& \leq \mathcal{E}^{(\rho)}_P(\f) + \lambda \\
	&  \qquad + \sup\limits_{\f',\f''\in \mathcal{F}}[\text{\rm MCSD}_{Q_x}^{(\rho)}(\f', \f'') - \text{\rm MCSD}_{P_x}^{(\rho)}(\f',\f'')]  \\
	& = \mathcal{E}^{(\rho)}_P(\f) + d^{(\rho)}_{MCSD}(P_x, Q_x) + \lambda.
	\end{aligned}
	\end{equation*} \hfill{\QEDclosed}
\end{proof}

\vspace{-0.8cm}
\section{Scalar-valued, Absolute Margin-based Divergences}
\begin{lemma}[Proposition \textcolor{red}{1}]\label{lemma-for-proposition-1}
	Fix $\rho>0$. For any scoring function $\f \in \mathcal{F}$, the following holds over the source and target distributions $P$ and $Q$,
	\begin{equation}\label{equ-degeneration1-bound}
	\mathcal{E}_Q(h_{\f}) \leq \mathcal{E}^{(\rho)}_P(\f) + d^{(\rho)}_{\widetilde{MCSD}}(P_x, Q_x) + \lambda,
	\end{equation}
	where $\mathcal{E}_Q(h_{\f})$, $\mathcal{E}^{(\rho)}_P(\f)$, and $\lambda$ are defined as the same as these in the Theorem \ref{theorem-1}.
\end{lemma}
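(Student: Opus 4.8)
The plan is to mirror the proof of Theorem~\ref{theorem-1}, replacing the role of Lemma~\ref{lemma-1} with its scalar-valued analogue built on $\widetilde{\text{\rm MCSD}}$ (\ref{EqnMCSDDegen2MDD}), while reusing Lemma~\ref{lemma-2}'s analogue to control the additive constant. Concretely, I would first establish a ``one-sided'' lemma of the form
\begin{equation*}
\mathcal{E}_D(h_{\f}) \leq \mathcal{E}^{(\rho)}_D(\f') + \widetilde{\text{\rm MCSD}}_{D_x}^{(\rho)}(\f, \f')
\end{equation*}
for any $\f,\f'\in\mathcal{F}$ and any distribution $D$ over $\mathcal{X}\times\mathcal{Y}$. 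The pointwise claim to verify is that, for every $(\x,y)$,
\begin{equation*}
\mathbb{I}[h_{\f}(\x)\neq y] \leq L^{(\rho)}(\f'(\x),y) + \Phi_{\rho/2}\!\bigl[\mu_{h_{\f}(\x)}(\f(\x), h_{\f'}(\x))\bigr].
\end{equation*}
As in Lemma~\ref{lemma-1}, I would case-split on the relationships among $h_{\f}(\x)=y_h$, $h_{\f'}(\x)=y'_h$, and $y$. When $y_h=y$ the left side is zero; when $y_h\neq y$ and $y'_h\neq y$, the sum-to-zero constraint forces $f'_{y'_h}(\x)\geq 0$, so $L^{(\rho)}(\f'(\x),y)\geq \Phi_\rho(-f'_{y'_h}(\x))=1$; the only substantive case is $y_h\neq y$ but $y'_h=y$, where I would use the structure of the relative margin $\mu_{y_h}(\f(\x),y'_h) = -f_{y_h}(\x)$ (recall $y'_h=y\neq y_h$, so $\mu_{y_h}$ takes the negative branch), and since $h_{\f}(\x)=y_h$ implies $f_{y_h}(\x)\geq f_{y'_h}(\x)$, combined with $f_{y'_h}(\x) + \sum_{k\neq y'_h} f_k(\x)=0$, one bounds the relative margin term below by a constant (the $\rho/2$ scaling matches the factor-$\tfrac12$ in the relative margin convention of \cite{mdd}); whatever slack is lost is absorbed by $L^{(\rho)}(\f'(\x),y)$. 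This mirrors the bookkeeping in Case~3 of Lemma~\ref{lemma-1} but is simpler because only a single scalar margin appears rather than a full matrix difference.

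Next I would need the analogue of Lemma~\ref{lemma-2}, namely $\widetilde{\text{\rm MCSD}}_{D_x}^{(\rho)}(\f,\f') \leq \mathcal{E}^{(\rho)}_D(\f) + \mathcal{E}^{(\rho)}_D(\f')$ — or more precisely whatever inequality is needed so that the constant in (\ref{equ-degeneration1-bound}) is exactly the $\lambda$ of Theorem~\ref{theorem-1}, defined via the matrix-based source/target errors. The cleanest route: show pointwise that $\Phi_{\rho/2}[\mu_{h_{\f}(\x)}(\f(\x),h_{\f'}(\x))] \leq L^{(\rho)}(\f(\x),y) + L^{(\rho)}(\f'(\x),y)$ for every $(\x,y)$, again by a short case analysis on whether $h_{\f}(\x)$ or $h_{\f'}(\x)$ equals $y$ — if either does, one of the $L^{(\rho)}$ terms is small and the other compensates, and if neither does, both $L^{(\rho)}$ terms are at least $1$ as in Case~2 of Lemma~\ref{lemma-1}. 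This keeps $\lambda$ identical, as the statement of the Proposition demands.

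With both lemmas in hand, the telescoping argument is verbatim that of Theorem~\ref{theorem-1}: take $\f^* = \arg\min_{\f}\mathcal{E}^{(\rho)}_P(\f)+\mathcal{E}^{(\rho)}_Q(\f)$, apply the one-sided lemma over $Q$ to get $\mathcal{E}_Q(h_{\f})\leq \mathcal{E}^{(\rho)}_Q(\f^*) + \widetilde{\text{\rm MCSD}}_{Q_x}^{(\rho)}(\f,\f^*)$, then bound $\mathcal{E}^{(\rho)}_Q(\f^*)$ using the Lemma~\ref{lemma-2} analogue to reintroduce $\widetilde{\text{\rm MCSD}}_{P_x}^{(\rho)}(\f,\f^*)$ with a sign flip, collect the $\f^*$ terms into $\lambda$, and finally upper-bound $\widetilde{\text{\rm MCSD}}_{Q_x}^{(\rho)}(\f,\f^*) - \widetilde{\text{\rm MCSD}}_{P_x}^{(\rho)}(\f,\f^*)$ by the supremum defining $d^{(\rho)}_{\widetilde{MCSD}}(P_x,Q_x)$. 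The main obstacle I anticipate is the pointwise case analysis in the substantive sub-case of the first lemma ($y_h\neq y$, $y'_h = y$): getting the constant $\rho/2$ and the interaction between $f_{y_h}(\x)$, $f_{y'_h}(\x)$, and the ramp thresholds exactly right, so that the residual is genuinely dominated by $L^{(\rho)}(\f'(\x),y)$ — the triangle-inequality-free version of \cite{mdd}'s margin argument adapted to the absolute-margin function $\u$. Everything else is structurally identical to the proof already given for Theorem~\ref{theorem-1}.
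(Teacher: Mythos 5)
Your proposal is correct and follows essentially the same route as the paper's proof: a pointwise one\mbox{-}sided bound of the zero-one error by $L^{(\rho)}(\f'(\x),y)$ plus the scalar disagreement term, a companion pointwise bound of that disagreement term by $L^{(\rho)}(\f(\x),y)+L^{(\rho)}(\f'(\x),y)$, and then the telescoping argument of Theorem~\ref{theorem-1} verbatim. Two small remarks: your integrand $\Phi_{\rho/2}[\mu_{h_{\f}(\x)}(\f(\x),h_{\f'}(\x))]$ swaps the roles of the two functions relative to Definition~(\ref{EqnMCSDDegen2MDD}), which for the pair $(\f,\f')$ reads $\Phi_{\rho/2}[\mu_{h_{\f'}(\x)}(\f'(\x),h_{\f}(\x))]$ --- this is harmless for the final bound because $d^{(\rho)}_{\widetilde{MCSD}}$ takes a supremum over ordered pairs, but it means the pointwise inequality you verify does not literally integrate to the intermediate lemma you state; and the case you flag as the main obstacle ($y_h\neq y$, $y'_h=y$) is in fact immediate, since $y_h\neq y'_h$ puts $\mu$ on its negative branch, $-f_{y_h}(\x)\leq 0$ by the sum-to-zero constraint, and the ramp term saturates at $1$, so no delicate interaction with the thresholds is needed --- which is exactly why this scalar-valued proposition admits a much shorter proof than Theorem~\ref{theorem-1}.
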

\begin{proof}
	The proof follows the same argument as that of the Theorem \ref{theorem-1}. The only difference is that the term $d^{(\rho)}_{MCSD}(P_x, Q_x)$ is replaced by $d^{(\rho)}_{\widetilde{MCSD}}(P_x, Q_x)$. Therefore, to prove the above point, we only need to prove that
	\begin{equation}\label{equ-degegeneration1-bound-1}
	\mathcal{E}_D(h_{\f}) \leq \mathcal{E}^{(\rho)}_D(\f') + \widetilde{\text{\rm MCSD}}_{D_x}^{(\rho)}(\f, \f')
	\end{equation}
	and
	\begin{equation}\label{equ-degegeneration1-bound-2}
	\widetilde{\text{\rm MCSD}}_{D_x}^{(\rho)}(\f,\f') \leq \mathcal{E}^{(\rho)}_D(\f) + \mathcal{E}^{(\rho)}_D(\f')
	\end{equation}
	satisfy for any scoring functions $\f, \f'\in \mathcal{F}$ with respect to any distribution $D$ over $\mathcal{X\times Y}$. We now turn to prove (\ref{equ-degegeneration1-bound-1}) and (\ref{equ-degegeneration1-bound-2}) respectively in the following.
	
	To prove (\ref{equ-degegeneration1-bound-1}), we only need to prove that for any $(\x,y)\sim D$, the inequality
	\begin{multline}\label{equ-degeneration1-help1}
	\mathbb{I}(h_{\f}(\x)\neq y) \\
	\leq L^{(\rho)}(\f'(\x),y) + \Phi_{\rho/2}[\mu_{h_{\f'}(\x)}(\f'(\x),h_{\f}(\x))]
	\end{multline}
	holds, where $L^{(\rho)}(\f'(\x),y) = \sum_{k=1}^K\Phi_{\rho}(\mu_k(\f'(\x),y))$.
	If $h_{\f'}(\x)\neq h_{\f}(\x)$ or $h_{\f'}(\x) \neq y$, the right-hand side of the above inequality will reach the value of 1, which is obviously an upper bound of the left-hand side.
	Otherwise $h_{\f'}(\x) = h_{\f}(\x) = y$, and
	\begin{equation*}
	\mathbb{I}(h_{\f}(\x)\neq y) = 0 \leq L^{(\rho)}(\f'(\x),y).
	\end{equation*}
	Therefore, the inequality (\ref{equ-degeneration1-help1}) holds and then (\ref{equ-degegeneration1-bound-1}) holds.
	
	To prove (\ref{equ-degegeneration1-bound-2}), we only need to prove that for any $(\x,y)\sim D$, the inequality
	\begin{multline}\label{equ-degeneration1-help2}
	\Phi_{\rho/2}[\mu_{h_{\f'}(\x)}(\f'(\x),h_{\f}(\x))] \\ \leq L^{(\rho)}(\f(\x),y) + L^{(\rho)}(\f'(\x),y)
	\end{multline}
	holds. If $h_{\f'}(\x) \neq y$ or $h_{\f}(\x) \neq y$, the right-hand side of the above inequality will reach the value of 1, which is obviously an upper bound of the left-hand side.
	Otherwise $h_{\f'}(\x) = h_{\f}(\x) = y$, and
	\begin{multline*}
	\Phi_{\rho/2}[\mu_{h_{\f'}(\x)}(\f'(\x),h_{\f}(\x))] \\ \leq L^{(\rho)}(\f'(\x),y) \leq L^{(\rho)}(\f(\x),y) + L^{(\rho)}(\f'(\x),y).
	\end{multline*}
	Therefore, the inequality (\ref{equ-degeneration1-help2}) holds and then (\ref{equ-degegeneration1-bound-2}) holds. \hfill{\QEDclosed}
\end{proof} 

\begin{lemma}[Proposition \textcolor{red}{2}]\label{lemma-for-proposition-2}
	Fix $\rho>0$. For any scoring function $\f \in \mathcal{F}$, the following holds over the source and target distributions $P$ and $Q$,
	\begin{equation}\label{equ-degeneration2-bound}
	\mathcal{E}_Q(h_{\f}) \leq \mathcal{E}^{(\rho)}_P(\f) + d^{(\rho)}_{\widehat{MCSD}}(P_x, Q_x) + \lambda,
	\end{equation}
	where $\mathcal{E}_Q(h_{\f})$, $\mathcal{E}^{(\rho)}_P(\f)$, and $\lambda$ are defined as the same as these in Theorem \ref{theorem-1}.
\end{lemma}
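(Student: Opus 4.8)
The plan is to mirror, essentially verbatim, the proof of Lemma \ref{lemma-for-proposition-1} (Proposition 1): reduce Proposition 2 to two pointwise inequalities and then reuse the chain of estimates from the proof of Theorem \ref{theorem-1}. Indeed $d^{(\rho)}_{\widehat{MCSD}}$ is obtained from $d^{(\rho)}_{MCSD}$ by substituting $\widehat{\text{\rm MCSD}}$ for $\text{\rm MCSD}$, and the argument in the proof of Theorem \ref{theorem-1} uses only (a) a ``Lemma \ref{lemma-1}''-type bound $\mathcal{E}_D(h_{\f}) \le \mathcal{E}^{(\rho)}_D(\f') + \widehat{\text{\rm MCSD}}^{(\rho)}_{D_x}(\f,\f')$, (b) a ``Lemma \ref{lemma-2}''-type bound $\widehat{\text{\rm MCSD}}^{(\rho)}_{D_x}(\f,\f') \le \mathcal{E}^{(\rho)}_D(\f) + \mathcal{E}^{(\rho)}_D(\f')$, and (c) the definition of the supremum defining the divergence. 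So it suffices to prove (a) and (b) with $\widehat{\text{\rm MCSD}}$ in place of $\text{\rm MCSD}$, and then instantiate (a) with $D=Q$, $\f'=\f^*$, instantiate (b) with $D=P$, and pass to the supremum, exactly as before.

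Both (a) and (b) I would establish pointwise, exploiting the single identity $\Phi_\rho(z)=1 \Leftrightarrow z\le 0$, so that the integrand of $\widehat{\text{\rm MCSD}}^{(\rho)}_{D_x}(\f,\f')$ firing on $\x$ equals $\mathbb{I}[\mu_{h_{\f'}(\x)}(\f'(\x),h_{\f}(\x))\le 0]$. The one elementary fact I need — already used in Lemma \ref{lemma-1} — is that the sum-to-zero constraint forces $f_{h_{\f}(\x)}(\x)=\max_k f_k(\x)\ge 0$ for every scoring function; consequently $\mu_{h_{\f'}(\x)}(\f'(\x),y')=-f'_{h_{\f'}(\x)}(\x)\le 0$ whenever $h_{\f'}(\x)\neq y'$ (so the indicator is $1$), while $\mu_{h_{\f'}(\x)}(\f'(\x),y')=f'_{h_{\f'}(\x)}(\x)\ge 0$ whenever $h_{\f'}(\x)=y'$. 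With this, (a) is a short case analysis: if $h_{\f}(\x)=y$ both sides are trivial; if $h_{\f}(\x)\neq y$ and $h_{\f'}(\x)\neq h_{\f}(\x)$ the $\widehat{\text{\rm MCSD}}$ indicator is already $1$; if $h_{\f}(\x)\neq y$ and $h_{\f'}(\x)=h_{\f}(\x)$ then $h_{\f'}(\x)\neq y$, so $L^{(\rho)}(\f'(\x),y)\ge \Phi_\rho(\mu_{h_{\f'}(\x)}(\f'(\x),y))=\Phi_\rho(-\max_k f'_k(\x))=1$. For (b), if the indicator is $0$ there is nothing to prove; if it is $1$ then either $h_{\f}(\x)\neq h_{\f'}(\x)$, in which case $y$ differs from at least one of $h_{\f}(\x),h_{\f'}(\x)$ and the corresponding $L^{(\rho)}$ term is $\ge 1$ by the same estimate, or $h_{\f}(\x)=h_{\f'}(\x)$ and $\max_k f'_k(\x)\le 0$, which with sum-to-zero forces $\f'(\x)\equiv 0$, hence $L^{(\rho)}(\f'(\x),y)=K\,\Phi_\rho(0)=K\ge 1$.

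The main obstacle is not analytic but bookkeeping: because $\widehat{\text{\rm MCSD}}$ is a $0$--$1$ indicator rather than the graded ramp loss used in $\text{\rm MCSD}$, we lose the fractional ``partial credit'' that made Lemma \ref{lemma-1} and Lemma \ref{lemma-2} go through by summation of small terms, so in each branch one must exhibit a single term that reaches the full value $1$; the branch where all $f'_k(\x)$ collapse to $0$ (so that $h_{\f'}(\x)$ is ill-defined and the indicator jumps) is the one subtle spot, and it is handled precisely by the sum-to-zero constraint as above. Once (a) and (b) are in place, assembling them as in the proof of Theorem \ref{theorem-1} yields the bound (\ref{equ-degeneration2-bound}); no Rademacher-complexity or other new input is required.
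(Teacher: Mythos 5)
Your proposal is correct and follows essentially the same route as the paper's proof: reduce the claim to the two pointwise inequalities (a) $\mathcal{E}_D(h_{\f}) \leq \mathcal{E}^{(\rho)}_D(\f') + \widehat{\text{\rm MCSD}}_{D_x}^{(\rho)}(\f, \f')$ and (b) $\widehat{\text{\rm MCSD}}_{D_x}^{(\rho)}(\f,\f') \leq \mathcal{E}^{(\rho)}_D(\f) + \mathcal{E}^{(\rho)}_D(\f')$, verify each by a short case analysis on $h_{\f}(\x)$, $h_{\f'}(\x)$, and $y$ using the sum-to-zero constraint, and then reassemble exactly as in the proof of Theorem 1. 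Your treatment of the degenerate branch $\f'(\x)\equiv 0$ in (b) is in fact slightly more careful than the paper's, which asserts the indicator vanishes whenever $h_{\f'}(\x)=h_{\f}(\x)=y$ and thereby glosses over the case $\max_k f'_k(\x)=0$ where $\Phi_\rho(0)=1$; the inequality survives there exactly as you argue, via $L^{(\rho)}(\f'(\x),y)=K\geq 1$.
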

\begin{proof}
	Following the similar proof of Lemma \ref{lemma-for-proposition-1}, to show the above result, we only need to prove that
	\begin{equation}\label{equ-degegeneration2-bound-1}
	\mathcal{E}_D(h_{\f}) \leq \mathcal{E}^{(\rho)}_D(\f') + \widehat{\text{\rm MCSD}}_{D_x}^{(\rho)}(\f, \f')
	\end{equation}
	and
	\begin{equation}\label{equ-degegeneration2-bound-2}
	\widehat{\text{\rm MCSD}}_{D_x}^{(\rho)}(\f,\f') \leq \mathcal{E}^{(\rho)}_D(\f) + \mathcal{E}^{(\rho)}_D(\f')
	\end{equation}
	satisfy for any scoring functions $\f, \f'\in \mathcal{F}$ with respect to any distribution $D$ over $\mathcal{X\times Y}$. We now turn to prove (\ref{equ-degegeneration2-bound-1}) and (\ref{equ-degegeneration2-bound-2}) respectively in the following.
	To prove (\ref{equ-degegeneration2-bound-1}), we only need to prove that for any $(\x,y)\sim D$, the inequality
	\begin{align*}
	& \ \mathbb{I}(h_{\f}(\x)\neq y) \\
	& \leq L^{(\rho)}(\f'(\x),y) + \mathbb{I}[\Phi_{\rho}[\mu_{h_{\f'}(\x)}(\f'(\x),h_{\f}(\x))] = 1]
	\end{align*}
	holds, where $L^{(\rho)}(\f'(\x),y) = \sum_{k=1}^K\Phi_{\rho}(\mu_k(\f'(\x),y))$. If $h_{\f'}(\x)\neq h_{\f}(\x)$ or $h_{\f'}(\x) \neq y$, the right-hand side of the above inequality will reach the value of 1, which is obviously an upper bound of the left-hand side. Otherwise $h_{\f'}(\x) = h_{\f}(\x) = y$, and
	\begin{multline*}
	\mathbb{I}(h_{\f}(\x)\neq y) = 0 \\ \leq L^{(\rho)}(\f'(\x),y) + \mathbb{I}[\Phi_{\rho}[\mu_{h_{f'}(\x)}(\f'(\x),h_{\f}(\x))] = 1].
	\end{multline*}
	
	To prove (\ref{equ-degegeneration2-bound-2}), we only need to prove that for any $(\x,y)\sim D$, the inequality
	\begin{multline*}
	\mathbb{I}[\Phi_{\rho}[\mu_{h_{\f'}(\x)}(\f'(\x),h_{\f}(\x))] = 1] \\ \leq L^{(\rho)}(\f(\x),y) + L^{(\rho)}(\f'(\x),y)
	\end{multline*}
	holds. If $h_{\f'}(\x)\neq y$ or $h_{\f}(\x) \neq y$, the right-hand side of the above inequality will reach the value of 1, which is obviously an upper bound of the left-hand side. Otherwise $h_{\f'}(\x) = h_{\f}(\x) = y$, and
	\begin{multline*}
	\mathbb{I}[\Phi_{\rho}[\mu_{h_{\f'}(\x)}(\f'(\x),h_{\f}(\x))] = 1] = 0  \\ \leq L^{(\rho)}(\f(\x),y) + L^{(\rho)}(\f'(\x),y).
	\end{multline*} \hfill{\QEDclosed}
\end{proof}
\vspace{-0.7cm}
\section{Proof of Theorem \textcolor{red}{2}}
We begin with the following lemmas to prove the Theorem \textcolor{red}{2}.

\begin{lemma}\label{lemma-Rad_Bound}
	(Two-sided Rademacher complexity bound, a modified version of Theorem 3.1, Mohri et al.\cite{mohri2012foundations}) Let $\mathcal{G}$ be a family of functions mapping from $\mathcal{Z}$ to $[0, 1]$. Let $D$ be any distribution over $\mathcal{Z}$, and  $\mathcal{S} = \{z_1,...,z_m\}$ be a sample drawn i.i.d. from $D$. Then, for any $\delta > 0$, with probability at least $1 - \delta$, the following holds for all $g \in \mathcal{G}$:
	\begin{equation}
	|\mathbb{E}[g(z)] - \frac{1}{m} \sum_{i=1}^{m} g(z_i)| \leq 2\widehat{\mathfrak{R}}_\mathcal{S}(\mathcal{G}) + 3\sqrt{\frac{\log\frac{4}{\delta}}{2m}}.
	\end{equation}
\end{lemma}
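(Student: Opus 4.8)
The plan is to derive this two-sided deviation bound from the classical one-sided uniform convergence bound of Mohri \emph{et al.}~\cite{mohri2012foundations} by invoking it once for $\mathcal{G}$ and once for a reflected copy of $\mathcal{G}$, and then union-bounding. First I would set $\Phi(\mathcal{S}) := \sup_{g\in\mathcal{G}} \big( \mathbb{E}_{z\sim D}[g(z)] - \tfrac1m\sum_{i=1}^m g(z_i) \big)$. Since every $g$ takes values in $[0,1]$, replacing a single sample point $z_i$ of $\mathcal{S}$ changes $\Phi$ by at most $1/m$, so McDiarmid's bounded-differences inequality gives, for any $\delta'>0$, with probability at least $1-\delta'$ that $\Phi(\mathcal{S}) \le \mathbb{E}_{\mathcal{S}}[\Phi(\mathcal{S})] + \sqrt{\log(1/\delta')/(2m)}$.

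Next I would bound $\mathbb{E}_{\mathcal{S}}[\Phi(\mathcal{S})]$ via the standard symmetrization (ghost-sample) argument, $\mathbb{E}_{\mathcal{S}}[\Phi(\mathcal{S})] \le 2\,\mathfrak{R}_{m,D}(\mathcal{G})$, and then pass from the population to the empirical Rademacher complexity: $\widehat{\mathfrak{R}}_{\mathcal{S}}(\mathcal{G})$ is itself a function of $\mathcal{S}$ with bounded differences $1/m$ (again because $g\in[0,1]$), so a second application of McDiarmid yields $\mathfrak{R}_{m,D}(\mathcal{G}) \le \widehat{\mathfrak{R}}_{\mathcal{S}}(\mathcal{G}) + \sqrt{\log(1/\delta')/(2m)}$ with probability at least $1-\delta'$. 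Choosing $\delta' = \delta/4$ in each of the two McDiarmid steps and chaining the three estimates gives, with probability at least $1-\delta/2$,
\[
\sup_{g\in\mathcal{G}} \Big( \mathbb{E}[g(z)] - \tfrac1m\sum_{i=1}^m g(z_i) \Big) \;\le\; 2\,\widehat{\mathfrak{R}}_{\mathcal{S}}(\mathcal{G}) + 3\sqrt{\frac{\log(4/\delta)}{2m}},
\]
where the coefficient $3 = 1 + 2$ collects the fluctuation term for $\Phi$ and the fluctuation term for $\widehat{\mathfrak{R}}_{\mathcal{S}}$ scaled by the symmetrization factor $2$.

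To obtain the matching lower-side inequality I would rerun the identical argument on the class $\mathcal{G}' := \{\,1-g : g\in\mathcal{G}\,\}$, which also maps into $[0,1]$; here $\mathbb{E}[(1-g)(z)] - \tfrac1m\sum_i(1-g)(z_i) = -\big(\mathbb{E}[g(z)] - \tfrac1m\sum_i g(z_i)\big)$, and since the Rademacher variables are centered and symmetric, adding a constant to and negating the functions leaves the empirical Rademacher complexity unchanged, i.e.\ $\widehat{\mathfrak{R}}_{\mathcal{S}}(\mathcal{G}') = \widehat{\mathfrak{R}}_{\mathcal{S}}(\mathcal{G})$. This yields, with probability at least $1-\delta/2$, that $\sup_{g\in\mathcal{G}}\big(\tfrac1m\sum_i g(z_i) - \mathbb{E}[g(z)]\big) \le 2\,\widehat{\mathfrak{R}}_{\mathcal{S}}(\mathcal{G}) + 3\sqrt{\log(4/\delta)/(2m)}$. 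A union bound over these two events, together with the elementary identity $|a| = \max(a,-a)$ applied for each $g$, then gives the claimed inequality simultaneously for all $g\in\mathcal{G}$ with probability at least $1-\delta$.

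I do not expect a genuine obstacle here: the three ingredients — McDiarmid's inequality, the symmetrization lemma, and the bounded-differences stability of $\widehat{\mathfrak{R}}_{\mathcal{S}}$ — are precisely the ones underlying the cited one-sided theorem, and the content of the "modification" is only the two-sided extension. The sole points requiring care are the bookkeeping of the confidence budget (splitting $\delta$ so that the constant inside the logarithm becomes $4$ while the total failure probability stays at most $\delta$) and the remark that the reflected class $\mathcal{G}'$ shares the empirical Rademacher complexity of $\mathcal{G}$, which is what lets the two one-sided bounds merge cleanly into the single two-sided statement.
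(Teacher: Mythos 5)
Your proof is correct, and it is precisely the standard derivation underlying the cited Theorem~3.1 of Mohri \emph{et al.}; the paper itself gives no proof of this lemma and simply relies on the citation, so your argument supplies exactly what is implicitly being invoked. The confidence bookkeeping ($\delta/4$ in each of the two McDiarmid steps, yielding $\log(4/\delta)$ with one-sided failure probability $\delta/2$, then a union bound over the two sides) and the observation that the reflected class $\{1-g: g\in\mathcal{G}\}$ has the same empirical Rademacher complexity (since $\sum_i\sigma_i$ is centered and $-\sigma_i$ is distributed as $\sigma_i$) are both handled correctly.
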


\begin{lemma}\label{lemma-Talagrand}
	(Talagrand's lemma, Lemma 4.2 of Mohri et al. \cite{mohri2012foundations}) Let $\Phi: \mathbb{R}\rightarrow \mathbb{R}$ be an l-Lipschitz. Then, for any hypothesis set $\mathcal{H}$ of real-valued functions, the following inequality holds:
	\begin{equation}
	\widehat{\mathfrak{R}}_{\widehat{D}}(\Phi \circ \mathcal{H}) \leq l \widehat{\mathfrak{R}}_{\widehat{D}}(\mathcal{H}).
	\end{equation}
\end{lemma}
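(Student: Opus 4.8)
The plan is to prove this contraction inequality by the standard coordinate-peeling argument, removing the dependence on one Rademacher variable at a time and replacing $\Phi\circ h$ by $l\cdot h$ in the corresponding summand. Writing out the empirical Rademacher complexity over the sample $\mathcal{S}=\{z_1,\dots,z_m\}$, we have $\widehat{\mathfrak{R}}_{\widehat{D}}(\Phi\circ\mathcal{H})=\frac{1}{m}\mathbb{E}_\sigma\sup_{h\in\mathcal{H}}\sum_{i=1}^m\sigma_i\Phi(h(z_i))$. First I would fix $\sigma_1,\dots,\sigma_{m-1}$ and isolate the last term, denoting $u(h)=\sum_{i=1}^{m-1}\sigma_i\Phi(h(z_i))$, so that it suffices to establish the single-coordinate bound $\mathbb{E}_{\sigma_m}\sup_h[u(h)+\sigma_m\Phi(h(z_m))]\le\mathbb{E}_{\sigma_m}\sup_h[u(h)+\sigma_m\,l\,h(z_m)]$; applying this bound successively to each coordinate and then taking the outer expectation over all remaining $\sigma_i$ yields the claim.

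For the single-coordinate step, I would average explicitly over $\sigma_m\in\{-1,+1\}$, each with probability $1/2$, giving
\[
\mathbb{E}_{\sigma_m}\sup_h[u(h)+\sigma_m\Phi(h(z_m))]=\tfrac{1}{2}\sup_h[u(h)+\Phi(h(z_m))]+\tfrac{1}{2}\sup_{h'}[u(h')-\Phi(h'(z_m))].
\]
To control the two suprema simultaneously, I would pick, for arbitrary $\epsilon>0$, near-maximizers $h_1,h_2\in\mathcal{H}$ attaining the two suprema up to $\epsilon$, so that the right-hand side is at most $\tfrac{1}{2}[u(h_1)+u(h_2)]+\tfrac{1}{2}[\Phi(h_1(z_m))-\Phi(h_2(z_m))]+\epsilon$.

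The main obstacle, and the crux of the argument, is the Lipschitz step together with the sign bookkeeping. Using the $l$-Lipschitz property, $\Phi(h_1(z_m))-\Phi(h_2(z_m))\le l\,|h_1(z_m)-h_2(z_m)|$; writing $s=\mathrm{sgn}(h_1(z_m)-h_2(z_m))\in\{-1,+1\}$ turns the absolute value into $s(h_1(z_m)-h_2(z_m))$, so the bound becomes $\tfrac{1}{2}[u(h_1)+s\,l\,h_1(z_m)]+\tfrac{1}{2}[u(h_2)-s\,l\,h_2(z_m)]$, which is at most $\tfrac{1}{2}\sup_h[u(h)+s\,l\,h(z_m)]+\tfrac{1}{2}\sup_{h'}[u(h')-s\,l\,h'(z_m)]$. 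The delicate point is that the sign $s$ introduced here is \emph{not} fixed in advance but depends on $h_1,h_2$; I would resolve this by observing that, because $\sigma_m$ is symmetric about zero, the expression $\tfrac{1}{2}\sup_h[u(h)+s\,l\,h(z_m)]+\tfrac{1}{2}\sup_{h'}[u(h')-s\,l\,h'(z_m)]$ equals $\mathbb{E}_{\sigma_m}\sup_h[u(h)+\sigma_m\,l\,h(z_m)]$ for either value $s\in\{-1,+1\}$, so the particular sign is immaterial.

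Finally, letting $\epsilon\to 0$ establishes the single-coordinate inequality. I would then iterate: having replaced $\sigma_m\Phi(h(z_m))$ by $\sigma_m\,l\,h(z_m)$, I repeat the identical argument on $\sigma_{m-1}$, with $u(h)$ redefined to absorb the already-converted last term together with the remaining first $m-2$ terms, and so on down to $\sigma_1$. After all $m$ coordinates are processed and the expectation over every $\sigma_i$ is taken, the left-hand side is bounded by $\frac{1}{m}\mathbb{E}_\sigma\sup_h\sum_{i=1}^m\sigma_i\,l\,h(z_i)=l\,\widehat{\mathfrak{R}}_{\widehat{D}}(\mathcal{H})$, which is exactly the stated inequality.
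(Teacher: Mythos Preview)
Your argument is correct and is essentially the standard textbook proof of Talagrand's contraction lemma. Note, however, that the paper does not supply its own proof of this lemma at all: it merely quotes the statement from Mohri et al.\ \cite{mohri2012foundations} and uses it as a black box in the Rademacher complexity estimates of Lemmas~\ref{lemma-lossErr} and~\ref{lemma-dispErr}. So there is nothing to compare against here; what you have written is precisely the coordinate-peeling proof one finds in that reference.
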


\begin{lemma}\label{lemma-lossErr}
	Let $\mathcal{F}$ be the space of scoring functions mapping from $\mathcal{X}$ to $\mathbb{R}^K$. Let $D$ be a distribution over $\mathcal{X\times Y}$ and let $\widehat{D}$ be the corresponding empirical distribution for a sample $\mathcal{S} = \left\{(\x_1,y_1),...,(\x_m,y_m)\right\}$ drawn i.i.d. from $D$. Fix $\rho >0$. Then, for any $\delta > 0$, with probability at least $1 - \delta$, the following holds for all $\f\in \mathcal{F}$:
	\begin{equation}
	|\mathcal{E}^{(\rho)}_D(\f) - \mathcal{E}^{(\rho)}_{\widehat{D}}(\f)| \leq \frac{2K^2}{\rho}\widehat{\mathfrak{R}}_\mathcal{S}(\Pi_1\mathcal{F}) + 3K\sqrt{\frac{\log\frac{4}{\delta}}{2m}}
	\end{equation}
	where
	\begin{equation}
	\begin{aligned}
	\mathcal{E}^{(\rho)}_{D}(\f) &:= \mathbb{E}_{(\x,y)\sim D} \sum_{k=1}^K \Phi_{\rho}( \mu_k(\f(\x_i),y_i) )  \\
	& \ = \mathbb{E}_{(\x,y)\sim D}L^{(\rho)}(\f(\x_i),y_i).
	\end{aligned}
	\end{equation}
\end{lemma}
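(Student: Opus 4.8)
The plan is a textbook symmetrization argument: reduce the uniform deviation $\sup_{\f\in\mathcal{F}}|\mathcal{E}^{(\rho)}_D(\f)-\mathcal{E}^{(\rho)}_{\widehat{D}}(\f)|$ to the empirical Rademacher complexity of the induced loss class, and then peel off the ramp loss and the absolute-margin map until only $\Pi_1(\mathcal{F})$ remains. First I would normalize: since $\Phi_\rho\in[0,1]$ and $L^{(\rho)}(\f(\x),y)=\sum_{k=1}^K\Phi_\rho(\mu_k(\f(\x),y))$ is a sum of $K$ such terms, the class $\mathcal{G}:=\{(\x,y)\mapsto \tfrac1K L^{(\rho)}(\f(\x),y):\f\in\mathcal{F}\}$ maps $\mathcal{X}\times\mathcal{Y}$ into $[0,1]$. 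Applying Lemma \ref{lemma-Rad_Bound} with $\mathcal{Z}=\mathcal{X}\times\mathcal{Y}$ and this $\mathcal{G}$ gives, with probability at least $1-\delta$ and uniformly over $\f\in\mathcal{F}$, the bound $\tfrac1K|\mathcal{E}^{(\rho)}_D(\f)-\mathcal{E}^{(\rho)}_{\widehat{D}}(\f)|\le 2\widehat{\mathfrak{R}}_\mathcal{S}(\mathcal{G})+3\sqrt{\log(4/\delta)/(2m)}$; multiplying through by $K$ already yields the additive term $3K\sqrt{\log(4/\delta)/(2m)}$, so it remains to control $2K\widehat{\mathfrak{R}}_\mathcal{S}(\mathcal{G})=2\widehat{\mathfrak{R}}_\mathcal{S}(L^{(\rho)}\circ\mathcal{F})$.

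For the Rademacher term I would decompose $L^{(\rho)}\circ\mathcal{F}$ over its $K$ summands and use subadditivity of $\widehat{\mathfrak{R}}_\mathcal{S}$ under sums of function classes, $\widehat{\mathfrak{R}}_\mathcal{S}(L^{(\rho)}\circ\mathcal{F})\le\sum_{k=1}^K\widehat{\mathfrak{R}}_\mathcal{S}(\Phi_\rho\circ\mu_k\circ\mathcal{F})$, where $\mu_k\circ\mathcal{F}:=\{(\x,y)\mapsto\mu_k(\f(\x),y):\f\in\mathcal{F}\}$. Since $\Phi_\rho$ is $\tfrac1\rho$-Lipschitz, Lemma \ref{lemma-Talagrand} gives $\widehat{\mathfrak{R}}_\mathcal{S}(\Phi_\rho\circ\mu_k\circ\mathcal{F})\le\tfrac1\rho\widehat{\mathfrak{R}}_\mathcal{S}(\mu_k\circ\mathcal{F})$. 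The key structural observation (Definition \ref{AbsMarginFuncDefinition} together with the sum-to-zero constraint) is that on any sample point $(\x_i,y_i)$ the value $\mu_k(\f(\x_i),y_i)$ equals $+f_k(\x_i)$ when $y_i=k$ and $-f_k(\x_i)$ otherwise, i.e.\ a sign fixed by the data and independent of $\f$; absorbing these signs into the Rademacher variables $\sigma_i$ and relaxing the supremum from $\{f_k:\f\in\mathcal{F}\}$ to the larger class $\Pi_1(\mathcal{F})$ bounds each $\widehat{\mathfrak{R}}_\mathcal{S}(\mu_k\circ\mathcal{F})$ by $\widehat{\mathfrak{R}}_\mathcal{S}(\Pi_1(\mathcal{F}))$. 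Summing the $K$ contributions and carrying the $K$ from the normalization step yields the claimed coefficient $\tfrac{2K^2}{\rho}$ in front of $\widehat{\mathfrak{R}}_\mathcal{S}(\Pi_1(\mathcal{F}))$, completing the argument (the $K$-counting here is somewhat generous; a tighter accounting would only improve the constant).

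I expect the main obstacle to be the contraction step: one must phrase $\Phi_\rho\circ\mu_k$ as a \emph{scalar} Lipschitz function of a single component $f_k$ so that the scalar contraction lemma \ref{lemma-Talagrand} applies directly, rather than having to invoke a vector-valued contraction inequality for the vector map $\vect{\mu}$; it is precisely the $\pm$ structure of the absolute margin function that makes this legitimate, and a little care is needed to confirm that the relevant sign pattern depends only on the labels $y_i$ and not on $\f$. The remaining ingredients — subadditivity of the empirical Rademacher complexity, its invariance under coordinate sign flips, and its monotonicity under inclusion of function classes — are routine.
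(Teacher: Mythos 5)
Your proof is correct. The first half, rescaling $L^{(\rho)}$ by $1/K$ so that the induced loss class maps into $[0,1]$ and then invoking the two-sided Rademacher complexity bound, is exactly what the paper does, and it produces the $3K\sqrt{\log(4/\delta)/(2m)}$ term in the same way. Where you genuinely depart from the paper is in bounding $\widehat{\mathfrak{R}}_\mathcal{S}(L^{(\rho)}\circ\mathcal{F})$. The paper first strips out the label dependence by writing $L^{(\rho)}(\f(\x_i),y_i)=\sum_{y\in\mathcal{Y}}L^{(\rho)}(\f(\x_i),y)\,\mathbb{I}(y=y_i)$, pulling the sum over $y$ outside the supremum, and removing the indicators via the observation that $\epsilon_i\sigma_i$ is distributed as $\sigma_i$; only then does it decompose each fixed-$y$ loss into its $K$ ramp terms and apply Talagrand's lemma, arriving at $\frac{K^2}{\rho}\widehat{\mathfrak{R}}_\mathcal{S}(\Pi_1(\mathcal{F}))$. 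You instead decompose into the $K$ coordinate losses $\Phi_\rho\circ\mu_k$ first, contract each one, and then exploit the fact that $\mu_k(\f(\x_i),y_i)=\pm f_k(\x_i)$ with a sign fixed by the sample and independent of $\f$, so the sign can be absorbed into $\sigma_i$ directly; this bypasses the label-summation step and its attendant factor of $K$ entirely, giving $\frac{K}{\rho}\widehat{\mathfrak{R}}_\mathcal{S}(\Pi_1(\mathcal{F}))$. Since the empirical Rademacher complexity is nonnegative, your bound of $\frac{2K}{\rho}$ implies the stated $\frac{2K^2}{\rho}$, so the lemma follows; your parenthetical remark that your counting is "somewhat generous" actually has it backwards, as your route is a factor of $K$ \emph{sharper} than the paper's, not looser. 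The one point that required care, namely that the sign pattern of the absolute margin depends only on the labels and not on the hypothesis, you identified and handled correctly.
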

\begin{proof}
	Since the loss function $L^{(\rho)}$ is bounded by $K$, we scale the loss $L^{(\rho)}$ to $[0,1]$ by dividing by $K$, and denote the new class by $L^{(\rho)}/K$. By applying Lemma \ref{lemma-Rad_Bound} to $L^{(\rho)}/K$, for any $\delta > 0$, with probability at least $1 - \delta$, the following inequality holds for all $\f\in \mathcal{F}$,
	\begin{equation*}
	|\frac{\mathcal{E}^{(\rho)}_D(\f)}{K} - \frac{\mathcal{E}^{(\rho)}_{\widehat{D}}(\f)}{K}| \leq 2\widehat{\mathfrak{R}}_\mathcal{S}(L^{(\rho)}/K) + 3\sqrt{\frac{\log\frac{4}{\delta}}{2m}}.
	\end{equation*}
	Based on the property of the Rademacher complexity, we have $\widehat{\mathfrak{R}}_\mathcal{S}(L^{(\rho)}/K) = \frac{1}{K}\widehat{\mathfrak{R}}_\mathcal{S}(L^{(\rho)})$, and based on Lemma \ref{lemma-Talagrand} and the sub-additivity of the supremum, we have
	\begin{align*}
	& \widehat{\mathfrak{R}}_\mathcal{S}(L^{(\rho)}) \\
	& = \frac{1}{m}\mathbb{E}_{\sigma}[\sup\limits_{\f\in\mathcal{F}}\sum_{i=1}^{m}\sigma_i L^{(\rho)}(\f(\x_i),y_i)] \\
	& = \frac{1}{m}\mathbb{E}_{\sigma}[\sup\limits_{\f\in\mathcal{F}}\sum_{i=1}^{m}\sum_{y\in \mathcal{Y}}\sigma_i L^{(\rho)}(\f(\x_i),y)\mathbb{I}(y=y_i)] \\
	& \leq \frac{1}{m}\sum_{y\in \mathcal{Y}}\mathbb{E}_{\sigma}[\sup\limits_{\f\in\mathcal{F}}\sum_{i=1}^{m}\sigma_i L^{(\rho)}(\f(\x_i),y)\mathbb{I}(y=y_i)] \\
	& = \frac{1}{m}\sum_{y\in \mathcal{Y}}\mathbb{E}_{\sigma}[\sup\limits_{\f\in\mathcal{F}}\sum_{i=1}^{m}\sigma_i L^{(\rho)}(\f(\x_i),y)(\frac{2\mathbb{I}(y=y_i)-1}{2}+\frac{1}{2})] \\
	& \leq \frac{1}{2m}\sum_{y\in \mathcal{Y}}\mathbb{E}_{\sigma}[\sup\limits_{\f\in\mathcal{F}}\sum_{i=1}^{m}\sigma_i\epsilon_i L^{(\rho)}(\f(\x_i),y)] \ + \\
	& \qquad \qquad \frac{1}{2m}\sum_{y\in \mathcal{Y}}\mathbb{E}_{\sigma}[\sup\limits_{\f\in\mathcal{F}}\sum_{i=1}^{m}\sigma_i L^{(\rho)}(\f(\x_i),y)] \\
	& = \frac{1}{m}\sum_{y\in \mathcal{Y}}\mathbb{E}_{\sigma}[\sup\limits_{\f\in\mathcal{F}}\sum_{i=1}^{m}\sigma_i L^{(\rho)}(\f(\x_i),y)] \\
	& = \frac{1}{m}\sum_{y\in \mathcal{Y}}\mathbb{E}_{\sigma}[\sup\limits_{\f\in\mathcal{F}}\sum_{i=1}^{m}\sigma_i [\sum_{k\neq y}\Phi_\rho(-f_k(\x_i)) + \Phi_\rho(f_y(\x_i))]] \\
	& \leq  \frac{1}{m}\sum_{y\in \mathcal{Y}}\sum_{k\neq y}\mathbb{E}_{\sigma}[\sup\limits_{\f\in\mathcal{F}}\sum_{i=1}^{m}\sigma_i \Phi_\rho(-f_k(\x_i))] \ + \\
	& \qquad \qquad \frac{1}{m}\sum_{y\in \mathcal{Y}}\mathbb{E}_{\sigma}[\sup\limits_{\f\in\mathcal{F}}\sum_{i=1}^{m}\sigma_i \Phi_\rho(f_y(\x_i))] \\
	& \leq  \frac{1}{m}\sum_{y\in \mathcal{Y}}\sum_{k\neq y}\mathbb{E}_{\sigma}[\sup\limits_{f\in\Pi_1(\mathcal{F})}\sum_{i=1}^{m}\sigma_i \Phi_\rho(-f(\x_i))] \ + \\
	& \qquad \qquad \frac{1}{m}\sum_{y\in \mathcal{Y}}\mathbb{E}_{\sigma}[\sup\limits_{f\in\Pi_1(\mathcal{F})}\sum_{i=1}^{m}\sigma_i \Phi_\rho(f(\x_i))] \\
	& \leq  \frac{1}{m\rho}\sum_{y\in \mathcal{Y}}\sum_{k\neq y}\mathbb{E}_{\sigma}[\sup\limits_{f\in\Pi_1(\mathcal{F})}\sum_{i=1}^{m}\sigma_i [-f(\x_i)]] \ + \\
	& \qquad \qquad \frac{1}{m\rho}\sum_{y\in \mathcal{Y}}\mathbb{E}_{\sigma}[\sup\limits_{f\in\Pi_1(\mathcal{F})}\sum_{i=1}^{m}\sigma_i f(\x_i)] \\
	& = \frac{1}{m\rho}\sum_{y\in \mathcal{Y}}\sum_{k\in \mathcal{Y}}\mathbb{E}_{\sigma}[\sup\limits_{f\in\Pi_1(\mathcal{F})}\sum_{i=1}^{m}\sigma_i f(\x_i)] \\
	& = \frac{K^2}{\rho}\widehat{\mathfrak{R}}_\mathcal{S}(\Pi_1(\mathcal{F})),
	\end{align*}
	where $\epsilon_i = 2\mathbb{I}(y = y_i)-1\in \left\{-1, 1\right\}$ and we use the fact that $\epsilon_i\sigma_i$ has the same distribution as $\sigma_i$.
	The proof is finished by combing the above inequalities. \hfill{\QEDclosed}
\end{proof}

\begin{lemma}\label{lemma-dispErr}
	Let $\mathcal{F}$ be the space of scoring functions mapping from $\mathcal{X}$ to $\mathbb{R}^K$. Let $D$ be a distribution over $\mathcal{X}$ and let $\widehat{D}$ be the corresponding empirical distribution for a sample $\mathcal{S} = \left\{\x_1,...,\x_m\right\}$ drawn i.i.d. from $D$. Fix $\rho > 0$. Then, for any $\delta > 0$, with probability at least $1 - \delta$, the following holds for all $\f, \f'\in \mathcal{F}$:
	\begin{equation}
	\begin{aligned}
	& |\text{\rm MCSD}^{(\rho)}_D(\f,\f') - \text{\rm MCSD}^{(\rho)}_{\widehat{D}}(\f,\f')| \\
	& \qquad \leq \frac{4K}{\rho}\widehat{\mathfrak{R}}_\mathcal{S}(\Pi_1(\mathcal{F})) + 3K\sqrt{\frac{\log\frac{4}{\delta}}{2m}}
	\end{aligned}
	\end{equation}
\end{lemma}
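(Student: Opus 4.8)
The plan is to reduce the claim to a standard uniform-deviation bound for a single $[0,1]$-valued function class, and then to control that class's empirical Rademacher complexity by combining the entrywise decomposition of $\|\M^{(\rho)}(\cdot)-\M^{(\rho)}(\cdot)\|_1$ from Proposition~\ref{PropMCDRelateMCSD} with two applications of Talagrand's contraction lemma (Lemma~\ref{lemma-Talagrand}). Concretely, set
\[
\mathcal{G} := \Big\{\, \x \mapsto \tfrac{1}{K^2}\,\big\|\M^{(\rho)}(\f(\x)) - \M^{(\rho)}(\f'(\x))\big\|_1 \;:\; \f,\f'\in\mathcal{F} \,\Big\}.
\]
Since every entry of $\M^{(\rho)}$ lies in $[0,1]$, the $L_1$ distance between two such matrices is at most $K^2$, so each $g\in\mathcal{G}$ maps $\mathcal{X}$ into $[0,1]$; moreover $\text{\rm MCSD}^{(\rho)}_D(\f,\f') = K\,\mathbb{E}_{\x\sim D}[g(\x)]$ and $\text{\rm MCSD}^{(\rho)}_{\widehat{D}}(\f,\f') = K\cdot\tfrac1m\sum_{i=1}^m g(\x_i)$ for the corresponding $g$. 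Note that, unlike the loss class in Lemma~\ref{lemma-lossErr}, functions in $\mathcal{G}$ do not depend on labels, so the sample $\mathcal{S}=\{\x_1,\dots,\x_m\}$ drawn i.i.d.\ from $D$ over $\mathcal{X}$ is exactly the setting of Lemma~\ref{lemma-Rad_Bound}. Applying the two-sided Rademacher bound (Lemma~\ref{lemma-Rad_Bound}) to $\mathcal{G}$ and multiplying by $K$ gives, with probability at least $1-\delta$, uniformly over $\f,\f'\in\mathcal{F}$,
\[
\big|\text{\rm MCSD}^{(\rho)}_D(\f,\f') - \text{\rm MCSD}^{(\rho)}_{\widehat{D}}(\f,\f')\big| \;\le\; 2K\,\widehat{\mathfrak{R}}_\mathcal{S}(\mathcal{G}) + 3K\sqrt{\tfrac{\log(4/\delta)}{2m}},
\]
so it suffices to establish $\widehat{\mathfrak{R}}_\mathcal{S}(\mathcal{G}) \le \tfrac{2}{\rho}\,\widehat{\mathfrak{R}}_\mathcal{S}(\Pi_1(\mathcal{F}))$.

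For this bound I would first invoke Proposition~\ref{PropMCDRelateMCSD} to rewrite $\|\M^{(\rho)}(\f(\x))-\M^{(\rho)}(\f'(\x))\|_1 = \sum_{k=1}^K \varphi(f_k(\x),f'_k(\x))$ with $\varphi(a,b) = (K-1)|\Phi_\rho(-a)-\Phi_\rho(-b)| + |\Phi_\rho(a)-\Phi_\rho(b)|$, and then peel the supremum apart using sub-additivity of the supremum, both over the $K$ summands in $k$ and over the two terms of $\varphi$. This reduces the task to bounding, for each fixed $k$, the empirical Rademacher complexity of classes of the form $\{\x\mapsto |\Phi_\rho(\pm f_k(\x)) - \Phi_\rho(\pm f'_k(\x))| : \f,\f'\in\mathcal{F}\}$. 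On each such class I would: (i) contract the outer $|\cdot|$, which is $1$-Lipschitz, by Lemma~\ref{lemma-Talagrand}; (ii) split the difference $\Phi_\rho(\pm f_k) - \Phi_\rho(\pm f'_k)$ into two single-hypothesis pieces, absorbing the minus sign on the second via the symmetry of the Rademacher variables, which costs a factor $2$; (iii) contract the $\tfrac1\rho$-Lipschitz maps $\Phi_\rho$ and $t\mapsto\Phi_\rho(-t)$ by Lemma~\ref{lemma-Talagrand} again; and (iv) use $\{\,\x\mapsto f_k(\x) : \f\in\mathcal{F}\,\}\subseteq\Pi_1(\mathcal{F})$ together with monotonicity of the Rademacher complexity under class inclusion. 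Each such class is then bounded by $\tfrac{2}{\rho}\,\widehat{\mathfrak{R}}_\mathcal{S}(\Pi_1(\mathcal{F}))$.

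Assembling the pieces, the coefficient $(K-1)$ on the first term of $\varphi$ and $1$ on the second, summed over $k=1,\dots,K$, amount to $K\big((K-1)+1\big)=K^2$ copies of $\tfrac{2}{\rho}\widehat{\mathfrak{R}}_\mathcal{S}(\Pi_1(\mathcal{F}))$, and the normalizing $\tfrac{1}{K^2}$ in the definition of $\mathcal{G}$ cancels them, yielding $\widehat{\mathfrak{R}}_\mathcal{S}(\mathcal{G}) \le \tfrac{2}{\rho}\widehat{\mathfrak{R}}_\mathcal{S}(\Pi_1(\mathcal{F}))$ and hence the stated inequality. The main obstacle I anticipate is the bookkeeping in the chain of contraction/splitting/symmetry steps: one must interleave the two applications of Talagrand's lemma (the outer $1$-Lipschitz $|\cdot|$ and the inner $\tfrac1\rho$-Lipschitz $\Phi_\rho$) with the sub-additivity and Rademacher-symmetry splits in precisely the right order so that the constants collapse to exactly $\tfrac{4K}{\rho}$ rather than a looser multiple; once the single-coordinate term is pinned down, the remainder mirrors the argument already used in the proof of Lemma~\ref{lemma-lossErr}, only simpler since MCSD carries no dependence on the labels.
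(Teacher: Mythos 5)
Your proposal is correct and follows essentially the same route as the paper: both normalize the matrix-valued disagreement by $K^2$ to get a $[0,1]$-valued class, apply the two-sided Rademacher bound (Lemma~\ref{lemma-Rad_Bound}), and then control the complexity of that class by entrywise sub-additivity, Talagrand contraction for the $1$-Lipschitz absolute value and the $\tfrac{1}{\rho}$-Lipschitz ramp loss, a factor-$2$ split via Rademacher symmetry, and inclusion into $\Pi_1(\mathcal{F})$, arriving at the identical constants $\tfrac{4K}{\rho}$ and $3K$. The only cosmetic difference is that you route the $K^2$ entries through the $\varphi$-decomposition of Proposition~\ref{PropMCDRelateMCSD} (grouping by component $f_k$ with multiplicities $K-1$ and $1$), whereas the paper sums directly over all index pairs $(k,k')$ — these are the same decomposition written two ways.
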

\begin{proof}
	Denote the hypothesis set $\mathcal{M} := \{\x\rightarrow \|\M^{(\rho)}(\f(\x)) - \M^{(\rho)}(\f'(\x))\|_1/{K^2}|\f,\f'\in \mathcal{F}\}$ as a new class. Then the class $\mathcal{M}$ is a family of functions mapping from $\mathcal{X}$ to $[0,1]$. By applying the Lemma \ref{lemma-Rad_Bound} to $\mathcal{M}$,  for any $\delta>0$, with probability at least $1-\delta$ we have
	\begin{equation*}
	|\frac{\text{\rm MCSD}^{(\rho)}_D(\f,\f')}{K} - \frac{\text{\rm MCSD}^{(\rho)}_{\widehat{D}}(\f,\f')}{K}| \leq 2\widehat{\mathfrak{R}}_\mathcal{S}(\mathcal{M}) + 3\sqrt{\frac{\log\frac{4}{\delta}}{2m}}
	\end{equation*}
	and based on the Lemma \ref{lemma-Talagrand} and the sup-additivity of the supremum, we have
	\begin{align*}
	& \widehat{\mathfrak{R}}_\mathcal{S}(\mathcal{M}) \\
	& = \frac{1}{K^2m}\mathbb{E}_{\sigma}[\sup\limits_{\f,\f'\in \mathcal{F}}\sum_{i=1}^{m}\sigma_i\|\M^{(\rho)}(\f(\x_i)) - \M^{(\rho)}(\f'(\x))\|_1] \\
	& \leq \frac{1}{K^2m}\sum_{k,k'\in \mathcal{Y}} \mathbb{E}_{\sigma}[\sup\limits_{\f,\f'\in \mathcal{F}}\sum_{i=1}^{m}\sigma_i |\Phi_\rho(\mu_k (\f(\x_i), k')) \\
	& \qquad \qquad \qquad \qquad \qquad \qquad - \Phi_\rho(\mu_k (\f'(\x_i), k'))|] \\
	& \leq \frac{1}{K^2m}\sum_{k,k'\in \mathcal{Y}} \mathbb{E}_{\sigma}[\sup\limits_{\f,\f'\in \mathcal{F}}\sum_{i=1}^{m}\sigma_i [\Phi_\rho(\mu_k (\f(\x_i), k')) \\
	& \qquad \qquad \qquad \qquad \qquad \qquad - \Phi_\rho(\mu_k (\f'(\x_i), k'))]] \\
	& \leq \frac{2}{K^2m}\sum_{k,k'\in \mathcal{Y}} \mathbb{E}_{\sigma}[\sup\limits_{\f\in \mathcal{F}}\sum_{i=1}^{m}\sigma_i \Phi_\rho(\mu_k (\f(\x_i), k'))] \\
	& \leq \frac{2}{K^2m\rho}\sum_{k,k'\in \mathcal{Y}} \mathbb{E}_{\sigma}[\sup\limits_{\f\in \mathcal{F}}\sum_{i=1}^{m}\sigma_i \mu_k (\f(\x_i), k')] \\
	& = \frac{2}{K^2m\rho}\sum_{k,k'\in \mathcal{Y}} \mathbb{E}_{\sigma}[\sup\limits_{\f\in \mathcal{F}}\sum_{i=1}^{m}\sigma_i f_k(\x_i)] \\
	& \leq \frac{2}{K^2m\rho}\sum_{k,k'\in \mathcal{Y}} \mathbb{E}_{\sigma}[\sup\limits_{f\in \Pi_1(\mathcal{F})}\sum_{i=1}^{m}\sigma_i f(\x_i)] \\
	& = \frac{2}{\rho} \widehat{\mathfrak{R}}_\mathcal{S}(\Pi_1(\mathcal{F })).
	\end{align*}
	The proof is finished by combing the above inequalities. \hfill{\QEDclosed}
\end{proof}

\begin{lemma}\label{lemma-divergenceErr}
	Let $\mathcal{F}$ be the space of scoring functions mapping from $\mathcal{X}$ to $\mathbb{R}^K$. Let $P_x$ and $Q_x$ be source and target marginal distributions over $\mathcal{X}$, and let $\widehat{P}_x$ and $\widehat{Q}_x$ be the corresponding empirical distributions for a sample of $\mathcal{S} = \{\x_i^s\}_{i=1}^{n_s}$ and a sample of $\mathcal{T} = \{\x_i^t\}_{i=1}^{n_t}$ respectively. Fix $\rho>0$. Then, for any $\delta > 0$, with probability at least $1 - 2\delta$, the following holds:
	\begin{equation}
	\begin{aligned}
	d^{(\rho)}_{MCSD}(P_x, Q_x) & \leq d^{(\rho)}_{MCSD}(\widehat{P}_x, \widehat{Q}_x) + \frac{4K}{\rho}\widehat{\mathfrak{R}}_\mathcal{S}(\Pi_1(\mathcal{F}))  \\
	& \qquad + \frac{4K}{\rho}\widehat{\mathfrak{R}}_\mathcal{T}(\Pi_1(\mathcal{F}))+ 3K\sqrt{\frac{\log\frac{4}{\delta}}{2n_s}} \\
	& \qquad \qquad + 3K\sqrt{\frac{\log\frac{4}{\delta}}{2n_t}}
	\end{aligned}
	\end{equation}
\end{lemma}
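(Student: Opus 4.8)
The plan is to reduce the claimed two-sample deviation bound on $d^{(\rho)}_{MCSD}$ to the uniform one-sample deviation bound of Lemma~\ref{lemma-dispErr}, applied once to the source marginal $P_x$ (with sample $\mathcal{S}$) and once to the target marginal $Q_x$ (with sample $\mathcal{T}$), and then to combine the two events via a union bound. First I would unfold the definition (Definition~\ref{DefMCSDDist}) $d^{(\rho)}_{MCSD}(P_x,Q_x) = \sup_{\f',\f''\in\mathcal{F}}[\text{\rm MCSD}_{Q_x}^{(\rho)}(\f',\f'') - \text{\rm MCSD}_{P_x}^{(\rho)}(\f',\f'')]$, and for an arbitrary $\epsilon>0$ fix a pair $(\f'_\epsilon,\f''_\epsilon)\in\mathcal{F}\times\mathcal{F}$ that is $\epsilon$-suboptimal for this supremum, i.e.\ $\text{\rm MCSD}_{Q_x}^{(\rho)}(\f'_\epsilon,\f''_\epsilon) - \text{\rm MCSD}_{P_x}^{(\rho)}(\f'_\epsilon,\f''_\epsilon) \geq d^{(\rho)}_{MCSD}(P_x,Q_x) - \epsilon$.

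Next, I would invoke Lemma~\ref{lemma-dispErr} twice. With $D=P_x$ and sample $\mathcal{S}$ it yields, with probability at least $1-\delta$, that $\text{\rm MCSD}^{(\rho)}_{\widehat{P}_x}(\f,\f') - \text{\rm MCSD}^{(\rho)}_{P_x}(\f,\f') \leq \frac{4K}{\rho}\widehat{\mathfrak{R}}_\mathcal{S}(\Pi_1(\mathcal{F})) + 3K\sqrt{\tfrac{\log(4/\delta)}{2n_s}}$ simultaneously over all $\f,\f'\in\mathcal{F}$; with $D=Q_x$ and sample $\mathcal{T}$ it yields, with probability at least $1-\delta$, that $\text{\rm MCSD}^{(\rho)}_{Q_x}(\f,\f') - \text{\rm MCSD}^{(\rho)}_{\widehat{Q}_x}(\f,\f') \leq \frac{4K}{\rho}\widehat{\mathfrak{R}}_\mathcal{T}(\Pi_1(\mathcal{F})) + 3K\sqrt{\tfrac{\log(4/\delta)}{2n_t}}$ simultaneously over all $\f,\f'\in\mathcal{F}$. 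By a union bound, both hold on an event of probability at least $1-2\delta$, and since each statement is uniform over $\mathcal{F}\times\mathcal{F}$, I may instantiate both at the data-dependent pair $(\f'_\epsilon,\f''_\epsilon)$.

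On that event I would then chain: $d^{(\rho)}_{MCSD}(P_x,Q_x) - \epsilon \leq \text{\rm MCSD}_{Q_x}^{(\rho)}(\f'_\epsilon,\f''_\epsilon) - \text{\rm MCSD}_{P_x}^{(\rho)}(\f'_\epsilon,\f''_\epsilon) \leq \big[\text{\rm MCSD}_{\widehat{Q}_x}^{(\rho)}(\f'_\epsilon,\f''_\epsilon) - \text{\rm MCSD}_{\widehat{P}_x}^{(\rho)}(\f'_\epsilon,\f''_\epsilon)\big] + \frac{4K}{\rho}\widehat{\mathfrak{R}}_\mathcal{S}(\Pi_1(\mathcal{F})) + \frac{4K}{\rho}\widehat{\mathfrak{R}}_\mathcal{T}(\Pi_1(\mathcal{F})) + 3K\sqrt{\tfrac{\log(4/\delta)}{2n_s}} + 3K\sqrt{\tfrac{\log(4/\delta)}{2n_t}}$, and finally bound the bracketed empirical difference by $d^{(\rho)}_{MCSD}(\widehat{P}_x,\widehat{Q}_x)$, again directly from the definition of the supremum. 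Letting $\epsilon\downarrow 0$ gives the stated inequality.

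The argument is essentially routine once Lemma~\ref{lemma-dispErr} is available; the only point needing care is that $d^{(\rho)}_{MCSD}$ is a supremum rather than a maximum, which forces the $\epsilon$-suboptimal selection and the subsequent $\epsilon\to 0$ limit, and that one must keep the two confidence budgets separate so that the union bound delivers probability $1-2\delta$ (and not less). I do not expect a genuine obstacle here—the substantive work is all in the uniform-convergence estimate of Lemma~\ref{lemma-dispErr}, which in turn rests on the Rademacher-complexity machinery (Lemmas~\ref{lemma-Rad_Bound}--\ref{lemma-Talagrand}) and on bounding $\widehat{\mathfrak{R}}$ of the class $\mathcal{M}$ of normalized matrix-norm disagreements by $\tfrac{2}{\rho}\widehat{\mathfrak{R}}_\mathcal{S}(\Pi_1(\mathcal{F}))$.
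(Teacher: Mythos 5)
Your proposal is correct and follows essentially the same route as the paper: both reduce the two-sample bound to two applications of Lemma~\ref{lemma-dispErr} (one for $P_x$ with $\mathcal{S}$, one for $Q_x$ with $\mathcal{T}$) combined by a union bound, after telescoping the population divergence through the empirical MCSDs. The paper phrases the supremum manipulation via sub-additivity of the supremum rather than your $\epsilon$-near-maximizer selection, but these are interchangeable here since Lemma~\ref{lemma-dispErr} is uniform over all pairs in $\mathcal{F}\times\mathcal{F}$ (and your chosen pair is in fact distribution-dependent, not data-dependent, so no measurability issue arises either way).
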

\begin{proof}
	Based on the Lemma \ref{lemma-dispErr} and the sub-additivity of the supremum, by using the union bound, for any $\delta > 0$, with probability at least $1 - 2\delta$, we have
	\begin{align*}
	& d^{(\rho)}_{MCSD}(P_x, Q_x) \\
	& = \sup_{\f, \f'\in \mathcal{F}} [\text{\rm MCSD}^{(\rho)}_{Q_x}(\f, \f') - \text{\rm MCSD}^{(\rho)}_{P_x}(\f,\f')] \\
	& = \sup_{\f, \f'\in \mathcal{F}} [\text{\rm MCSD}^{(\rho)}_{Q_x}(\f, \f') - \text{\rm MCSD}^{(\rho)}_{\widehat{Q}_x}(\f, \f') \\
	& \qquad + \text{\rm MCSD}^{(\rho)}_{\widehat{Q}_x}(\f, \f') - \text{\rm MCSD}^{(\rho)}_{\widehat{P}_x}(\f,\f') \\
	& \qquad + \text{\rm MCSD}^{(\rho)}_{\widehat{P}_x}(\f, \f') - \text{\rm MCSD}^{(\rho)}_{P_x}(\f,\f')] \\
	& \leq \sup_{\f, \f'\in \mathcal{F}} [\text{\rm MCSD}^{(\rho)}_{Q_x}(\f, \f') - \text{\rm MCSD}^{(\rho)}_{\widehat{Q}_x}(\f, \f')] \\
	& \qquad  + \sup_{\f, \f'\in \mathcal{F}} [\text{\rm MCSD}^{(\rho)}_{\widehat{Q}_x}(\f, \f') - \text{\rm MCSD}^{(\rho)}_{\widehat{P}_x}(\f,\f')] \\
	& \qquad  + \sup_{\f, \f'\in \mathcal{F}} [\text{\rm MCSD}^{(\rho)}_{\widehat{P}_x}(\f, \f') - \text{\rm MCSD}^{(\rho)}_{P_x}(\f,\f')] \\
	& \leq \sup_{\f, \f'\in \mathcal{F}} [\text{\rm MCSD}^{(\rho)}_{\widehat{Q}_x}(\f, \f') - \text{\rm MCSD}^{(\rho)}_{\widehat{P}_x}(\f,\f')] \\
	& \qquad + \sup_{\f, \f'\in \mathcal{F}} |\text{\rm MCSD}^{(\rho)}_{Q_x}(\f, \f') - \text{\rm MCSD}^{(\rho)}_{\widehat{Q}_x}(\f, \f')| \\
	& \qquad  + \sup_{\f, \f'\in \mathcal{F}} |\text{\rm MCSD}^{(\rho)}_{\widehat{P}_x}(\f, \f') - \text{\rm MCSD}^{(\rho)}_{P_x}(\f,\f')| \\
	& \leq \sup_{\f, \f'\in \mathcal{F}} [\text{\rm MCSD}^{(\rho)}_{\widehat{Q}_x}(\f, \f') - \text{\rm MCSD}^{(\rho)}_{\widehat{P}_x}(\f,\f')] \\
	& \qquad + \frac{4K}{\rho}\widehat{\mathfrak{R}}_\mathcal{T}(\Pi_1(\mathcal{F})) + 3K\sqrt{\frac{\log\frac{4}{\delta}}{2n_t}} \\
	& \qquad + \frac{4K}\rho{}\widehat{\mathfrak{R}}_\mathcal{S}(\Pi_1(\mathcal{F})) + 3K\sqrt{\frac{\log\frac{4}{\delta}}{2n_s}}  \\
	&= d^{(\rho)}_{MCSD}(\widehat{P}_x, \widehat{Q}_x) + \frac{4K}{\rho}\widehat{\mathfrak{R}}_\mathcal{S}(\Pi_1(\mathcal{F})) \\
	& \qquad + \frac{4K}{\rho}\widehat{\mathfrak{R}}_\mathcal{T}(\Pi_1(\mathcal{F})) + 3K\sqrt{\frac{\log\frac{4}{\delta}}{2n_s}} + 3K\sqrt{\frac{\log\frac{4}{\delta}}{2n_t}}.
	\end{align*} \hfill{\QEDclosed}
\end{proof}


\begin{theo}[Theorem \textcolor{red}{2}]
	Let $\mathcal{F}$ be the space of scoring functions mapping from $\mathcal{X}$ to $\mathbb{R}^K$. Let $P$ and $Q$ be the source and target distributions over $\mathcal{X\times Y}$, and let $P_x$ and $Q_x$ be the corresponding marginal distributions over $\mathcal{X}$. Let $\widehat{P}$ and $\widehat{Q}_x$ be the corresponding empirical distributions for a sample $\mathcal{S} = \{(\x_i^s,y_i^s)\}_{i=1}^{n_s}$ and a sample $\mathcal{T} = \{\x_j^t\}_{j=1}^{n_t}$. Fix $\rho>0$. Then, for any $\delta > 0$, with probability at least $1 - 3\delta$, the following holds for all $\f\in \mathcal{F}$
	\begin{equation}
	\begin{aligned}
	\mathcal{E}_Q(h_{\f}) \leq & \mathcal{E}^{(\rho)}_{\widehat{P}}(\f) +  d^{(\rho)}_{MCSD}(\widehat{P}_x, \widehat{Q}_x) \\
	& + (\frac{2K^2}{\rho} + \frac{4K}{\rho})\widehat{\mathfrak{R}}_\mathcal{S}(\Pi_1\mathcal{F})
	+ \frac{4K}{\rho}\widehat{\mathfrak{R}}_\mathcal{T}(\Pi_1(\mathcal{F})) \\
	& + 6K\sqrt{\frac{\log\frac{4}{\delta}}{2n_s}} + 3K\sqrt{\frac{\log\frac{4}{\delta}}{2n_t}} + \lambda,
	\end{aligned}
	\end{equation}
	where the constant $\lambda = \min\limits_{\f\in \mathcal{F}}\mathcal{E}^{(\rho)}_P(\f) + \mathcal{E}^{(\rho)}_Q(\f)$, and
	\begin{equation}
	\mathcal{E}^{(\rho)}_{\widehat{P}}(\f) := \frac{1}{n_s}\sum_{i=1}^{n_s} \sum_{k=1}^K \Phi_{\rho}( \mu_k(\f(\x^s_i),y^s_i) )  .
	\end{equation}
\end{theo}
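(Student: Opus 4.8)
The plan is to obtain Theorem~2 by combining the deterministic adaptation bound of Theorem~\ref{theorem-1} with the two finite-sample concentration results, Lemma~\ref{lemma-lossErr} and Lemma~\ref{lemma-divergenceErr}, and then taking a union bound over the ``bad'' events. First I would invoke Theorem~\ref{theorem-1}: for the fixed $\rho>0$ and every $\f\in\mathcal{F}$,
\[
\mathcal{E}_Q(h_{\f}) \le \mathcal{E}^{(\rho)}_P(\f) + d^{(\rho)}_{MCSD}(P_x,Q_x) + \lambda ,
\]
where $\lambda=\min_{\f\in\mathcal{F}}\mathcal{E}^{(\rho)}_P(\f)+\mathcal{E}^{(\rho)}_Q(\f)$ is exactly the constant appearing in Theorem~2 (the minimizing $\f^\ast$ realizes the minimum). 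The task then reduces to replacing the two population quantities on the right-hand side by their empirical counterparts, at the cost of complexity and confidence terms.

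Next I would apply Lemma~\ref{lemma-lossErr} with $D=P$, the sample $\mathcal{S}=\{(\x_i^s,y_i^s)\}_{i=1}^{n_s}$, and $m=n_s$: with probability at least $1-\delta$, for all $\f\in\mathcal{F}$,
\[
\mathcal{E}^{(\rho)}_P(\f) \le \mathcal{E}^{(\rho)}_{\widehat{P}}(\f) + \tfrac{2K^2}{\rho}\widehat{\mathfrak{R}}_{\mathcal{S}}(\Pi_1(\mathcal{F})) + 3K\sqrt{\tfrac{\log(4/\delta)}{2n_s}} .
\]
Then I would apply Lemma~\ref{lemma-divergenceErr} to bound $d^{(\rho)}_{MCSD}(P_x,Q_x)$ by $d^{(\rho)}_{MCSD}(\widehat{P}_x,\widehat{Q}_x)$ plus $\tfrac{4K}{\rho}\widehat{\mathfrak{R}}_{\mathcal{S}}(\Pi_1(\mathcal{F}))+\tfrac{4K}{\rho}\widehat{\mathfrak{R}}_{\mathcal{T}}(\Pi_1(\mathcal{F}))+3K\sqrt{\log(4/\delta)/(2n_s)}+3K\sqrt{\log(4/\delta)/(2n_t)}$, an estimate that itself holds with probability at least $1-2\delta$ since its proof already unions two one-sided deviation events over $\widehat{P}_x$ and $\widehat{Q}_x$.

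Finally I would take a union bound over the three exceptional events (one from the loss lemma, two inside the divergence lemma), so that all three estimates hold simultaneously with probability at least $1-3\delta$; on that event, chaining the three displayed inequalities and collecting like terms yields precisely the stated bound: the $\widehat{\mathfrak{R}}_{\mathcal{S}}(\Pi_1(\mathcal{F}))$ contributions add to $(\tfrac{2K^2}{\rho}+\tfrac{4K}{\rho})\widehat{\mathfrak{R}}_{\mathcal{S}}(\Pi_1(\mathcal{F}))$, the $\mathcal{T}$-complexity term stays $\tfrac{4K}{\rho}\widehat{\mathfrak{R}}_{\mathcal{T}}(\Pi_1(\mathcal{F}))$, and the square-root terms combine to $6K\sqrt{\log(4/\delta)/(2n_s)}+3K\sqrt{\log(4/\delta)/(2n_t)}$.

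The genuine difficulties are upstream in the lemmas rather than in this assembly: in particular the chained Rademacher-complexity estimate in Lemma~\ref{lemma-lossErr} (rescaling the $K$-bounded loss $L^{(\rho)}$ to $[0,1]$, splitting over classes with the $\mathbb{I}(y=y_i)$ masking trick, and applying Talagrand's contraction with Lipschitz constant $1/\rho$ for the ramp loss $\Phi_\rho$) and the analogous bound for the matrix-formed disagreement in Lemma~\ref{lemma-dispErr} (reducing $\|\M^{(\rho)}(\f)-\M^{(\rho)}(\f')\|_1$ entrywise to projections $\Pi_1(\mathcal{F})$); granting those, the proof of Theorem~2 is routine bookkeeping of constants and confidence. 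The one point I would state carefully is that each high-probability statement is \emph{uniform} over $\f\in\mathcal{F}$ (and over pairs $\f',\f''$ for the divergence), so the concatenated inequality indeed holds simultaneously for all $\f\in\mathcal{F}$ on the good event.
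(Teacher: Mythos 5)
Your proposal is correct and follows exactly the route the paper takes: Theorem~1 combined with Lemma~\ref{lemma-lossErr} (applied to $P$ with sample $\mathcal{S}$), Lemma~\ref{lemma-divergenceErr}, and a union bound over the three deviation events, with the constants and confidence terms accounting as you describe. Your remarks on the internal $1-2\delta$ budget of the divergence lemma and on uniformity over $\f\in\mathcal{F}$ are accurate and match the paper's (terser) argument.
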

\begin{proof}
	The bound is achieved by applying Lemma \ref{lemma-lossErr}, Lemma \ref{lemma-divergenceErr}, Lemma \ref{theorem-1}, and the union bound. \hfill{\QEDclosed}
\end{proof}

\section{Connecting Theory with Algorithms}\label{appendix-theory-connect-algorithm}


\noindent\textbf{Connections between KL (\textcolor{red}{25}) and CE (\textcolor{red}{26})}

\begin{align} \label{appendix-Connection-CE-KL}
\notag \mathbb{E}_{\x\sim D} \frac{1}{2}[&\text{\rm CE}(\phi(\f'(\psi(\x))), \phi(\f''(\psi(\x)))) \\
\notag + &\text{\rm CE}(\phi(\f''(\psi(\x))), \phi(\f'(\psi(\x))))] \\
\notag = \mathbb{E}_{\x\sim D} \frac{1}{2}[& - \sum_{k=1}^{K} \phi_k(\f'(\psi(\x))) \mathrm{log}(\phi_k(\f''(\psi(\x))))  \\
\notag& -  \sum_{k=1}^{K} \phi_k(\f''(\psi(\x))) \mathrm{log}(\phi_k(\f'(\psi(\x))))] \\
\notag = \mathbb{E}_{\x\sim D} \frac{1}{2}[& - \sum_{k=1}^{K} \phi_k(\f'(\psi(\x))) \mathrm{log}(\frac{\phi_k(\f''(\psi(\x)))}{\phi_k(\f'(\psi(\x)))} )  \\
\notag & - \sum_{k=1}^{K} \phi_k(\f''(\psi(\x))) \mathrm{log}(\frac{\phi_k(\f'(\psi(\x)))}{\phi_k(\f''(\psi(\x)))})] \\
\notag + \mathbb{E}_{\x\sim D} \frac{1}{2}[& - \sum_{k=1}^{K} \phi_k(\f'(\psi(\x))) \mathrm{log}(\phi_k(\f'(\psi(\x))) )  \\
\notag & - \sum_{k=1}^{K} \phi_k(\f''(\psi(\x))) \mathrm{log}(\phi_k(\f''(\psi(\x))))] \\
\notag = \mathbb{E}_{\x\sim D} \frac{1}{2}[&\text{\rm KL}(\phi(\f'(\psi(\x))), \phi(\f''(\psi(\x)))) \\
\notag + &\text{\rm KL}(\phi(\f''(\psi(\x))), \phi(\f'(\psi(\x))))] + \\
\mathbb{E}_{\x\sim D} \frac{1}{2}[&\text{\rm H}(\phi(\f'(\psi(\x))))
+ \text{\rm H}(\phi(\f''(\psi(\x))))].
\end{align}
It is obvious that the objective of CE (\textcolor{red}{26}) equals to the combination of the objective of KL (\textcolor{red}{25}) and terms related to the entropy of class probabilities of $\f'$ and $\f''$.

\begin{proposition}[Proposition \textcolor{red}{3}]
	\label{appendix-PropMCDRelateMCSD}
	Given the ramp loss $\Phi_\rho$ defined as (\textcolor{red}{5}), there exists a distance measure $\varphi: \mathbb{R}\times \mathbb{R} \rightarrow \mathbb{R}_+$ defined as
	\begin{equation}
	\varphi(a,b) =  (K-1)|\Phi_\rho(-a)-\Phi_\rho(-b)|+|\Phi_\rho(a)-\Phi_\rho(b)|, \nonumber	
	\end{equation}
	such that the matrix-formed $\| \M^{(\rho)}(\f'(\x)) - \M^{(\rho)}(\f''(\x)) \|_1$ in MCSD (\textcolor{red}{7}) can be calculated as the sum of $\varphi$-distance values of $K$ entry pairs between $f'_k(\x)$ and $f''_k(\x)$, i.e.,
	\begin{equation}
	\| \M^{(\rho)}(\f'(\x)) - \M^{(\rho)}(\f''(\x)) \|_1 = \sum_{k=1}^K \varphi(f'_k(\x), f''_k(\x)). \nonumber
	\end{equation}
	\begin{proof}
		It is obvious that the function $\varphi$ satisfies the properties of symmetry, non-negative, and triangle inequality, and thus it is a distance measure. The proposition follows directly from the definitions of $\varphi$ and $\M$ (cf. the Equation (\textcolor{red}{8})). \hfill{\QEDclosed}
	\end{proof}
\end{proposition}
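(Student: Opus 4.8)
The plan is to exploit the special value structure of the absolute-margin-violation matrix $\M^{(\rho)}$ that is forced by Definition \ref{AbsMarginFuncDefinition}. First I would write out $\M^{(\rho)}(\f(\x))$ explicitly: by (\ref{equ:Martix-M}) together with the definition of $\u$, the $(i,j)$ entry equals $\Phi_\rho(\mu_i(\f(\x),j))$, which is $\Phi_\rho(f_i(\x))$ when $i=j$ and $\Phi_\rho(-f_i(\x))$ when $i\neq j$. Consequently row $i$ of $\M^{(\rho)}(\f(\x))$ consists of the single diagonal entry $\Phi_\rho(f_i(\x))$ and $K-1$ off-diagonal entries all equal to the common value $\Phi_\rho(-f_i(\x))$; in particular every entry in row $i$ depends on $\f$ only through the coordinate $f_i(\x)$.

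With this description of $\M^{(\rho)}$ in hand, evaluating the entrywise $L_1$ norm $\|\M^{(\rho)}(\f'(\x)) - \M^{(\rho)}(\f''(\x))\|_1$ reduces to summing over rows. For each $i$ the contribution is $|\Phi_\rho(f'_i(\x)) - \Phi_\rho(f''_i(\x))|$ from the diagonal entry plus $K-1$ identical copies of $|\Phi_\rho(-f'_i(\x)) - \Phi_\rho(-f''_i(\x))|$ from the off-diagonal entries, which is exactly $\varphi(f'_i(\x), f''_i(\x))$ as defined in the statement. Summing over $i=1,\dots,K$ gives the claimed identity $\|\M^{(\rho)}(\f'(\x)) - \M^{(\rho)}(\f''(\x))\|_1 = \sum_{k=1}^K \varphi(f'_k(\x), f''_k(\x))$.

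It then remains to check that $\varphi$ is a legitimate (pseudo-)distance. Symmetry and nonnegativity are immediate from the absolute values and the nonnegative coefficients $K-1$ and $1$. For the triangle inequality I would apply the ordinary triangle inequality for $|\cdot|$ on $\mathbb{R}$ to the three reals $\Phi_\rho(a),\Phi_\rho(b),\Phi_\rho(c)$ (and separately to $\Phi_\rho(-a),\Phi_\rho(-b),\Phi_\rho(-c)$), obtaining $|\Phi_\rho(a)-\Phi_\rho(c)|\le|\Phi_\rho(a)-\Phi_\rho(b)|+|\Phi_\rho(b)-\Phi_\rho(c)|$ and the analogous bound for the reflected arguments, and then combine the two with the weights $K-1$ and $1$. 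One obtains only a pseudo-distance rather than a genuine metric, since $\Phi_\rho$ is constant on $(-\infty,0]$ and on $[\rho,\infty)$ and hence $\varphi(a,b)=0$ need not imply $a=b$; this does not affect the proposition.

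There is no real obstacle in this argument. The only point that needs care is unpacking the absolute margin function to see that, within a fixed row of $\M^{(\rho)}$, all off-diagonal entries coincide --- this is precisely what yields the multiplicative factor $K-1$ in the definition of $\varphi$ instead of a sum of $K-1$ potentially distinct terms. Everything else is routine bookkeeping.
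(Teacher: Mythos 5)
Your proposal is correct and is essentially the paper's argument: the paper's proof simply states that the identity "follows directly from the definitions of $\varphi$ and $\M$" and that the metric properties are obvious, while you spell out the row-wise bookkeeping (one diagonal entry $\Phi_\rho(f_i)$ and $K-1$ identical off-diagonal entries $\Phi_\rho(-f_i)$ per row) that produces the factor $K-1$. Your added remark that $\varphi$ is only a pseudo-distance, since $\Phi_\rho$ is constant on $(-\infty,0]$ and $[\rho,\infty)$, is accurate and harmless to the proposition.
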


To intuitively understand the $\varphi$-distance defined above, we plot in Figure \ref{Fig:diff_appendix} the values of $|a-b|$ and $\varphi(a,b)$ as the functions of $a$ and $b$. We can see that the $\varphi$-distance $\varphi(a,b)$ can be considered as a variant form of the absolute distance $|a-b|$. From the Figure \ref{Fig:diff_appendix}, we can also see that the maximization (minimization) of $\varphi(a,b)$ can be achieved by maximizing (minimizing) the difference between $a$ and $b$.
\renewcommand\thefigure{\thesection}
\begin{figure}[h]
	\centering
	\subfigure[$|a-b|$]{
		\centering
		\includegraphics[width=0.48\linewidth] {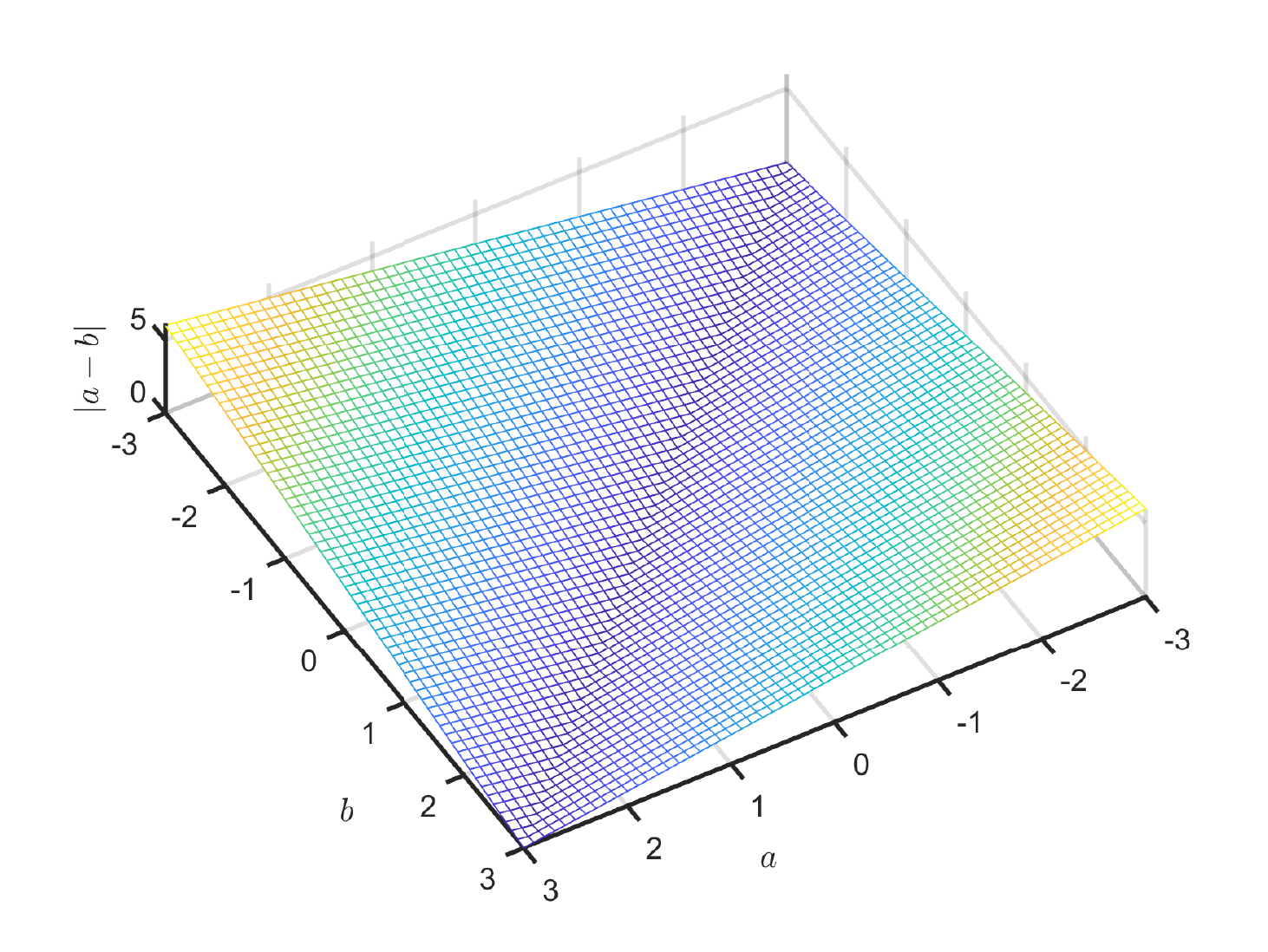}}
	\subfigure[$\varphi(a,b)$]{
		\includegraphics[width=0.48\linewidth] {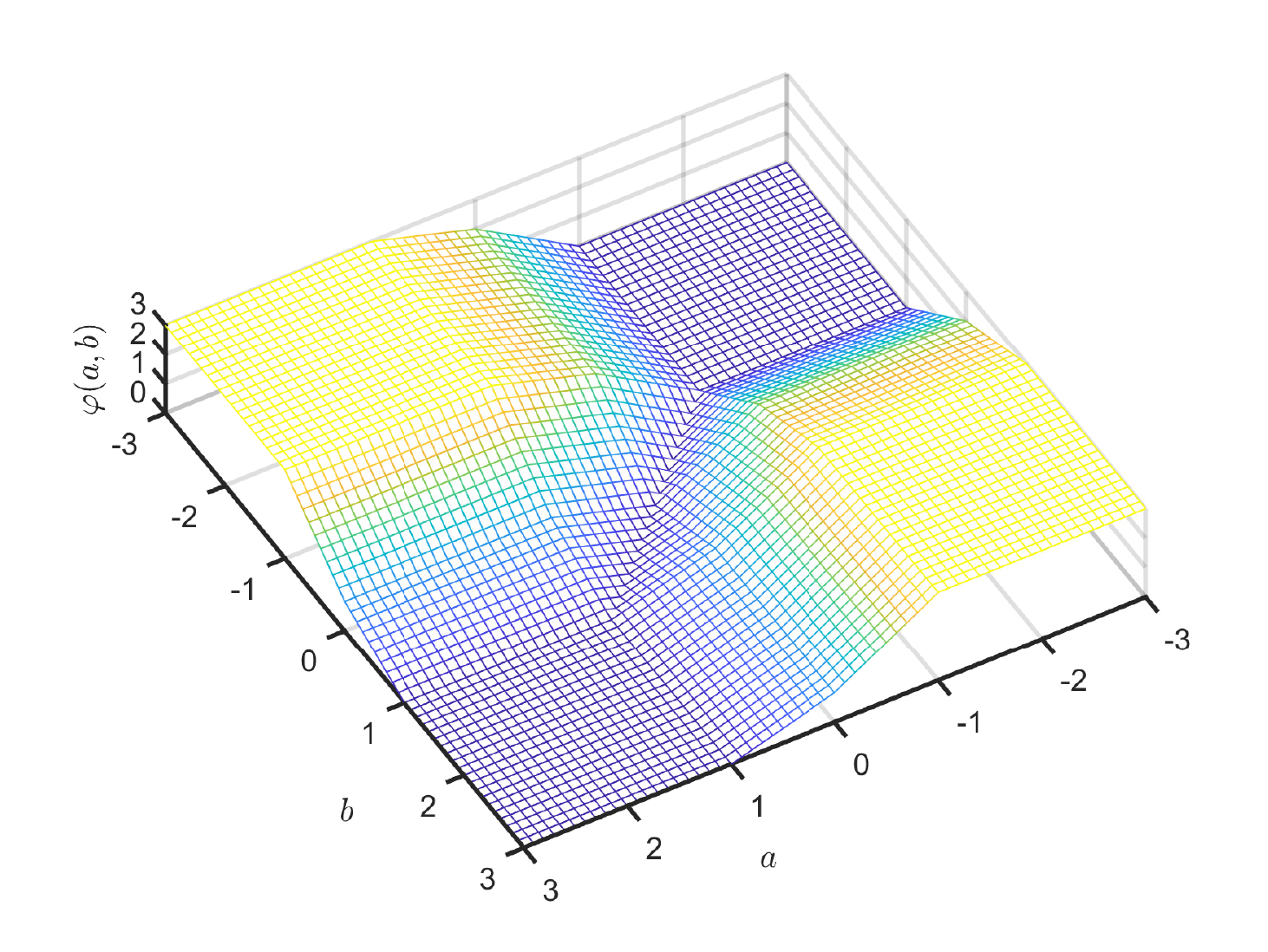}}
	\label{Fig:diff_appendix}
	\caption{Plotting of the values of (a) $|a-b|$ and (b) $\varphi(a,b)$ with the arguments $a$ and $b$. Here we set $K=3$ and $\rho = 1$. The results with other values of $K$ and $\rho$ are similar.}
\end{figure}

\noindent\textbf{Explanation of $\text{\rm ConFUSE}_{\widehat{Q}_x^{\psi}}^{st}$ (\textcolor{red}{36})}

Based on the equations of (\ref{appendix-Connection-CE-KL}), we have
\begin{align} \label{appendix-ExplationOfTargetConfusionCE}
\notag & \text{\rm ConFUSE}_{\widehat{Q}_x^{\psi}}^{st} (\f^{s},\f^t)  
\\ \notag =  & \frac{1}{2n_t} \sum_{j=1}^{n_t} [\text{\rm KL}( \vect{p}^{st}_{1:K}(\vect{x}_j^t), \vect{p}^{st}_{K:2K}(\vect{x}_j^t)) \\
\notag & \quad \qquad + \text{\rm KL}( \vect{p}^{st}_{K:2K}(\vect{x}_j^t), \vect{p}^{st}_{1:K}(\vect{x}_j^t))]
\\     + & \frac{1}{2n_t} \sum_{j=1}^{n_t} [ \widetilde{\text{\rm H}}(\vect{p}^{st}_{1:K}(\vect{x}_j^t)) + \widetilde{\text{\rm H}}(\vect{p}^{st}_{K:2K}(\vect{x}_j^t))],
\end{align}
where the $\vect{p}^{st}_{1:K}(\vect{x}^t)$ and $\vect{p}^{st}_{K:2K}(\vect{x}^t)$ are vectors composed with the first $K$ and last $K$ values of $\vect{p}^{st}(\vect{x}^t)$, respectively.
The $\text{\rm{KL}}$-divergence terms encourage the agreement of $\vect{p}^{st}_{1:K}(\vect{x}^t)$ and $\vect{p}^{st}_{K:2K}(\vect{x}^t)$ for any target instance $\vect{x}^t$. The $\widetilde{\rm{H}}$ terms share the same formulation with the entropy function. Although $\sum_{k=1}^{K} p^{st}_{k}(\vect{x}^t) \leq 1$ and $\sum_{k=K}^{2K} p^{st}_{k}(\vect{x}^t) \leq 1$, minimizing the $\widetilde{\text{\rm{H}}}$ terms encourages both the $\vect{p}^{st}_{1:K}(\vect{x}_j^t)$ and $\vect{p}^{st}_{K:2K}(\vect{x}_j^t)$ to be vectors with only one non-zero value, whose proof is almost the same as that of the entropy loss. In consideration of the terms of $\text{\rm{KL}}$-divergence and $\widetilde{\text{\rm{H}}}$, as well as the intrinsic sum-to-one-constraint, i.e., $\sum_{k=1}^{2K} p^{st}_k (\vect{x}^t) = 1$, minimizing $\text{\rm ConFUSE}_{\widehat{Q}_x^{\psi}}^{st} (\f^{s},\f^t)$ leads to $ \vect{p}^{st}_{1:K}(\vect{x}^t)$ and $\vect{p}^{st}_{K:2K}(\vect{x}^t)$ as the same vector with only one non-zero value of 0.5 for any target instance $\vect{x}^t$.

\begin{proposition}[Proposition \textcolor{red}{4}]
	\label{appendix-PropSymmNetConnectMcDalNet-SrcErr}
	Let $\mathcal{F}$ be a rich enough space of continuous and bounded scoring functions, with the sum-to-zero constraint $\sum_{k=1}^K f_k =0$. For $\f^s, \f^t \in \mathcal{F}$ and a fixed function $\psi$ that satisfies $\psi(\x_1)\neq \psi(\x_2)$ when $y_1\neq y_2$, $\exists \ \rho > 0$ such that, the minimizer $\f^{s*}$ of ${\cal{L}}_{\widehat{P}^{\psi}}^s (\f^s)$ in (\textcolor{red}{38}) also minimizes the term $\mathcal{E}^{(\rho)}_{\widehat{P}^{\psi}}(\f^s)$ in (\textcolor{red}{22}) of empirical source error defined on $\f^s$, and the minimizer $\f^{t*}$ of ${\cal{L}}_{\widehat{P}^{\psi}}^t (\f^t)$ in (\textcolor{red}{38}) also minimizes the term $\mathcal{E}^{(\rho)}_{\widehat{P}^{\psi}}(\f^t)$ in (\textcolor{red}{22}) of empirical source error defined on $\f^t$.
	\begin{proof}
		We first restate the definition of ${\cal{L}}_{\widehat{P}^{\psi}}^s (\f^s)$ and $\mathcal{E}^{(\rho)}_{\widehat{P}^{\psi}}(\f^s)$ as
		\begin{equation}
		\begin{gathered}
		\mathcal{E}^{(\rho)}_{\widehat{P}^{\psi}}(\f^s) = \mathbb{E}_{(\x,y) \sim \widehat{P}} \sum_{k=1}^{K} \Phi_\rho(\mu_k(\f^s(\psi(\x)),y)), \\
		{\cal{L}}_{\widehat{P}^{\psi}}^s (\f^s) = \mathbb{E}_{(\x,y) \sim \widehat{P}} -\log(\phi_y(\f^s(\psi(\x)))),
		\end{gathered}
		\end{equation}
		where $\phi$ is the softmax operator.
		Under the assumption that $\psi(\x_1)\neq \psi(\x_2)$ for each example with $y_1\neq y_2$, if the scoring function space is rich enough, then the minimizer $\f^{s*}$ of ${\cal{L}}_{\widehat{P}^{\psi}}^s (\f^s)$ makes $\phi_y(\f^{s*}(\psi(\x)))$ reach the maximum value for each example $(\x,y)\sim \widehat{P}$. Since the scoring function is bounded, we assume that $\|\f^{s*}(\psi(\x))\|_\infty \leq M$, and
		\begin{equation*}
		\phi_y(\f^{s*}(\psi(\x))) = \frac{\exp(f^{s*}_y(\psi(\x)))}{\exp(f^{s*}_y(\psi(\x)))+\sum_{k\neq y}\exp(f^{s*}_k(\psi(\x)))}.
		\end{equation*}
		With the sum-to-zero constraint $\sum_{k=1}^K f^{s*}_k(\psi(\x)) =0$, it is not hard to verify that $f^{s*}_y(\psi(\x)) = M$ and $f^{s*}_k(\psi(\x)) = -M/(K-1), k\neq y$. Therefore, for any $\rho \leq M/(K-1)$, we have $\sum_{k=1}^{K} \Phi_\rho(\mu_k(\f^{s*}(\psi(\x)),y)) = 0$ and thus $\mathcal{E}^{(\rho)}_{\widehat{P}^{\psi}}(\f^{s*}) = 0$. Similarly, the minimizer $\f^{t*}$ of ${\cal{L}}_{\widehat{P}^{\psi}}^t (\f^t)$ results in $\mathcal{E}^{(\rho)}_{\widehat{P}^{\psi}}(\f^{t*}) = 0$. \hfill{\QEDclosed}
		
	\end{proof}
\end{proposition}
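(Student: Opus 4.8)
The plan is to reduce the statement to a pointwise optimization at each training feature point and then simply read off the absolute margins. First I would exploit the two structural hypotheses: since $\psi$ separates source examples of distinct labels, the points $\{\psi(\x_i^s)\}_{i=1}^{n_s}$ are consistently labeled, and since $\mathcal{F}$ is rich enough, a single $\f^s\in\mathcal{F}$ can be chosen to realize an arbitrary prescribed output vector at each of these points independently. Consequently, minimizing ${\cal{L}}_{\widehat{P}^{\psi}}^s(\f^s)=\mathbb{E}_{(\x,y)\sim\widehat{P}}[-\log\phi_y(\f^s(\psi(\x)))]$ decouples over the sample, and any minimizer $\f^{s*}$ is precisely a function that pointwise maximizes $\phi_y(\f^s(\psi(\x)))$ at each training pair $(\x,y)$, while still respecting the boundedness of $\mathcal{F}$ and the sum-to-zero constraint $\sum_{k=1}^K f_k=0$.

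Next I would characterize that pointwise maximizer. Writing $M$ for a bound with $\|\f^{s*}(\psi(\x))\|_\infty\le M$, maximizing $\exp(f_y)/\sum_{k}\exp(f_k)$ under $\sum_k f_k=0$ first pushes the correct-class logit to the boundary, $f^{s*}_y(\psi(\x))=M$; then, among the remaining logits constrained by $\sum_{k\neq y}f_k=-M$, convexity of $\exp$ forces the symmetric choice $f^{s*}_k(\psi(\x))=-M/(K-1)$ for all $k\neq y$. (If one wishes to sidestep attainment of the bound, the same conclusion holds along a minimizing sequence, which is all that is needed below.)

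Finally I would substitute into the absolute margin function of Definition~\ref{AbsMarginFuncDefinition}: at this optimum $\mu_y(\f^{s*}(\psi(\x)),y)=M>0$ and $\mu_k(\f^{s*}(\psi(\x)),y)=M/(K-1)>0$ for every $k\neq y$. Choosing $\rho=M/(K-1)$ (any smaller positive $\rho$ works too) makes $\Phi_\rho$ vanish on all $K$ of these margins, so $\mathcal{E}^{(\rho)}_{\widehat{P}^{\psi}}(\f^{s*})=\mathbb{E}_{(\x,y)\sim\widehat{P}}\sum_{k=1}^K\Phi_\rho(\mu_k(\f^{s*}(\psi(\x)),y))=0$. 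Since $\Phi_\rho\ge 0$, this is the global minimum of $\mathcal{E}^{(\rho)}_{\widehat{P}^{\psi}}$, so $\f^{s*}$ minimizes it; the argument for $\f^{t*}$ is identical, because ${\cal{L}}_{\widehat{P}^{\psi}}^t$ is the same log loss over the labeled source data.

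The hard part will be the bookkeeping around the two hypotheses---``$\mathcal{F}$ rich enough'' and ``$\psi$ injective on labels''---needed to guarantee that the log-loss minimizer simultaneously attains the per-point softmax maximum across the whole sample, together with verifying that the bounded, sum-to-zero constrained softmax is genuinely maximized at the extremal logit vector rather than merely approached in the limit. Once these points are settled, the passage from the logit values to vanishing ramp-loss margins and the selection of $\rho$ is entirely routine.
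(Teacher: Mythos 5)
Your proposal is correct and follows essentially the same route as the paper's own proof: decouple the log-loss minimization pointwise using the label-separating $\psi$ and the richness of $\mathcal{F}$, identify the extremal logit vector $f^{s*}_y=M$, $f^{s*}_k=-M/(K-1)$ under the boundedness and sum-to-zero constraints, and choose $\rho\le M/(K-1)$ so that every absolute-margin ramp loss vanishes, making $\mathcal{E}^{(\rho)}_{\widehat{P}^{\psi}}(\f^{s*})=0$ the global minimum. Your added remarks (Jensen/convexity for the symmetric off-class logits, and the minimizing-sequence caveat) only make explicit what the paper leaves as ``not hard to verify.''
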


\begin{proposition}[Proposition \textcolor{red}{5}]
	\label{appendix-PropSymmNetConnectMcDalNet-FeaAligh}
	For $\psi$ of a function space of enough capacity and fixed functions $\f^s$ and $\f^t$ with the same range, the minimizer $\psi^{*}$ of $\text{\rm ConFUSE}_{\widehat{P}^{\psi}}^{st} (\f^{s},\f^t) + \lambda \text{\rm ConFUSE}_{\widehat{Q}_x^{\psi}}^{st} (\f^{s},\f^t)$ with the parameter $\lambda > 0$ in (\textcolor{red}{38}) zeroizes $\text{\rm MCSD}_{\widehat{Q}_x^{\psi}}^{(\rho)}(\f^s, \f^t) - \text{\rm MCSD}_{\widehat{P}_x^{\psi}}^{(\rho)}(\f^s, \f^t)$ in (\textcolor{red}{22}) of empirical \text{\rm MCSD} divergence defined on $\f^s$ and $\f^t$.
\end{proposition}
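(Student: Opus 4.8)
The plan is to reduce the statement to a pointwise identity between the two scoring functions after feature extraction, and then read that identity off the confusion objective. First I would record the elementary observation that, for any empirical marginal $\widehat{D}_x^{\psi}$, one has $\mathrm{MCSD}_{\widehat{D}_x^{\psi}}^{(\rho)}(\f^s,\f^t)=\frac1K\,\mathbb{E}_{\x\sim\widehat{D}}\,\bigl\|\M^{(\rho)}(\f^s(\psi(\x)))-\M^{(\rho)}(\f^t(\psi(\x)))\bigr\|_1\ge 0$, so that if $\f^s(\psi(\x))=\f^t(\psi(\x))$ at every point of the sample then the integrand is identically zero and $\mathrm{MCSD}_{\widehat{D}_x^{\psi}}^{(\rho)}(\f^s,\f^t)=0$. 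Consequently it suffices to show that the minimizer $\psi^{*}$ satisfies $\f^s(\psi^{*}(\x))=\f^t(\psi^{*}(\x))$ for every $\x\in\{\x_i^s\}_{i=1}^{n_s}\cup\{\x_j^t\}_{j=1}^{n_t}$; then both $\mathrm{MCSD}_{\widehat{P}_x^{\psi^{*}}}^{(\rho)}(\f^s,\f^t)$ and $\mathrm{MCSD}_{\widehat{Q}_x^{\psi^{*}}}^{(\rho)}(\f^s,\f^t)$ vanish, hence so does their difference.

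Next I would decouple the two confusion terms. Since $\mathrm{ConFUSE}^{st}_{\widehat{P}^{\psi}}(\f^s,\f^t)$ depends on $\psi$ only through the values $\{\psi(\x_i^s)\}$ and $\mathrm{ConFUSE}^{st}_{\widehat{Q}_x^{\psi}}(\f^s,\f^t)$ only through the values $\{\psi(\x_j^t)\}$, and these input sets are disjoint, the assumption that $\psi$ ranges over a space of enough capacity (together with the fact that $\f^s$ and $\f^t$ share the same range) lets one realise the two ideal configurations simultaneously; as $\lambda>0$, the minimizer $\psi^{*}$ of the weighted sum must therefore attain the minimum of each term separately. For the source term, the contribution of a source example $(\x^s,y^s=k)$ is $-\tfrac12\log p^{st}_{k}(\x^s)-\tfrac12\log p^{st}_{k+K}(\x^s)$ (with $\omega_{k}=1$ in the closed set case), which under the softmax constraint $\sum_{m=1}^{2K}p^{st}_{m}(\x^s)=1$ is minimised exactly at $p^{st}_{k}(\x^s)=p^{st}_{k+K}(\x^s)=\tfrac12$ with all other entries $0$; since $\vect{p}^{st}=\phi([\f^s(\psi(\x));\f^t(\psi(\x))])$, equality $p^{st}_{k}=p^{st}_{k+K}$ forces $f^s_{k}(\psi^{*}(\x^s))=f^t_{k}(\psi^{*}(\x^s))$, and vanishing of the remaining entries together with the boundedness and sum-to-zero constraint --- exactly the translation already used in the preceding Proposition~4, giving the values $+M$ and $-M/(K-1)$ --- forces $f^s_{m}(\psi^{*}(\x^s))=f^t_{m}(\psi^{*}(\x^s))$ for $m\ne k$ as well, i.e.\ $\f^s(\psi^{*}(\x^s))=\f^t(\psi^{*}(\x^s))$. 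For the target term, I would invoke the rewriting of $\mathrm{ConFUSE}^{st}_{\widehat{Q}_x^{\psi}}$ established above as a symmetrised $\mathrm{KL}$ between $\vect{p}^{st}_{1:K}(\x^t)$ and $\vect{p}^{st}_{K:2K}(\x^t)$ plus the entropy-like $\widetilde{\mathrm H}$ terms: its minimizer makes these two $K$-vectors coincide (and one-hot with mass $\tfrac12$), which once more is equivalent to $\f^s(\psi^{*}(\x^t))=\f^t(\psi^{*}(\x^t))$.

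Putting the two halves together gives $\f^s\circ\psi^{*}=\f^t\circ\psi^{*}$ on all source and target training points, and by the first paragraph the empirical $\mathrm{MCSD}$ divergence difference in the theoretically derived objective equals zero. The hard part, and the only place where real care is needed, is the decoupling/attainability step: one must argue that the joint minimizer of $\mathrm{ConFUSE}^{st}_{\widehat{P}^{\psi}}+\lambda\,\mathrm{ConFUSE}^{st}_{\widehat{Q}_x^{\psi}}$ genuinely reaches both individual minima without the two terms interfering, which rests on the ``enough capacity'' hypothesis for $\psi$ and the ``same range'' hypothesis for $\f^s,\f^t$; and one must carry the softmax-to-scoring-function translation of Proposition~4 far enough to obtain full vector equality $\f^s=\f^t$ at each point, not merely agreement of the argmax component. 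The remaining verification --- that equality of the scoring functions annihilates $\|\M^{(\rho)}(\f^s(\cdot))-\M^{(\rho)}(\f^t(\cdot))\|_1$ --- is immediate from the definition of $\M^{(\rho)}$.
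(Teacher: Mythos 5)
Your proposal is correct and follows essentially the same route as the paper's proof, which simply asserts that the minimizer of each confusion term forces $\f^s(\psi^*(\x))=\f^t(\psi^*(\x))$ pointwise on the respective sample and hence zeroizes both empirical $\text{\rm MCSD}$ terms. You merely make explicit the details the paper leaves implicit (the decoupling of the two terms via disjoint supports and the softmax-to-scoring-function translation borrowed from Proposition~4), which is a faithful elaboration rather than a different argument.
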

\begin{proof}
	The proof is trivial because the minimizer $\psi^{*}$ of $\text{\rm ConFUSE}_{\widehat{P}^{\psi}}^{st} (\f^{s},\f^t)$ results in $\f^s(\psi^*(\x)) = \f^t(\psi^*(\x))$ for each example $(\x,y) \sim \widehat{P}$, and therefore $\text{\rm MCSD}_{\widehat{P}_x^{\psi^*}}^{(\rho)}(\f^s, \f^t) = 0$; Similarly, the minimizer $\psi^{*}$ of $\text{\rm ConFUSE}_{\widehat{Q}_x^{\psi}}^{st} (\f^{s},\f^t)$ also results in $\text{\rm MCSD}_{\widehat{Q}_x^{\psi^*}}^{(\rho)}(\f^s, \f^t) = 0$ and thus $\text{\rm MCSD}_{\widehat{Q}_x^{\psi^*}}^{(\rho)}(\f^s, \f^t) - \text{\rm MCSD}_{\widehat{P}_x^{\psi^*}}^{(\rho)}(\f^s, \f^t) = 0$. \hfill{\QEDclosed}
\end{proof}

\begin{table*}[htb]
	\centering
	\caption{Accuracy (\%) of different instantiations of McDalNets on the Office-31 \cite{office_31} dataset for closed set UDA. Results are based on models adapted from a 50-layer ResNet.}
	\label{Tab:different_implementation_off31}
	\begin{tabular}{l|cccccc|c}
		\hline
		Methods                                       & A $\to$ W     & D $\to$ W  & W $\to$ D   & A $\to$ D    & D $\to$ A      & W $\to$ A     & Avg  \\
		\hline
		Source Only\cite{resnet}                      & 79.9$\pm$0.3  & 96.6$\pm$0.4   & 99.4$\pm$0.2        & 84.1$\pm$0.4  & 64.5$\pm$0.3   & 66.4$\pm$0.4   &  81.8 \\
		\hline
		McDalNets based on the following surrogates of $\widehat{\text{\rm MCSD}}$ (\textcolor{red}{14}) and $\widetilde{\text{\rm MCSD}}$ (\textcolor{red}{13})             &                     &               &                 &                &  & \\
		\quad DANN \cite{reverse_grad,dann} (\textcolor{red}{31})    & 82.2$\pm$0.2      & 98.2$\pm$0.2  &  99.8$\pm$0.2      & 84.1$\pm$0.3          & 66.3$\pm$0.4           & 66.4$\pm$0.2        & 82.8    \\
		\quad MDD \cite{mdd} variant (\textcolor{red}{29}) & 86.5$\pm$1.2      & 98.2$\pm$0.3   &  99.8$\pm$0.2        & 87.3$\pm$0.5   & 67.9$\pm$0.3           & 67.7$\pm$0.1                        & 84.5  \\
		\hline
		McDalNets based on the following surrogates of MCSD (\textcolor{red}{7}) &                     &               &                 &             &   & \\
		\quad $L_1/\text{\rm MCD \cite{mcd}}$ (\textcolor{red}{24})  & 84.8$\pm$0.1    &98.2$\pm$0.3  & 99.8$\pm$0.2     & 86.8$\pm$0.3  & 69.8$\pm$0.1           & 68.6$\pm$0.4           & 84.7  \\
		\quad KL (\textcolor{red}{25})                                & 85.3$\pm$0.5    &98.5$\pm$0.1  & 99.8$\pm$0.2   & 86.2$\pm$0.3  & 69.6$\pm$0.6          & 68.3$\pm$0.1           & 84.6  \\
		\quad CE (\textcolor{red}{26})                     & 88.0$\pm$0.2    &98.5$\pm$0.2   & 100.0$\pm$.0   & 86.9$\pm$0.2       & 70.0$\pm$0.6  & 68.6$\pm$0.4  & 85.3   \\
		\hline
		SymmNets-V2 (\textcolor{red}{38})                  & \textbf{94.2}$\pm$0.1  & \textbf{98.8}$\pm$.0 & \textbf{100.0}$\pm$.0     & \textbf{93.5}$\pm$0.3        & \textbf{74.4}$\pm$0.1   & \textbf{73.4}$\pm$0.2                    & \textbf{89.1}   \\
		\hline
	\end{tabular}
\end{table*}

\begin{table*}[htb]
	\centering
	\caption{Accuracy (\%) of different instantiations of McDalNets on the ImageCLEF \cite{ImageCLEFDA} dataset for closed set UDA. Results are based on models adapted from a 50-layer ResNet.}
	\label{Tab:different_implementation_clef}
	\begin{tabular}{l|cccccc|c}
		\hline
		Methods                                       & I $\to$ P     & P $\to$ I  & I $\to$ C   & C $\to$ I    & C $\to$ P      & P $\to$ C     & Avg  \\
		\hline
		Source Only\cite{resnet}                      & 74.7$\pm$1.0          & 87.3$\pm$0.5       & 93.0$\pm$0.3        & 83.5$\pm$0.3         & 67.5$\pm$0.3           & 90.2$\pm$0.8          & 82.7 \\
		\hline
		McDalNets based on the following surrogates of $\widehat{\text{\rm MCSD}}$ (\textcolor{red}{14}) and $\widetilde{\text{\rm MCSD}}$ (\textcolor{red}{13})              &                     &               &                 &                &  & \\
		\quad DANN \cite{reverse_grad,dann} (\textcolor{red}{31})  & 77.3$\pm$0.1   & 90.7$\pm$0.3 &94.3$\pm$0.2    & 88.3$\pm$0.2         & 73.5$\pm$0.8           & 92.7$\pm$0.1          & 86.1    \\
		\quad MDD \cite{mdd} variant (\textcolor{red}{29}) &77.2$\pm$0.3 & 91.8$\pm$0.2 &95.0$\pm$0.2    & 87.8$\pm$0.6         & 73.7$\pm$0.5           & 94.7$\pm$0.3          & 86.7  \\
		\hline
		McDalNets based on the following surrogates of MCSD (\textcolor{red}{7}) &                     &               &                 &             &   & \\
		\quad $L_1/\text{\rm MCD \cite{mcd}}$ (\textcolor{red}{24})  &77.8$\pm$0.2   &91.8$\pm$0.3 &94.8$\pm$0.1  &89.7$\pm$0.3 &75.2$\pm$0.5 & 93.2$\pm$0.4      & 87.0  \\
		\quad KL (\textcolor{red}{25})                                &77.7$\pm$0.2   &91.3$\pm$0.1 &95.3$\pm$0.2  &91.0$\pm$0.2 &76.0$\pm$0.3 & 94.2$\pm$0.2      & 87.6  \\
		\quad CE (\textcolor{red}{26})                     &78.2$\pm$0.1   &91.7$\pm$0.5 &95.8$\pm$0.4 &91.5$\pm$0.3  &75.3$\pm$0.1 & 94.5$\pm$0.2      & 87.8  \\
		\hline
		SymmNets-V2 (\textcolor{red}{38})                  &\textbf{79.0}$\pm$0.3 &\textbf{93.5}$\pm$0.2 &\textbf{96.9}$\pm$0.2 &\textbf{93.4}$\pm$0.3  &\textbf{79.2}$\pm$0.3 &\textbf{96.2}$\pm$0.1                 & \textbf{89.7}   \\
		\hline
	\end{tabular}
\end{table*}

\section{Experiments}  \label{appendix-Experiments}
\subsection{Datasets and Implementations}
\noindent \textbf{Office-31} The office-31 dataset \cite{office_31} is a standard benchmark dataset for domain adaptation, which contains $4,110$ images of $31$ categories shared by three distinct domains: \textit{Amazon} (\textbf{A}), \textit{Webcam} (\textbf{W}) and \textit{DSLR} (\textbf{D}). We adopt it in the closed set, partial, and open set UDA.

\noindent \textbf{ImageCLEF-DA} The ImageCLEF-DA dataset \cite{ImageCLEFDA} is a benchmark dataset for the ImageCLEF 2014 domain adaptation challenge, which contains three domains: \textit{Caltech-256} (\textbf{C}), \textit{ImageNet ILSVRC 2012} (\textbf{I}) and \textit{Pascal VOC 2012} (\textbf{P}). For each domain, there are $12$ categories and $50$ images in each class. The three domains in this dataset are of the same size, which is a good complementation of the Office-31 dataset where different domains are of different sizes. We adopt it in the closed set settng of UDA.

\noindent \textbf{Office-Home} The Office-Home dataset \cite{office_home} is a very challenging dataset for domain adaptation, which contains $15,500$ images from $65$ categories of everyday objects in the office and home scenes, shared by four significantly different domains: Artistic images (\textbf{A}), Clip Art (\textbf{C}), Product images (\textbf{P}) and Real-World images (\textbf{R}). We adopt it in the closed set and partial UDA.

\noindent \textbf{Syn2Real} The Syn2Real dataset \cite{syn2real, visda} is a challenging simulation-to-real dataset, which contains over $280$K images of $12$ categories. We adopt the training domain, which contains synthetic images generated by rendering 3D models from different angles and under different lighting conditions, and validation domain, which contains natural images, as the source domain and target domain, respectively. We adopt it in the closed set and open set UDA. For open set UDA, there are additional $33$ background categories and $69$ other categories aggregated as the unknown class of the source domain and target domain, respectively.

\noindent \textbf{Digits} The MNIST \cite{mnist}, SVHN \cite{svhn}, and USPS \cite{usps} datasets are adopted in the closed set UDA. Following \cite{adr}, we adopt the modified LeNet and evaluate on three adaptation tasks of SVHN $to$ MNIST, MNIST $\to$ USPS, and USPS $\to$ MNIST. Following \cite{adda}, we sample $2,000$ images from the MNIST and $1,800$ images from  the USPS for the adaptation between MNIST and USPS, and use the full training sets for the SVHN $\to$ MNIST task.

\noindent \textbf{DomainNet} The DomainNet dataset \cite{peng2018moment} is the largest UDA datasets to the best of our knowledge. There are $586.6$K samples of $345$ categories shared by six domains of Clipart (clp), Infograph (inf), Painting (pnt), Quichdraw (qdr), Real (rel), and Sketch (skt).  We adopt it in our experiments of closed set UDA.

\noindent \textbf{Modified LeNet Implementation} Following \cite{adr}, we adopt the modified LeNet for the Digits datasets \cite{mnist, svhn, usps}. All parameters are updated with the Adam optimizer with a learning rate of $0.0002$, a $\beta_1$ of $0.5$, a $\beta_2$ of $0.999$, and a batch size of $256$ images. We convert all training images to greyscale and scale them to $28\times 28$ pixels.

\subsection{Analysis}
\noindent \textbf{Full Results of Different Implementations of McDalNets (\textcolor{red}{22})} We present the full results of different implementations of McDalNets (\textcolor{red}{22}) on datasets of Office-31 \cite{office_31}, ImageCLEF-DA \cite{ImageCLEFDA}, Office-Home \cite{office_home}, VisDA-2017 \cite{visda}, Digits \cite{mnist,svhn,usps}, and DomainNet \cite{peng2018moment} in Table \ref{Tab:different_implementation_off31}, Table \ref{Tab:different_implementation_clef}, Table \ref{Tab:different_implementation_offhome}, Table \ref{Tab:different_implementation_visda}, Table \ref{Tab:different_implementation_Digits}, and Table \ref{Tab:different_implementation_DomainNet_full}, respectively.

\begin{table*}[htb]
	\centering
	\caption{Accuracy (\%) of different instantiations of McDalNets on the Office-Home \cite{office_home} dataset for closed set UDA. Results are based on models adapted from a 50-layer ResNet.}
	\label{Tab:different_implementation_offhome}
	\begin{tabular}{L{58.8mm}C{5.3mm}C{5.3mm}C{5.3mm}C{5.3mm}C{5.3mm}C{5.3mm}C{5.3mm}C{5.3mm}C{5.3mm}C{5.3mm}C{5.3mm}C{5.3mm}C{5.3mm}C{5.3mm}}
		\hline
		Methods                    &A$\to$C &A$\to$P &A$\to$R &C$\to$A &C$\to$P &C$\to$R &P$\to$A &P$\to$C &P$\to$R &R$\to$A &R$\to$C &R$\to$P & Avg  \\
		\hline
		Source Only\cite{resnet}                       &40.5 &66.1  &74.3 &53.2  &61.2   &63.9    &52.6     &37.5 &72.3&65.5 &43.2 & 77.0  &  58.9 \\
		\hline
		{\tiny McDalNets based on the following surrogates of $\widehat{\text{\rm MCSD}}$ (\textcolor{red}{14}) and $\widetilde{\text{\rm MCSD}}$ (\textcolor{red}{13}) }      &&&&&&&&&&&&        &               \\
		\quad DANN \cite{reverse_grad,dann} (\textcolor{red}{31})      &42.9 & 65.5& 74.3 & 54.5 & 60.6& 65.4& 54.0& 40.3& 73.1 & 66.7& 45.4& 76.9  & 60.0    \\
		\quad MDD \cite{mdd} variant (\textcolor{red}{29})   &33.2 & 64.2& 75.0 & 58.9 & 62.4& 68.3& 57.7& 43.0& 75.5 & 70.1& 46.0& 79.0  & 61.1  \\
		\hline
		{\tiny McDalNets based on the following surrogates of MCSD (\textcolor{red}{7})}       &&&&&&&&&&&&         & \\
		\quad $L_1/\text{\rm MCD \cite{mcd}}$ (\textcolor{red}{24})     & 45.4 & 67.2 & 75.2 & 58.3 & 62.9 & 68.2 & 56.7 & 42.8 & 73.9 & 67.5 & 47.9 & 78.0   & 62.0  \\
		\quad KL (\textcolor{red}{25})                                   & 46.6 & 69.2 & 75.2 & 59.9 & 65.1 & 68.2 & 60.2 & 45.6 & 73.8 & 67.3 & 50.4 & 77.7   & 63.3  \\
		\quad CE (\textcolor{red}{26})                        & 46.6 & 69.2 & 75.6 & 59.9 & 65.1 & 68.8 & 61.4 & 45.8 & 74.8 & 68.8 & 52.1 & 79.6   & 64.0   \\
		\hline
		SymmNets-V2 (\textcolor{red}{38})                &\textbf{48.1} & \textbf{74.3} & \textbf{78.7}& \textbf{64.6} & \textbf{71.8} & \textbf{74.1}& \textbf{64.4} & \textbf{50.0} & \textbf{80.2} & \textbf{74.3} & \textbf{53.1} & \textbf{83.2}  & \textbf{68.1}   \\
		\hline
	\end{tabular}
\end{table*}

\begin{table*}[htb]
	\centering
	\caption{Accuracy (\%) of different instantiations of McDalNets on the VisDA-2017 \cite{visda} dataset for closed set UDA. Results are based on models adapted from a 50-layer ResNet.}
	\label{Tab:different_implementation_visda}
	\begin{tabular}{l|cccccccccccc|c}
		\hline
		Methods                                   &    \rotatebox{45}{plane} & \rotatebox{45}{bcycle} &\rotatebox{45}{bus} & \rotatebox{45}{car} & \rotatebox{45}{horse} & \rotatebox{45}{knife} & \rotatebox{45}{mcycl} & \rotatebox{45}{person} & \rotatebox{45}{plant} & \rotatebox{45}{sktbrd} & \rotatebox{45}{train} & \rotatebox{45}{trunk} & \rotatebox{45}{Avg}  \\
		\hline
		Source Only\cite{resnet}                       &68.2 &10.9&35.3&\textbf{75.7}&53.6 &2.7 &74.1 &4.7  &61.8 &18.9  &90.5 & 4.3  &  41.8 \\
		\hline
		{\tiny McDalNets based on the following surrogates of $\widehat{\text{\rm MCSD}}$ (\textcolor{red}{14}) and $\widetilde{\text{\rm MCSD}}$ (\textcolor{red}{13}) }      &&&&&&&&&&&&        &               \\
		\quad DANN \cite{reverse_grad,dann} (\textcolor{red}{31})     &77.1 & 35.7 & 68.0 & 59.0 & 75.8 & 20.1 & 89.3 & 42.1 & 86.3 & 38.8 & 85.9 & 22.5  & 58.4    \\
		\quad MDD \cite{mdd} variant (\textcolor{red}{29})   &\multicolumn{12}{c|}{did not converge}     \\
		\hline
		{\tiny McDalNets based on the following surrogates of MCSD (\textcolor{red}{7})}       &&&&&&&&&&&&         & \\
		\quad $L_1/\text{\rm MCD \cite{mcd}}$ (\textcolor{red}{24})     &84.8 & 60.0 & 75.6 & 75.5 & 82.5 & 76.5 & 93.0 & 73.1 & 92.8 & 28.2 & 90.9 & 10.4     & 70.4  \\
		\quad KL (\textcolor{red}{25})                                   &\textbf{89.3} & \textbf{62.9} & 70.6 & 70.4 & \textbf{83.5} & \textbf{83.1} & 92.5 & 68.9 & 91.5 & 6.6  & \textbf{91.0} & 18.3     & 69.0  \\
		\quad CE (\textcolor{red}{26})                        &86.5 & 56.7 & 78.0 & 72.9 & 80.8 & 81.3 & \textbf{93.7} & \textbf{76.5} & \textbf{94.1} & 20.0 & 87.6 & 16.7     & 70.5   \\
		\hline
		SymmNets-V2 (\textcolor{red}{38})                           &87.3 & 62.2 & \textbf{79.1} & 66.7 & 80.3 & 79.7 & 87.8 & 75.6 & 88.9 & \textbf{31.4} & 90.7 & \textbf{25.8}     & \textbf{71.3}   \\
		\hline
	\end{tabular}
\end{table*}

\begin{table*}[h]
	\centering
	\caption{Accuracy (\%) of different instantiations of McDalNets on the Digits \cite{mnist,svhn,usps} dataset for closed set UDA. Results are based on models adapted from a modified LeNet.}
	\label{Tab:different_implementation_Digits}
	\begin{tabular}{l|ccc|c}
		\hline
		Methods                                       & S $\to$ M     & U $\to$ M  & M $\to$ U              & Avg  \\
		\hline
		Source Only                                   & 62.7$\pm$1.1  & 77.5$\pm$2.2 & 71.2$\pm$0.7         & 70.5 \\
		\hline
		McDalNets based on the following  surrogates of $\widehat{\text{\rm MCSD}}$ (\textcolor{red}{14}) and $\widetilde{\text{\rm MCSD}}$ (\textcolor{red}{13})              &                     &               &     &           \\
		\quad DANN \cite{reverse_grad,dann} (\textcolor{red}{31})   & 74.2$\pm$1.0    & 73.0$\pm$2.9 & 70.3$\pm$1.5  & 72.5  \\
		\quad MDD \cite{mdd} variant (\textcolor{red}{29})  &\multicolumn{3}{c|}{did not converge}  & did not converge  \\
		\hline
		McDalNets based on the following surrogates of MCSD (\textcolor{red}{7}) &               &&   & \\
		\quad $L_1/\text{\rm MCD \cite{mcd}}$ (\textcolor{red}{24})  & 90.4$\pm$0.4      &95.8$\pm$0.6 & 85.7$\pm$1.9 & 90.6 \\
		\quad KL (\textcolor{red}{25})                                & 76.6$\pm$1.3      &94.5$\pm$0.7 & 77.5$\pm$0.6 & 82.9  \\
		\quad CE (\textcolor{red}{26})                     & \textbf{97.8}$\pm$0.2      &96.6$\pm$0.6 & 90.3$\pm$0.8 & 94.9 \\
		\hline
		SymmNets-V2 (\textcolor{red}{38})                       & 96.3$\pm$1.2      &\textbf{96.8}$\pm$0.3 & \textbf{94.8}$\pm$0.6 & \textbf{96.0} \\
		\hline
	\end{tabular}
\end{table*}

\begin{table*}
	\centering
	\caption{Accuracy (\%) of different instantiations of McDalNets on the DomainNet \cite{peng2018moment} dataset for closed set UDA. Results are based on models adapted from a 50-layer ResNet. In each sub-table, the column-wise domains are selected as the source domain and the row-wise domains are selected as the target domain. The `SO' and 'Sym2' indicate the baseline of Source Only and our proposed SymmNets-V2, respectively. The `DANN' and `MDD$^*$' are the McDalNets based on the scalar-valued $\widehat{\text{\rm MCSD}}$ surrogate  (\textcolor{red}{31}) and $\widetilde{\text{\rm MCSD}}$ surrogate (\textcolor{red}{29}), respectively. The `$L_1$', `KL' and `CE' are the McDalNets based on the MCSD surrogates of  $L_1/\text{\rm MCD \cite{mcd}}$ (\textcolor{red}{24}), KL (\textcolor{red}{25}), and CE (\textcolor{red}{26}), respectively. In the `Oracle' setting, we fine-tune on labeled target data the ResNet-50 model that is pre-trained on the ImageNet dataset.}
	\label{Tab:different_implementation_DomainNet_full}
	\scriptsize 
	\begin{tabular}{L{2.5mm}|C{0.8mm}C{0.8mm}C{0.8mm}C{0.8mm}C{0.8mm}C{0.8mm}C{2.9mm}| |L{4.9mm}|C{0.8mm}C{0.8mm}C{0.8mm}C{0.8mm}C{0.8mm}C{0.8mm}C{2.9mm}||L{4.7mm}|C{0.8mm}C{0.8mm}C{0.8mm}C{0.8mm}C{0.8mm}C{0.8mm}C{3.9mm}||C{4.5mm}|C{0.8mm}C{0.8mm}C{0.8mm}C{0.8mm}C{0.8mm}C{0.8mm}C{3.9mm}}
		\hline
		SO & clp & inf & pnt & qdr & rel & skt & Avg.  & DANN & clp & inf & pnt & qdr & rel & skt & Avg. &MDD$^*$ & clp & inf & pnt & qdr & rel & skt & Avg. & $L_1$ & clp & inf & pnt & qdr & rel & skt & Avg.  \\
		\hline
		clp &$-$&17.3 & 28.9 & 8.9 & 50.4 & 38.9 & 29.2& clp& $-$ & 17.9& 32.1& 10.5 & 54.1 & 40.7 & 31.1& clp & $-$ & 18.1 & 33.1 & 11.7 & 54.3 & 40.5 & 31.5 & clp & $-$ & 18.3 & 32.5 & 11.5 & 54.1 & 41.0 & 31.5  \\
		inf & 33.9 & $-$ & 28.2 & 2.7 & 50.2 & 27.6 & 28.5 & inf & 33.7 & $-$ & 30.3 & 3.4 & 50.3 & 28.5 & 29.2 & inf & 24.3 & $-$ & 27.1 & 2.8 & 50.3 & 24.9 & 25.9 & inf & 34.5 & $-$ & 30.6 & 3.3 & 52.1 & 29.2 & 29.9  \\
		pnt & 31.8 & 14.3 & $-$ & 2.8 & 50.7 & 29.3 & 25.8 & pnt & 34.9 & 14.8 & $-$ & 4.1 & 51.2 & 32.8 & 27.6 & pnt & 32.0 & 14.3 & $-$ & 4.6 & 51.6 & 31.8 & 26.9 & pnt & 34.8 & 15.1 & $-$ & 4.5 & 51.8 & 32.6 & 27.8  \\
		qdr & 7.5 & 1.2 & 1.6 & $-$ & 5.4 & 7.6 & 4.7 & qdr & 19.6 & 2.6 & 7.7 & $-$ & 14.6 & 9.6 & 10.8 & qdr & 17.8 & 3.6 & 7.9 & $-$ & 16.7 & 13.3 & 11.9 & qdr & 19.7 & 3.3 & 8.3 & $-$ & 15.6 & 14.0 & 12.2 \\
		rel & 43.2 & 20.6 & 41.5 & 4.9 & $-$ & 33.3 & 28.7 & rel & 45.4 & 20.8 & 43.2 & 5.7 & $-$ & 35.5 & 30.1 & rel & 44.2 & 20.3 & 42.4 & 6.3 & $-$ & 36.1 & 29.8 & rel & 45.1 & 20.8 & 43.0 & 6.6 & $-$ & 36.0 & 30.3 \\
		skt & 45.8 & 14.8 & 30.1 & 11.1 & 46.5 & $-$ & 29.7 & skt & 49.8 & 18.1 & 37.3 & 11.3 & 52.3 & $-$ & 33.8 & skt & 45.2 & 17.5 & 36.3 & 12.0 & 52.4 & $-$ & 32.7 & skt &49.8 & 17.9 & 37.8 & 12.4 & 53.7 & $-$ & 34.3 \\
		Avg. & 32.4 & 13.6 & 26.1 & 6.3 & 40.8 & 27.3 & 24.4 & Avg. & 36.7 & 14.8 & 30.1 & 7.0 & 44.5 & 29.4 & 27.1  & Avg. & 32.7 & 14.8 & 29.4 & 7.5 & 45.1 & 29.3 & 26.5   & Avg. & 36.8 & 15.1 & 30.4 & 7.7 & 45.5 & 30.6 & 27.7\\
		\hline
		
		KL & clp & inf & pnt & qdr & rel & skt & Avg.  & CE & clp & inf & pnt & qdr & rel & skt & Avg. &Sym2 & clp & inf & pnt & qdr & rel & skt & Avg. &Oracle & clp & inf & pnt & qdr & rel & skt & Avg.  \\
		\hline
		clp &$-$ & 18.4 & 32.5 & 11.3 & 54.1 & 40.9 & 31.4 & clp & $-$ &18.5 & 32.6 & 11.7 & 54.4 & 41.0 & 31.6 & clp & $-$& 18.3& 33.9& 11.5& 55.4& 42.6 & 32.3& clp& 74.3 & $-$ & $-$ & $-$ & $-$ & $-$ & 74.3 \\
		inf & 34.0 & $-$& 30.5 & 3.7 & 52.0 & 29.4 & 29.9 & inf & 34.4 & $-$ & 30.7 & 3.5 & 52.3 & 29.4 & 30.1 & inf & 30.7 & $-$ & 29.0 & 3.3 & 49.0 & 27.8 & 28.0 & inf &$-$ & 40.8 &$-$ & $-$ & $-$ & $-$ & 40.8 \\
		pnt & 34.9 & 15.2 & $-$ & 4.6 & 51.7 & 32.7 & 27.8 & pnt & 34.7 & 15.2 & $-$ & 4.9 & 51.9 & 32.8 & 27.9 & pnt & 33.3 & 14.9 & $-$ & 4.4 & 50.0 & 33.6 & 27.2 & pnt & $-$ & $-$ & 69.7 & $-$ & $-$ & $-$ & 69.7 \\
		qdr & 19.8 & 3.5 & 7.7 & $-$ & 16.3 & 13.4 & 12.1 & qdr & 20.7 & 3.4 & 8.3 & $-$ & 16.9 & 14.6 & 12.8 & qdr & 22.8 & 3.2 & 7.9 & $-$ & 16.8 & 11.8 & 12.5 & qdr & $-$ & $-$ & $-$ & 70.6 & $-$ & $-$  & 70.6 \\
		rel & 45.2 & 21.0 & 42.9 & 6.6 & $-$ & 35.9 & 30.3 & rel & 45.2 & 21.3 & 43.0 & 7.0 & $-$ & 36.3 & 30.6 & rel & 48.4 & 19.5 & 44.0 & 5.3 & $-$ & 38.2 & 31.1 & real & $-$ & $-$ & $-$ & $-$ & 82.5 & $-$ & 82.5 \\
		skt & 50.1 & 18.0 & 37.5 & 12.3 & 53.5 & $-$ & 34.3 & skt & 50.0 & 18.1 & 38.0 & 12.9 & 53.3 & $-$ & 34.5 &skt & 55.2 & 18.2 & 39.5 & 12.4 & 55.2 & $-$ & 36.1 & skt & $-$ & $-$ & $-$ & $-$ & $-$ & 66.8 &  66.8  \\
		Avg. & 36.8 & 15.2 & 30.2 & 7.7 & 45.5 & 30.5 & 27.6 & Avg. & 37.0 & 15.3 & 30.5 & 8.0 & 45.8 & 30.8 & \textbf{27.9} & Avg. & 38.1 & 14.8 & 30.9 & 7.4 & 45.3 & 30.8 & \textbf{27.9} &Avg. & 74.3 & 40.8 & 69.7 & 70.6 & 82.5 & 66.8 & 67.5  \\
		\hline
	\end{tabular}
\end{table*}

\noindent \textbf{Visualization with the Class Information} We visualize the network activations from the feature extractor of ``DANN'' and ``SymmNets-V2'' on the adaptation task of $\textbf{A}\to\textbf{W}$ by t-SNE \cite{sne} with class information in Figure \ref{Fig:sne_cate}. The samples of the same class across domains are aligned intuitively with the features of SymmNets-V2.

\begin{figure*}
	\begin{minipage}[t]{0.13\linewidth}
		\vspace*{-2cm}
		DANN \cite{reverse_grad,dann}	
	\end{minipage}
	\hfill
	\subfigure[Close Set UDA]{
		\begin{minipage}[t]{0.28\linewidth}
			\centering
			\includegraphics[width=0.77\linewidth] {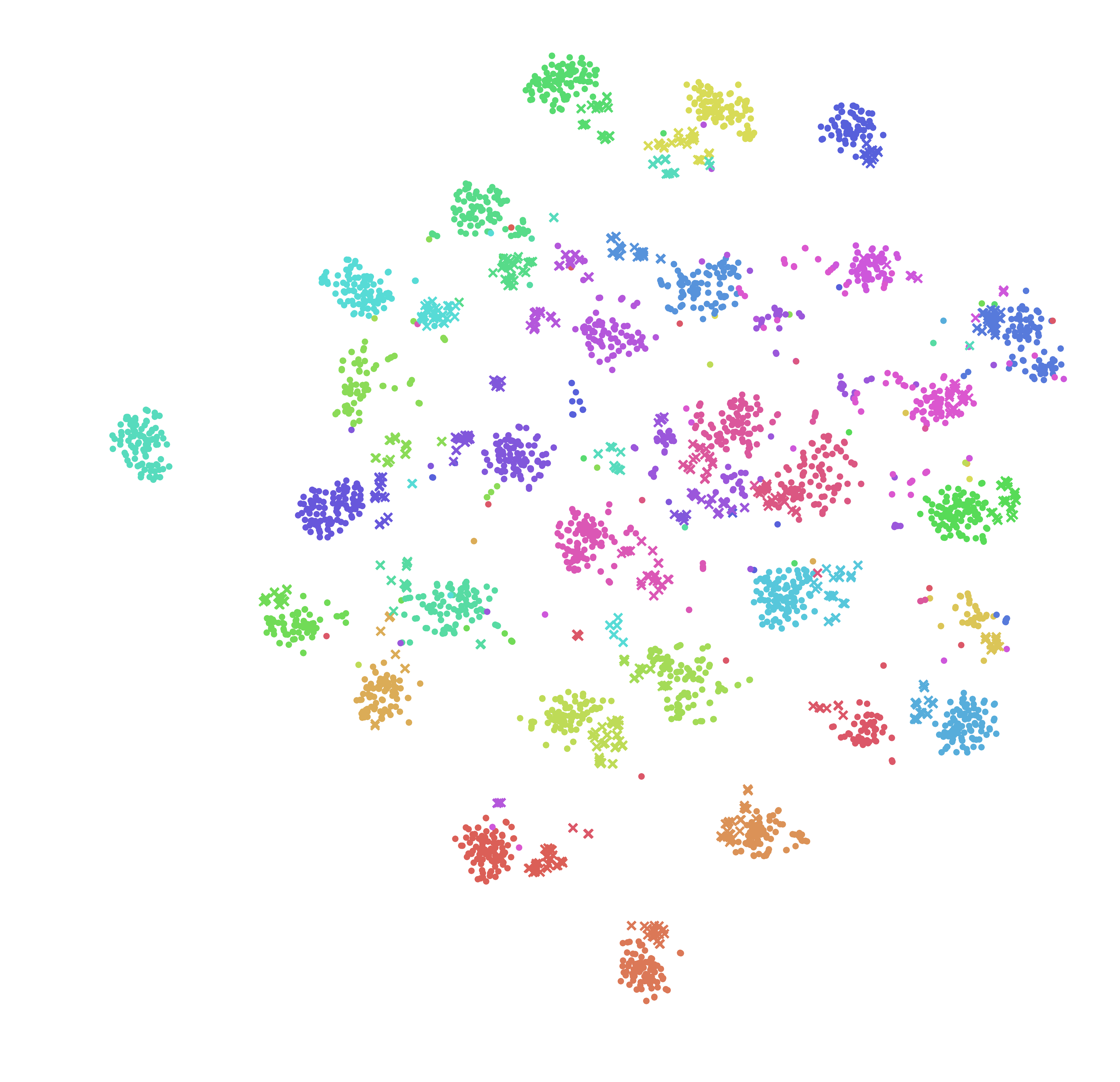}
	\end{minipage}}
	\hfill
	\subfigure[Partial UDA]{
		\begin{minipage}[t]{0.28\linewidth}
			\centering
			\includegraphics[width=0.77\linewidth] {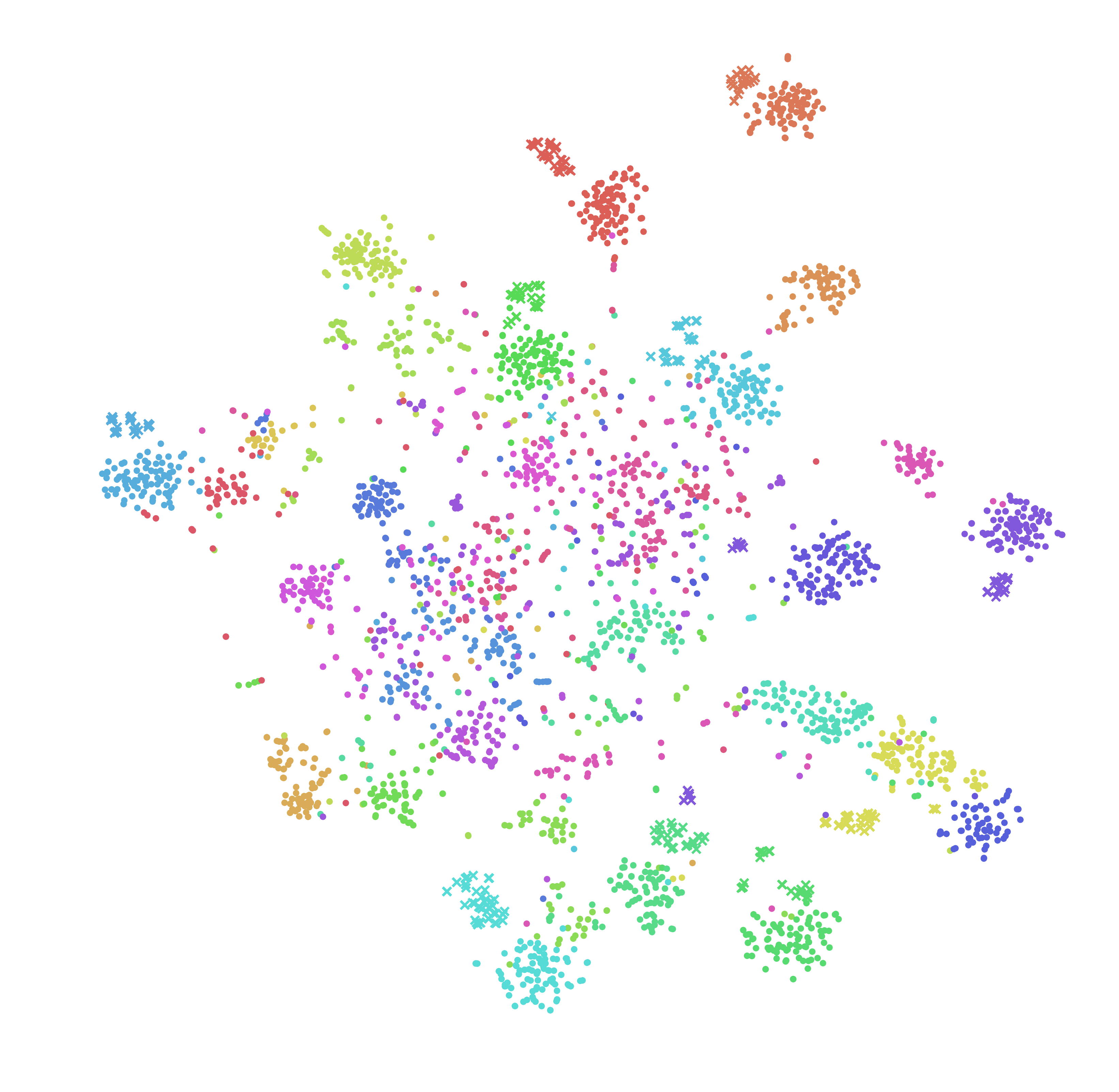}
	\end{minipage}}
	\hfill
	\subfigure[Open Set UDA]{
		\begin{minipage}[t]{0.28\linewidth}
			\centering
			\includegraphics[width=0.77\linewidth] {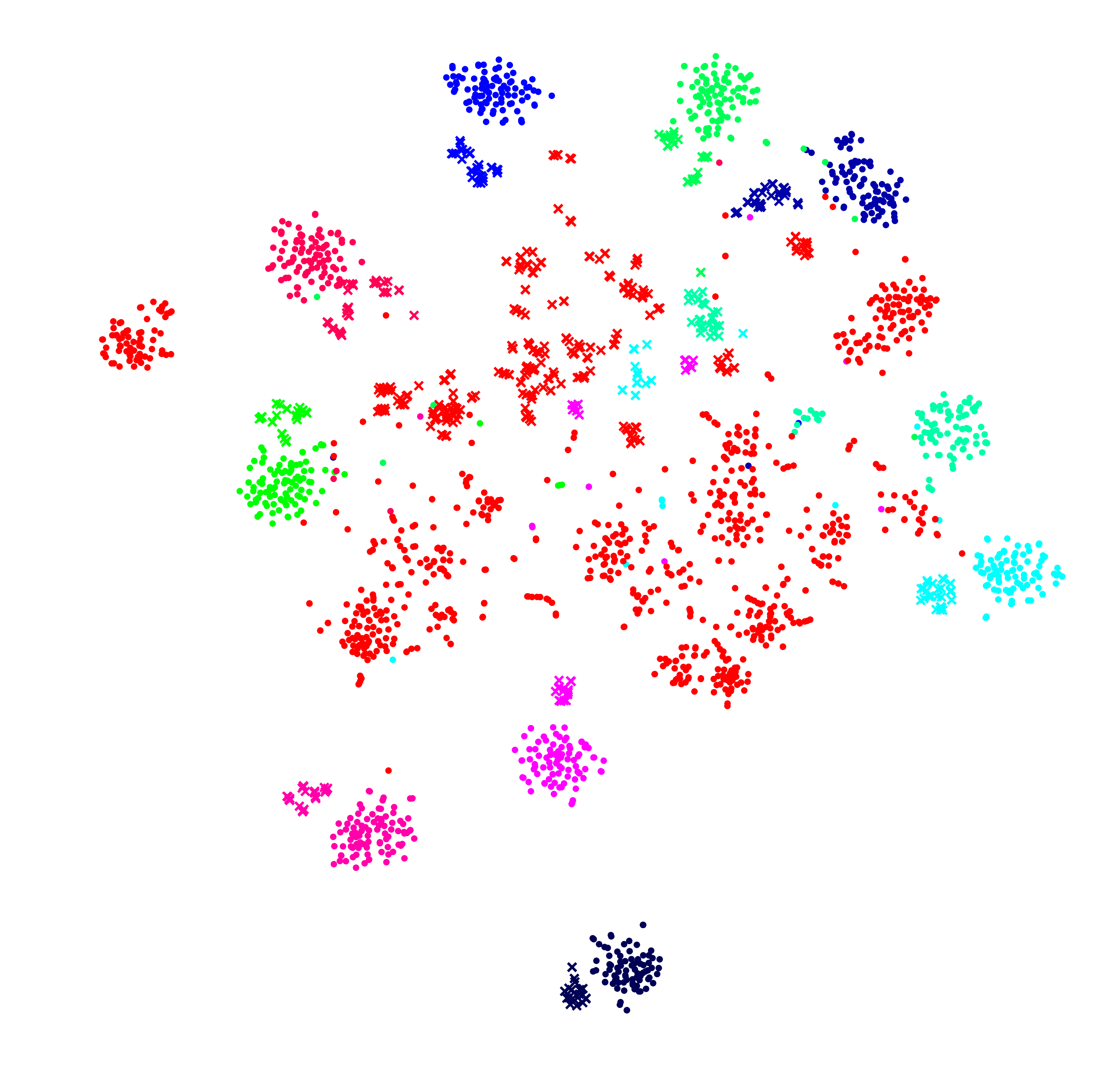}
	\end{minipage}}
	\\
	\begin{minipage}[t]{0.13\linewidth}
		\vspace*{-2cm}
		SymmNets-V2	
	\end{minipage}
	\hfill
	\subfigure[Close Set UDA]{
		\begin{minipage}[t]{0.28\linewidth}
			\centering
			\includegraphics[width=0.77\linewidth] {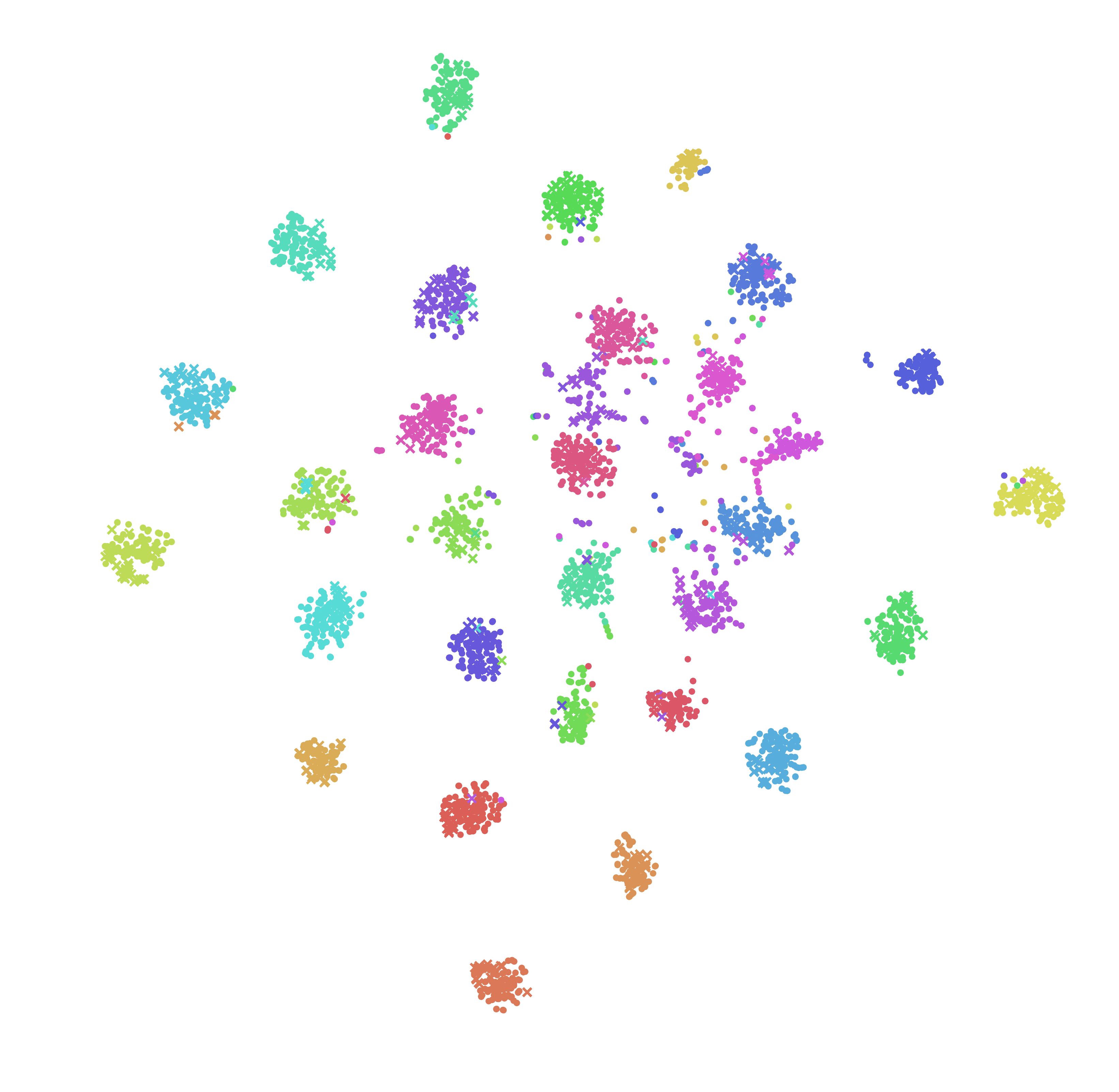}
	\end{minipage}}
	\hfill
	\subfigure[Partial UDA]{
		\begin{minipage}[t]{0.28\linewidth}
			\centering
			\includegraphics[width=0.77\linewidth] {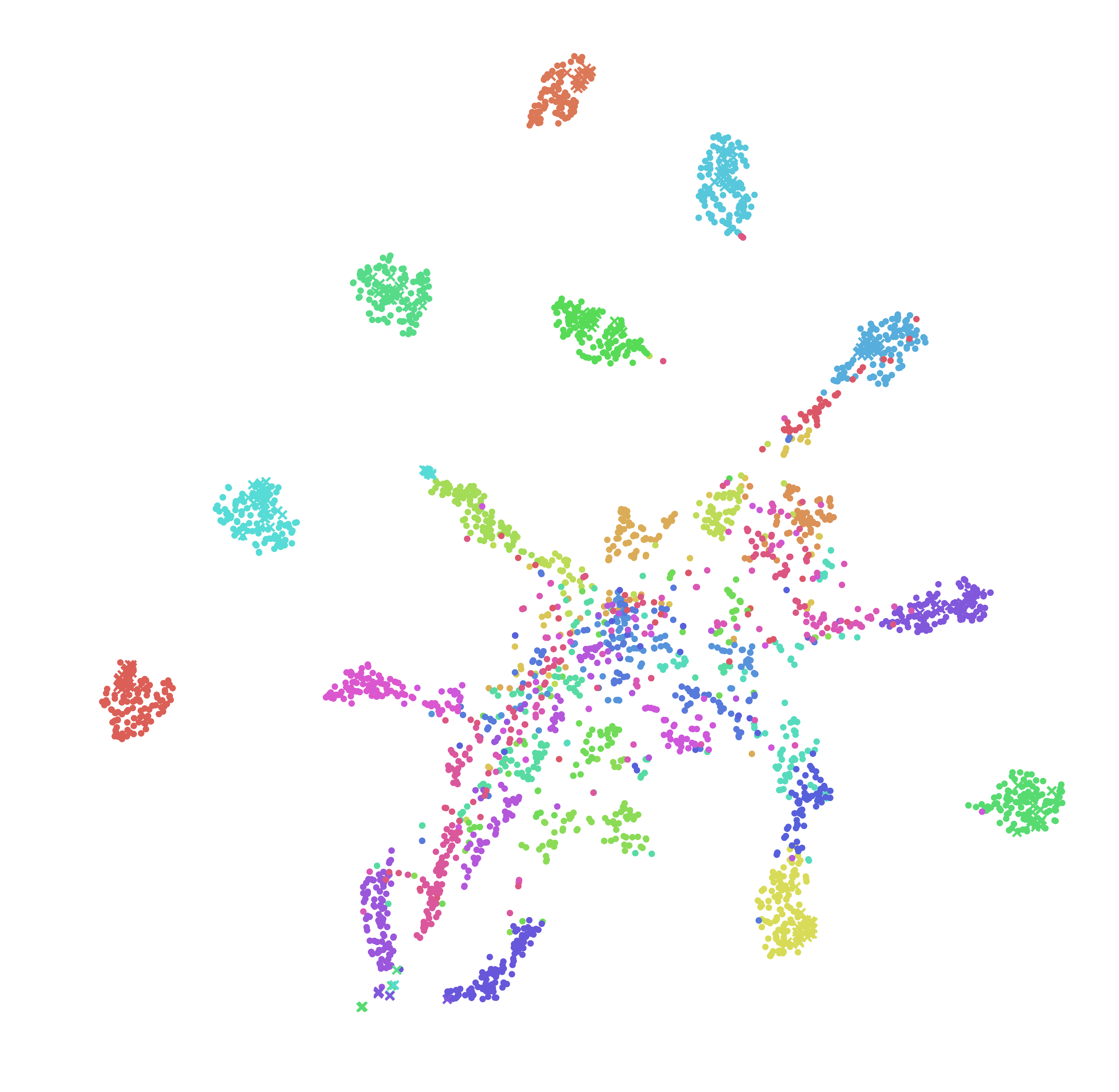}
	\end{minipage}}
	\hfill
	\subfigure[Open Set UDA]{
		\begin{minipage}[t]{0.28\linewidth}
			\centering
			\includegraphics[width=0.77\linewidth] {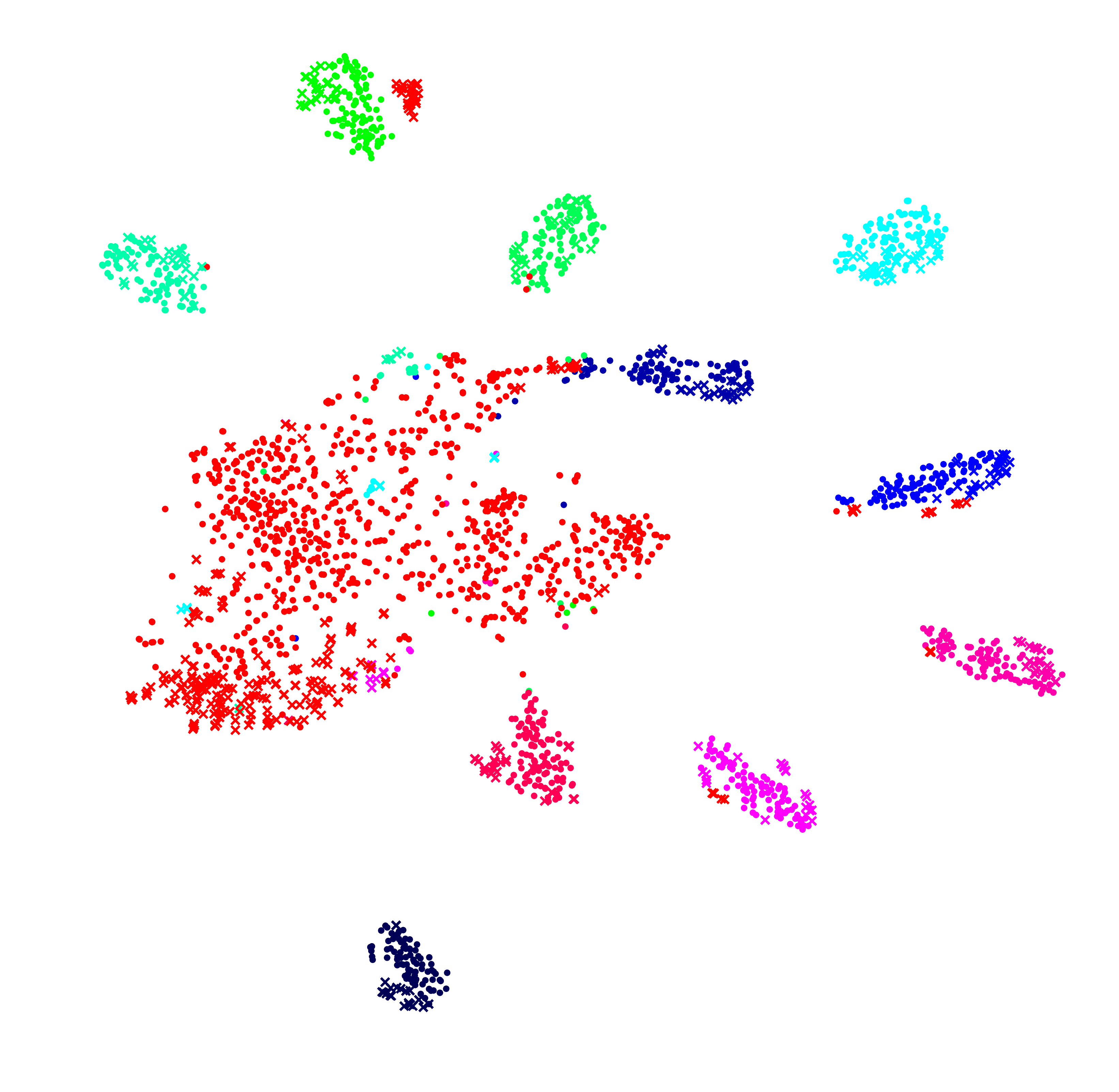}
	\end{minipage}}
	
	\caption{The t-SNE visualization of class-labeled feature representations learned by DANN (top row) and SymmNets-V2 (bottom row) under the settings of closed set, partial, and open set UDA. The point marks (``$\cdot$'') represent features of samples from the source domain $\textbf{A}$ whereas the cross marks (``x'') represent features of samples from the target domain $\textbf{W}$, where different colors represent different classes. In open set UDA, the red color indicates the unknown class. In partial UDA, we illustrate the feature representations learned by SymmNets-V2 (With active $\omega_k$), where we focus on the domain-shared classes and leave the source classes exclusive to the target domain as an indistinguishable cluster via the soft class weighting scheme, as discussed in Section \textcolor{red}{4}. }
	\label{Fig:sne_cate}
\end{figure*}

\subsection{Results}

\noindent \textbf{Results Based on the AlexNet Structure} To illustrate the generalization of our SymmNets-V2 to different network structures, we additionally implement SymmNets based on the AlexNet \cite{alexnet}. Given an AlexNet \cite{alexnet} pre-trained on the ImageNet dataset \cite{imagenet}, the feature extractor $\psi$ is the AlexNet without $fc8$ layer, and an additional bottleneck layer is added to the $fc7$ layer with a dimension of $256$ following \cite{dann}. Other settings are the same as that for the ResNet. Results for the closed set and partial UDA tasks are respectively presented in Table \ref{Tab:Office-Home_alexnet} and Table \ref{Tab:partial_office31_alexnet}, certifying the effectiveness and generalization of SymmNets-V2 on various model structures.

\begin{table*}[h!]
	\begin{center}
		
		\caption{Accuracy (\%) on the Office-Home dataset \cite{office_home} for \textit{closed set} UDA. Results are based on models adapted from a AlexNet.}
		\label{Tab:Office-Home_alexnet}
		\begin{tabular}{L{26.2mm}C{7.7mm}C{7.7mm}C{7.9mm}C{7.7mm}C{7.7mm}C{7.9mm}C{7.7mm}C{7.7mm}C{7.7mm}C{7.9mm}C{7.9mm}C{7.9mm}C{7.9mm}C{9mm}}
			\hline
			Methods                  &A$\to$C &A$\to$P &A$\to$R &C$\to$A &C$\to$P &C$\to$R &P$\to$A &P$\to$C &P$\to$R &R$\to$A &R$\to$C &R$\to$P    & Avg  \\
			\hline
			Source Only\cite{alexnet}    & 26.4     & 32.6     &41.3      & 22.1     & 41.7     & 42.1     & 20.5     & 20.3     & 51.1     & 31.0     & 27.9     & 54.9        & 34.3 \\
			DAN \cite{dan}           & 31.7     & 43.2     &55.1      & 33.8     & 48.6     & 50.8     & 30.1     & 35.1     & 57.7     & 44.6     & 39.3     & 63.7        & 44.5 \\
			DANN \cite{reverse_grad,dann}&36.4    & 45.2     &54.7      & 35.2     & 51.8     & 55.1     & 31.6     & 39.7     & 59.3     & 45.7     & 46.4     & 65.9        & 47.3 \\
			CDAN+E \cite{cada}       & \textbf{38.1} & 50.3     &60.3      & 39.7     & 56.4     & 57.8     & 35.5     & \textbf{43.1}     & 63.2     & 48.4     & \textbf{48.5}      & 71.1        & 51.0 \\
			\hline
			\textbf{SymmNets-V1 \cite{symnets}}           & 37.4     & \textbf{53.9} &60.9         &\textbf{40.0}& 56.3     &\textbf{58.5}& 34.7     & 40.1  & 64.0     & \textbf{49.6} & 46.7     & \textbf{71.6}        & \textbf{51.1} \\
			\textbf{SymmNets-V2}                          & 36.5     & 53.8          &\textbf{61.2}&\textbf{40.0}& \textbf{57.0} &58.1 & \textbf{36.2}     & 39.8  & \textbf{64.2}     & 48.8 & 46.1     & 71.2           & \textbf{51.1} \\
			\hline
			\hline
			GCAN \cite{gcan}         & 36.4     & 47.3     & 61.1     & 37.9     & 58.3     & 57.0     & 35.8     & \textbf{42.7}     & 64.5     & \textbf{50.1}     & \textbf{49.1}     & \textbf{72.5}        & 51.1 \\
			\hline
			\textbf{SymmNets-V2-SC}       & \textbf{38.6}     & \textbf{61.4} &\textbf{65.8}&\textbf{41.2}& \textbf{59.6}     &\textbf{63.4}& \textbf{37.7}     & 39.4  & \textbf{66.4}     & 49.2 & 47.1     & 71.4        & \textbf{53.4} \\
			\hline
		\end{tabular}
	\end{center}
\end{table*}

\begin{table*}[htb]
	\centering
	\caption{Accuracy (\%) on the Office-31 dataset \cite{office_31} for \textit{partial} UDA. Results are based on models adapted from a AlexNet.}
	\label{Tab:partial_office31_alexnet}
	\begin{tabular}{lccccccc}
		\hline
		Methods                          & A $\to$ W     & D $\to$ W     & W $\to$ D         & A $\to$ D     & D $\to$ A      & W $\to$ A     & Avg  \\
		\hline
		Source Only\cite{alexnet}            & 58.51         & 95.05         & 98.08             & 71.23         & 70.60          & 67.74         & 76.87 \\
		DAN \cite{dan}                   & 56.58         & 71.86         & 86.78             & 51.86         & 50.42          & 52.29         & 61.62 \\
		DANN\cite{reverse_grad,dann}       & 49.49         & 93.55         & 90.44             & 49.68         & 46.72          & 48.81         & 63.11 \\
		SAN\cite{san}                    & 80.02         & 98.64         & \textbf{100.00}      & 81.28         & 80.58          & 83.09         & 87.27 \\
		Zhang \emph{et al.} \cite{importance_weight} &76.27 &\textbf{98.98}& \textbf{100.00}    & 78.98         & \textbf{89.46} & 81.73         & 87.57 \\
		\hline
		\textbf{SymmNets-V2}              & 76.62              & 79.30         & 99.37   & 82.83    & 71.33          & 83.19            & 82.11 \\
		\textbf{SymmNets-V2} (With active $\omega_k$)  & \textbf{82.71} & 94.90         & 98.72   & \textbf{85.35} & 83.50          & \textbf{93.00}& \textbf{89.70} \\
		\hline
	\end{tabular}
\end{table*}

\end{document}